\newcommand*{\addFileDependency}[1]{% argument=file name and extension
  \typeout{(#1)}
  \@addtofilelist{#1}
  \IfFileExists{#1}{}{\typeout{No file #1.}}
}
\def\bar{\overline}
\newcommand{\R}{\mathbb{R}}
\newcommand{\pnorm}[2]{\lVert #1\rVert_{#2}}
\newcommand{\abs}[1]{\lvert#1\rvert}
\newcommand{\bigabs}[1]{\big\lvert#1\big\rvert}
\newcommand{\biggabs}[1]{\bigg\lvert#1\bigg\rvert}
\newcommand{\iprod}[2]{\langle#1,#2\rangle}
\renewcommand{\epsilon}{\varepsilon}
\newcommand{\bigo}{\mathcal{O}}
\renewcommand{\tilde}{\widetilde}
\DeclareMathOperator{\E}{\mathbb{E}}
\DeclareMathOperator{\Prob}{\mathbb{P}}
\DeclareMathOperator{\tr}{tr}
\DeclareMathOperator{\var}{Var}
\DeclareMathOperator{\cov}{Cov}
\DeclareMathOperator{\op}{op}
\def\m0{\mathbf{0}}
\def \bX {\boldsymbol{X}}
\def \bOmega {\boldsymbol{\Omega}}
\newtheorem{theorem}{Theorem}
\newtheorem{proposition}{Proposition}
\newtheorem{remark}{Remark}
\newtheorem{lemma}{Lemma}
\newtheorem{corollary}{Corollary}
\theoremstyle{remark}
\newtheorem{exmp}{Example}
\newtheorem{setting}{Setting}
\newcommand\blfootnote[1]{%
  \begingroup
  \renewcommand\thefootnote{}\footnote{#1}%
  \addtocounter{footnote}{-1}%
  \endgroup
}
\newcommand{\independent}{\mathrel{\perp\!\!\!\perp}}
\renewcommand{\ldots}{\cdots}
\renewcommand{\dots}{\cdots}
\newcommand{\blind}{1}
\def\spacingset#1{\renewcommand{\baselinestretch}%
{#1}\small\normalsize}
\begin{document}

\if1\blind
{
\title{Asymptotic FDR Control with Model-X Knockoffs: Is Moments Matching Sufficient?%
\thanks{This work was supported in part by NSF grants EF-2125142 
%, DMS-2310981, 
and DMS-2324490.}}

\author{Yingying Fan$^1$ \and 
Lan Gao$^2$ \and
Jinchi Lv$^1$ \and 
Xiaocong Xu$^1$}

\date{%$^*$Data Sciences and Operations Department, %Marshall School of Business, 
University of Southern California$^1$ and University of Tennessee$^2$\\
%\\
%$^\dagger$Department of Business Analytics and Statistics, %Haslam College of Business, 
%University of Tennessee %, Knoxville
\bigskip
\bigskip
%\today
February 9, 2025
}

\maketitle
} \fi

\begin{abstract}
We propose a unified theoretical framework for studying the robustness of the model-X knockoffs framework by investigating the asymptotic false discovery rate (FDR) control of the practically implemented approximate knockoffs procedure. This procedure deviates from the model-X knockoffs framework by substituting the true covariate distribution with a user-specified distribution that can be learned using in-sample observations. By replacing the distributional exchangeability condition of the model-X knockoff variables with three conditions on the approximate knockoff statistics, we establish that the approximate knockoffs procedure achieves the asymptotic FDR control. Using our unified framework, we further prove that an arguably most popularly used knockoff variable generation method—the Gaussian knockoffs generator based on the first two moments matching—achieves the asymptotic FDR control when the two-moment-based knockoff statistics are employed in the knockoffs inference procedure. For the first time in the literature, our theoretical results justify formally the effectiveness and robustness of the Gaussian knockoffs generator. Simulation and real data examples are conducted to validate the theoretical findings.
\end{abstract}

\noindent\textbf{Keywords}: Model-X knockoffs; Gaussian knockoffs generator; moments matching; asymptotic FDR control; robustness

\blfootnote{}

\spacingset{1.9}
\section{Introduction} \label{new.sec.intro}

The model-X knockoffs \citep{candes2018panning} is a flexible framework for high-dimensional variable selection with theoretically guaranteed finite-sample false discovery rate (FDR) control bypassing the use of p-values. Without the need to know or impose assumptions on how the response variable $Y \in \mathbb{R}$ depends on a high-dimensional feature vector $X = (X_1, \ldots, X_p)^\top \in \mathbb{R}^p$, model-X knockoffs provides a general framework to select relevant features in the set $\mathcal H_1\subset\{1,\cdots, p\}$, where for each $j\in \mathcal H_1^c := \{1,\cdots, p\}\setminus \mathcal H_1$, we have $X_j\independent Y \vert X_{-j}$. Here, $X_{-j}$ is the subvector of $X$ with the $j$th component removed.  The core idea is to construct the knockoff variables $\widetilde X\in\mathbb R^p$ that satisfy two conditions: i) exchangeable in distribution with the original features $X$, and  ii) independent of response $Y$ conditional on the original features $X$.  These knockoff variables act as a control group, allowing researchers to identify the relevant original features in $\mathcal H_1$ related to the response. 

The exchangeability condition is key to controlling the FDR. Formally, it is stated as: for any subset $S \subset \{1, \ldots, p\}$, the joint distribution of $(X, \tilde{X})$ satisfies
\begin{align}\label{eq:exchangable} 
(X^\top, \tilde{X}^\top)_{\text{swap}(S)} \overset{d}{=} (X^\top, \tilde{X}^\top), 
\end{align}
where $(X^\top, \tilde{X}^\top)_{\text{swap}(S)}$ represents the vector obtained by swapping the components $X_j$ and $\tilde{X}_j$ for each $j \in S$ and $\overset{d}{=}$ stands for equal in distribution. For the ease of reference, in this paper we term the knockoff variables that satisfy the aforementioned two conditions as the \textit{perfect} knockoff variables. In principle, perfect knockoff variables can be constructed when the feature distribution 
$F$ of
$X$ is known using, for example, a generic algorithm outlined in \cite{candes2018panning} (see Algorithm 1 therein).

Since the true covariate distribution $F$ is generally unknown, in practice, a working distribution $\widehat F$ is almost always used to replace the true distribution $F$ when generating the knockoff variables in implementing the model-X knockoffs. For differentiation, we name the resulting model-X knockoff variables $\widehat X$ as the \textit{approximate knockoff variables} since they correspond to the working distribution $\widehat F$ rather than the true distribution $F$, and the corresponding knockoffs inference method as the \textit{approximate knockoffs procedure}. Because of the discrepancy of $\widehat F$ from $F$, the distributional exchangeability condition in \eqref{eq:exchangable} no longer holds for $\widehat X$. Consequently, we lose the theoretical guarantee of finite-sample FDR control as shown in \cite{candes2018panning}. This raises the first natural question:
\begin{itemize}
    \item When can the approximate knockoffs procedure remain robust, allowing for (asymptotic) FDR control under a misspecified feature distribution $\widehat F$?
\end{itemize}

An important goal of this paper is to provide a unified theoretical framework to address the above question. We identify three sufficient conditions, under which the approximate knockoffs procedure can achieve the asymptotic FDR control; see Theorem \ref{thm:general_framwork} for the formal statements of conditions and results. Such results equip researchers with a unified tool for verifying whether some popularly used $\widehat F$ in practice can yield valid asymptotic FDR control. Our unified theoretical framework for the asymptotic FDR control by the approximate knockoffs procedure is new to the literature. 

In implementation, an arguably most popularly used method for generating the knockoff variables is the \textit{Gaussian knockoffs generator} \citep{candes2018panning, TSKI2025,DeepLink2021,Kimi2020, DeepPink2018}, which we formally introduce as follows.   Assume that we have access to the first two moments of $X$: the mean vector $\bm{\mu} \in \R^p$ (without loss of generality, assume $\bm{\mu} = 0_p$) and the covariance matrix $\Sigma_X \in \R^{p \times p}$. The Gaussian knockoffs generator pretends that $X$ followed Gaussian distribution $\mathcal{N}(0_p, \Sigma_X)$, and generates knockoff variables as 
\begin{align}\label{eq:approx_knock}
	\widehat{X} = (I_p - \mathrm{diag}(\bm{r})\Sigma_X^{-1})X + (2\mathrm{diag}(\bm{r}) - \mathrm{diag}(\bm{r})\Sigma_X^{-1}\mathrm{diag}(\bm{r}))^{1/2}\mathsf{Z},
\end{align}
where $\bm{r} \in \mathbb{R}^p$ is chosen to ensure the positive definiteness of  $2\mathrm{diag}(\bm{r}) - \mathrm{diag}(\bm{r})\Sigma_X^{-1}\mathrm{diag}(\bm{r})$, and $\mathsf{Z} \in \R^p$ is a standard Gaussian random vector independent of all other variables. For the ease of presentation, we refer   to $\widehat X$ in \eqref{eq:approx_knock} as the Gaussian knockoff variables. We emphasize that the true distribution $F$ here can be \textit{different} from Gaussian distribution $\mathcal{N}(0_p, \Sigma_X)$. 

To demonstrate the practical performance of the Gaussian knockoffs generator, we apply the procedure to a real data application: modeling the protease inhibitor (PI)-type drug resistance of HIV-1 based on genetic mutation data, which was studied previously in \cite{rhee2006genotypic} and \cite{barber2015controlling}. Here, $X$ is categorical data, which is evidently non-Gaussian. Despite the distributional misspecification, the results indicate that, in almost all cases, the FDR is controlled near the prespecified level $q = 0.2$, as summarized in Table \ref{tab:realdata}. Additional details are provided in Section \ref{sec:real_data_setting}.

\begin{table}[t]
\caption{The empirical FDR and power results from applying the approximate knockoffs procedure to the HIV data described in Section \ref{sec:real_data_setting}. The Gaussian knockoffs generator is used to generate the approximate knockoff variables. The target FDR level is $q = 0.2$. We subsample data of size $200$ from the full data set of size $n$ ranging from $328$ to $842$, and repeat the procedure $100$ times to calculate the FDR and power.
}
\label{tab:realdata}
\centering
\renewcommand{\arraystretch}{0.6} 
\setlength{\tabcolsep}{8pt} 
\begin{tabular}{ll|ccccccc}
\hline
& & APV & ATV & IDV & LPV & NFV & RTV & SQV \\ \hline
\multirow{2}{*}{MC} & FDR   & 0.174 & 0.177 & 0.215 & 0.244 & 0.204 & 0.233 & 0.196 \\
                    & Power & 0.351 & 0.322 & 0.296 & 0.367 & 0.293 & 0.341 & 0.311 \\ \hline
\multirow{2}{*}{DL} & FDR   & 0.09     &   0.137    &  0.109     &  0.159     &    0.114   &  0.010     &  0.151     \\
                    & Power &  0.154     &    0.357   &   0.100    &   0.266    &   0.133    &   0.136    &    0.140   \\ \hline
\multirow{2}{*}{DC} & FDR   & 0.125     &   0.198    &    0.173   &  0.212     &     0.158   &    0.176   &   0.139    \\
                    & Power &  0.499     &    0.637   &     0.441  &    0.523   &   0.446    &  0.480    &   0.411    \\ \hline
\end{tabular}
\end{table}

It is seen that the Gaussian knockoffs generator relies on the first two moments matching with the original covariates $X$. We note that the deep knockoffs method in \cite{romano2020deep} falls along the same idea of moments matching, with the major difference that it also considers higher-order moments matching in generating the knockoff variables. The popularity and practical success of the Gaussian knockoffs and deep knockoffs motivate us to raise the second question: 
\begin{itemize}
    \item Is moments matching sufficient for generating the knockoff variables for achieving the asymptotic FDR control?
\end{itemize}

Another important contribution of this paper is to provide an affirmative answer to the second question above. By verifying the sufficient conditions provided in our Theorem \ref{thm:general_framwork}, we formally establish that Gaussian knockoffs can provide the asymptotic FDR guarantee for some popularly studied and used choices of knockoff statistics, including the marginal correlation knockoff statistics and regression coefficient difference knockoff statistics. The intuition behind such robustness is that these knockoff statistics are constructed based on the first two moments of the data matrix, and thus, the Gaussian knockoff variables constructed based on the first two moments matching are sufficient for providing asymptotic FDR control. For knockoff statistics constructed based on higher-order data moments, we conjecture that one needs to generate the knockoff variables via higher moments matching for achieving the asymptotic FDR control. Since such extension is highly nontrivial, we leave it for future investigation.

Our work in this paper falls into the general framework of robustness analysis of model-X knockoffs with respect to the deviation of working distribution $\widehat F$ from true distribution $F$. Prior to our work, efforts have been made toward addressing the robustness concern  \citep{FDLL2020-rank,FLSU2020-ipad}. Notably, \cite{Barber2020} relaxed the distributional exchangeability condition \eqref{eq:exchangable} and established an FDR upper bound. Their results show that the FDR inflation (caused by using $\widehat F$) depends on the Kullback--Leibler (KL) divergence of the conditional distributions derived from $F$ and $\widehat F$, and can asymptotically vanish provided that an independent large enough pre-training data set is available for very accurately estimating $F$. Recently, \cite{fan2023ark} exploited a coupling idea to prove the asymptotic FDR control for the approximate knockoffs procedure. Their results show that the procedure remains robust for a broader class of $\widehat F$ beyond those considered by \cite{Barber2020}. However, \textit{neither} of these studies applies directly to the Gaussian knockoffs to explain its strong practical performance observed in Table \ref{tab:realdata}. The connections and differences between our work and these two prior studies will be further discussed as the context becomes more clear in subsequent sections.

The rest of the paper is organized as follows. Section~\ref{sec:general_framework} introduces our general framework for the asymptotic FDR control by the approximate knockoffs procedure and the Gaussian knockoffs generator. We delve into specific constructions of knockoff statistics based on the first two moments matching and verify their asymptotic FDR control property in  Section~\ref{sec:examples}. Section~\ref{sec:simulation} presents several simulation and real data examples. We discuss some implications and extensions of our work in Section \ref{new.sec.disc}. All technical proofs and detailed derivations are provided in the Supplementary Material.

\subsection{Notation} \label{new.sec1.1}

For any positive integer $m \in \mathbb{Z}^+$, let $[m]\equiv \{1,\ldots,n\}$. For $a,b \in \R$, denote by $a\vee b\equiv \max\{a,b\}$ and $a\wedge b\equiv\min\{a,b\}$.
For any vector $x$, let $\pnorm{x}{\ell}$ denote its $\ell$-norm $(0\leq \ell\leq \infty)$, and we simply write $\pnorm{x}{}\equiv\pnorm{x}{2}$.  For any matrix $M$, let $\pnorm{M}{\op}$ be its spectral norm. $I_m$ is reserved for an $m\times m$ identity matrix. Throughout this paper, all asymptotic notation is considered in the limit as $n \to \infty$ and $p$ diverges with $n$. 
We write $a_n \ll b_n$ or $a_n = \mathfrak{o}(b_n)$ if $a_n/b_n \to 0$, and $a_n \gg b_n$ if $a_n/b_n \to \infty$. For random sequences, $a_n = \mathfrak{o}_\mathbf{P}(b_n)$ means $a_n/b_n \overset{p}{\to} 0$. Denote by $C$ an absolute constant, varying across lines unless specified. We write $a_n \lesssim b_n$ ($a_n = \bigo(b_n)$) and $a_n \gtrsim b_n$ ($a_n = \Omega(b_n)$) if $a_n \leq C b_n$ and $a_n \geq C b_n$, respectively, and $a_n \asymp b_n$ ($a_n = \Theta(b_n)$) if both hold (with different values of $C$).

\section{A general framework} \label{sec:general_framework}

\subsection{Model setting and approximate knockoffs inference} \label{sec:independent case procedure}

We assume that $(X,Y)$ follows the following flexible nonparametric regression model 
\begin{align}\label{def:general_model}
	Y = \mathcal{F}(X_{\mathcal{H}_1} ) + \xi,
\end{align}
where $X_{\mathcal{H}_1}$ represents the subvector consisting of important features in $\mathcal{H}_1\subset \{1,\dots, p\}$ that contribute to response $Y$, $\mathcal F$ is an unknown function, and $\xi$ is the model noise that is independent of $X$. It is seen that $ X_j \perp\!\!\!\perp Y \,|\, X_{-j}$ for each $j\in \mathcal H_0:= \{1,\cdots,p\}\setminus \mathcal H_1$, and we name such features as the null (noise) features.  To ensure the model identifiability, we follow \cite{candes2018panning} in assuming that $\mathcal{H}_1$ exists and is unique. This model is commonly used for feature selection tasks, focusing on selecting the relevant feature set $\mathcal H_1$.

Assume that we are given $n$ independent and identically distributed (i.i.d.) observations ${(\bm{X}_{i,1}, \ldots, \bm{X}_{i,p}, \bm{Y}_i)}_{i \in [n]}$ drawn from the joint distribution of $(X, Y)$.  Let $\widehat{\mathcal{S}}\subset\{1,\cdots, p\}$ be the set of features selected by some algorithm applied to ${(\bm{X}_{i,1}, \ldots, \bm{X}_{i,p}, \bm{Y}_i)}_{i \in [n]}$. We say that $\widehat{\mathcal{S}}$ achieves  the FDR control at some prespecified level $q\in (0,1)$ if 
\begin{align*}
	\mathrm{FDR} = \mathbb{E}\left[ \frac{\# \big\{ j : j \in \widehat{\mathcal{S}} \cap \mathcal{H}_0 \big\}}{ \# \big\{ j : j \in \widehat{\mathcal{S}} \big\} \vee 1} \right] \leq q.
\end{align*}

For convenience, let $\bm{X} = (\bm{X}_{i,j})_{i \in [n], j \in [p]} \in \mathbb{R}^{n \times p}$ denote the data matrix containing all the feature vectors, and let $\bm{Y} = (\bm{Y}_1, \ldots, \bm{Y}_n)^\top \in \mathbb{R}^n$ represent the response vector. Additionally, let $\bm{\xi} = (\bm{\xi}_1,\ldots,\bm{\xi}_n )^\top \in \R^n $ denote the model noise vector, which collects the noise associated with each sample.

We follow \cite{fan2023ark} to introduce the approximate knockoffs procedure, which consists of three main steps:
\begin{enumerate}
  \item[(1)] {\it Generation of approximate knockoff variables:} Specify or estimate (using either in-sample or out-of-sample data) a working distribution $\widehat F$ for covariate vector $X$. Pretend that $\widehat F$ were the true distribution of $X$, and generate the approximate knockoffs data matrix $\widehat \bX$ using $\widehat F$. An example of a generic algorithm that can be used is the sequential conditional independent pairs algorithm in \cite{candes2018panning}.   
  \item[(2)] {\it Construction of knockoff statistics:} Treating $\widehat{\bX}$ as though it were the perfect knockoff variable matrix, compute knockoff statistics analogously to the perfect knockoffs procedure in \cite{candes2018panning}. Specifically,  calculate the feature importance statistics as $(Q_1, \ldots, Q_p, \widehat{Q}_1, \ldots, \widehat{Q}_p ) = \mathcal{T}(\bm{X}, \widehat{\bm{X}}, \bm{Y})$, where $\mathcal{T}$ is a measurable function of $(\bm{X}, \widehat{\bm{X}}, \bm{Y})$, and $Q_j$ and $\widehat{Q}_j$ represent the importance of the $j$th feature and its knockoff counterpart, respectively. The approximate knockoff statistic $\widehat{\bm{W}}_j$ for each  original feature is then defined as $\widehat{\bm{W}}_j = f_j(Q_j, \widehat{Q}_j)$ for some antisymmetric function $f_j$.
  
  \item[(3)] {\it Feature selection:} For a prespecified FDR level $q \in (0,1)$, we select important features as $\widehat{\mathcal{S}} \equiv \{ j \in [p] : \widehat{\bm{W}}_j \geq T_q \}$,
  where the threshold $T_q$ for the FDR control is given by
  \begin{align*}
  	T_q \equiv \min \left\{ t \in \big\{ \abs{\widehat{\bm{W}}_1}, \ldots, \abs{\widehat{\bm{W}}_p} \big\} : \frac{\# \big\{ j : \widehat{\bm{W}}_j \leq -t \big\}}{\# \big\{ j : \widehat{\bm{W}}_j \geq t \big\} \vee 1} \leq q \right\}.
  \end{align*}
\end{enumerate}

It is seen that the only distinction between the approximate knockoffs procedure and the model-X knockoffs framework in \cite{candes2018panning} lies in how $\widehat{\bX}$ is generated. Indeed, when the true distribution $F$ is used in Step (1) above, we recover the (perfect) model-X knockoffs procedure.  For the ease of presentation, from now on we denote by $\tilde{\bm X}\in \mathbb R^{n\times p}$, $\{\tilde{\bm W}_j\}_{j=1,\dots, p}$, and $\tilde T_q$ the perfect knockoff variable matrix, knockoff statistics, and knockoff threshold using the true distribution $F$, respectively.  In practice, the true covariate distribution $F$ is generally unknown. We have discussed in the Introduction a most popularly used algorithm for practically generating knockoff variables, the Gaussian knockoffs generator. An important goal of our study is to provide a general framework for verifying the robustness of model-X knockoffs in terms of FDR control with respect to various practical choice of $\widehat F$, and we achieve this goal by studying the FDR property of the approximate knockoffs procedure.   Our general framework is presented in the next subsection.

\subsection{General framework for asymptotic FDR control} \label{new.sec2.2}

A critical condition for (perfect) model-X knockoffs to achieve finite-sample FDR control is the distributional exchangeability condition presented in \eqref{eq:exchangable}. Under this condition, it can be shown that the resulting perfect knockoff statistics $\widetilde{\bm W}_j$ for null features enjoy the following distributional symmetry property
\begin{align}\label{eq:W-j-exchangeability}
    \sum_{j\in \mathcal H_0}\mathbf 1\{\widetilde{\bm W}_j \leq -t\}\overset{d}{=} \sum_{j\in \mathcal H_0}\mathbf 1\{\widetilde{\bm W}_j \geq t\}
\end{align}
for any $t\in \mathbb R$. Such a property is key to establishing the finite-sample FDR control result for the perfect model-X knockoffs. It ensures that the number of false positive selections  $\sum_{j\in \mathcal H_0}\mathbf 1\{\widetilde{\bm W}_j \geq \tilde T_q\}$ can be approximated by $ \sum_{j\in \mathcal H_0}\mathbf 1\{\widetilde{\bm W}_j \leq -\tilde{T}_q\}$, a quantity that is more practically trackable. This further ensures that  $\frac{\sum_{j\in \mathcal H_0}\mathbf 1\{\widetilde{\bm W}_j \geq \tilde T_q\}}{\sum_{j\in \mathcal H_0}\mathbf 1\{\widetilde{\bm W}_j \leq -\tilde{T}_q\}}$ is close to $1$, a critical result in proving the FDR control by model-X knockoffs; see \cite{candes2018panning} for more details.

With the approximate knockoff statistics $\widehat{\bm W}_j$'s introduced in the last subsection, we lose the distributional symmetry property as in \eqref{eq:W-j-exchangeability}. The theorem below guarantees that under the three sufficient conditions  on null $\widehat{\bm W}_j$'s, the approximate knockoffs procedure enjoys the asymptotic FDR control.    

\begin{theorem}\label{thm:general_framwork}
	Fix $q \in (0,1)$ and assume that the following conditions hold for some $\alpha_n > 0$:
	\begin{enumerate}
	\item (Asymptotic approximate symmetry for $\widehat{\bm{W}}_j$):
		\begin{align*}
			\sup_{t \in (0, \alpha_n) }\biggabs{\frac{\sum_{j \in \mathcal{H}_0} \Prob \big( \widehat{\bm{W}}_j \geq t  \big)}{\sum_{j \in \mathcal{H}_0} \Prob \big(\widehat{\bm{W}}_j \leq -t   \big)} - 1}= \mathfrak{o}(1).
		\end{align*}
		\item (Approximation for indicator functions):
		\begin{align*}
        \sup_{t \in (0, \alpha_n) }\Bigg\{ \biggabs{\frac{ \sum_{j \in \mathcal{H}_0}  \bm{1}\{\widehat{\bm{W}}_j  \geq t\}  }{\sum _{j \in \mathcal{H}_0}  \Prob (\widehat{\bm{W}}_j  \geq t)  } - 1 } \vee \biggabs{\frac{ \sum_{j \in \mathcal{H}_0}  \bm{1}\{\widehat{\bm{W}}_j  \leq -t\}  }{\sum _{j \in \mathcal{H}_0}  \Prob (\widehat{\bm{W}}_j  \leq -t)  } - 1 } \Bigg\} = \mathfrak{o}_\mathbf{P}(1).
    \end{align*}	
		\item (Localization of $T_q$):
		%\begin{align*}
		$	\Prob \big(T_q > \alpha_n  \big) = \mathfrak{o}(1).$
		%\end{align*}
	\end{enumerate}
	Then we have
	\begin{align*}
		\limsup_{n \to \infty} \mathrm{FDR} \leq q.
	\end{align*}
\end{theorem}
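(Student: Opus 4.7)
The plan is to derive a deterministic upper bound on $\mathrm{FDP}$ from the very definition of $T_q$, and then control the resulting ratio probabilistically using the three conditions. Introduce the shorthand $V^+(t):=\sum_{j\in\mathcal H_0}\bm{1}\{\widehat{\bm{W}}_j\ge t\}$, $V^-(t):=\sum_{j\in\mathcal H_0}\bm{1}\{\widehat{\bm{W}}_j\le -t\}$, $N^-(t):=\sum_{j\in[p]}\bm{1}\{\widehat{\bm{W}}_j\le -t\}$, $R:=\#\{j\in[p]:\widehat{\bm{W}}_j\ge T_q\}$, and $\mu^\pm(t):=\E[V^\pm(t)]$, so that $\mathrm{FDP}=V^+(T_q)/(R\vee 1)$. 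The defining inequality $N^-(T_q)\le q(R\vee 1)$, together with $N^-(T_q)\ge V^-(T_q)$ and the trivial $R\vee 1\ge 1$, yields the key deterministic bound
$$\mathrm{FDP}\;\le\;q\cdot\frac{V^+(T_q)}{V^-(T_q)\vee q}.$$

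Next, I would split $\E[\mathrm{FDP}]$ according to the event $\{T_q\le\alpha_n\}$ and its complement. On the complement, the trivial $\mathrm{FDP}\le 1$ combined with condition 3 contributes only $\Pbb(T_q>\alpha_n)=\mathfrak o(1)$. On $\{T_q\le\alpha_n\}$, the uniformity over $t\in(0,\alpha_n)$ in conditions 1 and 2 becomes usable at the random value $t=T_q$. Fix an arbitrary $\delta\in(0,1)$ and let $\Omega_n(\delta)$ denote the high-probability event on which the supremum in condition 2 is at most $\delta$. On $\Omega_n(\delta)\cap\{T_q\le\alpha_n\}$, condition 2 gives $V^+(T_q)\le(1+\delta)\mu^+(T_q)$ and $V^-(T_q)\ge(1-\delta)\mu^-(T_q)$, while condition 1 gives $\mu^+(T_q)\le(1+\mathfrak o(1))\mu^-(T_q)$. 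Substituting yields
$$\mathrm{FDP}\;\le\;q(1+\delta)(1+\mathfrak o(1))\cdot\frac{\mu^-(T_q)}{\bigl((1-\delta)\mu^-(T_q)\bigr)\vee q},$$
and a short case analysis on whether $\mu^-(T_q)\ge q/(1-\delta)$ shows that in both regimes the right-hand side is at most $q(1+\delta)(1+\mathfrak o(1))/(1-\delta)$; the $\vee q$ smoothing in the denominator absorbs the potential singularity when $\mu^-(T_q)$ is small.

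Assembling the pieces, $\E[\mathrm{FDP}]\le q(1+\delta)(1+\mathfrak o(1))/(1-\delta)+\Pbb(\Omega_n(\delta)^c)+\Pbb(T_q>\alpha_n)$, so letting $n\to\infty$ with $\delta$ fixed and then $\delta\to 0$ gives $\limsup_{n\to\infty}\mathrm{FDR}\le q$. The main obstacle I anticipate is the regime where $\mu^-(T_q)$ is small so that $V^-(T_q)$ can vanish: without the $\vee q$ smoothing that arises naturally from $R\vee 1\ge 1$, the ratio $V^+/V^-$ could blow up. Resolving this requires combining all three conditions simultaneously and uniformly over $t\in(0,\alpha_n)$, so that the approximate symmetry between $\mu^+$ and $\mu^-$ forces $V^+$ to remain small whenever $V^-$ does. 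The fact that the suprema in conditions 1 and 2 are taken over the full interval $(0,\alpha_n)$, together with the localization of $T_q$ supplied by condition 3, is what makes the substitution at the random point $t=T_q$ legitimate and is the technical crux of the plan.
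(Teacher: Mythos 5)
Your proposal is correct and takes essentially the same route as the paper's proof: a deterministic bound on the FDP from the definition of $T_q$, restriction to the high-probability event where conditions 2 and 3 hold uniformly over $(0,\alpha_n)$, and then conditions 2 and 1 applied at the random point $t=T_q$. The only cosmetic difference is that your inequality $\mathrm{FDP}\le q\,V^+(T_q)/\bigl(V^-(T_q)\vee q\bigr)$ explicitly absorbs the degenerate case $V^-(T_q)=0$ via the $\vee q$ smoothing, whereas the paper writes $\mathrm{FDP}\le q\,V^+(T_q)/V^-(T_q)$ and relies implicitly on condition 2 (with $\epsilon<1$) to keep $V^-(T_q)$ strictly positive on the event $\mathcal{B}_{2,\epsilon}$.
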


Theorem \ref{thm:general_framwork} above provides us a unified framework for verifying the robustness of practically used approximate knockoffs procedure. Any specification of $\widehat F$ that can yield $\widehat{\bm W}_j$'s satisfying the three conditions above guarantees asymptotic FDR control. It is worth mentioning that the recent study in \cite{fan2023ark} uses the coupling idea to verify the three conditions above for certain specification of $\widehat F$; we will provide more detailed discussions on this in the next subsection.

Conditions (2) and (3) in Theorem \ref{thm:general_framwork} are not entirely new; they have been previously established in \cite{fan2023ark} for perfect knockoff statistics $\widetilde{\bm{W}}_j$'s resulting from perfect knockoff variables $\widetilde{\bX}$. While the technical details may differ, these conditions share essential conceptual similarities with the earlier work. In the subsequent section, we will demonstrate how these conditions can be verified for specific examples of $\widehat{\bm{W}}_j$'s.

Condition (1), on the other hand, is novel. It requires that the knockoff statistics $\widehat{\bm{W}}_j$ exhibit asymptotic approximate symmetry over a suitable range of $t$ values for null features. For perfect knockoff statistics $\widetilde{\bm{W}}_j$ that satisfy the so-called flip-sign property \citep{candes2018panning}, this condition is inherently satisfied. Using this property, we may construct symmetric random variables $\mathfrak{W}_j$ such that $\mathfrak{W}_j \overset{d}{\approx} \widehat{\bm{W}}_j$, and consequently, $\Prob(\widehat{\bm{W}}_j \geq t ) \approx  \Prob (\mathfrak{W}_j \geq t) =  \Prob (\mathfrak{W}_j \leq -t) \approx \Prob(\widehat{\bm{W}}_j \leq -t )$, thereby confirming condition (1). Then, using condition (2) on the approximation for indicator functions, we can prove that with high probability, null $\widehat{\bm W}_j$'s satisfy that
\begin{equation}\label{What-approx-sym}
    \frac{ \sum_{j\in \mathcal H_0}\mathbf 1\{\widehat{\bm W}_j \leq -t\}}{ \sum_{j\in \mathcal H_0}\mathbf 1\{\widehat{\bm W}_j \geq t\}} \approx 1
\end{equation}
for a suitable range of $t$ values. It is seen that the above result is parallel to \eqref{eq:W-j-exchangeability}.  Using this result, we can prove the asymptotic FDR control for the approximate knockoffs procedure.

We end this section with the proof of Theorem \ref{thm:general_framwork}. In the subsequent section, we will provide concrete examples of the approximate knockoff statistics and demonstrate how the conditions of Theorem \ref{thm:general_framwork} can be verified.

\noindent\textit{Proof of Theorem \ref{thm:general_framwork}}. 
    Let us define $\mathcal{B}_1 = \{T_q \leq \alpha_n  \}$ and 
    \begin{align*}
        \mathcal{B}_{2,\epsilon} \equiv \Bigg\{   \sup_{t \in (0, \alpha_n) }\biggabs{\frac{ \sum_{j \in \mathcal{H}_0}  \bm{1}\{\#\widehat{\bm{W}}_j  \geq t\}  }{\sum _{j \in \mathcal{H}_0}  \Prob (\#\widehat{\bm{W}}_j  \geq t)  } - 1 }\leq \epsilon \ \text{ for } \# \in \{ +,- \}  \Bigg\}
    \end{align*}
    for $\epsilon > 0$. Conditions (2) and (3) above ensure that $ \Prob (\mathcal{B}_1^c  ) \to 0   $  and $ \Prob(\mathcal{B}_{2, \epsilon}^c  ) \to 0 $ for each $\epsilon > 0$.  In addition, it holds naturally that $0 \leq  \mathrm{FDP} \leq 1  $. Then it follows that
    \begin{align*}
        \mathrm{FDR}  \leq \E \left[ \frac { \sum_{j \in \mathcal{H}_0} \bm{1} \{ \widehat{\bm{W}}_j \geq T_q \} } { 1 \vee \sum_{ j  \in [p]} \bm{1} \{\widehat{\bm{W}}_j \geq T_q\}  } \cdot  \bm{1} \{\mathcal{B}_1\}\bm{1} \{\mathcal{B}_{2, \epsilon}\} \right] +  \mathfrak{o}(1).
    \end{align*}
    From the definition of $T_q$, we can deduce that 
    \begin{align*}
        &\frac { \sum_{j \in \mathcal{H}_0} \bm{1} \{ \widehat{\bm{W}}_j \geq T_q \} } { 1 \vee \sum_{ j  \in [p]} \bm{1} \{\widehat{\bm{W}}_j \geq T_q\}  }  \cdot \bm{1} \{\mathcal{B}_1 \}\bm{1} \{\mathcal{B}_{2, \epsilon}\} \\
        &=  \frac { \sum_{j \in \mathcal{H}_0} \bm{1} \{ \widehat{\bm{W}}_j \geq T_q \} } { \sum_{ j  \in \mathcal{H}_0} \bm{1} \{\widehat{\bm{W}}_j \leq -T_q\}  }  \cdot \frac {\sum_{j \in \mathcal{H}_0} \bm{1} \{ \widehat{\bm{W}}_j \leq -T_q \}  } { 1 \vee \sum_{ j  \in [p]} \bm{1} \{\widehat{\bm{W}}_j \geq T_q\}  }  \cdot \bm{1} \{\mathcal{B}_1\}\bm{1} \{\mathcal{B}_{2, \epsilon}\}\\
        &\leq q \cdot  \frac { \sum_{j \in \mathcal{H}_0} \bm{1} \{ \widehat{\bm{W}}_j \geq T_q \} } { \sum_{ j  \in \mathcal{H}_0} \bm{1} \{\widehat{\bm{W}}_j \leq -T_q\}  } \cdot \bm{1} \{\mathcal{B}_1\}\bm{1} \{\mathcal{B}_{2, \epsilon}\}.
    \end{align*}
    	Furthermore, it holds on event $\mathcal{B}_1 \cap \mathcal{B}_{2, \epsilon} $ that 
\begin{align*}
    & \frac { \sum_{j \in \mathcal{H}_0} \bm{1} \{ \widehat{\bm{W}}_j \geq T_q \} } { \sum_{ j  \in \mathcal{H}_0} \bm{1} \{\widehat{\bm{W}}_j \leq -T_q\}  } \leq  \sup_{t \in (0,\alpha_n )} \frac { \sum_{j \in \mathcal{H}_0} \bm{1} \{ \widehat{\bm{W}}_j \geq t \} } { \sum_{ j  \in \mathcal{H}_0} \bm{1} \{\widehat{\bm{W}}_j \leq -t\}  } \\
    &\leq \bigg(\frac{1 + \epsilon}{1 - \epsilon}\bigg)  \cdot \sup_{t \in (0, \alpha_n )} \frac { \sum_{j \in \mathcal{H}_0} \Prob ( \widehat{\bm{W}}_j \geq t ) } { \sum_{ j  \in \mathcal{H}_0} \Prob (\widehat{\bm{W}}_j \leq -t)  }
    \leq \bigg(\frac{1 + \epsilon}{1 - \epsilon} \bigg) \cdot (1 + \mathfrak{o}(1)).
\end{align*}
Consequently, for any $\epsilon > 0$ we can obtain that  $\mathrm{FDR} \leq q  \cdot \big(\frac{1 + \epsilon}{1 - \epsilon} \big) + \mathfrak{o}(1)$, 
which completes the proof of Theorem \ref{thm:general_framwork}.

\subsection{Approximate knockoffs via moments matching} \label{subsec:gaussian-knockoff}

As discussed in the Introduction, the moments matching method has been popularly used in the literature to practically generate knockoff variables. In particular, the Gausssian knockoffs generator defined around \eqref{eq:approx_knock} is arguably the most popularly implemented method for knockoff variables generation.  As also demonstrated in the Introduction, the Gaussian knockoffs appear to achieve the FDR control in the real application even when the features are binary, a significant deviation from Gaussian distribution. We will provide theoretical justification for its asymptotic FDR control property in the next section. Our results are the \textit{first} in the literature to theoretically verify the validity of moments matching methods for generating model-X knockoff variables in achieving asymptotic FDR control.    

A notable property of the Gaussian construction in \eqref{eq:approx_knock} is the exchangeability of the covariance structure. Specifically, for any permutation matrix $\bm{P} \in \R^{2p \times 2p}$ encoding the swaps in the exchangeability condition in (\ref{eq:exchangable}), the covariance matrix satisfies that 
\begin{align}\label{eq:cov_exchange}
	\bm{P}\cov \big[ (X^\top, \widehat{X}^\top )^\top \big]\bm{P} = \cov \big[ (X^\top, \widehat{X}^\top )^\top \big].
\end{align}
It is worth noting that the verification of condition (1) in Theorem \ref{thm:general_framwork} in this paper relies critically on the exchangeability property (\ref{eq:cov_exchange}). 

We are now ready to discuss the major distinctions of our study with the work by \cite{fan2023ark}.  Our Theorem \ref{thm:general_framwork} provides a unified framework for verifying the robustness of the knockoffs procedure. At a high level, \cite{fan2023ark} proved the robustness of the knockoffs procedure through verifying conditions (1)--(3) in our Theorem \ref{thm:general_framwork} by using the proof technique of coupling. The salient  idea is to couple the practically constructed approximate knockoff variable matrix $\widehat{\bm X}$ with a realization of the perfect knockoffs data matrix $\widetilde {\bm X}$; if $\widehat{\bm X}$ and the coupled $\widetilde {\bm X}$  can have asymptotically vanishing distance in realization with high probability, then the latter can be used as  a bridge to verify conditions (1)--(3) in Theorem \ref{thm:general_framwork} and to further prove the asymptotic FDR control using $\widehat{\bm X}$. The existence of such coupled  $\widetilde {\bm X}$ is the key for the coupling idea to work. 

Although the coupling idea can be applied to general distribution $F$, it can only be used to prove  certain type of deviation of $\widehat F$ from $F$. Indeed, the success of Gaussian knockoffs in the real data example mentioned in the Introduction \textit{cannot} be verified using the coupling idea in \cite{fan2023ark}. To formally characterize this point, and to better illustrate the significance of our study in this paper, we provide two examples below. We note that the first example is designed to mimic the real data example in the Introduction.

\begin{exmp}\label{exmp:discrete_covariate}
    Consider $X_1, \ldots, X_p$ as i.i.d. Rademacher random variables. In this case, the perfect knockoff random variables $\tilde{X}_1, \ldots, \tilde{X}_p$ are also i.i.d. Rademacher random variables. Let $\widehat{X}_1, \ldots, \widehat{X}_p \overset{\text{i.i.d.}}{\sim} \mathcal{N}(0, 1)$, and denote by $\{\bm{\widehat{X}}_{i,\cdot}\}_{i \in [n]} \overset{\text{i.i.d.}}{\sim} \widehat{\mu}$ and $\{\bm{\tilde{X}}_{i,\cdot}\}_{i \in [n]} \overset{\text{i.i.d.}}{\sim} \tilde{\mu}$. Then there exists some universal constant $c > 0$ such that 
    \begin{align*}
      \inf_{\gamma \in \Gamma( \widehat{\mu}, \tilde{\mu}) }  \Prob_{ \{(\bm{\widehat{X}}_{i,\cdot},\bm{\tilde{X}}_{i,\cdot} )\}_{i \in [n]} \overset{\text{i.i.d.}}{\sim} \gamma } \Big(  \max_{j \in [p]} n^{-1/2} \pnorm{ \bm{\widehat{X}}_{\cdot,j} - \bm{\tilde{X}}_{\cdot,j} }{2}  \geq c \Big) \geq c,
    \end{align*}
    where $ \Gamma( \widehat{\mu}, \tilde{\mu})$ is the set consisting of all couplings of $\widehat{\mu}$ and $\tilde{\mu}$. Since $c$ is a constant that does \textit{not} asymptotically vanish with sample size, the above result indicates that there does \textit{not} exist an ideal knockoff variable matrix that can be coupled with Gaussian knockoffs $\widehat{\bm X}$ with the required accuracy by the coupling idea for ensuring the asymptotic FDR control. On the contrary, our Corollary \ref{cor:specialdis} (to be presented in the next section), proved by verifying the three conditions in Theorem \ref{thm:general_framwork}, shows that Gaussian knockoffs can indeed achieve the asymptotic FDR control under some regularity conditions.   
\end{exmp}

\begin{exmp}\label{ex2-t-distr}
Consider $X_1, \ldots, X_p$ as i.i.d. random variables following the $t$-distribution with $\mathfrak{q}$ degrees of freedom, normalized to have variance $1$. Let $\widehat{X}_1, \ldots, \widehat{X}_p \overset{\text{i.i.d.}}{\sim} \mathcal{N}(0, 1)$. To ensure effective coupling, the theory in \cite{fan2023ark} requires $\mathfrak{q}$ to diverge to infinity at certain rate, cf. \cite[Proposition 2]{fan2023ark}. In contrast, our Corollary \ref{cor:specialdis} provides the asymptotic FDR control result for finite $\mathfrak{q} \geq 3$, significantly relaxing the requirement on the tail behavior of the $t$-distribution. 

\end{exmp}

We end this section by emphasizing that while we adopt the Gaussian knockoffs via two moments matching for simplicity, our general framework is \textit{not} constrained to this particular choice. Our theoretical analysis does \textit{not} rely on the Gaussianity of knockoffs in \eqref{eq:approx_knock}; random vector
$\mathsf{Z}$ in \eqref{eq:approx_knock} can be replaced with one from any distribution that meets the required moment conditions.
This approach eliminates the need for a fully specified feature distribution, streamlining the knockoffs generation process.

\section{Two-moment-based knockoff statistics} \label{sec:examples}

We consider three concrete constructions of the knockoff statistics in this section and verify that they can satisfy conditions (1)--(3) in Theorem \ref{thm:general_framwork} and thus achieves the asymptotic FDR control. It is important to note that in all examples in this section, i) the Gaussian knockoffs generator \eqref{eq:approx_knock} is used to generate $\widehat{\bm{X}}$, and ii) knockoff statistics $\widehat{\bm W}_j$'s depend only on the first two moments of data matrix.  All the notation will be local in each subsection.

\subsection{Marginal correlation knockoff statistics % with independent features
} \label{sec:marginial_corr}

We begin with a warm-up case when $X$ consists of independent features. The dependent case is considered at the end of this subsection. Let $\sigma_j \equiv \var^{1/2} (X_j )$ for each $j \in [p]$. 
For the independent case, it holds that $\Sigma_X = \mathrm{diag}( \{ \sigma_j^2\}_{j \in [p]} )$, and we may simply set $\bm{r} = (\sigma_1^2,\ldots,\sigma_p^2 )^\top$ in \eqref{eq:approx_knock}. In this setting, the knockoff variable matrix $\widehat{\bm{X}}$ is given by
$
	\widehat{\bm{X}} =  \widehat{\bm{Q}}\Sigma_X^{1/2},
$
where $\widehat{\bm{Q}} \in \R^{n \times p}$ consists of i.i.d. $\mathcal{N}(0,1)$ entries that are independent of $\bm{X}$ and $\bm{Y}$.

 Given the original data matrix $\bm{X}$ and the knockoff variable  matrix $\widehat{\bm{X}}$, the knockoff statistics based on the marginal correlation difference are defined as 
    \begin{align*}
        \widehat{\bm{W}}_j \equiv  \frac{\abs{ \bm{X}_{\cdot,j}^\top  \bm{Y} } }{\pnorm{\bm{Y}}{}} - \frac{\abs{ \widehat{\bm{X}}_{\cdot,j}^\top  \bm{Y} } }{\pnorm{\bm{Y}}{}}  , \quad j \in [p].
    \end{align*}
It measures the marginal linear dependence of each feature on the response variable. The normalization by $\pnorm{\bm{Y}}{}$ ensures the scale invariance, making these statistics robust to variations in the response magnitude. The marginal correlation statistics have also been studied in \cite{Barber2020, fan2023ark, Reconciling2024}.

We will work with the following moment conditions:
\begin{enumerate}[label=(C\arabic*), ref=C\arabic*]
	\item \label{cond:C1} The components of $\bm{Y}$ are i.i.d. and uniformly sub-exponential, and  matrix $ \bm{X} = \bm{Q}\Sigma_X^{1/2}$, where the entries of $\bm{Q}$ are independent with mean $0$, variance $1$, and uniformly sub-exponential tails.
        \item \label{cond:C2} $\sigma_1,\ldots, \sigma_p \in [\mathsf{M}^{-1}, \mathsf{M}]$  for some constant $\mathsf{M} > 0$.
\end{enumerate}

The  uniform sub-exponential assumption above ensures that $\sup_{i\in [n],j\in[p]}(\pnorm{\bm{Y}_i}{\psi_1} + \pnorm{{\bm{Q}}_{i,j}}{\psi_1}  ) \\\leq C$ for some universal $C > 0$, where $\psi_1$ is the Orlicz $1$-norm. Under conditions \eqref{cond:C1} and \eqref{cond:C2}, a central limit theorem heuristic implies that conditional on $\bm{Y}$, for any $j \in \mathcal{H}_0$,
\begin{align*}
\binom{ { \bm{X}_{\cdot,j}^\top \bm{Y} }/\pnorm{\bm{Y}}{}  }{{ \widehat{\bm{X}}_{\cdot,j}^\top \bm{Y} }/\pnorm{\bm{Y}}{}}\overset{d}{\approx}  \mathcal{N}\big(0, \sigma_j^2I_2 \big).
\end{align*} 
The invariance of the covariance structure in (\ref{eq:cov_exchange}) is reflected here by the equality that $\var \big( \frac{\bm{X}_{\cdot,j}^\top \bm{Y}}{\pnorm{\bm{Y}}{}}    |  \bm{Y}\big) = \var \big( \frac{\widehat{\bm{X}}_{\cdot,j}^\top \bm{Y} }{\pnorm{\bm{Y}}{}}  |  \bm{Y} \big) = \sigma_j^2$. Consequently, we expect that for each $j \in \mathcal{H}_0$,
\begin{align*}
	& \Prob ( \widehat{\bm{W}}_j \geq t) \approx \mathcal{P}_j(t)  \equiv \Prob_{(\mathsf{G}_1, \mathsf{G}_2)^\top \sim \mathcal{N}(0, \sigma_j^2I_2   ) }(\abs{\mathsf{G}_1} - \abs{\mathsf{G}_2}  \geq t) \\
	& =\Prob_{(\mathsf{G}_1, \mathsf{G}_2)^\top \sim \mathcal{N}(0, \sigma_j^2I_2  ) }(\abs{\mathsf{G}_1} - \abs{\mathsf{G}_2}  \leq -t) \approx \Prob ( \widehat{\bm{W}}_j \leq -t),
\end{align*}
which leads to the verification of condition 1 in Theorem \ref{thm:general_framwork}. It is important to note that the approximation above is understood in the sense of ratios, i.e., $\Prob ( \widehat{\bm{W}}_j \geq t)/ \mathcal{P}_j(t)  \approx 1 $ and $\Prob ( \widehat{\bm{W}}_j \leq -t)/\mathcal{P}_j(t) \approx 1$. This ratio-based formulation can be interpreted as a moderate deviation result, providing an effective approximation across a broad range of $t$ values. Proving the ratio-based approximation is much more challenging than proving the difference-based approximation, because the denominator can take vanishingly small values over some range of $t$ values.  A rigorous statement and proof of these results can be found in Theorem \ref{thm:moder_dis_mcks} in the Supplementary Material, where the proofs  are included therein as well.

The proposition below presents the concentration of the approximate knockoff statistics. 

\begin{proposition}\label{prop:conc_knockoff}
Assume that conditions \eqref{cond:C1} and \eqref{cond:C2} are satisfied. Then if $\log p = \mathfrak{o}(\sqrt{n})$, there exists some constant $C = C(\mathsf{M}) > 0$ such that for any $j \in [p]$, 
	    \begin{align*}
	    	\Prob \big(  \bigabs{ \widehat{\bm{W}}_j - \widehat{\bm{w}}_j } \geq C\delta_n \big) \leq   p^{-10} + \exp ( -n  \E^2 Y^2/C ),
	    \end{align*}
        	    where $\widehat{\bm{w}}_j \equiv  \sqrt{n}\abs{\E X_j Y} /\E^{1/2}Y^2$ and $\delta_n \equiv  \frac{\log p}{\E^{1/2}Y^2} + \frac{\log^2 p}{\sqrt{ n\E Y^2}} $.
\end{proposition}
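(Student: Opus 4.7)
The plan is to decompose $\widehat{\bm{W}}_j - \widehat{\bm{w}}_j$ into contributions handled by three separate concentration phenomena and combine them by the triangle inequality. Throughout, write $\mu_j \equiv \E X_j Y$ and $\nu \equiv \E Y^2$. The first step is to control the denominator $\pnorm{\bm{Y}}{}$: since each $\bm{Y}_i^2$ is sub-Weibull of order $1/2$ with uniformly bounded variance under \eqref{cond:C1}, Bernstein's inequality applied at the coarse scale $\nu/2$ falls in the Gaussian regime and gives
\begin{align*}
\Prob\bigl(\bigabs{\pnorm{\bm{Y}}{}^2 - n\nu} \geq n\nu/2\bigr) \leq 2\exp(-cn\nu^2),
\end{align*}
which accounts for the $\exp(-n\nu^2/C)$ failure term in the proposition and traps $\pnorm{\bm{Y}}{}/\sqrt{n}$ within a constant factor of $\sqrt{\nu}$. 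A second application at the finer scale refines this, with failure $\leq p^{-10}$, to $\bigabs{\pnorm{\bm{Y}}{}^2 - n\nu} \lesssim \sqrt{n\log p} + \log^2 p$.

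Next I bound the two inner products. Each summand $\bm{X}_{i,j}\bm{Y}_i$ of $\bm{X}_{\cdot,j}^\top \bm{Y}$ is a product of two sub-exponential variables under \eqref{cond:C1}--\eqref{cond:C2}, hence sub-Weibull of order $1/2$. Bernstein's inequality gives
\begin{align*}
\Prob\Bigl(\bigabs{n^{-1}\bm{X}_{\cdot,j}^\top \bm{Y} - \mu_j} \geq t\Bigr) \leq 2\exp\bigl(-c\min(nt^2,\sqrt{nt})\bigr),
\end{align*}
so choosing $t \asymp \sqrt{\log p/n} + \log^2 p/n$ yields the $p^{-10}$ failure term of the proposition. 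For the knockoff inner product, the construction in \eqref{eq:approx_knock} (with diagonal $\Sigma_X$ in the independent case) makes $\widehat{\bm{X}}_{\cdot,j}^\top \bm{Y}\mid \bm{Y} \sim \mathcal{N}(0,\sigma_j^2\pnorm{\bm{Y}}{}^2)$, so a standard Gaussian tail bound gives $\abs{\widehat{\bm{X}}_{\cdot,j}^\top \bm{Y}}/\pnorm{\bm{Y}}{} \lesssim \sqrt{\log p}$ with failure $\leq p^{-10}$.

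Combining via the triangle inequality on the intersection of these good events,
\begin{align*}
\bigabs{\widehat{\bm{W}}_j - \widehat{\bm{w}}_j} \leq \biggabs{\frac{\abs{\bm{X}_{\cdot,j}^\top \bm{Y}}}{\pnorm{\bm{Y}}{}} - \frac{\sqrt{n}\abs{\mu_j}}{\sqrt{\nu}}} + \frac{\abs{\widehat{\bm{X}}_{\cdot,j}^\top \bm{Y}}}{\pnorm{\bm{Y}}{}}.
\end{align*}
The knockoff piece is $\leq C\sqrt{\log p} \leq C\log p/\sqrt{\nu}$, using that $\nu$ is bounded above under \eqref{cond:C1}. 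The data piece I split further into $\bigabs{\bm{X}_{\cdot,j}^\top \bm{Y} - n\mu_j}/\pnorm{\bm{Y}}{}$ plus $\sqrt{n}\abs{\mu_j}\bigabs{\sqrt{n}/\pnorm{\bm{Y}}{} - 1/\sqrt{\nu}}$: the first sub-piece is $\lesssim (\sqrt{n\log p}+\log^2 p)/\sqrt{n\nu} \lesssim \delta_n$ by combining Step 2 with the Step 1 lower bound $\pnorm{\bm{Y}}{} \geq \sqrt{n\nu/2}$, while the second sub-piece is controlled by invoking the Cauchy--Schwarz estimate $\abs{\mu_j}\leq \sigma_j\sqrt{\nu}$ together with the finer Step 1 bound. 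The main obstacle is this second sub-piece: both $\sqrt{n}\abs{\mu_j}$ and the reciprocal fluctuation $\bigabs{\sqrt{n}/\pnorm{\bm{Y}}{} - 1/\sqrt{\nu}}$ can blow up when $\nu$ is small, so the proof must carefully exploit the cancellation between the $\sqrt{\nu}$ supplied by Cauchy--Schwarz and the $\sqrt{\nu}$-like factor appearing in the denominator of the reciprocal fluctuation to avoid spurious $1/\nu$ losses. It is precisely this interplay between the scale of the signal $\abs{\mu_j}$ and the fluctuation of $\pnorm{\bm{Y}}{}$ that forces the failure bound to carry both the $p^{-10}$ term (from the Bernstein and Gaussian tail applications) and the $\exp(-n\nu^2/C)$ term (from pinning $\pnorm{\bm{Y}}{}$ within a constant factor of $\sqrt{n\nu}$).
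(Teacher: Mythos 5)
Your overall plan — truncate on a good event for $\pnorm{\bm{Y}}{}$, concentrate both inner products, and assemble by the triangle inequality — matches the paper's strategy, and several pieces are essentially identical: the coarse lower bound $\pnorm{\bm{Y}}{}^2 \geq n\nu/2$ at cost $\exp(-cn\nu^2)$ (the paper uses the one-sided Lemma \ref{thm:conc_nonnegative}, whose lower tail is automatically sub-Gaussian for sums of nonnegatives, so no ``Gaussian regime'' check is needed), the $1/2$-sub-exponential concentration of $n^{-1}\bm{X}_{\cdot,j}^\top\bm{Y}$, and the final bookkeeping. Your Gaussian-tail treatment of $\widehat{\bm{X}}_{\cdot,j}^\top\bm{Y}$ exploits the exact Gaussianity of the knockoffs; the paper instead reuses the same $1/2$-sub-exponential tail bound and the fact that $\E \widehat X_j Y = 0$ — both routes work for that piece.

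The genuine gap is in the fine-scale part of your Step~1. You concentrate the \emph{squared} norm by Bernstein, getting $\bigabs{\pnorm{\bm{Y}}{}^2 - n\nu} \lesssim \sqrt{n\log p} + \log^2 p$; combined with the coarse pinning $\pnorm{\bm{Y}}{} \asymp \sqrt{n\nu}$, that only yields $\bigabs{\pnorm{\bm{Y}}{} - \sqrt{n\nu}} \lesssim \sqrt{\log p/\nu} + \log^2 p/\sqrt{n\nu}$. The first term exceeds $\log p$ whenever $\nu < 1/\log p$, and this propagates through your second sub-piece $\sqrt{n}\abs{\mu_j}\cdot\bigabs{\sqrt{n}/\pnorm{\bm{Y}}{} - 1/\sqrt{\nu}}$: after Cauchy--Schwarz $\abs{\mu_j} \leq \sigma_j\sqrt{\nu}$ you land at $\sigma_j\sqrt{\log p}/\nu$, which exceeds $\delta_n$'s leading term $\log p/\sqrt{\nu}$ by the factor $1/\sqrt{\nu\log p}$. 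Condition~\eqref{cond:C1} bounds $\E Y^2$ above but not below, so this factor cannot be absorbed into $C(\mathsf{M})$. You explicitly flag the danger of ``spurious $1/\nu$ losses'' and hope some cancellation saves the day, but there is no cancellation inside this decomposition to find — the problem is upstream. The missing ingredient is to concentrate $\pnorm{\bm{Y}}{}$ \emph{itself} rather than its square: the paper invokes a norm-concentration inequality for sub-exponential vectors (\cite[Proposition~2.2]{sambale2023some}), which rests on the $1$-Lipschitz property of the Euclidean norm and delivers $\bigabs{\pnorm{\bm{Y}}{} - \sqrt{n}\E^{1/2}Y^2} \leq C\log p$ with failure $p^{-10}$, a rate with no $\nu$ dependence. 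Plugging that bound into your decomposition removes the extraneous $1/\sqrt{\nu}$ and recovers $\delta_n$.
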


The quantity $ \widehat{\bm{w}}_j$ in Proposition \ref{prop:conc_knockoff} above can be interpreted as the signal strength. Clearly, $ \widehat{\bm{w}}_j = 0 $ for each $ j \in \mathcal{H}_0 $. For knockoff statistics, one expects large and positive signals for $ j \in \mathcal{H}_1 $, cf. \cite{barber2015controlling,candes2018panning}. Specifically, we impose the following conditions on the class of approximate knockoff statistics:

\begin{enumerate}[label=(C\arabic*), ref=C\arabic*, start = 3]
	\item \label{cond:C3} For a large enough constant $C >0$, $a_n \equiv \#\{j \in \mathcal{H}_1: \widehat{\bm{w}}_j \geq C{\delta}_n  \} \to \infty $.
    \item \label{cond:C4} For any $q \in (0,1)$, $(qa_n)^{-1} \sum_{j \in \mathcal{H}_1}\Prob ( \widehat{\bm{W}}_j <- \mathcal{P}^{-1} ( \frac{q  a_n }{2p}) ) \to 0$. Here, for any $x \in [0,1/2]$, $\mathcal{P}^{-1}(x) \equiv \sup \{t \geq 0 : \mathcal{P}(t) \geq x \}$ with $\mathcal{P}(t) \equiv \abs{\mathcal{H}_0}^{-1}\sum_{j \in \mathcal{H}_0} \mathcal{P}_j(t)$. 
\end{enumerate}

Intuitively, the two conditions above characterize the desired quality for the approximate knockoff statistics $\widehat{\bm W}_j$ as a variable importance measure. Our study reveals that the accuracy of $\widehat{\bm W}_j$'s as variable importance measure has a great implication on the robustness of the knockoffs procedure. Ideally, accurate $\widehat{\bm W}_j$'s should possess the following properties: i)  $\widehat{\bm W}_j$'s for signals to concentrate around large positive values with small variation, and ii) $\widehat{\bm W}_j$'s for null features to concentrate around $0$ with small variation. The higher quality the $\widehat{\bm W}_j$'s, the more robust the knockoffs procedure is with respect to the deviation of $\widehat F$ from $F$. 

Condition \eqref{cond:C3} states that there should be enough number of nonnull features whose $\widehat{\bm W}_j$'s are reasonably large (i.e., $\geq C\delta_n$).
The technical condition \eqref{cond:C4} imposes a regularity condition on the distribution of knockoff statistics, ensuring that extreme negative values of $\widehat{\bm{W}}_j$ are sufficiently rare for signals $j \in \mathcal{H}_1$. Intuitively, this condition prevents the knockoff statistics for true signals from behaving like the the ones for null features in the tail regions. 
Below we discuss two special cases when condition \eqref{cond:C4} is satisfied:
\begin{itemize}
    \item Assume that the signal strength is sufficiently strong such that  $\widehat{\bm{w}}_j= \sqrt{n}\abs{\E X_j Y} /\E^{1/2}Y^2 \\\geq C\delta_n$ for all $j \in \mathcal{H}_1$. Then from the concentration result in Proposition \ref{prop:conc_knockoff}, we have 
    \begin{align*}
        \sum_{j \in \mathcal{H}_1}\Prob \bigg( \widehat{\bm{W}}_j <- \mathcal{P}^{-1} \bigg( \frac{q  a_n }{2p} \bigg) \bigg) \leq \sum_{j \in \mathcal{H}_1} \Prob \big( \widehat{\bm{W}}_j <0 \big) \to 0.
    \end{align*}
    \item Assume that, on average, the probability of a relevant feature having a negative valued $\widehat{\bm{W}}_j$ is smaller than the corresponding probability for an irrelevant feature. Specifically, assume that $\abs{\mathcal{H}_1}^{-1} \sum_{j \in \mathcal{H}_1}\Prob ( \widehat{\bm{W}}_j <- \mathcal{P}^{-1} ( \frac{q  a_n }{2p}) ) \leq \abs{\mathcal{H}_0}^{-1} \sum_{j \in \mathcal{H}_0}\Prob ( \widehat{\bm{W}}_j <- \mathcal{P}^{-1} ( \frac{q  a_n }{2p}) ) $. Then from the moderate deviation result in Theorem \ref{thm:moder_dis_mcks} in the Supplementary Material, 
    \begin{align*}
        &\frac{1}{qa_n} \sum_{j \in \mathcal{H}_1}\Prob \bigg( \widehat{\bm{W}}_j <- \mathcal{P}^{-1} \bigg( \frac{q  a_n }{2p} \bigg) \bigg) \leq \frac{\abs{\mathcal{H}_1}}{qa_n}\cdot \frac{1 }{ \abs{\mathcal{H}_0}} \sum_{j \in \mathcal{H}_0}\Prob \bigg( \widehat{\bm{W}}_j <- \mathcal{P}^{-1} \bigg( \frac{q  a_n }{2p} \bigg) \bigg)  \\
        &\quad \leq \frac{\abs{\mathcal{H}_1}}{qa_n} \mathcal{P}\bigg(  \mathcal{P}^{-1} \bigg( \frac{q  a_n }{2p} \bigg)   \bigg) \cdot (1 + \mathfrak{o}(1)) = \frac{\abs{\mathcal{H}_1}}{2p}\cdot (1 + \mathfrak{o}(1)) \to 0.
    \end{align*}
    A similar assumption regarding the tail probability of the relevant feature is made in \cite{fan2023ark}; see Condition 10 therein.
\end{itemize}

The sub-exponential tail assumption in Condition \eqref{cond:C1} rules out the $t$-distribution in Example \ref{ex2-t-distr}. Yet, simulation studies as in \cite{fan2023ark} show that approximate knockoffs procedure achieves the FDR control even when the degrees of freedom for the $t$-distribution is as small as $\mathfrak{q}=5$. Motivated by this, we extend our theoretical analysis to establish asymptotic FDR control for heavy-tailed covariates. To this end, we introduce the following conditions tailored for heavy-tailed settings:

\begin{enumerate}[label=(C\arabic*'), ref=C\arabic*']
	\item \label{cond:C1'}  
    The components of $\bm{Y}$ are i.i.d. and uniformly sub-exponential, and matrix $ \bm{X} = \bm{Q}\Sigma_X^{1/2}$, where the entries of $\bm{Q}$ are independent and have mean $0$, variance $1$, and finite $\mathfrak{q}$-th moments.
	   \setcounter{enumi}{2}
    \item \label{cond:C3'}  For a large enough constant $C >0$, $a_n \equiv \#\{j \in \mathcal{H}_1: \widehat{\bm{W}}_j \geq C{\delta}_{n;\mathfrak{q}}  \} \to \infty $. Here, $\delta_{n;\mathfrak{q}} \equiv \frac{p^{2/\mathfrak{q}}\log p}{\E^{1/2}Y^2} +  \frac{p^{2/\mathfrak{q}}\log^2 p}{\sqrt{n\E Y^2}} $.
\end{enumerate}

We are now ready to present the result for the asymptotic FDR control of marginal correlation knockoff statistics.

\begin{theorem}\label{thm:FDR_mcks}
Assume that either of the following conditions holds:
\begin{enumerate}
	\item \eqref{cond:C1}--\eqref{cond:C4} hold and $\log p = \mathfrak{o}(n^c)$ for some small enough constant $c>0$. % $c \in (0,1/50)$.
    \item  \eqref{cond:C1'} and \eqref{cond:C3'} hold for some $\mathfrak{q} \geq 3$, \eqref{cond:C2} and \eqref{cond:C4} hold, and  $p \log n = \mathfrak{o}(\sqrt{n})$.
\end{enumerate}
Then for any $q \in (0,1)$, conditions (1)--(3) in Theorem \ref{thm:general_framwork} are satisfied for $\alpha_n = \mathcal{P}^{-1} \big( \frac{q a_n}{2p} \big)$. Consequently, we have 
\begin{align*} 
\limsup_{n \to \infty} \mathrm{FDR} \leq q. 
\end{align*} 
\end{theorem}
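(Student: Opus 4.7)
The plan is to verify the three conditions of Theorem \ref{thm:general_framwork} at the level $\alpha_n = \mathcal{P}^{-1}(qa_n/(2p))$, using three inputs: the moderate-deviation estimates of Theorem \ref{thm:moder_dis_mcks} in the Supplementary Material, which yield $\Prob(\widehat{\bm{W}}_j \geq t)/\mathcal{P}_j(t) = 1 + \mathfrak{o}(1)$ and $\Prob(\widehat{\bm{W}}_j \leq -t)/\mathcal{P}_j(t) = 1 + \mathfrak{o}(1)$ uniformly over $j \in \mathcal{H}_0$ and $t \in (0,\alpha_n)$; the concentration bound of Proposition \ref{prop:conc_knockoff}; and the conditional independence of $\{\widehat{\bm{W}}_j\}_{j \in \mathcal{H}_0}$ given $(\bm{X}_{\mathcal{H}_1},\bm{Y})$, which holds because the columns of $\bm{Q}$ are independent and $\widehat{\bm{X}}$ is built from an independent Gaussian matrix $\widehat{\bm{Q}}$.

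For condition (1), the two moderate-deviation ratio estimates combine directly: summing over $j \in \mathcal{H}_0$ and taking the ratio gives $\sum_{j\in \mathcal{H}_0}\Prob(\widehat{\bm{W}}_j \geq t)/\sum_{j\in \mathcal{H}_0}\Prob(\widehat{\bm{W}}_j \leq -t) = 1 + \mathfrak{o}(1)$ uniformly on $(0,\alpha_n)$. The subtle point is to verify that $\alpha_n \lesssim \sqrt{\log p}$ (up to polynomial-in-$p$ corrections under the heavy-tailed case \eqref{cond:C1'}), so that $\alpha_n$ lies inside the moderate-deviation regime admitted by Theorem \ref{thm:moder_dis_mcks} under $\log p = \mathfrak{o}(n^c)$ or $p\log n = \mathfrak{o}(\sqrt n)$ respectively.

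For condition (2), conditionally on $(\bm{X}_{\mathcal{H}_1},\bm{Y})$ the null indicators $\mathbf{1}\{\pm\widehat{\bm{W}}_j \geq t\}$ are independent, so a Bernstein/Bennett bound at each fixed $t$ gives deviation probability $\exp(-c\,\mu_\pm(t))$, where $\mu_\pm(t) \equiv \sum_{j\in\mathcal{H}_0}\Prob(\pm\widehat{\bm{W}}_j \geq t)$. Discretizing $(0,\alpha_n)$ on a polynomial-in-$p$ grid and union bounding upgrades this to uniform control; the bottleneck is $t = \alpha_n$, at which $\mu_\pm(\alpha_n) \asymp qa_n/2$, so the approximation holds provided $a_n/\log p \to \infty$, a property that can be arranged by enlarging the constant $C$ in \eqref{cond:C3}.

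For condition (3), I specialize at $t = \alpha_n$. The null tail count satisfies $\sum_{j\in\mathcal{H}_0}\mathbf{1}\{\widehat{\bm{W}}_j \leq -\alpha_n\} \leq (qa_n/2)(1+\mathfrak{o}_{\mathbf{P}}(1))$ by conditions (1)--(2), while \eqref{cond:C4} combined with Markov gives $\sum_{j\in\mathcal{H}_1}\mathbf{1}\{\widehat{\bm{W}}_j \leq -\alpha_n\} = \mathfrak{o}_{\mathbf{P}}(qa_n)$. For the denominator, Proposition \ref{prop:conc_knockoff} together with \eqref{cond:C3} (with $C$ large enough so that $C\delta_n$ exceeds $\alpha_n$ plus the concentration error) gives $\#\{j\in\mathcal{H}_1:\widehat{\bm{W}}_j \geq \alpha_n\} \geq a_n(1-\mathfrak{o}_{\mathbf{P}}(1))$, so the ratio in the definition of $T_q$ at $t = \alpha_n$ is at most $q/2 + \mathfrak{o}_{\mathbf{P}}(1) < q$, which forces $T_q \leq \alpha_n$ with probability $1 - \mathfrak{o}(1)$. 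The main obstacle throughout is the heavy-tailed case \eqref{cond:C1'}, where the underlying CLT has only $\mathfrak{q}$ finite moments and the polynomial corrections $p^{2/\mathfrak{q}}$ in $\delta_{n;\mathfrak{q}}$ must be threaded carefully through both the moderate-deviation range for condition (1) and the signal-strength bookkeeping for condition (3).
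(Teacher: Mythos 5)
Your overall plan---verifying conditions (1), (2), (3) of Theorem~\ref{thm:general_framwork} at $\alpha_n = \mathcal{P}^{-1}(qa_n/(2p))$ via the moderate-deviation estimate, the indicator approximation, and localizing $T_q$---is the paper's plan, and your treatments of conditions (1) and (3) essentially match Corollary~\ref{cor:ratio_G} and Lemma~\ref{lem:local_T}. Your treatment of condition (2), however, has genuine gaps. You propose a $\bm{Y}$-conditional Bernstein bound giving failure probability $\exp(-c\,\mu_\pm(t))$ per grid point, then a union bound over a polynomial-in-$p$ grid. Since $\mu_\pm(\alpha_n)\asymp q a_n/2$, this requires $a_n \gg \log p$. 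But condition \eqref{cond:C3} only supplies $a_n \to \infty$, and your suggested fix---``enlarging the constant $C$ in \eqref{cond:C3}''---runs backwards: raising the threshold $C\delta_n$ in $a_n = \#\{j \in \mathcal{H}_1: \widehat{\bm{w}}_j \geq C\delta_n\}$ can only \emph{shrink} $a_n$. The paper avoids this entirely (Lemma~\ref{lem:indicator_app}) by bounding $\var\bigl(\sum_{j\in\mathcal{H}_0}\bm{1}\{\widehat{\bm{W}}_j\geq t\}\bigr)$ through the law of total variance over $\bm{Y}$, applying Chebyshev at each grid point, and then union-bounding over a \emph{geometric} grid with only $l_n \asymp (\log p)^{1/\gamma}$ points, so the argument closes under merely $a_n \to \infty$. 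Your Bernstein bound also concentrates around the random conditional mean $\sum_j\Prob^{\bm{Y}}(\widehat{\bm{W}}_j\geq t)$; transferring that to the unconditional $\mu_\pm(t)$ is precisely the $V_2$ term in Lemma~\ref{lem:var_bound} (controlled via the $\bm{Y}$-conditional moderate-deviation estimate on $\mathcal{E}_Y$), and this step is missing from your outline.

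For the heavy-tailed case \eqref{cond:C1'}, you treat the distributional approximation as the same ratio-form moderate-deviation estimate with polynomial corrections $p^{2/\mathfrak{q}}$. The paper is explicit (see the remark after Theorem~\ref{thm:FDR_mcks}) that the moderate-deviation result of Theorem~\ref{thm:moder_dis_mcks} does \emph{not} hold under finite $\mathfrak{q}$-th moments; it switches to the additive Berry--Esseen bound $\sup_t\bigl|\Prob(\widehat{\bm{W}}_j \geq t)-\mathcal{P}_j(t)\bigr| \lesssim \log n/\sqrt{n}$ (Theorem~\ref{thm:dis_mcks_be}), converts it to ratio form by dividing through by $\alpha_n \asymp a_n/p$ (Corollary~\ref{cor:ratio_G_be}), and this is exactly what forces $p\log n = \mathfrak{o}(\sqrt{n})$ and the separate variance bound in \eqref{ineq:variance_b_heavy}. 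Your proposal does not engage with this change of method, so the claimed uniform ratio estimate for condition (1) is unjustified in that regime.
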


\begin{remark}
    In the heavy-tailed setting, additional requirements are imposed compared to the light-tailed case:
    \begin{enumerate}
        \item Condition \eqref{cond:C3'} requires that the signal strength increases as the tails of the entries in the knockoffs data matrix become heavier. This requirement is intuitive because heavier tails introduce larger variation in the knockoff statistics $\widehat{\bm W}_j$'s, making it more challenging to distinguish true signals from noise ones. A stronger signal condition is imposed here to mitigate these effects and ensure the asymptotic FDR control.
        \item The restriction $p \log n = \mathfrak{o}(\sqrt{n})$ limits the allowable growth of $p$ relative to $n$. This condition arises because the moderate deviation result for $\widehat{\bm W}_j$'s  used in the light-tailed case (e.g., Theorem \ref{thm:moder_dis_mcks} in the Supplementary Material) may not hold for heavy-tailed distributions. Instead, we have to rely on the Berry--Esseen Theorem to prove condition (1) in Theorem \ref{thm:general_framwork}, which is evidently less precise than the moderate deviation result when $t$ is moderately large, specifically for $t = \Theta(\log p)$. We conjecture that if a robust construction of $\widehat{\bm W}_j$'s (with respect to heavy tails) is used, then the dimensionality assumption here can be further relaxed. We leave this for future investigation.  
    \end{enumerate}
\end{remark}

The corollary below is to formally justify the asymptotic FDR control using Gaussian knockoffs in the settings presented in Examples \ref{exmp:discrete_covariate} and \ref{ex2-t-distr} in the last section.

\begin{corollary}\label{cor:specialdis}
Assume that $Y$ is sub-exponential and either of the following holds:
\begin{enumerate}
    \item $X_1,\ldots,X_p$ are i.i.d. Rademacher random variables, \eqref{cond:C3} and \eqref{cond:C4} are satisfied, and $\log p = \mathfrak{o}(n^{c})$ for some small enough constant $c>0$. %$c \in (0,1/50)$.
    \item $X_1,\ldots,X_p$ are i.i.d. $t$-distributed random variables with $\mathfrak{q} \geq 3$ degrees of freedom, \eqref{cond:C3'} and \eqref{cond:C4} are satisfied, and $p\log n  = \mathfrak{o}(\sqrt{n})$.
\end{enumerate}
Then we have 
\begin{align*} 
\limsup_{n \to \infty} \mathrm{FDR} \leq q. 
\end{align*} 
\end{corollary}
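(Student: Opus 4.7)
The plan is to derive Corollary \ref{cor:specialdis} as a direct specialization of Theorem \ref{thm:FDR_mcks}, so the entire task reduces to checking the structural/moment/distributional hypotheses of that theorem in each of the two concrete scenarios; conditions \eqref{cond:C3}--\eqref{cond:C3'}--\eqref{cond:C4} are granted, and the dimensional regimes $\log p = \mathfrak{o}(n^c)$ and $p\log n = \mathfrak{o}(\sqrt{n})$ are likewise assumed, so what remains is verifying \eqref{cond:C1}/\eqref{cond:C1'} together with \eqref{cond:C2}.

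For Case~1 (Rademacher), I would first note that $\Sigma_X = I_p$, so writing $\bm{X} = \bm{Q}\Sigma_X^{1/2} = \bm{Q}$ the entries of $\bm{Q}$ are i.i.d.\ Rademacher, hence mean zero, unit variance, and bounded by $1$, and therefore uniformly sub-exponential. Combined with the sub-exponentiality of $Y$ that is assumed in the corollary, this yields \eqref{cond:C1}. Condition \eqref{cond:C2} is immediate with $\mathsf{M} = 1$ since $\sigma_j = 1$. With \eqref{cond:C3} and \eqref{cond:C4} assumed, part~1 of Theorem~\ref{thm:FDR_mcks} applies and delivers the conclusion $\limsup_n \mathrm{FDR} \leq q$.

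For Case~2 ($t$-distribution with $\mathfrak{q} \geq 3$ degrees of freedom), I would again use $\Sigma_X \propto I_p$ and set $\bm{Q}$ to be the standardized (unit variance) copy of $\bm{X}$. The $t_{\mathfrak{q}}$-distribution has finite moments of order strictly less than the degrees of freedom; choosing the moment order $\mathfrak{q}' \in [3, \mathfrak{q})$ in \eqref{cond:C1'} (which is permitted since the theorem only needs some such order) gives finite $\mathfrak{q}'$-th moments for the independent, mean-zero, unit-variance entries of $\bm{Q}$, together with the sub-exponentiality of $Y$. This verifies \eqref{cond:C1'}. Condition \eqref{cond:C2} is again immediate, while \eqref{cond:C3'} and \eqref{cond:C4} are assumed and $p\log n = \mathfrak{o}(\sqrt{n})$ is given. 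Invoking part~2 of Theorem~\ref{thm:FDR_mcks} with this $\mathfrak{q}'$ completes the argument.

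The main obstacle, conceptually, is not in the verification here but upstream in Theorem~\ref{thm:FDR_mcks} itself—where conditions (1)--(3) of Theorem~\ref{thm:general_framwork} must be checked for the marginal correlation statistics using moderate deviation bounds in the light-tailed case and Berry--Esseen-type bounds in the heavy-tailed case. At the corollary level, the only mild subtlety is matching the parameter $\mathfrak{q}$ of the $t$-distribution (a degrees-of-freedom index) with the parameter $\mathfrak{q}$ of \eqref{cond:C1'} (a moment index): since $t_{\mathfrak{q}}$ has finite moments strictly below order $\mathfrak{q}$, one passes to $\mathfrak{q}' = \mathfrak{q} - \epsilon \geq 3$ (for arbitrarily small $\epsilon > 0$), which still satisfies the dimensional requirement $p \log n = \mathfrak{o}(\sqrt{n})$ and thus preserves the applicability of Theorem~\ref{thm:FDR_mcks}.
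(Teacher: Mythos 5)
Your overall plan is the right one: the corollary is a direct specialization of Theorem~\ref{thm:FDR_mcks}, and the task is just to verify its moment hypotheses in each scenario. Your treatment of Case~1 is clean and complete: with $\Sigma_X = I_p$ one has $\bm{Q} = \bm{X}$ with i.i.d.\ Rademacher entries (bounded, hence uniformly sub-exponential), so \eqref{cond:C1} holds, and \eqref{cond:C2} is trivial with $\mathsf{M}=1$; Theorem~\ref{thm:FDR_mcks}~(1) then applies.

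In Case~2, however, your handling of the moment index leaves a gap. You propose applying the theorem with a surrogate moment order $\mathfrak{q}' = \mathfrak{q} - \epsilon$, on the grounds that $t_{\mathfrak{q}}$ has finite moments strictly below order $\mathfrak{q}$. But Theorem~\ref{thm:FDR_mcks}~(2) requires \eqref{cond:C1'} \emph{and} \eqref{cond:C3'} to hold for the \emph{same} $\mathfrak{q}$, and you cannot decouple them. The threshold $\delta_{n;\mathfrak{q}}$ in \eqref{cond:C3'} has the factor $p^{2/\mathfrak{q}}$, which is \emph{increasing} as $\mathfrak{q}$ decreases; thus $\delta_{n;\mathfrak{q}'} \gg \delta_{n;\mathfrak{q}}$ for $\mathfrak{q}' < \mathfrak{q}$, so \eqref{cond:C3'} at parameter $\mathfrak{q}'$ is strictly \emph{stronger} than \eqref{cond:C3'} at the corollary's assumed parameter $\mathfrak{q}$, and the latter does not imply the former. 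Separately, at the boundary $\mathfrak{q} = 3$ there is no valid choice $\mathfrak{q}' = 3-\epsilon \geq 3$, so your scheme does not cover the endpoint the corollary explicitly allows. (You do note the parameter clash, but your fix does not resolve it.) The correct reading is that the corollary's $\mathfrak{q}$ is fed directly to the theorem as both the moment index in \eqref{cond:C1'} and the exponent in \eqref{cond:C3'}; the observation that a standardized $t_{\mathfrak{q}}$ satisfies \eqref{cond:C1'} is then where the (minor, implicit) latitude about "$\geq 3$ degrees of freedom" versus strict inequality resides, rather than in passing to a weaker moment order. If you keep the corollary's $\mathfrak{q}$ throughout, \eqref{cond:C2} holds with $\mathsf{M}=1$, \eqref{cond:C3'} and \eqref{cond:C4} are assumed at that same $\mathfrak{q}$, and the dimensional condition $p\log n = \mathfrak{o}(\sqrt{n})$ transfers, so Theorem~\ref{thm:FDR_mcks}~(2) applies without introducing a parameter the hypotheses don't control.
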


Note that in the $t$-distribution case using marginal correlation knockoff statistics, although our theory can accommodate finite $\mathfrak q$, Corollary \ref{cor:specialdis} above requires $p \log n = \mathfrak{o}(\sqrt{n})$. In comparison, \cite{fan2023ark} can handle the high-dimensional regime $p \gg n$ by using the coupling technique. 
This tradeoff between the range of allowable $\mathfrak{q}$ and $p$ reflects the inherent challenges in balancing tail robustness and dimensional scalability, highlighting the \textit{complementary} strengths of the coupling proof technique and our theoretical framework here.

  While the results in this section are presented for the case of independent features, the framework extends naturally to correlated features under some assumptions on the correlation strength. 
    To save space, the asymptotic FDR control results and conditions for correlated features with marginal correlation knockoff statistics are presented in Section \ref{sec:mcks_correlated} of the Supplementary Material.

\subsection{Regression coefficient difference with ordinary least squares} \label{sec:rcd_ols}

Different from the case of marginal correlation knockoff statistics which does not need a model assumption on how $Y$ depends on $X_{\mathcal H_1}$, the case of regression coefficient based knockoff statistics needs a regression model assumption to verify conditions (1)--(3) in Theorem \ref{thm:general_framwork}. Specifically, we consider the linear regression model 
\begin{align*}
	Y = X^\top \beta_\ast + \xi,
\end{align*}  
where $\beta_\ast = (\beta_{\ast;1},\dots, \beta_{\ast;p})^T$ with $\beta_{\ast;j} = 0$ for each $j \in \mathcal{H}_0$.
The above model is a special case of (\ref{def:general_model}). We consider the knockoff statistics based on the differences in regression coefficients estimated using the \emph{ordinary least squares (OLS) estimator}. The knockoff statistics using the \emph{debiased Lasso estimator}, which shares conceptual similarity, is provided in next section.

Recall that $\bm{\xi} = (\bm{\xi}_1,\ldots,\bm{\xi}_n )^\top \in \R^n $ denotes the noise vector.
 Given the original data matrix $\bm{X}$ and the knockoffs data matrix $\widehat{\bm{X}}$ in \eqref{eq:approx_knock}, let $\widehat{\bm{Z}} = [\bm{X}, \widehat{\bm{X}}] \in \R^{n \times 2p}$. The knockoff statistic based on the OLS estimator is defined as 
 \begin{align*}  
        \widehat{\bm{W}}_j \equiv \abs{\widehat{\beta}_j^{\mathsf{LS}}} - \abs{\widehat{\beta}_{j+p}^{\mathsf{LS}}}, \  j \in [p] \text{ with }  \widehat{\beta}^{\mathsf{LS}} \equiv \frac{1}{\sqrt{n}} \bigg(\frac{\widehat{\bm{Z}}^\top \widehat{\bm{Z}}}{n}\bigg)^{-1} \widehat{\bm{Z}}^\top \bm{Y}.  
    \end{align*} 
 Note that our definition of   $\widehat{\beta}^{\mathsf{LS}}$ differs from the conventional definition by a factor of $\sqrt n$.  Here, we impose the the following working assumptions:
\begin{enumerate}[label=(O\arabic*), ref=O\arabic*]
	\item \label{cond:O1} $n/2p \geq 1 + \tau$ holds for some $\tau \in (0,1)$.
   \item \label{cond:O2} $\widehat{\bm{Z}} = \widehat{\bm{Q}} \Sigma^{1/2}$ is independent of the normalized noise vector $\bm{\xi}_0 = \bm{\xi} / \sigma_{\xi}$. The components of $\bm{\xi}_0$ are i.i.d., mean-zero, unit-variance, and uniformly sub-Gaussian. The rows of $\widehat{\bm{Q}}$ are i.i.d., with mean-zero, unit-variance, and uniformly sub-Gaussian entries, which are either (i) independent or (ii) satisfy a convex concentration property: there exists some constant $\mathsf{c}_{\mathsf{L}} > 0$ such that for any $1$-Lipschitz convex function $\varphi: \mathbb{R}^{2p} \to \mathbb{R}$ and for each $t > 0$ and $i \in [n]$, $\mathbb{P}( | \varphi(\widehat{\bm{Q}}_{i,\cdot}) - \mathbb{E}[\varphi(\widehat{\bm{Q}}_{i,\cdot})] | \geq t ) \leq 2 \exp(-t^2 / \mathsf{c}_{\mathsf{L}})$.
	\item \label{cond:O3} $\pnorm{\Sigma}{\op} \vee \pnorm{\Sigma^{-1}}{\op} \leq \mathsf{M}$, and $\sigma_\xi \in [\mathsf{M}^{-1}, \mathsf{M}]$ for some constant $\mathsf{M} > 0$.
\end{enumerate}

It is important to note that we no longer require $\Sigma$ to be diagonal, allowing for \textit{dependent} covariates. The technical motivation for condition \eqref{cond:O2} above is to establish the concentration of quadratic forms of $\widehat{\bm{Q}}_{i,\cdot}$ and ensure that the smallest eigenvalue of $\widehat{\bm{Z}}^\top\widehat{\bm{Z}}/n$ is bounded from below. These results, in turn, play a crucial role in enabling the approximation of the precision matrix $(\widehat{\bm{Z}}^\top \widehat{\bm{Z}} / n)^{-1}$ to its population counterpart. We note that, by Herbst's argument, the convex concentration property holds for random vectors satisfying a log-Sobolev inequality \cite[Theorem 5.4]{boucheron2013concentration}, with strongly log-concave random vectors (including the Gaussian vector) being a special case \cite[Theorem 5.2]{ledoux2001concentration}.

Let us provide some heuristics to explain why the asymptotic approximate symmetry of $\widehat{\bm{W}}_j$ holds for each $j \in \mathcal{H}_0$. To this end, with slight abuse of notation, we define for any $j \in \mathcal{H}_0$  and $t > 0$,
\begin{align}\label{eq:Pj_positive}
    \mathcal{P}_j(t) \equiv \Prob_{(\mathsf{G}_1,\mathsf{G}_2)^\top \sim \mathcal{N} (0, \sigma_\xi^2\Sigma_n^{(j)})} \big(\abs{\mathsf{G}_1} - \abs{\mathsf{G}_2} \geq t\big),
\end{align}
where $\Sigma_n^{(j)} \equiv (1 - 2p/n)^{-1}[\Sigma^{-1}]_{\{j, j+p\}, \{j, j+p\}}$. Recall that $\Sigma = \cov \big[ (X^\top, \widehat{X}^\top )^\top \big]$.
% \begin{align*}
% 	\Sigma = \cov \big[ (X^\top, \widehat{X}^\top )^\top \big] = \begin{pmatrix}
% 		\Sigma_X & \Sigma_X - \mathrm{diag}(\bm{r}) \\
% 		\Sigma_X - \mathrm{diag}(\bm{r}) & \Sigma_X
% 	\end{pmatrix}.
% \end{align*}
Using the invariance of the covariance structure in (\ref{eq:cov_exchange}), and letting $\bm{P}_{(j)}$ be the permutation matrix that swaps the $j$th and $(j+p)$th rows, we have 
\begin{align}\label{eq:invar_swap}
	e_j^\top \Sigma^{-1} e_j = e_j^\top (\bm{P}_{(j)} \Sigma \bm{P}_{(j)})^{-1} e_j = e_j^\top \bm{P}_{(j)} \Sigma^{-1} \bm{P}_{(j)} e_j = e_{j+p}^\top \Sigma^{-1} e_{j+p}.
\end{align}
Here, $\{e_k \}_{k \in [2p]}$ is the canonical basis of $\R^{2p}$.
Consequently, for any $j \in \mathcal{H}_0$ and $t > 0$,
\begin{align}\label{eq:Pj_negative}
    \mathcal{P}_j(t) = \Prob_{(\mathsf{G}_1,\mathsf{G}_2)^\top \sim \mathcal{N}(0, \sigma_\xi^2 \Sigma_n^{(j)})} \big(\abs{\mathsf{G}_1} - \abs{\mathsf{G}_2} \leq -t\big).
\end{align}

The asymptotic approximate symmetry of $\widehat{\bm{W}}_j$ is established through two steps: the Gaussian approximation and the covariance approximation.

For the Gaussian approximation, conditional on $\widehat{\bm{Z}}$, a central limit theorem heuristic suggests that for any $j \in \mathcal{H}_0$,
\begin{align}\label{eq:LS_ga_heuristic}
    \binom{\widehat{\beta}_j^{\mathsf{LS}}}{\widehat{\beta}_{j+p}^{\mathsf{LS}}} = \frac{1}{\sqrt{n}} \binom{e_j^\top (\widehat{\bm{Z}}^\top \widehat{\bm{Z}}/n)^{-1} \widehat{\bm{Z}}^\top \bm{\xi}}{e_{j+p}^\top (\widehat{\bm{Z}}^\top \widehat{\bm{Z}}/n)^{-1} \widehat{\bm{Z}}^\top \bm{\xi}} \overset{d}{\approx} \mathcal{N} \big(0, \sigma_\xi^2[(\widehat{\bm{Z}}^\top \widehat{\bm{Z}}/n)^{-1}]_{\{j, j+p\}, \{j, j+p\}} \big),
\end{align}
approximating the joint distribution of $\widehat{\beta}_j^{\mathsf{LS}}$ and $\widehat{\beta}_{j+p}^{\mathsf{LS}}$ by a multivariate Gaussian despite the non-Gaussian nature of noise $\bm{\xi}$. 

For the covariance approximation, we can show that $[(\widehat{\bm{Z}}^\top \widehat{\bm{Z}}/n)^{-1}]_{\{j, j+p\}, \{j, j+p\}} \overset{\mathbb{P}}{\approx} \Sigma_n^{(j)}$, where the shrinkage factor $(1 - 2p/n)^{-1}$ accounts for the overestimation of eigenvalues in the high-dimensional sample covariance matrix, ensuring accurate calibration of the covariance structure. Consequently,  $[(\widehat{\bm{Z}}^\top \widehat{\bm{Z}}/n)^{-1}]_{\{j, j+p\}, \{j, j+p\}}$ in (\ref{eq:LS_ga_heuristic}) can be replaced with $\Sigma_n^{(j)}$. 

A combination of steps above yields that 
\begin{align*}
    \Prob(\widehat{\bm{W}}_j \geq t) &\approx \Prob_{(\mathsf{G}_1,\mathsf{G}_2)^\top \sim \mathcal{N}(0, \sigma_\xi^2 \Sigma_n^{(j)})} \big(\abs{\mathsf{G}_1} - \abs{\mathsf{G}_2} \geq t\big) \\
	&\overset{\eqref{eq:Pj_positive} \& \eqref{eq:Pj_negative}}{=} \Prob_{(\mathsf{G}_1,\mathsf{G}_2)^\top \sim \mathcal{N}(0, \sigma_\xi^2 \Sigma_n^{(j)})} \big(\abs{\mathsf{G}_1} - \abs{\mathsf{G}_2} \leq -t\big) \approx \Prob(\widehat{\bm{W}}_j \leq -t),
\end{align*}
which ensures that condition (1) in Theorem \ref{thm:general_framwork} can be verified. 
A formal statement of this result is provided in Theorem \ref{thm:moder_dis_ols} of the Supplementary Material.

The proposition below provides the concentration of the approximate knockoff statistics.
\begin{proposition}\label{prop:conc_knockoff_ols}
Assume that conditions \eqref{cond:O1}--\eqref{cond:O3} are satisfied. Then there exists some constant $C = C(\mathsf{M},\tau, \mathsf{c}_{\mathsf{L}}) > 0$ such that for any $j \in [p]$ and $p \geq C$,
	   % \begin{align*}
	   $ 	\Prob (  \abs{ \widehat{\bm{W}}_j -  \sqrt{n}\abs{\beta_{\ast,j}}  } \geq C\sqrt{\log p} ) \leq   Cp^{-10}.$
	   % \end{align*}
\end{proposition}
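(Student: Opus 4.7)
\textit{Proof proposal.} The plan is to decompose $\widehat{\bm{W}}_j$ into a deterministic signal term and a stochastic noise term, control the noise by sub-Gaussian concentration of $\bm{\xi}$ conditional on $\widehat{\bm{Z}}$, and then close the argument by a high-probability lower bound on $\lambda_{\min}(\widehat{\bm{Z}}^\top \widehat{\bm{Z}}/n)$. To set things up, since $\bm{Y} = \bm{X}\beta_\ast + \bm{\xi}$ and $\widehat{\bm{Z}} = [\bm{X},\widehat{\bm{X}}]$, I would write $\bm{Y} = \widehat{\bm{Z}}\beta_\ast^\dagger + \bm{\xi}$ with $\beta_\ast^\dagger \equiv (\beta_\ast^\top,\mathbf{0}_p^\top)^\top \in \R^{2p}$, so that
\begin{align*}
\widehat{\beta}^{\mathsf{LS}} = \sqrt{n}\,\beta_\ast^\dagger + \mathsf{N}, \qquad \mathsf{N} \equiv \frac{1}{\sqrt{n}}(\widehat{\bm{Z}}^\top \widehat{\bm{Z}}/n)^{-1}\widehat{\bm{Z}}^\top\bm{\xi}.
\end{align*}
Because $(\beta_\ast^\dagger)_{j+p}=0$, the triangle inequality yields
$\bigabs{\widehat{\bm{W}}_j - \sqrt{n}\abs{\beta_{\ast,j}}} \leq \abs{\mathsf{N}_j} + \abs{\mathsf{N}_{j+p}}$,
so it suffices to show $\Prob(\abs{\mathsf{N}_k}\geq C\sqrt{\log p}) \leq Cp^{-11}$ for each $k\in\{j,j+p\}$ and then union-bound over the two.

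Conditional on $\widehat{\bm{Z}}$, the quantity $\mathsf{N}_k = n^{-1/2} a_k^\top \bm{\xi}$ with $a_k \equiv \widehat{\bm{Z}}(\widehat{\bm{Z}}^\top\widehat{\bm{Z}}/n)^{-1}e_k$ is a linear functional of the i.i.d.\ uniformly sub-Gaussian vector $\bm{\xi}$, and a direct calculation gives conditional variance proxy
$\sigma_\xi^2 \bigl[(\widehat{\bm{Z}}^\top\widehat{\bm{Z}}/n)^{-1}\bigr]_{kk} \leq \sigma_\xi^2/\lambda_{\min}(\widehat{\bm{Z}}^\top\widehat{\bm{Z}}/n)$.
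Thus on the event $\mathcal{A} \equiv \{\lambda_{\min}(\widehat{\bm{Z}}^\top\widehat{\bm{Z}}/n)\geq c_0\}$ for a constant $c_0=c_0(\mathsf{M},\tau,\mathsf{c}_{\mathsf{L}})>0$, the standard sub-Gaussian tail bound gives
$\Prob\bigl(\abs{\mathsf{N}_k}\geq C\sqrt{\log p}\,\big|\,\widehat{\bm{Z}}\bigr) \leq 2\exp(-c\,C^2\log p)$,
which is $\leq p^{-11}$ once $C$ is taken sufficiently large depending on $\mathsf{M}$ and $\mathsf{c}_{\mathsf{L}}$. Integrating and splitting according to $\mathcal{A}$ reduces the whole proposition to establishing $\Prob(\mathcal{A}^c)\leq Cp^{-10}$.

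The main obstacle is this lower bound on $\lambda_{\min}(\widehat{\bm{Z}}^\top\widehat{\bm{Z}}/n)$. Since $\widehat{\bm{Z}} = \widehat{\bm{Q}}\Sigma^{1/2}$ with $\pnorm{\Sigma^{-1}}{\op}\leq \mathsf{M}$ by (O3), it holds that $\lambda_{\min}(\widehat{\bm{Z}}^\top\widehat{\bm{Z}}/n) \geq \mathsf{M}^{-1}\lambda_{\min}(\widehat{\bm{Q}}^\top\widehat{\bm{Q}}/n)$, so it suffices to bound $\sigma_{\min}(\widehat{\bm{Q}}/\sqrt{n})$ from below. Under (O1) with $n/(2p)\geq 1+\tau$: in case (O2)(i) of independent uniformly sub-Gaussian entries, a standard non-asymptotic Bai--Yin type bound gives $\sigma_{\min}(\widehat{\bm{Q}}/\sqrt{n})\geq c_1(\tau)>0$ with probability at least $1-2\exp(-c_2 n)$; in case (O2)(ii), the convex concentration property applied row-wise combined with an $\epsilon$-net argument over the sphere in $\R^{2p}$ yields the same exponential-in-$n$ bound with constants depending on $\mathsf{c}_{\mathsf{L}}$ and $\tau$. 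Because $n\gtrsim p$ by (O1), both exponential bounds dominate $p^{-10}$ once $p\geq C$. Assembling the pieces via $\Prob(\abs{\mathsf{N}_k}\geq C\sqrt{\log p}) \leq \Prob(\mathcal{A}^c) + \E\bigl[\Prob(\abs{\mathsf{N}_k}\geq C\sqrt{\log p}\mid\widehat{\bm{Z}})\mathbf{1}_{\mathcal{A}}\bigr]$ and union-bounding over $k\in\{j,j+p\}$ completes the proof. The convex-concentration singular-value bound is the one nontrivial ingredient; the rest is bookkeeping.
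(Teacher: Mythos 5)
Your proof is correct and follows essentially the same route as the paper's. The paper likewise writes $\widehat{\beta}_j^{\mathsf{LS}} = \sqrt{n}\,\beta_{\ast,j}^{\mathsf{au}} + e_j^\top(\widehat{\mathsf{Z}}^\top\widehat{\mathsf{Z}})^{-1}\widehat{\mathsf{Z}}^\top\bm{\xi}$, applies the triangle inequality to isolate the two noise terms, uses the general Hoeffding inequality conditionally on $\widehat{\mathsf{Z}}$ with variance proxy $e_k^\top(\widehat{\mathsf{Z}}^\top\widehat{\mathsf{Z}})^{-1}e_k$, and then integrates out over a high-probability event on which $\lambda_{\min}(\widehat{\mathsf{Z}}^\top\widehat{\mathsf{Z}})$ is bounded away from zero — exactly your event $\mathcal{A}$. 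The one place you diverge is in how that eigenvalue bound is obtained: the paper invokes it as Lemma \ref{lem:rigidity}, cited as a direct consequence of a universality theorem, whereas you propose to prove it directly (Bai--Yin for independent entries, $\epsilon$-net plus convex concentration in case (O2)(ii)). Your sketch is reasonable and, since $n \geq 2p(1+\tau)$ by (O1), the resulting $\exp(-c_2 n)$ bound does dominate $p^{-10}$; the paper simply sidesteps those calculations by citation.
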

The signal strength $\abs{\beta_{\ast,j}}$ significantly influences the quality of the knockoff statistics. To ensure robust performance, we impose the conditions below on the desired quality of the approximate knockoff statistics as variable importance measures. Recall that $\mathcal{P}(t) = \abs{\mathcal{H}_0}^{-1}\sum_{j \in \mathcal{H}_0} \mathcal{P}_j(t)$ and $\mathcal{P}^{-1}(x) =\sup \{t \geq 0 : \mathcal{P}(t) \geq x \}$. 

\begin{enumerate}[label=(O\arabic*), ref=O\arabic*, start = 4]
	\item \label{cond:O4} $a_n \equiv \#\{j \in \mathcal{H}_1: \sqrt{n}\abs{\beta_{\ast,j}} \gg \sqrt{\log p}  \} \to \infty $ and  $m_n/a_n \to 0$. Here, $m_n$ measures the  sparsity in precision matrix $\Sigma^{-1}$ and is defined as $m_n \equiv  \max_{j \in \mathcal{H}_0} \abs{N(j)}$ with 
$
    N(j) \equiv \big\{\ell \in \mathcal{H}_0 \setminus\{j\}: \sum_{\mathfrak{j} \in \{j, j + p\} } \big(\abs{ [\Sigma^{-1}]_{\mathfrak{j}, \ell }} + \abs{ [\Sigma^{-1}]_{\mathfrak{j}, \ell+p }}\big) \neq 0  \big\}.
$
	\item \label{cond:O5} For any $q \in (0,1)$, $(qa_n)^{-1} \sum_{j \in \mathcal{H}_1}\Prob ( \widehat{\bm{W}}_j <- \mathcal{P}^{-1} ( \frac{q  a_n }{2p}) ) \to 0$. 
\end{enumerate}

Conditions \eqref{cond:O4} and \eqref{cond:O5} above mimic those of \eqref{cond:C3} and \eqref{cond:C4}, which are introduced in the context of the marginal correlation knockoff statistics in the previous subsection. Although  \eqref{cond:O5} and \eqref{cond:C4} share an identical form, we emphasize that the definitions of $a_n$, $\widehat{\bm{W}}_j$, and $\mathcal{P}^{-1}(x)$ here differ from those in \eqref{cond:C4}, as they depend on the context under consideration.
 % \eqref{cond:O5} and \eqref{cond:C4} are identical and is included here for completeness and convenience. 
 Compared to \eqref{cond:C3}, \eqref{cond:O4} accounts for feature correlations by balancing the correlation strength and the number of strong signals. In the independent case considered in \eqref{cond:C3}, $m_n = 0$, such a balance is unnecessary. However, in correlated settings, as the strength of feature correlations increases, it becomes more difficult to distinguish null features from non-null features, we need \eqref{cond:O4} to counteract this effect and maintain asymptotic FDR control.

We are now ready to present the asymptotic FDR control result for knockoff statistics based on the differences in regression coefficients estimated via the OLS estimator.

\begin{theorem}\label{thm:FDR_ols}
Assume that conditions \eqref{cond:O1}--\eqref{cond:O5} hold. Then for any $q \in (0,1)$, conditions (1)--(3) in Theorem \ref{thm:general_framwork} are satisfied for $\alpha_n = \mathcal{P}^{-1} \big( \frac{q a_n}{2p} \big)$. Consequently, we have 
\begin{align*} 
\limsup_{n \to \infty} \mathrm{FDR} \leq q. 
\end{align*}
\end{theorem}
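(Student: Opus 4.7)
The plan is to verify the three conditions of Theorem~\ref{thm:general_framwork} at the localization level $\alpha_n = \mathcal{P}^{-1}\bigl(qa_n/(2p)\bigr)$, after which the asserted FDR bound is immediate. Two results already in the paper carry most of the weight: the moderate deviation statement for null $\widehat{\bm{W}}_j$'s (Theorem~\ref{thm:moder_dis_ols} in the Supplementary Material), which yields ratio-style approximations $\Prob(\widehat{\bm{W}}_j \geq t)/\mathcal{P}_j(t) \to 1$ and $\Prob(\widehat{\bm{W}}_j \leq -t)/\mathcal{P}_j(t) \to 1$ uniformly over a polynomial range of $t$; and the concentration bound of Proposition~\ref{prop:conc_knockoff_ols} for signal statistics. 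A preliminary step is to check that $\alpha_n = O(\sqrt{\log p})$ lies in the admissible moderate-deviation range, which follows from Gaussian-tail upper bounds on $\mathcal{P}(t)$ via \eqref{cond:O3}.

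Condition (1) then follows quickly: the covariance-invariance identity \eqref{eq:invar_swap}, inserted into the Gaussian probability defining $\mathcal{P}_j(t)$ in \eqref{eq:Pj_positive}, produces the sign-symmetric equality \eqref{eq:Pj_negative}, and dividing the two moderate-deviation approximations and summing over $j \in \mathcal{H}_0$ delivers uniform approximate symmetry of $\sum_{j \in \mathcal{H}_0} \Prob(\widehat{\bm{W}}_j \geq t)$ against $\sum_{j \in \mathcal{H}_0} \Prob(\widehat{\bm{W}}_j \leq -t)$ on $(0,\alpha_n)$. Condition (3) is handled by a direct event-based argument: by Proposition~\ref{prop:conc_knockoff_ols} and a union bound, with probability tending to one every $j \in \mathcal{H}_1$ satisfying $\sqrt{n}|\beta_{\ast,j}| \gg \sqrt{\log p}$ has $\widehat{\bm{W}}_j \geq \alpha_n$, so the denominator in the stopping rule defining $T_q$ at level $\alpha_n$ is at least $a_n$; meanwhile Markov's inequality applied to $\sum_{j \in \mathcal{H}_0} \mathbf{1}\{\widehat{\bm{W}}_j \leq -\alpha_n\}$ together with \eqref{cond:O5} for the signal contribution bounds the numerator by $qa_n/2 + \smallop(qa_n)$, triggering the stopping rule at or before $\alpha_n$.

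I expect condition (2) to be the main technical obstacle, since this is where the sparsity of $\Sigma^{-1}$ encoded by $m_n$ enters essentially. The plan is a second-moment argument. For the first moment, the lower bound $\sum_{j \in \mathcal{H}_0} \Prob(\widehat{\bm{W}}_j \geq t) \gtrsim |\mathcal{H}_0|\mathcal{P}(\alpha_n) \asymp qa_n$ holds uniformly for $t \in (0,\alpha_n)$ by the moderate-deviation result. For the variance, I would pass through the bivariate Gaussian approximation of \eqref{eq:LS_ga_heuristic}; the invariance identity underlying \eqref{eq:invar_swap} extends to off-diagonal precision blocks and forces the limiting Gaussian pairs attached to distinct indices $j,\ell \in \mathcal{H}_0$ to decouple whenever $\ell \notin N(j)$, so that $\cov(\mathbf{1}\{\widehat{\bm{W}}_j \geq t\}, \mathbf{1}\{\widehat{\bm{W}}_\ell \geq t\}) \approx 0$ in that case. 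The total variance is then at most of order $m_n \sum_{j \in \mathcal{H}_0} \Prob(\widehat{\bm{W}}_j \geq t)$, and Chebyshev's inequality gives pointwise relative fluctuation of order $\sqrt{m_n/(qa_n)} = \smallo(1)$ by the hypothesis $m_n/a_n \to 0$ in \eqref{cond:O4}. Uniformity over $t \in (0,\alpha_n)$ is obtained by discretizing $t$ on a polynomial-in-$p$ grid and interpolating via monotonicity of the indicator sums; the same argument treats the $-t$ side. Assembling the three verified conditions and invoking Theorem~\ref{thm:general_framwork} completes the proof.
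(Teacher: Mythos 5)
Your proposal tracks the paper's own route closely: Corollary~\ref{cor:ratio_G_ols} (from Theorem~\ref{thm:moder_dis_ols}) for condition (1), a variance/Chebyshev argument exploiting precision-matrix sparsity plus a geometric grid for condition (2) (Lemmas~\ref{lem:var_bound_ols} and~\ref{lem:indicator_app_ols}), and a concentration-of-statistics argument for condition (3) (Lemma~\ref{lem:local_T_ols}). Two details are worth tightening. First, your variance bound $m_n \sum_{j\in\mathcal{H}_0}\Prob(\widehat{\bm{W}}_j \geq t)$ for pairs with $\ell\notin N(j)$ omits a term; even when the limiting Gaussian pairs are exactly independent, the multiplicative error in the moderate-deviation approximation of $\Prob(\widehat{\bm{W}}_j\geq t,\widehat{\bm{W}}_\ell\geq t)$ produces an additional contribution of order $\bigl(\tfrac{(1+t^3)\log^3 n}{\sqrt{n}}\bigr)\bigl(\sum_{j\in\mathcal{H}_0}\Prob(\widehat{\bm{W}}_j\geq t)\bigr)^2$, as in Lemma~\ref{lem:var_bound_ols}. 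It vanishes on the relevant $t$-range since $\alpha_n=\bigo(\sqrt{\log p})$ and $p\lesssim n$, so your conclusion survives, but it must be carried. Second, in your sketch of condition (3) you invoke Markov on $\sum_{j\in\mathcal{H}_0}\bm{1}\{\widehat{\bm{W}}_j\leq -\alpha_n\}$ to bound it near its mean $\approx qa_n/2$. Markov alone only yields a nonvanishing probability bound for exceeding a constant multiple of the mean; to get the required $\sum_{j\in\mathcal{H}_0}\bm{1}\{\widehat{\bm{W}}_j\leq -\alpha_n\}\leq (1+\smallop(1))\,qa_n/2$ you must feed in the verified condition (2) (Lemma~\ref{lem:indicator_app_ols} applied on the $-t$ side), exactly as the paper does in Lemma~\ref{lem:local_T_ols}. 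Markov is appropriate only for the $\mathcal{H}_1$ contribution via \eqref{cond:O5}. Since you establish condition (2) anyway, this is a matter of logical ordering and wording rather than a fatal gap, but the reliance of (3) on (2) should be made explicit.
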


\subsection{Regression coefficient difference with debiased Lasso} \label{sec:debiased Lasso}
The OLS estimator, though conceptually simple, is not well-defined when the number of features exceeds the number of observations, specifically when $p > n$, as discussed in the previous section. In this section, we turn to the \emph{debiased Lasso estimator} (\cite{Zhang2013DL}), which is particularly suited for high-dimensional sparse linear regression with $p > n$. 
The debiased Lasso estimator overcomes the rank deficiency issue inherent in high-dimensional settings and naturally adapts to the sparse signal structure, where the true regression coefficient vector $\beta_\ast$ contains many zero components. The covariates are allowed to be correlated in this section.

We now formally define the debiased Lasso estimator. Given a sequence of regularization parameters $\{\lambda_j\}_{j \in [2p]}$, let the initialization $\widehat{\beta}^{(0)} \in \mathbb{R}^{2p}$ and the score vector $\widehat{\bm{z}}_j \in \mathbb{R}^n$ be defined as 
\begin{align*}
    \widehat{\beta}^{(0)} \equiv \arg\min_{\bm{b} \in \mathbb{R}^{2p}} \left\{ \frac{1}{2n} \|\bm{Y} - \widehat{\bm{Z}} \bm{b}\|^2 + \lambda_0 \|\bm{b}\|_1 \right\}, \quad 
    \widehat{\bm{z}}_j \equiv \widehat{\bm{Z}}_{\cdot,j} - \widehat{\bm{Z}}_{\cdot,-j} \widehat{\bm{\gamma}}_j, \quad j \in [2p],
\end{align*}
where the  regularization parameter $\lambda_0=C\sqrt{n^{-1}\log (2p)}$ for some constant $C > 0$ and $\widehat{\bm{\gamma}}_j \equiv \arg\min_{\bm{b} \in \mathbb{R}^{2p}}\{ (2n)^{-1}\pnorm{ \widehat{\bm{Z}}_{\cdot,j} - \widehat{\bm{Z}}_{\cdot,-j} \bm{b} }{}^2  + \lambda_j \pnorm{\bm{b}_j}{1} \}$.
The debiased Lasso estimator $\widehat{\beta}^{\mathsf{dL}}$ is then given by  
\begin{align*}
    \widehat{\beta}^{\mathsf{dL}}_j &\equiv \widehat{\beta}^{(0)}_j + \frac{\langle \widehat{\bm{z}}_j, \bm{Y} - \widehat{\bm{Z}} \widehat{\beta}^{(0)} \rangle}{\langle \widehat{\bm{z}}_j, \widehat{\bm{Z}}_{\cdot,j} \rangle}, \quad j \in [2p].
\end{align*}  
The knockoff statistics based on the debiased Lasso estimator is then defined as 
    \begin{align*}  
        \widehat{\bm{W}}_j \equiv \sqrt{n} \abs{\widehat{\beta}_j^{\mathsf{dL}}} -\sqrt{n}  \abs{\widehat{\beta}_{j+p}^{\mathsf{dL}}}, \quad j \in [p].  
    \end{align*}  
Here, the scaling factor $\sqrt{n}$ ensures that $\widehat{\bm{W}}_j$'s are order $1$ random variables for $j \in \mathcal{H}_0$. Let $s \equiv \pnorm{\beta_\ast }{0}$. We will work under the following conditions:
\begin{enumerate}[label=(L\arabic*), ref=L\arabic*]
	\item \label{cond:L1} $\widehat{\bm{Z}} = \widehat{\bm{Q}} \Sigma^{1/2}$ is independent of the normalized noise vector $\bm{\xi}_0 = \bm{\xi} / \sigma_{\xi}$. The components of $\bm{\xi}_0$ are i.i.d., mean-zero, unit-variance, and uniformly sub-Gaussian. The rows of $\widehat{\bm{Q}}$ are i.i.d., with mean-zero, unit-variance, and uniformly sub-Gaussian entries. Moreover, the restricted eigenvalue property holds for $\widehat{\bm{Z}} \widehat{\bm{Z}}^{\top}/n$ with high probability: there exist some large enough 
 constants $\mathsf{C}_1,\mathsf{C}_2,\mathsf{C}_3 > 0$ such that 
	\begin{align*}
		\Prob\bigg( \min_{\pnorm{\delta}{0} \leq  \mathsf{C}_1s }  \frac{\delta^\top \widehat{\bm{Z}}^\top\widehat{\bm{Z}} \delta }{n \pnorm{\delta}{ }^2 } \geq \mathsf{C}_2^{-1} \bigg) \geq 1- \mathsf{C}_3p^{-10}.
	\end{align*}
\end{enumerate}

Condition \eqref{cond:L1} above is commonly imposed in the literature to ensure the consistency of the Lasso estimator. This consistency facilitates the Gaussian approximation for the debiased Lasso estimator. Specifically, for each $j \in \mathcal{H}_0$, we can establish that 
\begin{align*}
    \Prob \big( \widehat{\bm{W}}_j \geq t\big) = \Prob \big( \sqrt{n} \abs{\widehat{\beta}_j^{\mathsf{dL}}} -\sqrt{n}  \abs{\widehat{\beta}_{j+p}^{\mathsf{dL}}} \geq t\big) \approx    \Prob_{(\mathsf{G}_1,\mathsf{G}_2)^\top \sim \mathcal{N} (0, \sigma_\xi^2\Sigma_n^{(j)}) }\big(\abs{\mathsf{G}_1} - \abs{\mathsf{G}_2} \geq t \big) \equiv \mathcal{P}_j(t ),
\end{align*}
where  $\Sigma^{(j)} \equiv [ \Sigma^{-1}]_{\{j, j+p\} , \{j, j+p\}}$. It follows from (\ref{eq:invar_swap}) that $\mathcal{P}_j(t ) =\Prob_{(\mathsf{G}_1,\mathsf{G}_2)^\top \sim \mathcal{N} (0, \sigma_\xi^2\Sigma_n^{(j)}) }\big(\abs{\mathsf{G}_1} - \abs{\mathsf{G}_2} \leq -t \big)$. Consequently, it also holds that 
\begin{align*}
    \Prob \big( \widehat{\bm{W}}_j \leq -t\big) = \Prob \big( \sqrt{n} \abs{\widehat{\beta}_j^{\mathsf{dL}}} -\sqrt{n}  \abs{\widehat{\beta}_{j+p}^{\mathsf{dL}}} \leq -t\big) \approx     \mathcal{P}_j(t ).
\end{align*}
Combining these expressions, we observe the asymptotic approximate symmetry of $\widehat{\bm{W}}_j$. A formal statement of this result can be found in Theorem \ref{thm:moder_dis_dl} in the Supplementary Material.

Let $b_n \equiv \max_{j \in [2p]} \pnorm{\Sigma^{-1}_{\cdot,j}}{0} $. Recall that $\mathcal{P}(t) = \abs{\mathcal{H}_0}^{-1}\sum_{j \in \mathcal{H}_0} \mathcal{P}_j(t)$ and $\mathcal{P}^{-1}(x) =\sup \{t \geq 0 : \mathcal{P}(t) \geq x \}$.  We have the asymptotic FDR control result for knockoff statistics based on the differences in regression coefficients estimated via the debiased Lasso estimator.
\begin{theorem}\label{thm:FDR_dl}
Assume that conditions \eqref{cond:O3}--\eqref{cond:O5} and \eqref{cond:L1} hold.
  Then for any $q \in (0,1)$, if $b_n s\log^7(np) /\sqrt{n} = \mathfrak{o}(1)$, conditions (1)--(3) in Theorem \ref{thm:general_framwork} are satisfied for $\alpha_n = \mathcal{P}^{-1} \big( \frac{q a_n}{2p} \big)$. Consequently, we have 
\begin{align*} 
\limsup_{n \to \infty} \mathrm{FDR} \leq q. 
\end{align*}
\end{theorem}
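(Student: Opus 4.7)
The overall strategy is to verify the three conditions of Theorem~\ref{thm:general_framwork} at $\alpha_n = \mathcal{P}^{-1}(qa_n/(2p))$, following the same three-step template used for the OLS case in Theorem~\ref{thm:FDR_ols} but with the debiased Lasso replacing OLS. The driving analytical facts are: (i) a ratio-type moderate deviation/Gaussian approximation for $\widehat{\bm{W}}_j$ supplied by Theorem~\ref{thm:moder_dis_dl} in the Supplementary Material; (ii) the covariance exchangeability identity $e_j^\top \Sigma^{-1} e_j = e_{j+p}^\top \Sigma^{-1}e_{j+p}$ from \eqref{eq:invar_swap}; and (iii) the sparsity quantities $s, b_n, m_n$ inherited from conditions \eqref{cond:O3}--\eqref{cond:O5} and \eqref{cond:L1}.

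For condition~(1), I would invoke Theorem~\ref{thm:moder_dis_dl} to obtain, for each $j\in\mathcal{H}_0$ and $t\in(0,\alpha_n)$, the ratio bounds $\Prob(\widehat{\bm{W}}_j\geq t)/\mathcal{P}_j(t)=1+\mathfrak{o}(1)$ and $\Prob(\widehat{\bm{W}}_j\leq -t)/\mathcal{P}_j(t)=1+\mathfrak{o}(1)$, uniformly in the relevant range of $t$. Summing over $j\in\mathcal{H}_0$ and using the exchangeability identity \eqref{eq:invar_swap} to unify the Gaussian target for $\{\widehat{\bm{W}}_j\geq t\}$ and $\{\widehat{\bm{W}}_j\leq -t\}$ then yields condition~(1). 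The scaling $b_n s\log^7(np)/\sqrt{n}=\mathfrak{o}(1)$ is precisely what controls the debiased Lasso remainder $\sqrt{n}\,\widehat{\beta}^{\mathsf{dL}}-\sqrt{n}\beta_\ast - \text{(linear Gaussian part)}$ uniformly across coordinates, making this approximation valid even at $t$ of order $\sqrt{\log(p/a_n)}$.

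For condition~(2), I would bound the variance of $N_\pm(t)\equiv\sum_{j\in\mathcal{H}_0}\bm{1}\{\pm\widehat{\bm{W}}_j\geq t\}$ around $\E N_\pm(t)$. The pairwise covariances vanish to leading order whenever $\ell\notin N(j)\cup\{j\}$, because in that case the linear-approximation parts of $\widehat{\bm{W}}_j$ and $\widehat{\bm{W}}_\ell$ are driven by essentially uncorrelated Gaussian combinations of $\bm{\xi}$, and the precision-matrix sparsity makes the score vectors $\widehat{\bm{z}}_j,\widehat{\bm{z}}_\ell$ approximately decoupled; the remaining $O(|\mathcal{H}_0|m_n)$ pairs contribute at most $m_n\cdot\E N_\pm(t)$ to the variance. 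Combined with $m_n/a_n\to 0$ from \eqref{cond:O4} and $\E N_\pm(t)\gtrsim qa_n$ for $t\leq \alpha_n$, Chebyshev plus a discretization over a polynomial grid in $t$ (covering $(0,\alpha_n]$ at resolution $p^{-c}$) gives condition~(2). For condition~(3), the argument is that $T_q\leq\alpha_n$ with probability $1-\mathfrak{o}(1)$: by Proposition-style concentration for the debiased Lasso, the $a_n$ strong signals each satisfy $\widehat{\bm{W}}_j\geq \alpha_n$ w.h.p., while by Markov's inequality $\sum_{j\in\mathcal{H}_0}\bm{1}\{\widehat{\bm{W}}_j\leq -\alpha_n\}\leq qa_n/2+o(a_n)$, and condition~\eqref{cond:O5} handles the negative tail contribution from $\mathcal{H}_1$, so the stopping rule is triggered at $t=\alpha_n$.

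The principal obstacle is condition~(1): promoting the standard debiased Lasso CLT (which delivers only $\mathfrak{o}(1)$ \emph{difference} bounds on tail probabilities) to a uniform \emph{ratio} approximation that remains meaningful up to $t\asymp\sqrt{\log(p/a_n)}$, where $\mathcal{P}_j(t)$ can be polynomially small in $p$. This requires pairing a moderate-deviation refinement of the multivariate CLT for the linear term $\langle\widehat{\bm{z}}_j,\bm{\xi}\rangle/\langle\widehat{\bm{z}}_j,\widehat{\bm{Z}}_{\cdot,j}\rangle$ with a high-probability bound on the debiased remainder at the scale $1/\sqrt{n}$, and the constraint $b_n s\log^7(np)/\sqrt{n}=\mathfrak{o}(1)$ is exactly the accounting needed to make these two error sources simultaneously negligible relative to $\mathcal{P}_j(t)$.
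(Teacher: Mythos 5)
Your proposal follows essentially the same three-step template as the paper's proof: invoking the moderate-deviation result (Theorem~\ref{thm:moder_dis_dl}) together with the covariance exchangeability identity \eqref{eq:invar_swap} for condition~(1), a variance bound on $\sum_{j\in\mathcal{H}_0}\bm{1}\{\pm\widehat{\bm{W}}_j\geq t\}$ exploiting the precision-matrix sparsity $N(j)$ and $m_n/a_n\to 0$ plus a discretization over $t$ for condition~(2) (Lemma~\ref{lem:indicator_app_dl}), and concentration of the debiased Lasso knockoff statistics plus condition~\eqref{cond:O5} for the localization of $T_q$ in condition~(3) (Lemma~\ref{lem:local_T_dl}). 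Your identification of the principal obstacle — upgrading the CLT to a ratio-type moderate deviation that stays informative when $\mathcal{P}_j(t)$ is polynomially small, with $b_n s\log^7(np)/\sqrt{n}=\mathfrak{o}(1)$ providing exactly the slack needed to absorb the debiased remainder $\mathsf{Err}_j$ — matches the paper's reasoning; the only minor slip is attributing the bound on $\sum_{j\in\mathcal{H}_0}\bm{1}\{\widehat{\bm{W}}_j\leq -\alpha_n\}$ to Markov's inequality, whereas it in fact comes from condition~(2) and the definition of $\alpha_n$, with Markov reserved for the $\mathcal{H}_1$ tail via \eqref{cond:O5}.
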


\subsection{Gaussian knockoffs generator using estimated moments} \label{new.sec3.4}

The examples discussed in the previous subsections assume exact knowledge of $\Sigma_X$ and $\Sigma_X^{-1}$. However, in practice, $\Sigma_X^{-1}$ is typically estimated from observed data ${\bm{X}}$; see, e.g., \cite{Fan2016precision} and the references therein for an overview of precision matrix estimation methods. Assume that we have an estimator for $\Sigma_X^{-1}$, which may depend on data ${\bm{X}}$, denoted as $\widehat{\bm{\Omega}}$. In this case, we can also consider a fully sample-based knockoffs data matrix constructed as
    \begin{align} \label{new.eq.L001}
    \widehat{\bm{X}}^{\mathsf{in}} = \bm{X}(I_p - \widehat{\bm{\Omega}}\mathrm{diag}(\bm{r})) + \bm{\mathsf{Z}}(2\mathrm{diag}(\bm{r}) - \mathrm{diag}(\bm{r})\widehat{\bm{\Omega}}\mathrm{diag}(\bm{r}))^{1/2}.
    \end{align}
We discuss in this subsection that using $ \widehat{\bm{X}}^{\mathsf{in}}$ in (\ref{new.eq.L001}) above can also lead to asymptotic FDR control, using the coupling proof technique in \cite{fan2023ark}. More specifically, the sample moments based knockoff variable matrix $ \widehat{\bm{X}}^{\mathsf{in}}$ can be coupled with the population moments based matrix in \eqref{eq:approx_knock}, provided that the identical realization for $\bm{\mathsf{Z}}$ is used in both places. The proposition below formally characterizes their coupling accuracy.     

\begin{proposition}\label{prop:couple}
    Let ${\bm{S}}_Z \equiv 2\mathrm{diag}(\bm{r}) - \mathrm{diag}(\bm{r})\Sigma_X^{-1}\mathrm{diag}(\bm{r})$ and $\widehat{\bm{S}}_Z \equiv 2\mathrm{diag}(\bm{r}) - \mathrm{diag}(\bm{r})\widehat{\bm{\Omega}}\mathrm{diag}(\bm{r})$.
    For any constants $\mathsf{c}_1, \mathsf{c}_2 > 0$, let $\bm{r} \in \R^n$ be chosen such that $ \mathsf{c}_1 \leq \lambda_{\min}( {\bm{S}}_Z ) \leq \lambda_{\max}( {\bm{S}}_Z ) \leq \mathsf{c}_1^{-1}$. Then on event $\{ \mathsf{c}_2 \leq \lambda_{\min}( \widehat{\bm{S}}_Z  ) \leq \lambda_{\max}( \widehat{\bm{S}}_Z  ) \leq \mathsf{c}_2^{-1} \}$, we have that with $\Prob_{\bm{\mathsf{Z}}}$-probability at least $1-2\exp (-n/32)$,
   \begin{align*}
       \pnorm{  \widehat{\bm{X}}^{\mathsf{in}} - \widehat{\bm{X}} }{1,2} \leq   \pnorm{\bm{r}}{\infty} \cdot  \pnorm{\bm{X} ( \widehat{\bm{\Omega}} - \Sigma_X^{-1})}{1,2}  +   6( \mathsf{c}_1^{-3/2} \vee \mathsf{c}_2^{-3/2})\pnorm{\bm{r}}{\infty}^2  \cdot \pnorm{ \widehat{\bm{\Omega}} - \Sigma_X^{-1}}{\op}, 
   \end{align*}
   where for any matrices $\mathsf{A},\mathsf{B} \in \R^{n \times p}$, $ \pnorm{ \mathsf{A} - \mathsf{B} }{1,2} \equiv n^{-1/2}\max_{j \in [p]} \pnorm{  \mathsf{A}_{\cdot,j} -  \mathsf{B}_{\cdot,j} }{}$.  
\end{proposition}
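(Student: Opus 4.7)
The plan is to exploit that both knockoff matrices share the same Gaussian innovation $\bm{\mathsf{Z}}$, which leads to a clean algebraic decomposition, and then bound the two resulting pieces separately. Subtracting the two defining equations yields
\begin{align*}
\widehat{\bm{X}}^{\mathsf{in}} - \widehat{\bm{X}} = -\bm{X}(\widehat{\bm{\Omega}} - \Sigma_X^{-1})\mathrm{diag}(\bm{r}) + \bm{\mathsf{Z}}\bigl(\widehat{\bm{S}}_Z^{1/2} - \bm{S}_Z^{1/2}\bigr),
\end{align*}
so the triangle inequality for $\pnorm{\cdot}{1,2}$ reduces matters to handling each summand. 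The ``drift'' summand is immediate: because $\pnorm{\cdot}{1,2}$ is a max over columns and right-multiplication by $\mathrm{diag}(\bm{r})$ scales the $j$th column by $r_j$, one gets $\pnorm{\bm{X}(\widehat{\bm{\Omega}} - \Sigma_X^{-1})\mathrm{diag}(\bm{r})}{1,2} \leq \pnorm{\bm{r}}{\infty}\pnorm{\bm{X}(\widehat{\bm{\Omega}} - \Sigma_X^{-1})}{1,2}$, which is exactly the first term in the claimed bound.

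For the ``noise'' summand I would use that $\widehat{\bm{\Omega}}$ is $\bm{X}$-measurable, so conditionally on $\bm{X}$ the matrix $D \equiv \widehat{\bm{S}}_Z^{1/2} - \bm{S}_Z^{1/2}$ is deterministic while $\bm{\mathsf{Z}}$ remains an i.i.d.\ Gaussian matrix independent of $\bm X$. Bounding the max-column norm by the operator norm gives $\pnorm{\bm{\mathsf{Z}} D}{1,2} \leq n^{-1/2}\pnorm{\bm{\mathsf{Z}}}{\op}\pnorm{D}{\op}$, and the Davidson--Szarek bound $\pnorm{\bm{\mathsf{Z}}}{\op} \leq \sqrt{n} + \sqrt{p} + t$ with probability $1 - 2e^{-t^2/2}$, specialized to $t = \sqrt{n}/4$, yields precisely the stated $1 - 2e^{-n/32}$. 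For the perturbation factor $\pnorm{D}{\op}$ I would invoke the integral representation
\begin{align*}
A^{1/2} - B^{1/2} = \pi^{-1}\int_0^\infty \sqrt{s}\,(sI + A)^{-1}(A - B)(sI + B)^{-1}\,ds,
\end{align*}
which, together with the eigenvalue lower bounds $\lambda_{\min}(\bm{S}_Z) \geq \mathsf{c}_1$ and $\lambda_{\min}(\widehat{\bm{S}}_Z) \geq \mathsf{c}_2$ on the stated event, integrates to $\pnorm{D}{\op} \leq (\sqrt{\mathsf{c}_1} + \sqrt{\mathsf{c}_2})^{-1}\pnorm{\widehat{\bm{S}}_Z - \bm{S}_Z}{\op}$. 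Combining this with the identity $\widehat{\bm{S}}_Z - \bm{S}_Z = -\mathrm{diag}(\bm{r})(\widehat{\bm{\Omega}} - \Sigma_X^{-1})\mathrm{diag}(\bm{r})$ produces a factor of $(\sqrt{\mathsf{c}_1}+\sqrt{\mathsf{c}_2})^{-1}\pnorm{\bm{r}}{\infty}^2\pnorm{\widehat{\bm{\Omega}}-\Sigma_X^{-1}}{\op}$ in the noise bound.

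Assembling the pieces, the eigenvalue constraint $\lambda_{\min}(\bm{S}_Z) \leq \lambda_{\max}(\bm{S}_Z)$ forces $\mathsf{c}_i \leq 1$, from which $(\sqrt{\mathsf{c}_1}+\sqrt{\mathsf{c}_2})^{-1} \leq \mathsf{c}_1^{-3/2} \vee \mathsf{c}_2^{-3/2}$, so the absolute constant $6$ in the target bound absorbs the remaining numerical factors coming from $n^{-1/2}\pnorm{\bm{\mathsf{Z}}}{\op}$ in the usual $p = \mathcal{O}(n)$ regime. The main obstacle I anticipate is keeping the tail probability in the noise term free of $p$: a column-by-column chi-squared tail on $\pnorm{\bm{\mathsf{Z}} D_{\cdot,j}}{}$ followed by a union bound over $j \in [p]$ would cost a multiplicative factor of $p$ in the failure probability, whereas routing the argument through the operator-norm concentration of $\bm{\mathsf{Z}}$ collapses the union into the single Davidson--Szarek event and delivers the advertised $p$-free tail $1 - 2e^{-n/32}$.
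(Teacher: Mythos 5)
You start from the same decomposition and your drift-term bound is identical to the paper's. For the deterministic perturbation factor $\pnorm{\widehat{\bm{S}}_Z^{1/2}-\bm{S}_Z^{1/2}}{\op}$, your integral-representation argument yields $(\sqrt{\mathsf{c}_1}+\sqrt{\mathsf{c}_2})^{-1}\pnorm{\widehat{\bm{S}}_Z-\bm{S}_Z}{\op}$, which is actually sharper than the paper's constant $4(\mathsf{c}_1^{-3/2}\vee\mathsf{c}_2^{-3/2})$ obtained from the telescope $\bm{S}_Z^{1/2}-\widehat{\bm{S}}_Z^{1/2}=\bm{S}_Z^{1/2}(\widehat{\bm{S}}_Z-\bm{S}_Z)\widehat{\bm{S}}_Z^{-1}+(\bm{S}_Z^{3/2}-\widehat{\bm{S}}_Z^{3/2})\widehat{\bm{S}}_Z^{-1}$ together with Lemma~\ref{lem:matrix_ineq1}; since $\mathsf{c}_1,\mathsf{c}_2\leq 1$, you save at least a factor of $4$ at that step, so this part of your argument is a genuine improvement.

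The gap is in the stochastic step. The paper works column by column: $\pnorm{\bm{\mathsf{Z}}De_j}{}^2\equald\pnorm{De_j}{}^2\chi_n^2$, and the chi-square tail $\Prob(\chi_n^2\geq 3n/2)\leq 2e^{-n/32}$ is \emph{free of} $p$. You instead factor $\pnorm{\bm{\mathsf{Z}}D}{1,2}\leq n^{-1/2}\pnorm{\bm{\mathsf{Z}}}{\op}\pnorm{D}{\op}$ and invoke Davidson--Szarek, which on the good event gives $n^{-1/2}\pnorm{\bm{\mathsf{Z}}}{\op}\leq 1+\sqrt{p/n}+1/4$. That $\sqrt{p/n}$ cannot be absorbed into the absolute constant $6$ once $p\gg n$; with $p\asymp n^2$, for instance, it is of order $\sqrt{n}$. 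You acknowledge this yourself (``in the usual $p=\mathcal{O}(n)$ regime''), but Proposition~\ref{prop:couple} imposes no such restriction, and the surrounding discussion explicitly envisions the high-dimensional coupling framework and the debiased-Lasso example of Section~\ref{sec:debiased Lasso}, whose entire purpose is $p>n$. So as written your argument proves only a weaker statement confined to $p\lesssim n$. Your concern that the column-wise route would require a union bound over $j\in[p]$ is a fair observation, but swapping a multiplicative factor $p$ in the tail probability for a multiplicative $\sqrt{p/n}$ in the bound itself is the worse trade whenever $\log p\ll n$, which is the working regime throughout the paper. The fix is to keep the column-wise chi-square concentration (which is dimension-free in $p$) and carry out the union bound explicitly, rather than to route through $\pnorm{\bm{\mathsf{Z}}}{\op}$.
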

Proposition \ref{prop:couple} above provides a way to verify Condition 6 in \cite{fan2023ark}, provided that $\widehat{\bm{\Omega}}$ is a good estimator of $\Sigma_X^{-1}$, in the sense of both $\pnorm{\bm{X} ( \widehat{\bm{\Omega}} - \Sigma_X^{-1})}{1,2}$ and $ \pnorm{ \widehat{\bm{\Omega}} - \Sigma_X^{-1}}{\op}$ being small.
We have verified in the previous three subsections that $\widehat{\bm{X}}$ can lead to asymptotic FDR control. As guaranteed by Proposition \ref{prop:couple}, $\widehat{\bm{X}}^{\mathsf{in}}$ in (\ref{new.eq.L001}) and the coupled $\widehat{\bm{X}}$ are close in realization. 
Thus, using $\widehat{\bm{X}}$ as a bridge, the coupling framework in \cite{fan2023ark} can then be applied to show that knockoff statistics constructed using $\widehat{\bm{X}}^{\mathsf{in}}$ also satisfy the conditions in Theorem \ref{thm:general_framwork} under mild additional assumptions, thereby ensuring asymptotic FDR control. Since the detailed proofs are very similar to those in \cite{fan2023ark},  for the sake of brevity and to focus on the main contributions of this paper, we omit further technical details. %These aspects are discussed in greater detail in \cite{fan2023ark}.

From a broader perspective, the above procedure indicates that establishing asymptotic FDR control does \textit{not} necessarily require coupling the approximate knockoffs matrix with the ideal knockoffs matrix; in some cases, coupling the approximate knockoffs matrix with another well-structured knockoffs matrix, such as a Gaussian knockoffs matrix, is sufficient, provided that the knockoff statistics constructed using the Gaussian knockoffs are regular enough, i.e., satisfying conditions (1)--(3) in Theorem \ref{thm:general_framwork}.

\section{Numerical studies} \label{sec:simulation}

In this section, we demonstrate the finite-sample performance of the Gaussian knockoffs with two moments matching through some simulating examples and a real data application. 

\subsection{Simulation examples} \label{new.sec4.1}

In the following simulation settings, we consider a $p$-dimensional discrete covariate vector denoted as $X = (X_1, \ldots, X_p)^\top$, where  $X_j = \mathbbm{1} (Z_j \geq \alpha_j)$  with   $(Z_1,\ldots, Z_p)^\top \overset{d}{\sim} \mathcal{N}({\bf 0}_p, \bOmega^{-1}).$
% \begin{align*}
%     X_j = \mathbbm{1} (Z_j \geq \alpha_j) \ \text{ with }  (Z_1,\ldots, Z_p)^\top \overset{d}{\sim} \mathcal{N}({\bf 0}_p, \bOmega^{-1}).
% \end{align*}
Here, $\bOmega = (\bOmega_{i, j})_{i,j \in [p]} \in \R^{p \times p}$ with $\bOmega_{i, j} = \rho^{\abs{i-j}}\bm{1}\{\abs{i-j}\leq 5\}$ for some $\text{AR}(1)$ correlation $\rho \in [0, 1)$, and $(\alpha_1, \ldots, \alpha_p)$ are a sequence of $p$ cutoff points evenly located on $[-1, 1]$.  
Based on this covariate setup, we evaluate the performance of the two moments matched approximate Gaussian knockoffs (i.e., the second-order Gaussian knockoffs) under two distinct models for generating response variable $Y$: the linear model and the nonlinear model, as detailed below.

\begin{setting} \label{simu-linear}
     Let $Y = X^\top \beta + \epsilon$, where $\epsilon \stackrel{d}{\sim} \mathcal{N}(0, 1)$ and the true coefficient vector $\beta \in \mathbb{R}^p $ is sparse with $40$ nonzero components randomly distributed in $[p]$ and generated from $\{ \pm 3\}$.  
\end{setting}

\begin{setting}\label{simu-non-linear}
     Let $Y = 5 (|X^\top \beta|)^{1/2} \tanh( X^\top \beta) + \epsilon$ with $\epsilon$ and $\beta$ set similarly as in Setting \ref{simu-linear}.
\end{setting}

We consider two constructions of the knockoff statistics: one based on the marginal correlation (MC) difference and the other based on the debiased Lasso (DL) coefficient difference. 
When the $\text{AR}(1)$ correlation $\rho = 0$, the exact true mean and covariance of covariates are applied to generate the approximate Gaussian knockoffs. For $\rho > 0$, due to the challenge in computing the exact true covariance matrix of covariates, we apply the out-of-sample estimation of the mean and covariance with sufficient ($10^5$) training observations of covariates. 
Tables \ref{table-FDR-Power-linear} and \ref{table-FDR-Power-nonlinear} present the simulation results for the FDR and power of the approximate knockoffs inference under various combinations of $(n, p)$ and $\rho$ for Settings \ref{simu-linear} and \ref{simu-non-linear}, respectively. The simulating results indicate that the approximate knockoffs inference using Gaussian knockoffs with matched first two moments generally achieves desired FDR control across all settings, aligning with our theoretical results. 

\begin{table}[t]
\caption{The empirical FDR (power) for %the second-order approximate 
the Gaussian knockoffs under Setting \ref{simu-linear}, with targeted FDR level $q = 0.2$. Results are averaged over $100$ replications.}
\label{table-FDR-Power-linear}
\centering
\renewcommand{\arraystretch}{0.6}
\begin{tabular}{c|c|c|c|c|c}
\hline
\diagbox[width=6em]{$\rho$}{$(n, p)$} & Statistics & $(150, 300)$ & $(300, 300)$ & $(300, 600)$ & $(600, 600)$ \\ 
\hline
\multirow{2}{*}{0}   & MC & 0.168 (0.143) & 0.176 (0.525) & 0.180 (0.364) & 0.183 (0.799) \\ 
                     & DL & 0.199 (0.602) & 0.185 (1) & 0.208 (1) & 0.189 (1) \\ 
\hline
\multirow{2}{*}{0.3} & MC & 0.146 (0.127) & 0.193 (0.503) & 0.188 (0.380) & 0.216 (0.798) \\ 
                     & DL & 0.177 (0.625) & 0.207 (1) & 0.197 (1) & 0.198 (1) \\ 
\hline
\multirow{2}{*}{0.7} & MC & 0.120 (0.084) & 0.174 (0.303) & 0.194 (0.221) & 0.190 (0.617) \\ 
                     & DL & 0.175 (0.657) & 0.181 (1) & 0.184 (1) & 0.192 (1) \\ 
\hline
\end{tabular}
\end{table}

\begin{table}[t]
\caption{The empirical FDR (power) for the Gaussian knockoffs under Setting \ref{simu-non-linear}, with  targeted FDR level $q = 0.2$. Results are averaged over $100$ replications.}
\label{table-FDR-Power-nonlinear}
\centering
\renewcommand{\arraystretch}{0.6} 
\begin{tabular}{c|c|c|c|c|c}
\hline
\diagbox[width=6em]{$\rho$}{$(n, p)$} & Statistics & $(150, 300)$ & $(300, 300)$ & $(300, 600)$ & $(600, 600)$ \\ 
\hline
\multirow{2}{*}{0}   & MC & 0.163 (0.136) & 0.175 (0.468) & 0.176 (0.300) & 0.183 (0.734) \\ 
                     & DL & 0.190 (0.365) & 0.212 (0.999) & 0.195 (0.934) & 0.201 (1)\\ 
\hline
\multirow{2}{*}{0.3} & MC & 0.128 (0.107) & 0.188 (0.452) & 0.180 (0.289) & 0.197 (0.721) \\ 
                     & DL & 0.153 (0.338) & 0.194 (0.998) & 0.186 (0.929) &  0.185 (1)\\ 
\hline
\multirow{2}{*}{0.7} & MC & 0.143 (0.068) & 0.184 (0.264) & 0.164 (0.155) & 0.200 (0.542) \\ 
                     & DL & 0.156 (0.326) & 0.194 (0.996) & 0.192 (0.906) & 0.186 (1) \\ 
\hline
\end{tabular}
\end{table}

\subsection{Real data application}  \label{sec:real_data_setting}

In this section, we provide details of the real data results presented in Table \ref{tab:realdata} in the Introduction. %evaluate the performance of the second-order Gaussian knockoffs on selecting mutations in HIV-1 that are associated with resistance to drugs in the protease inhibitors (PIs) class, using the genetic mutation data from \cite{rhee2006genotypic}. 
The data set contains information of genetic mutations and the drug resistance for seven PI-type drugs: APV, ATV, IDV, LPV, NFV, RTV, and SQV. We will select the associated mutations for each drug. 
The design matrix $\bX = (\bX_{i,j})$ consists of binary entries and encodes the mutation information, where $\bX_{i, j} = 1$ if mutation $\# j $ is present in the $i$th observation and $\bX_{i, j} = 0$ otherwise. The response variable $Y_i$ is defined as the log-fold increase of lab-tested drug resistance in the $i$th sample.
%mutation data is categorical, where each mutation is encoded as $\bm{X}_{i,j} \in \{0,1\}$, with $\bm{X}_{i,j} = 1$ indicating the presence of mutation $\# j$ in the $i$th sample. 
Following the pre-processing steps in \cite{barber2015controlling}, for the data set of each drug, we remove the observations with missing $Y$ values and exclude the mutations (i.e., corresponding columns in the design matrix) that appear less than $3$ times in the entire sample. 
After the pre-processing procedure, the numbers of observations $(n)$ and genetic mutations $(p)$ vary across drugs: APV ($n = 767$, $p = 201$), ATV ($n = 328$, $p = 147$), IDV ($n = 825$, $p = 206$), LPV ($n = 515$, $p = 184$), NFV ($n = 842$, $p = 207$), RTV ($n = 793$, $p = 205$), and SQV ($n = 824$, $p = 206$).

Following \cite{barber2015controlling}, we adopt the mutation positions identified in the treatment-selected mutation (TSM) panel for the PI-type drugs as the groundtruth for calculating the FDR and power.
We subsample $200$ observations from the entire data set and run $100$ replications. It is seen that for most drugs we have dimension exceeding the sample size $n=200$. The approximate knockoffs procedure with second-order Gaussian knockoffs is applied to the subsampled data in each replication, and the empirical FDR and power are obtained by averaging over $100$ replications. The target FDR level is set as $q = 0.2$. 

Table \ref{tab:realdata} summarizes the results for the FDR and power, presenting the performance of three constructions for knockoff statistics: marginal correlation difference (MC), regression coefficient difference with debiased Lasso (DL), and distance correlation difference (DC). The first two knockoff statistics have been defined earlier and quantify exclusively the linear association, while distance correlation \citep{Szekely2007dcor, %Szekely2014Pdcor, 
Gaoetal-distance-2021}  captures both linear and nonlinear dependence between variables. All three methods generally control the FDR at the target level $q =0.2$ in most cases. MC has slightly inflated FDR in a few cases, but the violation can be considered minor when accounting for the Monte Carlo error. DL controls the FDR in all cases but exhibits the lowest power among the three methods, while DC achieves the highest power.
These results suggest that the dependence between the drug resistance and associated mutations can be highly nonlinear. Overall, we observe from this real application that the approximate knockoffs procedure using Gaussian knockoffs with two-moment matching can achieve effective FDR control in practice. 
Finally, we emphasize that the relatively low power is also because of the small subsampling size of 200.

\section{Discussions} \label{new.sec.disc}
We have developed a unified theoretical framework for studying the asymptotic FDR control of the approximate knockoffs procedure that generates knockoff variables using a user-specified covariate distribution. We have focused on a widely used knockoff variable generation method: the Gaussian knockoffs generator based on the first two moments matching of covariates. With such Gaussian knockoff variables, using our unified theoretical framework, we have formally verified that some commonly used knockoff statistics, constructed based on matching the first two moments of data, can achieve asymptotic FDR control. While our formal results assume known first two population moments in generating Gaussian knockoff variables, our theory can be extended to scenarios when moments are estimated.

In Section \ref{sec:examples}, we have considered both light-tailed and heavy-tailed covariates, where for the latter,  we need to assume $p \ll \sqrt{n}$. We conjecture that if a robust construction of knockoff statistics (with respect to the tail distribution) is exploited, the dimensionality constraint can be relaxed. We leave this for future investigation. Based on our study, we recommend the following approach for practitioners using model-X knockoffs: (i) first select the knockoff statistic $\widehat{\bm W}_j$ that best suits the application, and (ii) if $\widehat{\bm W}_j$ depends on the first $m$ moments of data, generate the knockoff variables $\widehat{\bm X}$ using $m$-moments matching. The theoretical justification of such procedure is left for future research.

\spacingset{1.9}
\bibliographystyle{apalike}
\bibliography{mybib}

\begin{thebibliography}{}

\bibitem[Adamczak, 2015]{Adamczak2015}
Adamczak, R. (2015).
\newblock {A note on the Hanson-Wright inequality for random vectors with
  dependencies}.
\newblock {\em Electronic Communications in Probability}, 20:1--13.

\bibitem[Bai et~al., 2020]{Kimi2020}
Bai, X., Ren, J., Fan, Y., and Sun, F. (2020).
\newblock {KIMI}: Knockoff inference for motif identification from molecular
  sequences with controlled false discovery rate.
\newblock {\em Bioinformatics}, 37(6):759--766.

\bibitem[Barber and Cand{\`e}s, 2015]{barber2015controlling}
Barber, R.~F. and Cand{\`e}s, E.~J. (2015).
\newblock Controlling the false discovery rate via knockoffs.
\newblock {\em The Annals of statistics}, 43:2055--2085.

\bibitem[Barber et~al., 2020]{Barber2020}
Barber, R.~F., Cand\`es, E.~J., and Samworth, R.~J. (2020).
\newblock Robust inference with knockoffs.
\newblock {\em Ann. Statist.}, 48(3):1409--1431.

\bibitem[Boucheron et~al., 2013]{boucheron2013concentration}
Boucheron, S., Lugosi, G., and Massart, P. (2013).
\newblock {\em Concentration Inequalities: {A} Nonasymptotic Theory of
  Independence}.
\newblock Oxford University Press, Oxford.

\bibitem[Cand\`es et~al., 2018]{candes2018panning}
Cand\`es, E., Fan, Y., Janson, L., and Lv, J. (2018).
\newblock Panning for gold: ``model-{X}' knockoffs for high dimensional
  controlled variable selection.
\newblock {\em Journal of the Royal Statistical Society Series B},
  80(3):551--577.

\bibitem[Chi et~al., 2025]{TSKI2025}
Chi, C.-M., Fan, Y., Ing, C.-K., and Lv, J. (2025).
\newblock High-dimensional knockoffs inference for time series data.
\newblock {\em Journal of the American Statistical Association, to appear}.

\bibitem[Fan et~al., 2016]{Fan2016precision}
Fan, J., Liao, Y., and Liu, H. (2016).
\newblock An overview of the estimation of large covariance and precision
  matrices.
\newblock {\em The Econometrics Journal}, 19(1):C1--C32.

\bibitem[Fan et~al., 2020a]{FDLL2020-rank}
Fan, Y., Demirkaya, E., Li, G., and Lv, J. (2020a).
\newblock R{ANK}: large-scale inference with graphical nonlinear knockoffs.
\newblock {\em J. Amer. Statist. Assoc.}, 115(529):362--379.

\bibitem[Fan et~al., 2025]{fan2023ark}
Fan, Y., Gao, L., and Lv, J. (2025).
\newblock {ARK}: robust knockoffs inference with coupling.
\newblock {\em The Annals of Statistics, to appear}.

\bibitem[Fan et~al., 2020b]{FLSU2020-ipad}
Fan, Y., Lv, J., Sharifvaghefi, M., and Uematsu, Y. (2020b).
\newblock I{PAD}: stable interpretable forecasting with knockoffs inference.
\newblock {\em J. Amer. Statist. Assoc.}, 115(532):1822--1834.

\bibitem[Fang and Koike, 2023]{fang2023p}
Fang, X. and Koike, Y. (2023).
\newblock From p-{W}asserstein bounds to moderate deviations.
\newblock {\em Electronic Journal of Probability}, 28:1--52.

\bibitem[Fukumizu et~al., 2007]{fukumizu2007kernel}
Fukumizu, K., Gretton, A., Sun, X., and Sch{\"o}lkopf, B. (2007).
\newblock Kernel measures of conditional dependence.
\newblock {\em Advances in Neural Information Processing Systems}, 20.

\bibitem[Gao et~al., 2021]{Gaoetal-distance-2021}
Gao, L., Fan, Y., Lv, J., and Shao, Q.-M. (2021).
\newblock {Asymptotic distributions of high-dimensional distance correlation
  inference}.
\newblock {\em The Annals of Statistics}, 49(4):1999 -- 2020.

\bibitem[G{\"o}tze et~al., 2021]{gotze2021concentration}
G{\"o}tze, F., Sambale, H., and Sinulis, A. (2021).
\newblock Concentration inequalities for polynomials in
  $\alpha$-sub-exponential random variables.
\newblock {\em Electronic Journal of Probability}, 26:1--22.

\bibitem[Ledoux, 2001]{ledoux2001concentration}
Ledoux, M. (2001).
\newblock {\em The Concentration of Measure Phenomenon}.
\newblock American Mathematical Society.

\bibitem[Lu et~al., 2018]{DeepPink2018}
Lu, Y., Fan, Y., Lv, J., and Stafford~Noble, W. (2018).
\newblock {DeepPINK}: reproducible feature selection in deep neural networks.
\newblock {\em Advances in Neural Information Processing Systems (NeurIPS
  2018)}.

\bibitem[Niu et~al., 2024]{Reconciling2024}
Niu, Z., Chakraborty*, A., Dukes, O., and Katsevich, E. (2024).
\newblock Reconciling model-x and doubly robust approaches to conditional
  independence testing.
\newblock {\em Annals of Statistics}, to appear.

\bibitem[Pillai and Yin, 2014]{Yinuniversality}
Pillai, N.~S. and Yin, J. (2014).
\newblock {Universality of covariance matrices}.
\newblock {\em The Annals of Applied Probability}, 24(3):935--1001.

\bibitem[Rhee et~al., 2006]{rhee2006genotypic}
Rhee, S.-Y., Taylor, J., Wadhera, G., Ben-Hur, A., Brutlag, D.~L., and Shafer,
  R.~W. (2006).
\newblock Genotypic predictors of human immunodeficiency virus type 1 drug
  resistance.
\newblock {\em Proceedings of the National Academy of Sciences},
  103(46):17355--17360.

\bibitem[Romano et~al., 2020]{romano2020deep}
Romano, Y., Sesia, M., and Cand{\`e}s, E. (2020).
\newblock Deep knockoffs.
\newblock {\em Journal of the American Statistical Association},
  115(532):1861--1872.

\bibitem[Rosenthal, 1970]{Rosenthal1970}
Rosenthal, H.~P. (1970).
\newblock On the subspaces of ${L}^p$ ($p > 2$) spanned by sequences of
  independent random variables.
\newblock {\em Israel J. Math.}, 8:273--303.

\bibitem[Rudelson and Vershynin, 2009]{rudelson2009smallest}
Rudelson, M. and Vershynin, R. (2009).
\newblock Smallest singular value of a random rectangular matrix.
\newblock {\em Communications on Pure and Applied Mathematics},
  62(12):1707--1739.

\bibitem[Rudelson and Vershynin, 2013]{Rudelson2013}
Rudelson, M. and Vershynin, R. (2013).
\newblock {Hanson-Wright inequality and sub-gaussian concentration}.
\newblock {\em Electronic Communications in Probability}, 18:1--9.

\bibitem[Sambale, 2023]{sambale2023some}
Sambale, H. (2023).
\newblock Some notes on concentration for $\alpha$-subexponential random
  variables.
\newblock In {\em High Dimensional Probability IX: The Ethereal Volume}, pages
  167--192. Springer.

\bibitem[Sz\'ekely et~al., 2007]{Szekely2007dcor}
Sz\'ekely, G.~J., Rizzo, M.~L., and Bakirov, N.~K. (2007).
\newblock Measuring and testing dependence by correlation of distances.
\newblock {\em Ann. Statist.}, 35(6):2769--2794.

\bibitem[Tao and Vu, 2015]{Tannonhermitian2015}
Tao, T. and Vu, V. (2015).
\newblock {Random matrices: universality of local spectral statistics of
  non-Hermitian matrices}.
\newblock {\em The Annals of Probability}, 43(2):782--874.

\bibitem[Zhang and Zhang, 2013]{Zhang2013DL}
Zhang, C.-H. and Zhang, S.~S. (2013).
\newblock Confidence intervals for low dimensional parameters in high
  dimensional linear models.
\newblock {\em Journal of the Royal Statistical Society Series B: Statistical
  Methodology}, 76(1):217--242.

\bibitem[Zhu et~al., 2021]{DeepLink2021}
Zhu, Z., Fan, Y., Kong, Y., Lv, J., and Sun, F. (2021).
\newblock {DeepLINK}: deep learning inference using knockoffs with applications
  to genomics.
\newblock {\em Proceedings of the National Academy of Sciences},
  118(36):e2104683118.

\end{thebibliography}

%%%%%%%%%%%%%%%%%%%%%%%%%%%%%%%%%%%%%%%%%%%%
\newpage
\appendix
\setcounter{page}{1}
\setcounter{section}{0}
\renewcommand{\theequation}{A.\arabic{equation}}
\setcounter{equation}{0}

\begin{center}{\bf \Large  Supplementary Material to ``Asymptotic FDR Control with Model-X Knockoffs: Is Moments Matching Sufficient?''}

\bigskip

Yingying Fan, Lan Gao, Jinchi Lv and Xiaocong Xu
\end{center}

This Supplementary Material contains all the technical proofs and detailed derivations for the theoretical results. Section \ref{sec:proof_marginial_corr} presents the proofs for Section \ref{sec:marginial_corr} (marginal correlation), Section \ref{sec:proof_ols} presents the proofs for Section \ref{sec:rcd_ols} (ordinary least squares), and Section \ref{sec:proof_dl} presents the proofs for Section \ref{sec:debiased Lasso} (debiased Lasso), Sections \ref{new.sec3.4}, and Example \ref{exmp:discrete_covariate}. Section \ref{sec:mcks_correlated} contains the results for marginal correlation knockoff statistics with correlated features. Further, Section \ref{new.sec.addproandtechdet} includes additional proofs and some additional technical details.

%\section{Additional notation} 
\textit{Additional notation}. For any two integers $m,n$, let $[m:n]\equiv \{m,m+1,\ldots,n\}$. 
For $a \in \R$, let $a_\pm \equiv (\pm a)\vee 0$. For vectors $x = (x_i)$ and $y = (y_i) \in \R^n$, let $\iprod{x}{y}\equiv x^\top y =\sum_{i \in [n]} x_i y_i$. We use $\{e_k\}$ to denote the canonical basis of the Euclidean space, whose dimensionality should be self-clear from the context. For a random variable $X$, we use $\Prob_X,\E_X$ (respectively,  $\Prob^X,\E^X$) to indicate that the probability and expectation are taken with respect to $X$ (respectively, conditional on $X$).

\section{Proofs for Section \ref{sec:marginial_corr} (marginal correlation)} \label{sec:proof_marginial_corr}

Recall that in the case of marginal  correlation  knockoff statistics, for  any $j \in \mathcal{H}_0$ and $t \geq 0$, $\mathcal{P}_j(t) \equiv \Prob_{(\mathsf{G}_1, \mathsf{G}_2)^\top \sim \mathcal{N}(0, \sigma_j^2I_2   ) }(\abs{\mathsf{G}_1} - \abs{\mathsf{G}_2}  \geq t)$, and for any $x \in [0,1/2]$, $\mathcal{P}^{-1}(x) \equiv \sup \{t \geq 0 : \mathcal{P}(t) \geq x \}$ with $\mathcal{P}(t) \equiv \abs{\mathcal{H}_0}^{-1}\sum_{j \in \mathcal{H}_0} \mathcal{P}_j(t)$.

\subsection{Proof of Proposition \ref{prop:conc_knockoff}}

%\begin{proof}[Proof of Proposition \ref{prop:conc_knockoff} ]

Note that from Lemma \ref{thm:conc_nonnegative}, we have 
\begin{align*}
	\Prob\Big( \mathcal{E}_1 \equiv \Big\{\pnorm{\bm{Y}}{}^2     \geq n\E Y^2/2  \Big\} \Big)  \geq 1 - \exp( -n \E^2 Y^2/C  ),
\end{align*}
where positive constant $C$ is a universal constant whose value may change from line to line hereafter. By the concentration of the norm for the sub-exponential random vector \cite[Proposition 2.2]{sambale2023some}, it holds that 
	\begin{align*}
		\Prob\Big( \mathcal{E}_2 \equiv \Big\{\bigabs{\pnorm{\bm{Y}}{} - \sqrt{n}\E^{1/2}Y^2  }  \leq C\log p \Big\} \Big)  \geq 1 - p^{-10}.
	\end{align*}
	Since $ \bm{X}_{i,j} \bm{Y}_i$ and $\widehat{\bm{X}}_{i,j} \bm{Y}_i $ are both $1/2$-sub-exponential random variables, we can again apply the concentration inequality for $1/2$-sub-exponential random variables, cf. \cite[Corollary 1.4]{gotze2021concentration}, to derive that for $\log p = \mathfrak{o}(\sqrt{n})$,
	\begin{align*}
		\Prob \Bigg( \mathcal{E}_3\equiv  \bigg\{ \bigabs{n^{-1}\bm{X}_{\cdot,j}^\top \bm{Y} - \E X_jY     } \vee  \bigabs{n^{-1}\widehat{\bm{X}}_{\cdot,j}^\top \bm{Y} - \E \widehat{X}_jY     }   \leq C\frac{\log p}{\sqrt{n} } \bigg\} \Bigg) \geq 1 - p^{-10}.
	\end{align*}
    
	Hence, on event $ \mathcal{E}_1 \cap \mathcal{E}_2 \cap \mathcal{E}_3 $, we can deduce that 
	\begin{align*}
		&\biggabs{\frac{\abs{ \bm{X}_{\cdot,j}^\top \bm{Y} }  }{\pnorm{\bm{Y}}{}} - \frac{\sqrt{n} \cdot \abs{\E X_j Y}  }{ \E^{1/2} Y^2 }} \\
%		&\leq \biggabs{ \frac{n^{-1/2}\abs{ \bm{X}_{\cdot,j}^\top \bm{Y} } -   \sqrt{n} \cdot \abs{\E X_j Y}  }{ \E^{1/2} Y^2 }} + \biggabs{ \frac{\abs{ \bm{X}_{\cdot,j}^\top \bm{Y} }  }{\sqrt{n} \E^{1/2} Y^2 } - \frac{\abs{ \bm{X}_{\cdot,j}^\top \bm{Y} }  }{\pnorm{\bm{Y}}{}}} \\
		&=\biggabs{ \frac{n^{-1/2}\abs{ \bm{X}_{\cdot,j}^\top \bm{Y} } -   \sqrt{n} \cdot \abs{\E X_j Y}  }{ \E^{1/2} Y^2 }} + \abs{ \bm{X}_{\cdot,j}^\top \bm{Y} } \cdot  \frac{ \bigabs{  \pnorm{\bm{Y}}{} - \sqrt{n} \E^{1/2}Y^2   } }{\sqrt{n} \pnorm{\bm{Y}}{} \cdot \E^{1/2} Y^2 } \\
		&\leq  \frac{C\log p}{ \E^{1/2} Y^2} + (n\cdot \abs{\E X_jY} + C\sqrt{n}\log p) \cdot \frac{C\log p }{n \E Y^2 } \\
		&\leq C\log p \cdot \bigg( \frac{1}{\E^{1/2}Y^2} + \frac{\abs{\E X_jY} + \log p / \sqrt{n}}{\E Y^2} \bigg) \\
		&\leq C\log p \cdot \bigg( \frac{1}{\E^{1/2}Y^2} + \frac{\E^{1/2} X_j^2 \E^{1/2} Y^2 + \log p / \sqrt{n}}{\E Y^2} \bigg) \\
		&\leq C\log p \cdot \bigg( \frac{1}{\E^{1/2}Y^2} +  \frac{\log p}{\sqrt{n\E Y^2}}\bigg). 
	\end{align*}
    
	 Similarly, we can obtain that on event $ \mathcal{E}_1 \cap \mathcal{E}_2 \cap \mathcal{E}_3 $,
	\begin{align*}
	\frac{\abs{ \widehat{\bm{X}}_{\cdot,j}^\top \bm{Y} }  }{\pnorm{\bm{Y}}{}}  = 	\biggabs{\frac{\abs{ \widehat{\bm{X}}_{\cdot,j}^\top \bm{Y} }  }{\pnorm{\bm{Y}}{}} - \frac{\sqrt{n} \cdot \abs{\E \widehat{X}_j Y}  }{ \E^{1/2} Y^2 }}\leq C\log p \cdot \bigg( \frac{1}{\E^{1/2}Y^2} + \frac{\log p}{\sqrt{ n\E Y^2}}\bigg),
	\end{align*}
	where in the first step above, we have used the fact that for each $j \in \mathcal{H}_0$, $\E \widehat{X}_j Y = \E \widehat{X}_j \E Y =   0$. The conclusion now follows from an application of the triangle inequality, which completes the proof of Proposition \ref{prop:conc_knockoff}.
%\end{proof}

\subsection{Proof of Theorem \ref{thm:FDR_mcks}: light-tailed case}\label{sec:proof_FDR_mcks_light}

The results presented below are established under the assumption $\log p = \mathfrak{o}(n^c)$ for some $c \in (0,1/50)$, and this assumption will not be reiterated further. The proof of Theorem \ref{thm:FDR_mcks} for the light-tailed case relies on three key steps: the distributional approximation, the approximation for indicator functions, and the localization of $T_q$. These three steps verify the three conditions in Theorem \ref{thm:general_framwork}. We present the intermediate results for these steps here, while the proofs of these results are deferred to Section \ref{sec:proof_sec_FDR_mcks_light}.

\subsubsection*{Distributional approximation} 
We will first establish the asymptotic approximate symmetry property (i.e., condition (1) in Theorem \ref{thm:general_framwork}) of $\widehat{\bm{W}}_j$ for $j \in \mathcal{H}_0$.

\begin{theorem} \label{thm:moder_dis_mcks}
Assume that conditions \eqref{cond:C1} and \eqref{cond:C2} are satisfied. For any $ j \in \mathcal{H}_0 $, it holds that for $\mathsf{W} \in \{ \widehat{\bm{W}}, -\widehat{\bm{W}} \}$, there exists some constant $C = C(\mathsf{M}) > 0$ such that for $n \wedge p \geq C$ and $t \in (0,n^{1/7}/C )$,
    \begin{align*}
       \biggabs{ \frac{\Prob \big( \mathsf{W}_j \geq t \big)}{\mathcal{P}_j (t )  } - 1} \leq C \bigg( \frac{(1+t^3) \log^4(np)}{\sqrt{n}} + \frac{p^{-10}}{\mathcal{P}_j (t )  } \bigg).
    \end{align*}
\end{theorem}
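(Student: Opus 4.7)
The plan is to condition on $\bm{Y}$, which reduces the problem to a moderate deviation comparison between a weighted sum of sub-exponential variables and an exactly Gaussian random variable that are independent given $\bm{Y}$. Writing $w_i \equiv \bm{Y}_i/\pnorm{\bm{Y}}{}$ so that $\sum_i w_i^2 = 1$, define
\begin{align*}
A_j \equiv \frac{\bm{X}_{\cdot,j}^\top \bm{Y}}{\pnorm{\bm{Y}}{}} = \sigma_j \sum_{i=1}^n w_i \bm{Q}_{i,j}, \qquad B_j \equiv \frac{\widehat{\bm{X}}_{\cdot,j}^\top \bm{Y}}{\pnorm{\bm{Y}}{}} = \sigma_j \sum_{i=1}^n w_i \widehat{\bm{Q}}_{i,j},
\end{align*}
so that $\widehat{\bm{W}}_j = |A_j| - |B_j|$. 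For $j\in\mathcal{H}_0$ in the independent-features setting one has $\bm{X}_{\cdot,j}\perp\bm{Y}$, and $\widehat{\bm{Q}}$ is independent of $(\bm{X},\bm{Y})$; hence $A_j\perp B_j$ conditional on $\bm{Y}$, and $B_j\mid\bm{Y} \sim \mathcal{N}(0,\sigma_j^2)$ \emph{exactly}.

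I would first build a high-probability event $\mathcal{G}$ on which the weights are well-behaved: $\pnorm{\bm{Y}}{}^2\asymp n\,\E Y^2$, $\max_i|w_i|\lesssim \log(np)/\sqrt{n}$, and consequently $\sum_i|w_i|^3 \lesssim \log(np)/\sqrt{n}$; each holds with $\bm{Y}$-probability at least $1-p^{-11}$ by routine sub-exponential concentration. The heart of the argument is a Cramér-type moderate deviation for the weighted sub-exponential sum $\sigma_j^{-1}A_j$ conditional on $\bm{Y}\in\mathcal{G}$: after truncating the $\bm{Q}_{i,j}$ at threshold $\log(np)$ (needed because the summands are only sub-exponential, not sub-Gaussian) and running the classical tilted-measure proof of Cramér's theorem, one obtains, for all $0\leq u \leq c\,n^{1/7}$,
\begin{align*}
\frac{\Prob\bigl(\pm\sigma_j^{-1}A_j \geq u \bigm| \bm{Y}\bigr)}{\bar\Phi(u)} = 1 + R_n(u), \qquad |R_n(u)| \lesssim (1+u^3)\,\frac{\log^3(np)}{\sqrt{n}}.
\end{align*}
The $u^3/\sqrt{n}$ term is the usual third-moment contribution from $\sum_i|w_i|^3$; the polylog factor is the truncation overhead; the range $u\leq cn^{1/7}$ is the exponent at which the truncation tail balances the Cramér tilt. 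For $B_j$ the analogue is exact on $\mathcal{G}$: $\Prob(\pm\sigma_j^{-1}B_j \geq u\mid\bm{Y}) = \bar\Phi(u)$.

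Given these pointwise tail approximations, I would decompose
\begin{align*}
\Prob(\widehat{\bm{W}}_j \geq t \mid \bm{Y}) = \E\!\left[\Prob\!\bigl(|A_j|\geq t+|B_j| \bigm| B_j,\bm{Y}\bigr)\;\Big|\; \bm{Y}\right],
\end{align*}
apply the two-sided Cramér bound to the inner probability on the set $\{\sigma_j^{-1}(t+|B_j|)\leq c\,n^{1/7}\}$, and absorb the complementary super-polynomial Gaussian tail into the $p^{-10}/\mathcal{P}_j(t)$ term. Taking expectation against the exact Gaussian law of $B_j\mid\bm{Y}$ recovers $\mathcal{P}_j(t)$ as the leading term, multiplied by $1+O\!\bigl(\sup_{u\leq Cn^{1/7}}|R_n(u)|\bigr)$. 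Integrating out $\bm{Y}$ on $\mathcal{G}$ yields the stated main error $(1+t^3)\log^4(np)/\sqrt{n}$ (one extra $\log$ factor from the supremum of $u\leq Ct$ vs.\ $u\leq Cn^{1/7}$), and the bound $\Prob(\mathcal{G}^c)\leq p^{-11}$ contributes the residual $p^{-10}/\mathcal{P}_j(t)$. The case $\mathsf{W}=-\widehat{\bm{W}}$ is symmetric: condition on $A_j$ instead and express $\Prob(|B_j|-|A_j|\geq t\mid\bm{Y}) = \E[2\bar\Phi(\sigma_j^{-1}(t+|A_j|))\mid\bm{Y}]$, then use integration by parts to turn this into an integral of the near-Gaussian tail $\Prob(|A_j|\geq s\mid\bm{Y})$ against a Gaussian weight, to which the same Cramér ratio bound applies termwise.

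The main obstacle is precisely the Cramér moderate-deviation estimate above in \emph{ratio} form (not merely additive) for weighted sums of \emph{sub-exponential} variables with non-uniform weights $w_i$, uniform over $u$ up to $cn^{1/7}$: the sub-Gaussian version is standard, but here one must combine truncation at threshold $\log(np)$ with a careful cumulant expansion of the tilted measure to keep the remainder of the stated form while paying only polylog overhead. Once that lemma is in hand, the two-sided decomposition, the integration against one exact-Gaussian coordinate, and the removal of the low-probability set $\mathcal{G}^c$ are organizational.
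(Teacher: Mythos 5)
Your approach is correct but takes a genuinely different route from the paper's. The paper treats the pair
$(A_j, B_j) = (\bm{X}_{\cdot,j}^\top\bm{Y}/\pnorm{\bm{Y}}{}, \widehat{\bm{X}}_{\cdot,j}^\top\bm{Y}/\pnorm{\bm{Y}}{})$
as a two-dimensional weighted sum and proves a \emph{bivariate} moderate-deviation ratio theorem (Theorem \ref{thm:moderate_deviation}) for the sets $\mathcal{S}_{t,\pm}=\{(x,y):\pm(|x|-|y|)\geq t\}$, built on the Wasserstein coupling bound of Fang--Koike together with an explicit density-ratio computation over thin Gaussian shells $\{t-\epsilon\leq|\mathsf{G}_1|-|\mathsf{G}_2|\leq t\}$. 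You instead exploit the structural fact that $B_j\mid\bm{Y}$ is \emph{exactly} $\mathcal{N}(0,\sigma_j^2)$ and independent of $A_j\mid\bm{Y}$ in the independent-features Gaussian-knockoff setting, which reduces the problem to a \emph{one-dimensional} Cramér ratio bound for the weighted sub-exponential sum $A_j$ followed by integration against the known Gaussian law of $B_j$. Each route has trade-offs: your $1$D reduction is more elementary and avoids the bivariate Wasserstein machinery, but it leans on a Cramér-type ratio estimate for weighted sub-exponential sums that must be proved from scratch (truncation plus tilting), and it only works because the knockoff coordinate is exactly Gaussian --- a feature specific to this particular example. The paper's $2$D theorem is black-box in the same way (it cites Fang--Koike), but it is reusable verbatim in Sections \ref{sec:rcd_ols}--\ref{sec:debiased Lasso} where \emph{neither} coordinate is exactly Gaussian conditional on the design, so the modular form is worth the extra machinery. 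One spot to tighten in your write-up: when you take expectation over $B_j$ you cannot bound the error by the crude $\sup_{u\leq Cn^{1/7}}|R_n(u)|$ (that would inject a spurious $n^{3/7}$ factor for small $t$); you must instead push the $(1+u^3)$ weight through the integral against $2\bar\Phi(u)\,\phi(v/\sigma_j)$, where the effective $u$ is $t+O(1)$ because the Gaussian kernel cuts off the tail --- your parenthetical remark about the supremum over $u\leq Ct$ suggests you are aware of this, but the displayed "$1+O(\sup\cdots)$" as written would not give the stated rate.
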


% \begin{theorem}\label{thm:moder_dis_mcks}
%     Suppose (C1)-\eqref{cond:C3} hold. For any $ j \in \mathcal{H}_0 $, the following holds: for $\mathsf{W} \in \{ \widehat{\bm{W}},\tilde{\bm{W}}, -\widehat{\bm{W}},-\tilde{\bm{W}} \}$, there exists some constant $C = C(\mathsf{M}) > 0$, such that for $n \geq C$ and $t \in (0,n^{1/7}/C)$,
%     \begin{align*}
%        \biggabs{ \frac{\Prob \big( \mathsf{W}_j \geq t \big)}{\mathcal{P}_j (t )  } - 1} \leq C \frac{(1+t^3) \log^4(np)}{\sqrt{n}} + p^{-10}.
%     \end{align*}
% \end{theorem}

\begin{remark}
	The probability estimate $p^{-10}$ above can be replaced with $p^{-D}$ for any $D > 0$. The cost will be a possibly enlarged constant $C$ that depends further on $D$.  This convention applies to other statements in the following sections where the probability estimates $p^{-10}$, $n^{-10}$ appear.
\end{remark}

An important consequence of the moderate deviation result in Theorem \ref{thm:moder_dis_mcks} above is that both $\Prob(\widehat{\bm{W}}_j \geq t )$ and $\Prob(\widehat{\bm{W}}_j \leq -t )$ are approximated by the same quantity $\mathcal{P}_j(t)$. This suggests that while $\widehat{\bm{W}}_j$ is not perfectly symmetric, it exhibits the asymptotic approximate symmetry over a broad range, as formalized in the corollary below.

% \begin{corollary}\label{cor:ratio_G} 
%   Suppose (C1) and \eqref{cond:C2} hold. Let for any $x \in [0,1/2]$, $\mathcal{P}^{-1}(x) \equiv \sup \{t \geq 0 : \mathcal{P}(t) \geq x \}$ with $\mathcal{P}(t) \equiv \abs{\mathcal{H}_0}^{-1}\sum_{j \in \mathcal{H}_0} \mathcal{P}_j(t)$. Then there exists some constant $C = C(\mathsf{M}) > 0$, such that for $n \wedge p \geq C$ and for any $\alpha_n \in (Cp^{-10},1/2]$ satisfies $\mathcal{P}^{-1}( \alpha_n ) \leq n^{1/7}/C$, we have
%   \begin{align*}
% 	\sup_{t \in (0, \mathcal{P}^{-1}( \alpha_n )  )}\biggabs{\frac{\sum_{j \in \mathcal{H}_0} \Prob \big( \widehat{\bm{W}}_j \geq t  \big)}{\sum_{j \in \mathcal{H}_0} \Prob \big( \widehat{\bm{W}}_j \leq -t   \big)} - 1} \leq  C\bigg( \frac{(1+(\mathcal{P}^{-1}( \alpha_n ))^3) \log^4(np)}{\sqrt{n}} + p^{-10}\alpha_n^{-1} \bigg),
%     \end{align*}
%   and therefore, if $(p^{-10}\alpha_n^{-1}) \vee ((\mathcal{P}^{-1}( \alpha_n ))^3\log^4(np) / \sqrt{n}) \to 0$ as $n \to \infty$,
% 		\begin{align*}
% 			\sup_{t \in (0, \mathcal{P}^{-1}( \alpha_n )  )}\biggabs{\frac{\sum_{j \in \mathcal{H}_0} \Prob \big( \widehat{\bm{W}}_j \geq t  \big)}{\sum_{j \in \mathcal{H}_0} \Prob \big( \widehat{\bm{W}}_j \leq -t   \big)} - 1}= \mathfrak{o}(1).
% 		\end{align*}
% \end{corollary}

\begin{corollary}\label{cor:ratio_G} 
Assume that conditions \eqref{cond:C1} and \eqref{cond:C2} hold. Let $\alpha_n \leq 1/2$ satisfy $\mathcal{P}^{-1}( \alpha_n ) = \mathfrak{o}(n^{1/7})$ and $p^{10}\alpha_n \to \infty$ as $n \wedge p \to \infty$. Then we have
		\begin{align*}
			\sup_{t \in (0, \mathcal{P}^{-1}( \alpha_n )  )}\biggabs{\frac{\sum_{j \in \mathcal{H}_0} \Prob \big( \widehat{\bm{W}}_j \geq t  \big)}{\sum_{j \in \mathcal{H}_0} \Prob \big( \widehat{\bm{W}}_j \leq -t   \big)} - 1}= \mathfrak{o}(1).
		\end{align*}
\end{corollary}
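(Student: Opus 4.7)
The plan is to derive the corollary as a direct consequence of the moderate-deviation result in Theorem \ref{thm:moder_dis_mcks}, by applying it separately to $\widehat{\bm{W}}_j$ and $-\widehat{\bm{W}}_j$ and then aggregating over $j \in \mathcal{H}_0$. Specifically, fix $t \in (0, \mathcal{P}^{-1}(\alpha_n))$ and write
\[
\Prob(\widehat{\bm{W}}_j \geq t) = \mathcal{P}_j(t)\bigl(1 + r_j^+(t)\bigr), \qquad \Prob(\widehat{\bm{W}}_j \leq -t) = \mathcal{P}_j(t)\bigl(1 + r_j^-(t)\bigr),
\]
where Theorem \ref{thm:moder_dis_mcks} gives the uniform bound $|r_j^\pm(t)| \leq C\bigl((1+t^3)\log^4(np)/\sqrt{n} + p^{-10}/\mathcal{P}_j(t)\bigr)$. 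Since $\mathcal{P}^{-1}(\alpha_n) = \mathfrak{o}(n^{1/7})$, this bound is legitimate for all $t$ in the stated range once $n$ is large.

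Next, I would sum over $j \in \mathcal{H}_0$ to control the numerator and denominator of interest. For each sign $\# \in \{+,-\}$, the aggregated deviation satisfies
\[
\biggl|\sum_{j \in \mathcal{H}_0}\Prob(\#\widehat{\bm{W}}_j \geq t) - \sum_{j \in \mathcal{H}_0}\mathcal{P}_j(t)\biggr| \leq C\,\frac{(1+t^3)\log^4(np)}{\sqrt{n}}\sum_{j\in\mathcal{H}_0}\mathcal{P}_j(t) + C|\mathcal{H}_0|\, p^{-10},
\]
where the second term arose by telescoping the $p^{-10}/\mathcal{P}_j(t)$ factor against the multiplicative $\mathcal{P}_j(t)$. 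By the definition of $\mathcal{P}^{-1}$, for $t \leq \mathcal{P}^{-1}(\alpha_n)$ we have $\sum_{j\in\mathcal{H}_0}\mathcal{P}_j(t) = |\mathcal{H}_0|\,\mathcal{P}(t) \geq |\mathcal{H}_0|\,\alpha_n$, so the relative error is bounded by
\[
C\,\frac{(1+t^3)\log^4(np)}{\sqrt{n}} + \frac{C\, p^{-10}}{\alpha_n}.
\]

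The final step is to verify that both error terms vanish uniformly in $t \in (0, \mathcal{P}^{-1}(\alpha_n))$. For the first term, $t^3 \leq (\mathcal{P}^{-1}(\alpha_n))^3 = \mathfrak{o}(n^{3/7})$ together with $\log p = \mathfrak{o}(n^{c})$ for sufficiently small $c$ (guaranteed by the setting of Theorem \ref{thm:FDR_mcks}) yields $(1+t^3)\log^4(np)/\sqrt{n} = \mathfrak{o}(1)$. For the second term, the hypothesis $p^{10}\alpha_n \to \infty$ directly gives $p^{-10}/\alpha_n \to 0$. Thus both numerator and denominator in the corollary equal $\sum_{j \in \mathcal{H}_0}\mathcal{P}_j(t)\,(1+\mathfrak{o}(1))$, and their ratio is $1 + \mathfrak{o}(1)$ uniformly.

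The only mild subtlety, rather than a true obstacle, is handling the $p^{-10}/\mathcal{P}_j(t)$ term in the moderate-deviation bound: individually this can be huge when some $\mathcal{P}_j(t)$ is tiny, and a naive worst-case bound would be useless. The resolution is that this term enters the numerator weighted by $\mathcal{P}_j(t)$ itself after multiplication, so the problematic denominator cancels; after aggregation one divides by $\sum_{j}\mathcal{P}_j(t)\geq |\mathcal{H}_0|\alpha_n$, and the remaining $p^{-10}/\alpha_n$ is tamed by the hypothesis $p^{10}\alpha_n\to\infty$. No additional probabilistic tool beyond Theorem \ref{thm:moder_dis_mcks} is needed.
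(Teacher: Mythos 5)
Your proposal is correct and follows essentially the same route as the paper: apply Theorem \ref{thm:moder_dis_mcks} for both $\widehat{\bm{W}}_j$ and $-\widehat{\bm{W}}_j$, compare each aggregated probability to $\abs{\mathcal{H}_0}\mathcal{P}(t)$, and use $\mathcal{P}(t)\geq\alpha_n$ on the relevant range together with $p^{10}\alpha_n\to\infty$ and $\mathcal{P}^{-1}(\alpha_n)=\mathfrak{o}(n^{1/7})$ to make both relative errors uniformly $\mathfrak{o}(1)$. The paper's proof is terse and compresses the cancellation of $p^{-10}/\mathcal{P}_j(t)$ against the multiplicative $\mathcal{P}_j(t)$ into ``simple algebra,'' which you spell out explicitly; this is the right reading.
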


Corollary \ref{cor:ratio_G} above states that $\widehat{\bm{W}}_j$ admits the asymptotic approximate symmetry for $t \in  (0, \mathcal{P}^{-1}( \alpha_n ) )$. The upper bound for $t$ is carefully chosen to ensure that $p^{-10}/\mathcal{P}(t) = \mathfrak{o}(1)$ within this range. As will be clear later, under additional conditions \eqref{cond:C3} and \eqref{cond:C4}, it can be shown that $T_q$ is localized within this domain for some $\alpha_n \asymp  a_n /p$ with asymptotic probability $1$. Consequently, this range is sufficient for achieving the goal of asymptotic FDR control.

\subsubsection*{Approximation for indicator functions}
As outlined in the general framework for the asymptotic FDR control in Theorem \ref{thm:general_framwork}, we will approximate the indicator functions using the tail probabilities of the knockoff statistics. This approximation then enables the use of Corollary \ref{cor:ratio_G} to prove asymptotic FDR control. The lemma below establishes the validity of this approximation.

\begin{lemma}\label{lem:indicator_app}
Assume that conditions \eqref{cond:C1}--\eqref{cond:C3} hold. Let $\alpha_n \leq 1/2$ satisfy $\mathcal{P}^{-1}( \alpha_n ) = \mathfrak{o}(n^{1/7})$ and $p\alpha_n \to \infty$ as $n \wedge p \to \infty$. Then we have that for $\# \in \{ +,- \}$, 
    \begin{align*}
        \sup_{t \in (0, \mathcal{P}^{-1}( \alpha_n )  )}\biggabs{\frac{ \sum_{j \in \mathcal{H}_0}  \bm{1}\{ \# \widehat{\bm{W}}_j  \geq t\}  }{\sum _{j \in \mathcal{H}_0}  \Prob (\# \widehat{\bm{W}}_j   \geq t)  } - 1 } = \mathfrak{o}_\mathbf{P}(1).
    \end{align*}
\end{lemma}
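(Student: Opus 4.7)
The plan is to exploit a key structural fact in the marginal correlation setup: conditional on $\bm{Y}$, the null statistics $\{\widehat{\bm{W}}_j : j \in \mathcal{H}_0\}$ are mutually independent. Indeed, in the independent-feature case each null column $\bm{X}_{\cdot,j}$ ($j \in \mathcal{H}_0$) is independent of $\bm{Y}$ and of the other feature columns, while the Gaussian knockoff columns $\widehat{\bm{X}}_{\cdot,j}$ are built from the independent matrix $\widehat{\bm{Q}}$. Conditioning on $\bm{Y}$ therefore turns $\sum_{j \in \mathcal{H}_0} \bm{1}\{\pm \widehat{\bm{W}}_j \geq t\}$ into a sum of independent Bernoullis, so Bernstein's inequality yields concentration around its conditional mean $\tilde{\mu}_\pm(t, \bm{Y}) := \sum_{j \in \mathcal{H}_0} \Prob(\pm \widehat{\bm{W}}_j \geq t \mid \bm{Y})$ with the sharpest possible variance proxy.

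The second ingredient is a conditional analogue of Theorem \ref{thm:moder_dis_mcks}. The proof of that theorem in fact conditions on $\bm{Y}$ and applies a Berry--Esseen / Edgeworth expansion to the standardised inner products $\bm{X}_{\cdot,j}^\top \bm{Y}/\|\bm{Y}\|$ and $\widehat{\bm{X}}_{\cdot,j}^\top \bm{Y}/\|\bm{Y}\|$. On a high-probability event $\mathcal{E}_{\bm{Y}}$ on which $\|\bm{Y}\|^2 \asymp n \E Y^2$ and the empirical low-order moments of $\bm{Y}$ concentrate around their population counterparts (a consequence of \eqref{cond:C1}), the same expansion gives $\Prob(\pm \widehat{\bm{W}}_j \geq t \mid \bm{Y}) = (1 + \mathfrak{o}(1))\,\mathcal{P}_j(t)$ uniformly in $t \in (0, \mathcal{P}^{-1}(\alpha_n))$. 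Summing over $j \in \mathcal{H}_0$ then shows that $\tilde{\mu}_\pm(t, \bm{Y})$ approximates the unconditional target $\sum_{j \in \mathcal{H}_0} \Prob(\pm \widehat{\bm{W}}_j \geq t)$ on $\mathcal{E}_{\bm{Y}}$, so that the problem reduces to concentration around a deterministic proxy.

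To upgrade pointwise control to the uniform statement I would use a geometric grid $0 < t_1 < \cdots < t_K \leq \mathcal{P}^{-1}(\alpha_n)$ chosen so that $\mathcal{P}(t_k) \asymp 2^{-k}$, truncated at $\mathcal{P}(t_K) \asymp \alpha_n$. The cardinality satisfies $K = O(\log(1/\alpha_n)) = O(\log p)$. At each grid point, Bernstein on $\mathcal{E}_{\bm{Y}}$ yields a failure probability of order $\exp(-c \varepsilon^2 |\mathcal{H}_0| \mathcal{P}(t_k)) \leq \exp(-c' \varepsilon^2 p \alpha_n)$, and a union bound then gives total failure probability of order $\log p \cdot \exp(-c' \varepsilon^2 p \alpha_n)$, which vanishes since $p\alpha_n \to \infty$. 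Monotonicity of both $t \mapsto \sum_j \bm{1}\{\pm \widehat{\bm{W}}_j \geq t\}$ and $t \mapsto \sum_j \Prob(\pm \widehat{\bm{W}}_j \geq t)$, combined with the at-most factor-$2$ change between consecutive grid points, lets a standard sandwich argument promote the grid-level control to the uniform statement over $(0, \mathcal{P}^{-1}(\alpha_n))$.

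The main obstacle is the conditional moderate-deviation estimate in the second step: one has to rerun the Berry--Esseen / Edgeworth calculations underlying Theorem \ref{thm:moder_dis_mcks} with $\bm{Y}$ treated as a frozen sequence on the favorable event $\mathcal{E}_{\bm{Y}}$, tracking how the empirical third and fourth moments of $\bm{Y}$ enter the approximation error and verifying that they are uniformly controlled on $\mathcal{E}_{\bm{Y}}$. Once this conditional moderate deviation is in hand, the Bernstein union bound, the geometric grid, and the sandwich argument are routine, and integrating over $\bm{Y}$ against $\Prob(\mathcal{E}_{\bm{Y}}^c) = \mathfrak{o}(1)$ delivers the claimed $\mathfrak{o}_{\mathbf{P}}(1)$ bound.
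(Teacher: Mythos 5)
Your conditional independence observation is correct and is in fact the same structural fact the paper exploits (implicitly) in the proof of Lemma~\ref{lem:var_bound}: the null columns $\bm{X}_{\cdot,j}$, $j\in\mathcal{H}_0$, and the Gaussian knockoff columns $\widehat{\bm{X}}_{\cdot,j}$ are mutually independent and independent of $\bm{Y}$, so conditional on $\bm{Y}$ the variables $\{\widehat{\bm{W}}_j\}_{j\in\mathcal{H}_0}$ are independent. Your second ingredient, the conditional moderate deviation $\Prob(\widehat{\bm{W}}_j\geq t\mid\bm{Y})=(1+\mathfrak{o}(1))\mathcal{P}_j(t)$ on a high-probability $\bm{Y}$-event, is precisely inequality \eqref{ineq:est_conditionY} inside the proof of Theorem~\ref{thm:moder_dis_mcks}, so that step is available. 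Replacing the paper's law-of-total-variance / Chebyshev argument with conditional Bernstein is a legitimately different concentration route that could in principle give a cleaner proof.

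The gap is in the grid argument. You take a geometric grid with $\mathcal{P}(t_k)\asymp 2^{-k}$, i.e.\ consecutive levels $z_k=\mathcal{P}(t_k)$ differ by a fixed factor $2$. With such a grid, the sandwich argument for $t\in(t_{k+1},t_k)$ gives
\[
\frac{\sum_j\bm{1}\{\widehat{\bm{W}}_j\geq t\}}{\sum_j\Prob(\widehat{\bm{W}}_j\geq t)}\leq
\frac{\sum_j\bm{1}\{\widehat{\bm{W}}_j\geq t_{k+1}\}}{\sum_j\Prob(\widehat{\bm{W}}_j\geq t_{k+1})}\cdot\frac{\sum_j\Prob(\widehat{\bm{W}}_j\geq t_{k+1})}{\sum_j\Prob(\widehat{\bm{W}}_j\geq t_{k})}
=\big(1+\mathfrak{o}_\mathbf{P}(1)\big)\cdot\big(2+\mathfrak{o}(1)\big),
\]
and similarly a lower bound of order $\tfrac{1}{2}$. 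This is a factor-$2$ uniform bound, not $1+\mathfrak{o}_\mathbf{P}(1)$, so the constant-ratio grid cannot deliver the claimed result no matter how good the concentration at grid points. This is exactly why the paper uses the finer grid $z_i=\alpha_n+h_n e^{i^\gamma}/p$ with $h_n=(p\alpha_n)^\eta$, $\eta\in(0,1)$: it forces $\sup_i|z_i/z_{i+1}-1|\to 0$ (cf.\ \eqref{eq:P_continuity}), which is what the sandwich step actually needs. Secondary, minor point: the union-bound estimate ``$\log p\cdot\exp(-c'\varepsilon^2 p\alpha_n)\to 0$ since $p\alpha_n\to\infty$'' is not justified by itself (one would need $p\alpha_n\gg\log\log p$), though the gap there is harmless because the failure probabilities $\exp(-c\varepsilon^2 p z_k)$ decay geometrically in $k$ and so actually sum to $O(\exp(-c\varepsilon^2 p\alpha_n))$ rather than $O(K\exp(-c\varepsilon^2 p\alpha_n))$. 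If you adopt a grid with consecutive ratios $1+\mathfrak{o}(1)$ and accumulating near $z=\alpha_n$, such as the paper's $z_i=\alpha_n+h_n e^{i^\gamma}/p$, the Bernstein bound $\sum_k\exp(-c\varepsilon^2 pz_k)\lesssim\exp(-c\varepsilon^2 p\alpha_n)\to 0$ goes through and your approach becomes a valid alternative proof; as written, however, the constant-ratio grid breaks the sandwich step.
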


The proof of Lemma \ref{lem:indicator_app} above builds upon the approach used in \cite[Lemma 3]{fan2023ark}, which focuses on bounding $\var (\sum_{j \in \mathcal{H}_0}  \bm{1}\{ \# \widehat{\bm{W}}_j  \geq t\})$.
At a high level, this involves exploring the weak dependency between $\widehat{\bm{W}}_j$ and $\widehat{\bm{W}}_\ell$ for $j \neq \ell$. A natural strategy is to approximate $\widehat{\bm{W}}_j$ with statistics that have explicit distributional properties, making the analysis more tractable. While \cite{fan2023ark} frequently employed coupling to replace $\Prob(\#\widehat{\bm{W}}_j \geq t)$ with the tail probability of the perfect knockoff statistic, a \textit{critical} technical modification here is the approximation of $\Prob(\#\widehat{\bm{W}}_j \geq t)$ by $\mathcal{P}_j(t)$. This shifts the focus from analyzing the weak dependency of $\widehat{\bm{W}}_j$ and $\widehat{\bm{W}}_\ell$ to studying the weak dependency between Gaussian random variables, which facilitates obtaining the desired result.

\subsubsection*{Localization of $T_q$}

\begin{lemma}\label{lem:local_T}
Assume that conditions \eqref{cond:C1}--\eqref{cond:C4} hold. Then for any $q \in (0,1)$, if $\log p = \mathfrak{o}(n^{c})$ for some small enough constant $c > 0$, we have that $ \mathcal{P}^{-1}\big(\frac{q  a_n }{2p} \big) = \mathcal{O}(\log p )$ and 
    \begin{align*}
         \Prob \bigg( {T}_q > \mathcal{P}^{-1} \bigg( \frac{q  a_n }{2p}\bigg)  \bigg)  = \mathfrak{o}(1).
    \end{align*}
\end{lemma}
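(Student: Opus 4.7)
\medskip
\noindent\textit{Proof proposal for Lemma \ref{lem:local_T}.} The plan is to fix $t^* \equiv \mathcal{P}^{-1}\!\big(\tfrac{qa_n}{2p}\big)$ and to show both that $t^* = \mathcal{O}(\log p)$ and that at this candidate threshold, the knockoff ratio
\[
R(t^*) \equiv \frac{\#\{j\in[p]:\widehat{\bm W}_j \leq -t^*\}}{\#\{j\in[p]:\widehat{\bm W}_j \geq t^*\}\vee 1}
\]
is at most $q$ with probability tending to one. Since $T_q$ is, by definition, the smallest element of $\{|\widehat{\bm W}_j|\}_{j\in[p]}$ at which such a ratio does not exceed $q$, once $R(t^*)\le q$ holds the usual step-function argument locates $T_q$ at a value not exceeding $t^*$ (up to identification of $t^*$ with the nearest point of $\{|\widehat{\bm W}_j|\}_{j\in[p]}$ from above, which does not affect the ratio). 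To control $t^*$, note that by \eqref{cond:C2} the marginals $\sigma_j$ are bounded, so a direct computation on the folded Gaussian gives $\mathcal{P}_j(t)\asymp \exp(-c t^{2})$ uniformly in $j\in\mathcal{H}_0$. Averaging yields $\mathcal{P}(t)\asymp \exp(-c t^{2})$ and hence $t^* = \mathcal{P}^{-1}(qa_n/(2p))\asymp \sqrt{\log(p/(qa_n))} = \mathcal{O}(\sqrt{\log p})$, which is in particular $\mathcal{O}(\log p)$ and $\mathfrak{o}(n^{1/7})$, so both Corollary \ref{cor:ratio_G} and Lemma \ref{lem:indicator_app} are applicable at $\alpha_n = qa_n/(2p)$ provided that $\log p = \mathfrak{o}(n^c)$ for a small enough $c$.

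For the numerator of $R(t^*)$, I will split along $\mathcal{H}_0\cup\mathcal{H}_1$. On the null side, Theorem \ref{thm:moder_dis_mcks} yields
\[
\sum_{j\in\mathcal{H}_0}\Prob(\widehat{\bm W}_j\leq -t^*) \;=\; (1+\mathfrak{o}(1))\sum_{j\in\mathcal{H}_0}\mathcal{P}_j(t^*) \;\leq\; \frac{qa_n}{2}\,(1+\mathfrak{o}(1)),
\]
and Lemma \ref{lem:indicator_app} transfers this bound to the indicator sum, giving $\sum_{j\in\mathcal{H}_0}\mathbf{1}\{\widehat{\bm W}_j\leq -t^*\} \leq (qa_n/2)(1+\mathfrak{o}_{\mathbf{P}}(1))$. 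On the non-null side, condition \eqref{cond:C4} forces $\E\sum_{j\in\mathcal{H}_1}\mathbf{1}\{\widehat{\bm W}_j\leq -t^*\} = \mathfrak{o}(qa_n)$, and Markov's inequality promotes this to an $\mathfrak{o}_{\mathbf{P}}(qa_n)$ bound. Summing, the numerator is at most $(q/2)(1+\mathfrak{o}_{\mathbf{P}}(1))\cdot a_n$.

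For the denominator, consider $A\equiv\{j\in\mathcal{H}_1:\widehat{\bm w}_j\geq C\delta_n\}$, which has size $a_n\to\infty$ by \eqref{cond:C3}. Taking $C$ in \eqref{cond:C3} sufficiently large relative to the constant $C'$ in Proposition \ref{prop:conc_knockoff} and using $t^* = \mathcal{O}(\sqrt{\log p}) = \mathfrak{o}(\delta_n)$, I have $\widehat{\bm w}_j - C'\delta_n \geq t^*$ for every $j\in A$. A union-bound combined with Markov's inequality applied to Proposition \ref{prop:conc_knockoff} shows $\#\{j\in[p]:|\widehat{\bm W}_j-\widehat{\bm w}_j|\geq C'\delta_n\} = \mathfrak{o}_{\mathbf{P}}(a_n)$, using the assumption $\log p = \mathfrak{o}(n^c)$ to absorb the $p\exp(-n\E^2Y^2/C)$ term. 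Therefore $\#\{j:\widehat{\bm W}_j\geq t^*\}\geq (1-\mathfrak{o}_{\mathbf{P}}(1))a_n$. Combining with the numerator bound gives $R(t^*)\leq q/2\cdot (1+\mathfrak{o}_{\mathbf{P}}(1)) \leq q$ eventually, completing the argument.

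The main obstacle is balancing three approximation rates simultaneously: the $t^3\log^4(np)/\sqrt{n}$ ratio error and the $p^{-10}/\mathcal{P}_j(t)$ term in Theorem \ref{thm:moder_dis_mcks} must stay $\mathfrak{o}(1)$ at $t=t^*$; the denominator $p\alpha_n\to\infty$ required by Lemma \ref{lem:indicator_app} must be preserved; and the $\exp(-n\E^2Y^2/C)$ residual from Proposition \ref{prop:conc_knockoff} must be beaten by $a_n^{-1}$ after union bounding. The hypothesis $\log p = \mathfrak{o}(n^c)$ for sufficiently small $c>0$ is exactly what reconciles all three, and pinning down the admissible $c$ is the quantitatively delicate part of the proof.
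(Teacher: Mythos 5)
Your proposal is correct and takes essentially the same approach as the paper's proof: both lower-bound $\#\{j:\widehat{\bm W}_j\geq t^*\}$ by roughly $a_n$ via Proposition~\ref{prop:conc_knockoff} applied to the strong-signal set, upper-bound the negative-side count by a null contribution $\lesssim qa_n/2$ (Theorem~\ref{thm:moder_dis_mcks} and Lemma~\ref{lem:indicator_app}) plus an $\mathfrak{o}(qa_n)$ nonnull contribution (condition~\eqref{cond:C4}), and then invoke the definition of $T_q$. The only cosmetic difference is that you obtain $t^*=\mathcal{O}(\sqrt{\log p})$ from a direct folded-Gaussian tail computation before comparing to $\delta_n$, whereas the paper instead shows $\mathcal{P}(2C\delta_n)\le qa_n/(2p)$ by applying Theorem~\ref{thm:moder_dis_mcks} and Proposition~\ref{prop:conc_knockoff} at $t=2C\delta_n$, concluding $t^*\le 2C\delta_n$ without writing out the Gaussian tail.
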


We are now ready to prove Theorem \ref{thm:FDR_mcks} for the light-tailed case. 
%\begin{proof}[Proof of Theorem \ref{thm:FDR_mcks} for the light-tailed case]
    Building on the asymptotic approximate symmetry for $\widehat{\bm{W}}_j$ (cf. Corollary \ref{cor:ratio_G}), the approximation of indicator functions (cf. Lemma \ref{lem:indicator_app}), and the localization of $T_q$ (cf. Lemma \ref{lem:local_T}), the three conditions in Theorem \ref{thm:general_framwork} follow immediately, and thus the asymptotic FDR control is proved. This completes the proof of Theorem \ref{thm:FDR_mcks} for the light-tailed case.
%\end{proof}

\subsection{Proof of Theorem \ref{thm:FDR_mcks}: heavy-tailed case}\label{sec:proof_FDR_mcks_heavy}
As in the light-tailed case, we present the intermediate results for the distributional approximation, the approximation for indicator functions, and the localization of $T_q$ here. These results verify the three conditions in Theorem \ref{thm:general_framwork} and thus establish the asymptotic FDR control. The proofs of these intermediate results are deferred to Section \ref{sec:proof_sec_FDR_mcks_heavy}.

\subsubsection*{Distributional approximation}
\begin{theorem}\label{thm:dis_mcks_be}
Assume that conditions \eqref{cond:C1'} and \eqref{cond:C2} hold for some $\mathfrak{q} \geq 3$. For any $ j \in \mathcal{H}_0 $, it holds that for $\mathsf{W} \in \{ \widehat{\bm{W}}, -\widehat{\bm{W}} \}$, there exists some constants $C = C(\mathsf{M},\mathfrak{q}) > 0$ such that for $n \geq C$,
    \begin{align*}
        \sup_{t \in \R} \bigabs{ \Prob \big( \mathsf{W}_j \geq t \big) - \mathcal{P}_j(t)}  \leq C\frac{\log n}{\sqrt{n}} .
    \end{align*}
\end{theorem}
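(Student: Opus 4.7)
\smallskip

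\noindent\textit{Proof plan for Theorem \ref{thm:dis_mcks_be}.} The plan is to condition on $\bm{Y}$, exploit the exact Gaussianity of one of the two quantities $\widehat{\bm{X}}_{\cdot,j}^\top \bm{Y}/\pnorm{\bm{Y}}{}$, and apply a classical Berry--Esseen bound to the other. Since we are in the independent case, $\bm{X}_{\cdot,j}=\sigma_j \bm{Q}_{\cdot,j}$ and $\widehat{\bm{X}}_{\cdot,j}=\sigma_j \widehat{\bm{Q}}_{\cdot,j}$, so that with $A\equiv\sigma_j^{-1}\bm{X}_{\cdot,j}^\top \bm{Y}/\pnorm{\bm{Y}}{}$ and $B\equiv\sigma_j^{-1}\widehat{\bm{X}}_{\cdot,j}^\top \bm{Y}/\pnorm{\bm{Y}}{}$ we have $\widehat{\bm{W}}_j=\sigma_j(|A|-|B|)$, and conditional on $\bm{Y}$ the variables $A$ and $B$ are independent. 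Because $\widehat{\bm{Q}}_{\cdot,j}$ is standard Gaussian and the weights $w_i\equiv \bm{Y}_i/\pnorm{\bm{Y}}{}$ satisfy $\sum_i w_i^2=1$, the conditional law of $B$ given $\bm{Y}$ is exactly $\mathcal{N}(0,1)$.

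For $A=\sum_i \bm{Q}_{i,j}w_i$, condition \eqref{cond:C1'} gives mean zero, unit variance, and finite $\mathfrak{q}$-th moment with $\mathfrak{q}\geq 3$, so the classical (non-identically distributed) Berry--Esseen theorem yields
\begin{align*}
\sup_{x\in\R}\bigabs{\Prob(A\leq x\mid \bm{Y}) - \Phi(x)} \leq C\sum_{i=1}^n \E|\bm{Q}_{i,j}|^3 \cdot |w_i|^3 \leq C\max_{i\in[n]}|w_i|.
\end{align*}
Sub-exponentiality of $\bm{Y}$ combined with Lemma~\ref{thm:conc_nonnegative}-type lower bounds on $\pnorm{\bm{Y}}{}$ implies $\pnorm{\bm{Y}}{}\geq c\sqrt{n}$ and $\max_i|\bm{Y}_i|\leq C\log n$ with probability at least $1-n^{-10}$, so that $\max_i|w_i|\leq C\log n/\sqrt{n}$ on a set $\mathcal{E}$ of $\bm{Y}$'s with $\Prob(\mathcal{E}^c)\leq n^{-10}$.

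Given $B$, write $\Prob(|A|\geq t/\sigma_j+|B|\mid B,\bm{Y}) = \Prob(A\geq t/\sigma_j+|B|\mid \bm{Y}) + \Prob(A\leq -t/\sigma_j-|B|\mid \bm{Y})$. Applying the Berry--Esseen bound at both points, uniformly in $B$, gives on $\mathcal{E}$
\begin{align*}
\sup_{t\in\R}\biggabs{\Prob(|A|\geq t/\sigma_j+|B|\mid B,\bm{Y}) - 2\bigl(1-\Phi(t/\sigma_j+|B|)\bigr)} \leq C\log n/\sqrt{n}.
\end{align*}
Taking expectation in $B\sim\mathcal{N}(0,1)$ (still conditional on $\bm{Y}\in\mathcal{E}$) produces on the left side $\Prob(\widehat{\bm{W}}_j\geq t\mid \bm{Y})$ and on the right side $\mathcal{P}_j(t)$, with error still bounded by $C\log n/\sqrt{n}$. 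Integrating over $\bm{Y}$ and absorbing the contribution of $\mathcal{E}^c$ into the bound by the trivial estimate $\bm{1}\{\mathcal{E}^c\}\leq 1$ yields the desired uniform estimate, which holds verbatim for $-\widehat{\bm{W}}_j$ by symmetry of the argument in $(A,B)\leftrightarrow(A,-B)$.

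The only non-routine step is the conditional Berry--Esseen, and the main obstacle is managing the dependence on $\bm{Y}$: the Berry--Esseen constant in our setting is $C\max_i|w_i|$, which a priori can be as large as $1$. The sub-exponential tail control on $\bm{Y}$ is essential to upgrade this to the $\log n/\sqrt{n}$ rate on a high-probability set, and this is what controls the final bound.
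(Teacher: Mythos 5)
Your approach is correct in substance but genuinely different from the paper's. The paper applies a two-dimensional Berry--Esseen theorem over convex sets to the joint vector $(\bm{X}_{\cdot,j}^\top\bm{Y},\widehat{\bm{X}}_{\cdot,j}^\top\bm{Y})/(\sigma_j\pnorm{\bm{Y}}{})$, then observes that $\mathcal{S}_{t,\pm}$ is a finite union of disjoint convex sets. You instead exploit the specific structure of the Gaussian knockoffs generator: conditional on $\bm{Y}$, the second coordinate $B$ is \emph{exactly} $\mathcal{N}(0,1)$, so you can further condition on $B$ and apply the classical one-dimensional non-i.i.d.\ Berry--Esseen to $A$ alone, then integrate out $B$ and $\bm{Y}$. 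This is more elementary (no multivariate BE over convex bodies needed) and gives the same rate. The trade-off is that your argument is tied to $\widehat{\bm{Q}}$ being Gaussian, whereas the paper's 2-D route would go through with only a third-moment condition on $\widehat{\bm{Q}}$ as well.

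One small imprecision you should patch: for $t<0$ the point $t/\sigma_j+|B|$ can be negative, in which case $\Prob(|A|\geq t/\sigma_j+|B|)=1$ while the formula $2\bigl(1-\Phi(t/\sigma_j+|B|)\bigr)$ exceeds $1$ and the two-sided tail decomposition $\Prob(|A|\geq s)=\Prob(A\geq s)+\Prob(A\leq -s)$ is invalid. The fix is trivial --- on the event $\{t/\sigma_j+|B|<0\}$ both the empirical probability and the exact Gaussian quantity $\Prob(|H_1|\geq t/\sigma_j+|B|)$ equal $1$, so the pointwise error there is $0$ and the bound after integration over $B$ is unaffected --- but the current phrasing silently assumes $t\geq 0$ while the theorem is stated with $\sup_{t\in\R}$. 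Aside from this, the argument is sound and your observation that everything hinges on controlling $\max_i|w_i|$ via the sub-exponential tail of $\bm{Y}$ matches the crux of the paper's bound $\E_{\bm{Y}}\bigl[\max_i|\bm{Y}_i|/\pnorm{\bm{Y}}{}\bigr]\lesssim \log n/\sqrt{n}$.
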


An immediate corollary is the following asymptotic symmetric property of $\widehat{\bm{W}}_j$'s.

\begin{corollary}\label{cor:ratio_G_be}
Assume that conditions \eqref{cond:C1'} and \eqref{cond:C2} hold for some $\mathfrak{q} \geq 3$. Then for any $ j \in \mathcal{H}_0 $, there exists some constant $C = C(\mathsf{M},\mathfrak{q}) > 0$ such that for $n \geq C$ and any $\alpha_n \in (0,1/2]$,
    \begin{align*}
        \sup_{t \in (0, \mathcal{P}^{-1}(\alpha_n) )} \biggabs{ \frac{\Prob (\widehat{\bm{W}}_j \geq t ) }{\Prob (\widehat{\bm{W}}_j \leq -t )} - 1 } \leq C\frac{\log n}{\sqrt{n}\alpha_n }.
    \end{align*}
\end{corollary}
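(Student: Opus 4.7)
The plan is to deduce Corollary \ref{cor:ratio_G_be} as a direct consequence of the Berry--Esseen-type bound in Theorem \ref{thm:dis_mcks_be}. The key structural observation is that the approximating distribution is symmetric: since $(\mathsf{G}_1,\mathsf{G}_2)\sim\mathcal{N}(0,\sigma_j^2 I_2)$ has i.i.d.\ Gaussian marginals with the same variance, $|\mathsf{G}_1|$ and $|\mathsf{G}_2|$ are i.i.d., and hence $|\mathsf{G}_1|-|\mathsf{G}_2|$ is symmetric about $0$. Consequently $\mathcal{P}_j(t)=\Prob(|\mathsf{G}_1|-|\mathsf{G}_2|\geq t)=\Prob(|\mathsf{G}_1|-|\mathsf{G}_2|\leq -t)$, so the \emph{same} quantity $\mathcal{P}_j(t)$ approximates both $\Prob(\widehat{\bm{W}}_j\geq t)$ and $\Prob(\widehat{\bm{W}}_j\leq -t)$. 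Applying Theorem \ref{thm:dis_mcks_be} twice (once with $\mathsf{W}=\widehat{\bm{W}}$, once with $\mathsf{W}=-\widehat{\bm{W}}$) and invoking the triangle inequality yields
\[
\bigabs{\Prob(\widehat{\bm{W}}_j\geq t)-\Prob(\widehat{\bm{W}}_j\leq -t)}\leq \frac{2C\log n}{\sqrt{n}}
\]
uniformly in $t\in\R$.

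Writing $a/b-1=(a-b)/b$ then converts this additive bound into a multiplicative one:
\[
\biggabs{\frac{\Prob(\widehat{\bm{W}}_j\geq t)}{\Prob(\widehat{\bm{W}}_j\leq -t)}-1}\leq \frac{2C\log n/\sqrt{n}}{\Prob(\widehat{\bm{W}}_j\leq -t)}.
\]
It therefore suffices to establish that the denominator $\Prob(\widehat{\bm{W}}_j\leq -t)$ is at least a constant multiple of $\alpha_n$ uniformly over $t\in(0,\mathcal{P}^{-1}(\alpha_n))$. Using Theorem \ref{thm:dis_mcks_be} in the reverse direction, $\Prob(\widehat{\bm{W}}_j\leq -t)\geq \mathcal{P}_j(t)-C\log n/\sqrt{n}$, so it reduces to lower-bounding $\mathcal{P}_j(t)$ itself.

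For the lower bound on $\mathcal{P}_j(t)$, observe that $\mathcal{P}_j(t)=H(t/\sigma_j)$ where $H(s)\equiv\Prob(|\mathcal{N}(0,1)|-|\mathcal{N}'(0,1)|\geq s)$ is the common (decreasing) tail of a standardized folded-normal difference. Under condition \eqref{cond:C2}, every $\sigma_j\in[\mathsf{M}^{-1},\mathsf{M}]$, so all the $\mathcal{P}_j$ and their average $\mathcal{P}$ lie in the band $[H(t\mathsf{M}),H(t/\mathsf{M})]$. The constraint $t\leq\mathcal{P}^{-1}(\alpha_n)$ bounds $t$ by a quantity of order $\sqrt{\log(1/\alpha_n)}$ via the explicit Gaussian tail of $H$, and on this range the multiplicative ratio between $\mathcal{P}_j(t)$ and $\mathcal{P}(t)$ is controlled by a constant depending only on $\mathsf{M}$. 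Combining this with $\mathcal{P}(t)\geq\alpha_n$ gives $\mathcal{P}_j(t)\geq c(\mathsf{M})\alpha_n$, and therefore, provided $\alpha_n\gg\log n/\sqrt{n}$, also $\Prob(\widehat{\bm{W}}_j\leq -t)\geq c(\mathsf{M})\alpha_n/2$. Substituting into the display above and absorbing constants into $C=C(\mathsf{M},\mathfrak{q})$ produces the claimed bound $C\log n/(\sqrt{n}\alpha_n)$; in the complementary (trivial) regime $\alpha_n\lesssim\log n/\sqrt{n}$, the right-hand side is of order $1$ and the estimate holds after enlarging $C$, since the ratio itself is trivially bounded.

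The main obstacle is precisely the denominator control: translating the aggregated inequality $\mathcal{P}(t)\geq\alpha_n$ into a per-coordinate inequality $\mathcal{P}_j(t)\gtrsim\alpha_n$. This uses the Gaussian tail of $H$ together with the uniform variance bounds in \eqref{cond:C2}; without the latter, the ratio $H(t/\sigma_j)/H(t/\sigma_\ell)$ could degrade exponentially in $t$, which would force an $\alpha_n$-dependent factor much worse than the linear $1/\alpha_n$ appearing in the statement.
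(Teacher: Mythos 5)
Your first two steps are the natural ones and match what the paper would call ``immediate'': apply Theorem \ref{thm:dis_mcks_be} to both $\widehat{\bm{W}}_j$ and $-\widehat{\bm{W}}_j$, use the symmetry $\mathcal{P}_j(t)=\Prob(|\mathsf{G}_1|-|\mathsf{G}_2|\le -t)$, and write the ratio minus one as a difference over a denominator. The problem is entirely in your denominator control, and it is a genuine gap, not a detail.

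You claim that on the range $t\le\mathcal{P}^{-1}(\alpha_n)$ the multiplicative ratio between $\mathcal{P}_j(t)$ and $\mathcal{P}(t)$ is bounded by a constant depending only on $\mathsf{M}$, so that $\mathcal{P}(t)\ge\alpha_n$ forces $\mathcal{P}_j(t)\gtrsim\alpha_n$. That is false for $\mathsf{M}>1$. Writing $\mathcal{P}_j(t)=H(t/\sigma_j)$ with $H(s)\asymp s^{-2}e^{-s^2/2}$, the constraint $\mathcal{P}(t)\ge\alpha_n$ only yields $H(t/\mathsf{M})\ge\alpha_n$, hence $t\lesssim\mathsf{M}\sqrt{2\log(1/\alpha_n)}$, and on that range
\[
\frac{H(\mathsf{M}t)}{H(t/\mathsf{M})}\asymp \exp\!\left(-\tfrac12\big(\mathsf{M}^2-\mathsf{M}^{-2}\big)t^2\right)\asymp\alpha_n^{c(\mathsf{M})}
\]
for some $c(\mathsf{M})>0$. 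So a single $\mathcal{P}_j(t)$ can be as small as roughly $\alpha_n^{1+c(\mathsf{M})}$, not $c\alpha_n$. Concretely, if a vanishing fraction of the $\sigma_j$ equal $\mathsf{M}$ and the rest equal $\mathsf{M}^{-1}$, then for $t$ near $\mathcal{P}^{-1}(\alpha_n)$ the average $\mathcal{P}(t)$ is propped up by the few large-variance coordinates while each small-variance $\mathcal{P}_j(t)$ is far below $\alpha_n$; your $\Prob(\widehat{\bm{W}}_j\le -t)\ge\mathcal{P}_j(t)-C\log n/\sqrt n$ then gives nothing useful. Your own closing paragraph almost diagnoses this --- the exponential-in-$t$ degradation of $H(t/\sigma_j)/H(t/\sigma_\ell)$ is exactly what happens here --- but \eqref{cond:C2} only caps the exponent, it does not eliminate it.

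The step that actually closes the argument, and the one the paper uses (compare the proof of Corollary \ref{cor:ratio_G} and the way Corollary \ref{cor:ratio_G_be} is invoked inside the proof of Lemma \ref{lem:T_qandIndicator_heavy} via $\widehat G_+(t)=|\mathcal H_0|^{-1}\sum_j\Prob(\widehat{\bm W}_j\ge t)$), is to sum over $j\in\mathcal{H}_0$ in both numerator and denominator before dividing. Then the denominator is $\sum_{j\in\mathcal{H}_0}\Prob(\widehat{\bm{W}}_j\le -t)\ge|\mathcal{H}_0|\big(\mathcal{P}(t)-C\log n/\sqrt n\big)\ge|\mathcal{H}_0|\big(\alpha_n-C\log n/\sqrt n\big)$, the numerator difference is at most $2C|\mathcal{H}_0|\log n/\sqrt n$ by Theorem \ref{thm:dis_mcks_be}, and the $1/\alpha_n$ factor drops out with no per-coordinate lower bound needed. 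Your per-coordinate route would require $\mathcal{P}_j(t)\gtrsim\alpha_n$ to hold coordinatewise, which cannot be deduced from the averaged inequality $\mathcal{P}(t)\ge\alpha_n$ under heterogeneous $\sigma_j$; you would have to replace $\mathcal{P}^{-1}(\alpha_n)$ by $\min_j\mathcal{P}_j^{-1}(\alpha_n)$ to make that version go through.
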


\subsubsection*{Approximation for indicator functions and localization of $T_q$}
Recall that for any $j \in [p]$, $\widehat{\bm{w}}_j \equiv  \sqrt{n}\abs{\E X_j Y} /\E^{1/2}Y^2$. We now present a weaker concentration result for the knockoff statistics, in comparison to Proposition \ref{prop:conc_knockoff}.
\begin{proposition}\label{prop:conc_knockoff_heavy}
Assume that conditions \eqref{cond:C1'} and \eqref{cond:C2} hold for some $\mathfrak{q} \geq 3$. Then there exists some constant $C = C(\mathsf{M},\mathfrak{q}) > 0$ such that for any $j \in [p]$,
	    \begin{align*}
	    	\Prob \Big(  \bigabs{ \widehat{\bm{W}}_j - \widehat{\bm{w}}_j } \geq C\delta_{n;\mathfrak{q}} \Big) \leq   p^{-2} + \exp ( -n  \E^2 Y^2/C ),
	    \end{align*}
	    where $\delta_{n;\mathfrak{q}} \equiv \frac{p^{2/\mathfrak{q}}\log p}{\E^{1/2}Y^2} +  \frac{p^{2/\mathfrak{q}}\log^2 p}{\sqrt{n\E Y^2}} $.
\end{proposition}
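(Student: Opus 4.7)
The plan is to follow the three-step structure of the proof of Proposition~\ref{prop:conc_knockoff}, replacing only the concentration inequality used in the third step by one tailored to the finite $\mathfrak{q}$-th moment assumption on the entries of $\bm{Q}$ from \eqref{cond:C1'}. The identity used in the light-tailed case, which decomposes $\widehat{\bm{W}}_j-\widehat{\bm{w}}_j$ into a deviation in the numerator plus a correction arising from the deviation of $\pnorm{\bm{Y}}{}$ from $\sqrt{n}\E^{1/2}Y^2$, will be reused; I only need weaker tails in one place.

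Steps~1 and~2 go through verbatim, since they depend only on the sub-exponentiality of $\bm{Y}$ (preserved by \eqref{cond:C1'}). With the same events $\mathcal{E}_1=\{\pnorm{\bm{Y}}{}^2\geq n\E Y^2/2\}$ and $\mathcal{E}_2=\{\abs{\pnorm{\bm{Y}}{}-\sqrt{n}\E^{1/2}Y^2}\leq C\log p\}$, we retain $\Prob(\mathcal{E}_1^c)\leq\exp(-n\E^2 Y^2/C)$ and $\Prob(\mathcal{E}_2^c)\leq p^{-10}$.

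The decisive step is to show that, on an event of probability at least $1-Cp^{-2}$,
\[
\abs{n^{-1}\bm{X}_{\cdot,j}^\top\bm{Y}-\E X_j Y}\vee\abs{n^{-1}\widehat{\bm{X}}_{\cdot,j}^\top\bm{Y}}\leq C\,\frac{p^{2/\mathfrak{q}}\log p}{\sqrt{n}}.
\]
The knockoff part is easy since $\widehat{\bm{X}}_{i,j}\sim\mathcal{N}(0,\sigma_j^2)$ is independent of $\bm{Y}$: conditional on $\bm{Y}$, the sum is Gaussian with deviation of order $\sqrt{\log p/n}$, dominated by the target. For the original-feature part I would truncate at level $T=(Cnp^2)^{1/\mathfrak{q}}$; by Markov and a union bound, $\{\max_{i\in[n]}\abs{\bm{Q}_{i,j}}\leq T\}$ has probability at least $1-p^{-2}$. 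On this event, the summands $\bm{Q}_{i,j}^{(T)}\bm{Y}_i$ are bounded-scale sub-exponentials (scale $\lesssim T$), and Bernstein's inequality conditional on $\bm{Y}\in\mathcal{E}_1\cap\mathcal{E}_2$ delivers a deviation of order $\sqrt{n\E Y^2\log p}+T\log p\lesssim p^{2/\mathfrak{q}}\sqrt{n}\log p$, using $n^{1/\mathfrak{q}}\leq\sqrt{n}$ for $\mathfrak{q}\geq 3$ and $\E Y^2\lesssim 1$ from sub-exponentiality. The truncation bias $\E[\abs{\bm{Q}\bm{Y}}\bm{1}\{\abs{\bm{Q}}>T\}]\leq T^{-(\mathfrak{q}-1)}\E\abs{\bm{Q}}^{\mathfrak{q}}\abs{\bm{Y}}$ is of order $(np^2)^{-(\mathfrak{q}-1)/\mathfrak{q}}$ and absorbs into the target.

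The main obstacle is the case $j\in\mathcal{H}_1$, where $\bm{Q}_{i,j}$ and $\bm{Y}_i$ are coupled through $\bm{Y}_i=\mathcal{F}(\bm{X}_{i,\mathcal{H}_1})+\bm{\xi}_i$ and the convenient product-moment factorization available for null $j$ fails. The remedy is to condition on $\bm{Y}$: since the rows of $\bm{Q}$ are independent, the $\bm{Q}_{i,j}$'s remain independent across $i$ given $\bm{Y}$, and after further restricting to $\{\max_i\abs{\bm{Y}_i}\leq C\log n\}$ (probability $1-n^{-10}$ by sub-exponentiality of $\bm{Y}$), the conditional summands become sub-exponential with effective scale $\lesssim T\log n$, so the same Bernstein argument still delivers the target rate. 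Feeding the step-3 estimate together with $\mathcal{E}_1,\mathcal{E}_2$ into the decomposition from the proof of Proposition~\ref{prop:conc_knockoff} and invoking $\sigma_j\asymp 1$ from \eqref{cond:C2} produces
\[
\abs{\widehat{\bm{W}}_j-\widehat{\bm{w}}_j}\leq Cp^{2/\mathfrak{q}}\log p\cdot\bigg(\frac{1}{\E^{1/2}Y^2}+\frac{\log p}{\sqrt{n\E Y^2}}\bigg)=C\delta_{n;\mathfrak{q}},
\]
and a union bound over the three failure events yields the stated probability $p^{-2}+\exp(-n\E^2 Y^2/C)$.
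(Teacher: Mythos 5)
Your proof takes a genuinely different route in the key concentration step. The paper proves the tail bound on $\big|n^{-1}\bm{X}_{\cdot,j}^\top\bm{Y}-\E X_jY\big|$ by a single application of Rosenthal's moment inequality to $\sum_i\big(\bm{X}_{i,j}\bm{Y}_i-\E\bm{X}_{i,j}\bm{Y}_i\big)$, obtaining $\E\big|\sum_i\Delta_i^{(j)}\big|^{\mathfrak{q}}\lesssim n^{\mathfrak{q}/2}$, and then applies Markov with $t=p^{2/\mathfrak{q}}\log p$ to get the $\mathfrak{o}(p^{-2})$ failure probability. You instead truncate $\bm{Q}_{i,j}$ at $T=(Cnp^2)^{1/\mathfrak{q}}$ and apply Bernstein, trading a moment inequality for a truncation-plus-exponential-tail argument. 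The rates match, and the events $\mathcal{E}_1,\mathcal{E}_2$ and the algebraic decomposition from Proposition~\ref{prop:conc_knockoff} carry over in both. The paper's route is shorter (two standard lemmas, no truncation bias to control), but yours is a reasonable alternative.

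Where I think your write-up is actually weaker than it needs to be is precisely the $j\in\mathcal{H}_1$ ``remedy.'' Conditioning on $\bm{Y}$ does not help: while the $\bm{Q}_{i,j}$'s remain independent across $i$ given $\bm{Y}$, the assumptions in~\eqref{cond:C1'} give you no control over the \emph{conditional} law of $\bm{Q}_{i,j}$ given $\bm{Y}_i$ — in particular the conditional mean $\E[\bm{Q}^{(T)}_{i,j}\mid\bm{Y}_i]$ is not the unconditional one and the conditional variance of $\bm{Q}_{i,j}^{(T)}$ can only be bounded by $T^2$, which inflates the Bernstein bound past the target rate. So the conditional Bernstein step, as sketched, does not deliver concentration around $n\,\E[\bm{Q}^{(T)}_{i,j}\bm{Y}_i]$. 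The cleaner observation is that no conditioning is needed: once $\bm{Q}_{i,j}$ is truncated at $T$, the product $\bm{Q}_{i,j}^{(T)}\bm{Y}_i$ is automatically sub-exponential with $\psi_1$-scale $\lesssim T$ regardless of the dependence between $\bm{X}_{i,j}$ and $\bm{Y}_i$, with second moment controlled via H\"older using $\mathfrak{q}\ge 3$ and the sub-exponentiality of $\bm{Y}$; unconditional Bernstein then gives the $\sqrt{n\log p}+T\log p$ deviation directly. (The truncation-bias bound $T^{-(\mathfrak{q}-1)}\E|\bm{Q}_{i,j}|^{\mathfrak{q}}|\bm{Y}_i|$ still needs $\E|\bm{Q}_{i,j}|^{\mathfrak{q}}|\bm{Y}_i|<\infty$ for dependent $j\in\mathcal{H}_1$, a moment condition that is not strictly implied by~\eqref{cond:C1'} — the same implicit gap is present in the paper's Rosenthal step, so this is not a defect unique to your route.) I would drop the conditioning paragraph and make the unconditional sub-exponential argument; as it stands the $\mathcal{H}_1$ step does not go through.
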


As discussed in Section \ref{sec:general_framework}, achieving the FDR control relies critically on the localization of $T_q$. 
To maintain this localization, we need the signal strength to increase as the tails of the entries in the knockoffs data matrix become heavier, as formalized in condition \eqref{cond:C3'}.

\begin{lemma}\label{lem:T_qandIndicator_heavy}
Assume that conditions \eqref{cond:C1'} and \eqref{cond:C3'} hold for some $\mathfrak{q} \geq 3$, and conditions \eqref{cond:C2} and \eqref{cond:C4} hold. Then for any $q \in (0,1)$, if $p\log n = \mathfrak{o}(\sqrt{n})$ we have that 
    \begin{align}\label{ineq:T_local_heavy}
                  \Prob \bigg( {T}_q > \mathcal{P}^{-1} \bigg( \frac{q  a_n }{2p}\bigg)  \bigg)  = \mathfrak{o}(1),
    \end{align}
    and for $\# \in \{ +,- \}$, 
    \begin{align}\label{ineq:indicator_app_heavy}
        \sup_{t \in \big(0, \mathcal{P}^{-1} \big( \frac{q  a_n }{2p}\big)\big)}\biggabs{\frac{ \sum_{j \in \mathcal{H}_0}  \bm{1}\{ \# \widehat{\bm{W}}_j  \geq t\}  }{\sum _{j \in \mathcal{H}_0}  \Prob (\# \widehat{\bm{W}}_j   \geq t)  } - 1 } = \mathfrak{o}_\mathbf{P}(1).
    \end{align}
\end{lemma}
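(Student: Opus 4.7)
The plan is to prove the two assertions in turn, adapting the light-tailed arguments of Lemmas \ref{lem:indicator_app} and \ref{lem:local_T}, but replacing the moderate-deviation input (Theorem \ref{thm:moder_dis_mcks}) with the Berry--Esseen bound (Theorem \ref{thm:dis_mcks_be}) and the stronger concentration (Proposition \ref{prop:conc_knockoff}) with its heavy-tailed counterpart (Proposition \ref{prop:conc_knockoff_heavy}). The restriction $p\log n = \mathfrak{o}(\sqrt{n})$ plays the crucial role of guaranteeing that the additive Berry--Esseen error $\log n/\sqrt{n}$ remains negligible compared to the smallest value of $\mathcal{P}(t)$ encountered on the range of interest, namely $\mathcal{P}(t_0) = qa_n/(2p)$.

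I would first establish the indicator approximation \eqref{ineq:indicator_app_heavy}. The key structural observation under condition \eqref{cond:C1'} with $\Sigma_X$ diagonal is that the columns $\{\bm{X}_{\cdot,j}\}$ and the Gaussian knockoff columns $\{\widehat{\bm{X}}_{\cdot,j}\}$ are independent across $j$, so conditional on $\bm{Y}$ the null statistics $\{\widehat{\bm{W}}_j\}_{j \in \mathcal{H}_0}$ are mutually independent. The conditional Chebyshev inequality yields
\[
\var\Bigl( \sum_{j \in \mathcal{H}_0} \mathbf{1}\{\#\widehat{\bm{W}}_j \geq t\} \,\Big|\, \bm{Y} \Bigr) \leq \sum_{j \in \mathcal{H}_0} \Prob(\#\widehat{\bm{W}}_j \geq t \mid \bm{Y}).
\]
Taking expectations and applying Theorem \ref{thm:dis_mcks_be}, together with $|\mathcal{H}_0|\mathcal{P}(t) \geq (1-\mathfrak{o}(1))qa_n/2$ on $t \in (0,t_0]$ and the dimensional assumption, shows that $\sum_{j\in\mathcal{H}_0}\Prob(\#\widehat{\bm{W}}_j \geq t) \gtrsim a_n \to \infty$. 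Hence Chebyshev gives pointwise $\mathfrak{o}_\mathbf{P}(1)$ convergence of the ratio. To upgrade to a uniform bound on $(0,t_0)$ I would exploit monotonicity of both numerator and denominator in $t$, discretize this interval over a grid on which $\mathcal{P}(\cdot)$ varies by a factor $1 + o(1)$ between consecutive points (which requires only polynomially-many grid points since $t_0 \lesssim \sqrt{\log p}$), apply Chebyshev at each grid point with a union bound, and sandwich arbitrary $t$ between adjacent grid values.

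Next I would prove the localization \eqref{ineq:T_local_heavy}. Writing $t_0 \equiv \mathcal{P}^{-1}(qa_n/(2p))$, standard Gaussian tail estimates give $t_0 \lesssim \sqrt{\log(p/a_n)}$, which is strictly dominated by the signal lower bound $C\delta_{n;\mathfrak{q}}$ from \eqref{cond:C3'}. Therefore, applying Proposition \ref{prop:conc_knockoff_heavy} with a union bound over the $a_n$ strong signals yields $\#\{j \in [p] : \widehat{\bm{W}}_j \geq t_0\} \geq a_n(1-\mathfrak{o}(1))$ with probability tending to one. For the numerator, I split over $\mathcal{H}_0$ and $\mathcal{H}_1$: the null contribution equals, by the indicator approximation just established and Theorem \ref{thm:dis_mcks_be},
\[
(1+\mathfrak{o}_\mathbf{P}(1))\sum_{j\in\mathcal{H}_0}\Prob(\widehat{\bm{W}}_j \leq -t_0) \leq (1+\mathfrak{o}_\mathbf{P}(1))\bigl( |\mathcal{H}_0|\mathcal{P}(t_0) + O(p\log n/\sqrt{n})\bigr) \leq qa_n/2 + \mathfrak{o}_\mathbf{P}(a_n),
\]
while the signal contribution is $\mathfrak{o}_\mathbf{P}(qa_n)$ by Markov's inequality and \eqref{cond:C4}. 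Combining these, the defining ratio of $T_q$ at $t = t_0$ is at most $q/2 + \mathfrak{o}_\mathbf{P}(1) \leq q$ with asymptotic probability one, so $T_q \leq t_0$ whp.

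The main obstacle, relative to the light-tailed case, is that Theorem \ref{thm:dis_mcks_be} controls the \emph{absolute} deviation $|\Prob(\widehat{\bm{W}}_j \geq t) - \mathcal{P}_j(t)|$ rather than the multiplicative ratio, so its relative accuracy degrades as $\mathcal{P}(t)$ shrinks. Throughout the argument I must therefore carefully track that the range of $t$ is clipped at $t_0$, where $\mathcal{P}(t_0) = qa_n/(2p)$ and the Berry--Esseen error satisfies $\log n/\sqrt{n} = \mathfrak{o}(\mathcal{P}(t_0))$ exactly under the hypothesis $p\log n = \mathfrak{o}(\sqrt{n})$; the uniformization step must likewise use a grid refined enough that the approximation ratio stays controlled across the full interval, not merely at its right endpoint.
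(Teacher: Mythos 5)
Your approach is broadly aligned with the paper's (same ingredients: the Berry--Esseen input from Theorem \ref{thm:dis_mcks_be}, the heavy-tailed concentration from Proposition \ref{prop:conc_knockoff_heavy}, a grid-based uniformization over $t \in (0, t_0]$, and the observation that the restriction $p\log n = \mathfrak{o}(\sqrt{n})$ keeps the additive Berry--Esseen error negligible relative to $\mathcal{P}(t_0) \asymp a_n/p$). However, there is a genuine gap in the variance step that the paper explicitly handles and you omit.

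Conditional independence of $\{\widehat{\bm{W}}_j\}_{j \in \mathcal{H}_0}$ given $\bm{Y}$ gives you only the first half of the law-of-total-variance decomposition. Writing $N(t) = \sum_{j \in \mathcal{H}_0}\bm{1}\{\widehat{\bm{W}}_j \geq t\}$ and $D(t) = \sum_{j \in \mathcal{H}_0}\Prob(\widehat{\bm{W}}_j \geq t)$, the quantity you need to bound is $\var(N(t))$, and
\[
\var(N(t)) = \E\bigl[\var(N(t)\mid \bm{Y})\bigr] + \var\bigl[\E(N(t)\mid \bm{Y})\bigr].
\]
Your conditional-Chebyshev step controls only the first term ($V_1$ in the paper's notation). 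The second term ($V_2$) does not vanish here, because for non-Gaussian entries the conditional law of $\widehat{\bm{W}}_j$ given $\bm{Y}$ genuinely depends on $\bm{Y}$: $\bm{X}_{\cdot,j}^\top\bm{Y}/\|\bm{Y}\|$ is a weighted sum of non-Gaussian variables whose distribution changes with the weights. Equivalently, your Chebyshev step gives concentration of $N(t)$ around the \emph{conditional} expectation $\tilde{D}(t,\bm{Y}) = \sum_{j\in\mathcal{H}_0}\Prob(\widehat{\bm{W}}_j \geq t\mid\bm{Y})$, but the ratio in \eqref{ineq:indicator_app_heavy} normalizes by the \emph{unconditional} $D(t)$; without controlling the fluctuation of $\tilde{D}(t,\bm{Y})$ around $D(t)$, the argument does not close.

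The fix is exactly the estimate Theorem \ref{thm:dis_mcks_be}'s proof provides but you do not invoke for this purpose: conditionally on $\bm{Y}$, the Berry--Esseen bound gives $\bigl|\Prob(\widehat{\bm{W}}_j \geq t\mid\bm{Y}) - \mathcal{P}_j(t)\bigr| \lesssim \max_i|\bm{Y}_i|/\|\bm{Y}\|$ uniformly in $t$, since $\mathcal{P}_j(t)$ is deterministic. Hence $|\tilde{D}(t,\bm{Y}) - D(t)| \lesssim |\mathcal{H}_0|\cdot\max_i|\bm{Y}_i|/\|\bm{Y}\|$ plus its expectation, and using the sub-exponential tail of $\bm{Y}_i$ (so that $\max_i|\bm{Y}_i|/\|\bm{Y}\| \lesssim \log(np)/\sqrt{n}$ with high probability) this is of order $p\log(np)/\sqrt{n}$, which is $\mathfrak{o}(a_n) \leq \mathfrak{o}(D(t))$ precisely under $p\log n = \mathfrak{o}(\sqrt{n})$. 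The paper packages this as the variance bound $\var(N(t))/D(t) \leq 1 + C|\mathcal{H}_0|\log n/\sqrt{n}$. You should insert this $V_2$ control before the Chebyshev step; everything downstream of it in your sketch (grid discretization, monotonicity sandwich, and the localization argument) then goes through as you describe.
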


The proof of Lemma \ref{lem:T_qandIndicator_heavy} above follows similar arguments as for the light-tailed case. For brevity, we omit the full proof and instead provide a proof sketch in Section \ref{sec:proof_T_qandIndicator_haevy}.

Finally, we can prove Theorem \ref{thm:FDR_mcks} for the heavy-tailed case. 
%\begin{proof}[Proof of Theorem \ref{thm:FDR_mcks} for the heavy-tailed case] 
 The asymptotic FDR control follows from Theorem \ref{thm:general_framwork}, upon noting that the asymptotic approximate symmetry for $\widehat{\bm{W}}_j$, the approximation of indicator functions, and the localization of $T_q$ are satisfied; see Corollary \ref{cor:ratio_G_be} and Lemma \ref{lem:T_qandIndicator_heavy}. This completes the proof of Theorem \ref{thm:FDR_mcks} for the heavy-tailed case.
%\end{proof}

\section{Proofs for Section \ref{sec:rcd_ols} (ordinary least squares)} \label{sec:proof_ols}

Recall that for knockoff statistics based on the differences in regression coefficients estimated using the OLS estimator, for  any $j \in \mathcal{H}_0$ and $t \geq 0$, 
\begin{align*}
    \mathcal{P}_j(t) \equiv \Prob_{(\mathsf{G}_1,\mathsf{G}_2)^\top \sim \mathcal{N} (0, \sigma_\xi^2\Sigma_n^{(j)})} \big(\abs{\mathsf{G}_1} - \abs{\mathsf{G}_2} \geq t\big),
\end{align*}
where $\Sigma_n^{(j)} \equiv (1 - 2p/n)^{-1}[\Sigma^{-1}]_{\{j, j+p\}, \{j, j+p\}}$.
For any $x \in [0,1/2]$, $\mathcal{P}^{-1}(x) \equiv \sup \{t \geq 0 : \mathcal{P}(t) \geq x \}$ with $\mathcal{P}(t) \equiv \abs{\mathcal{H}_0}^{-1}\sum_{j \in \mathcal{H}_0} \mathcal{P}_j(t)$.

For notational simplicity, let us write $\widehat{\mathsf{Z}} \equiv \widehat{\bm{Z}}/\sqrt{n}$,  $\widehat{\mathsf{Q}} \equiv \widehat{\bm{Q}}/\sqrt{n} $. By introducing the augmented coefficient vector $\beta_\ast^{\mathsf{au}} = (\beta_\ast^\top, 0_{p}^\top )^\top \in \R^{2p}$, we can rewrite that
\begin{align}\label{def:ols_alternative}
	\widehat{\beta}_j^{\mathsf{LS}} = \sqrt{n} \beta_{\ast,j}^{\mathsf{au}} + e_j^\top  (\widehat{\mathsf{Z}}^\top \widehat{\mathsf{Z}} )^{-1} \widehat{\mathsf{Z}}^\top \bm{\xi}, \quad j \in [2p].
\end{align}
We will mostly use this form in the technical analysis below.

% Recall that for knockoff statistics based on the differences in regression coefficients estimated using the OLS estimator, for  any $j \in \mathcal{H}_0$ and $t \geq 0$, 
% \begin{align*}
%     \mathcal{P}_j(t) \equiv \Prob_{(\mathsf{G}_1,\mathsf{G}_2)^\top \sim \mathcal{N} (0, \sigma_\xi^2\Sigma_n^{(j)})} \big(\abs{\mathsf{G}_1} - \abs{\mathsf{G}_2} \geq t\big),
% \end{align*}
% where $\Sigma_n^{(j)} \equiv (1 - 2p/n)^{-1}[\Sigma^{-1}]_{\{j, j+p\}, \{j, j+p\}}$.
% For any $x \in [0,1/2]$, $\mathcal{P}^{-1}(x) \equiv \sup \{t \geq 0 : \mathcal{P}(t) \geq x \}$ with $\mathcal{P}(t) \equiv \abs{\mathcal{H}_0}^{-1}\sum_{j \in \mathcal{H}_0} \mathcal{P}_j(t)$.

\subsection{Proof of Proposition \ref{prop:conc_knockoff_ols}}
We will apply the following form of Hanson--Wright inequality, taken from \cite{Rudelson2013} and \cite{Adamczak2015}.
\begin{lemma}\label{lem:Hanson-Wright}
Assume that conditions \eqref{cond:O2}--\eqref{cond:O3} hold.  Then there exists a universal constant $C > 0$  such that for any matrix $A \in \R^{2p \times 2p}$, $i \in [n]$, and $t > 0$,
     \begin{align*}
         \Prob \Big( \bigabs{\widehat{\bm{Z}}_{i,\cdot}^\top A  \widehat{\bm{Z}}_{i,\cdot} - \tr ( \Sigma A ) } \geq t \Big) \leq 2\exp \bigg\{ -C\min\bigg( \frac{t^2}{\mathsf{c}_\mathsf{L}^4 \pnorm{\Sigma^{1/2}A\Sigma^{1/2} }{F}^2 }, \frac{t}{ \mathsf{c}_\mathsf{L}^2\pnorm{\Sigma^{1/2}A\Sigma^{1/2} }{\op}}   \bigg) \bigg\}
     \end{align*}
\end{lemma}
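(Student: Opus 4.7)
The plan is to reduce the statement to a direct application of the classical Hanson--Wright inequality applied to the standardized row $\widehat{\bm Q}_{i,\cdot}$. First, using condition \eqref{cond:O2} I would factor $\widehat{\bm Z}_{i,\cdot} = \Sigma^{1/2}\widehat{\bm Q}_{i,\cdot}$, so that $\widehat{\bm Z}_{i,\cdot}^\top A \widehat{\bm Z}_{i,\cdot} = \widehat{\bm Q}_{i,\cdot}^\top B \widehat{\bm Q}_{i,\cdot}$ with $B \equiv \Sigma^{1/2} A \Sigma^{1/2}$, and $\tr(\Sigma A) = \tr(B) = \E[\widehat{\bm Q}_{i,\cdot}^\top B \widehat{\bm Q}_{i,\cdot}]$ (the last equality using that the components of $\widehat{\bm Q}_{i,\cdot}$ have mean zero and unit variance, and in case (ii) that convex concentration implies finite second moments with the correct mean). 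This reduces the claim to a centered quadratic form deviation bound with the norms on the right-hand side being $\pnorm{B}{F}$ and $\pnorm{B}{\op}$, which exactly match the stated quantities.

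Second, I would invoke the appropriate version of Hanson--Wright according to which branch of \eqref{cond:O2} is operative. In case (i), where the entries of $\widehat{\bm Q}_{i,\cdot}$ are independent, mean zero, with sub-Gaussian norm controlled by a constant proportional to $\mathsf{c}_\mathsf{L}$, the conclusion is precisely the Hanson--Wright inequality of \cite{Rudelson2013}. In case (ii), where $\widehat{\bm Q}_{i,\cdot}$ only satisfies the convex concentration property with constant $\mathsf{c}_\mathsf{L}$, the analogue for such vectors is established in \cite{Adamczak2015}. In both references the tail has the form $2\exp\{-c\min(t^2/(K^4\pnorm{B}{F}^2),\, t/(K^2\pnorm{B}{\op}))\}$, where $K$ is the sub-Gaussian (or convex concentration) parameter, so taking the maximum of the two cases with $K \asymp \mathsf{c}_\mathsf{L}$ yields the displayed bound with a universal $C>0$.

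The only non-routine point is bookkeeping of how the parameter $\mathsf{c}_\mathsf{L}$ propagates: the $\mathsf{c}_\mathsf{L}^4$ factor in front of $\pnorm{B}{F}^2$ and the $\mathsf{c}_\mathsf{L}^2$ factor in front of $\pnorm{B}{\op}$ come from the quadratic (Bernstein-type) nature of Hanson--Wright in the relevant sub-Gaussian parameter; this is extracted by a standard rescaling $\widehat{\bm Q}_{i,\cdot}\mapsto \widehat{\bm Q}_{i,\cdot}/\mathsf{c}_\mathsf{L}$. No serious obstacle is expected, and condition \eqref{cond:O3} (boundedness of $\pnorm{\Sigma}{\op}$) is not actually needed for this lemma itself, since the bound is stated directly in terms of $\pnorm{\Sigma^{1/2} A \Sigma^{1/2}}{F}$ and $\pnorm{\Sigma^{1/2} A \Sigma^{1/2}}{\op}$; \eqref{cond:O3} will be used only at the point where this lemma is invoked to translate those norms into bounds involving $\pnorm{A}{F}$ and $\pnorm{A}{\op}$.
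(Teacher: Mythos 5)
Your approach is exactly the paper's: the lemma is stated with no proof and is simply credited to \cite{Rudelson2013} (independent sub-Gaussian entries) and \cite{Adamczak2015} (convex concentration), after the reduction $\widehat{\bm Z}_{i,\cdot}^\top A \widehat{\bm Z}_{i,\cdot} = \widehat{\bm Q}_{i,\cdot}^\top (\Sigma^{1/2} A \Sigma^{1/2}) \widehat{\bm Q}_{i,\cdot}$. Your observation that \eqref{cond:O3} is not actually needed for the tail bound itself is also accurate; the only caveat worth flagging is that equating $\tr(\Sigma A)$ with $\E[\widehat{\bm Q}_{i,\cdot}^\top B \widehat{\bm Q}_{i,\cdot}]$ requires the rows of $\widehat{\bm Q}$ to be isotropic (uncorrelated entries), not merely each entry having mean zero and unit variance — this is implicit in $\widehat{\bm Z} = \widehat{\bm Q}\Sigma^{1/2}$ having row covariance $\Sigma$, but is worth stating in case (ii) where independence of coordinates is not assumed.
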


The following result is a direct consequence of \cite[Theorem 3.6]{Yinuniversality}.
\begin{lemma}\label{lem:rigidity}
Assume that conditions \eqref{cond:O1}--\eqref{cond:O3} hold. Then there exists some constant $C = C(\mathsf{M},\tau,\mathsf{c}_{\mathsf{L}}) > 0$ such that for any $n \geq C$, 
\begin{align*}
    \Prob \Big( C^{-1} \leq \lambda_{\min} (\widehat{\mathsf{Z}}^\top \widehat{\mathsf{Z}}  ) \leq \lambda_{\max} (\widehat{\mathsf{Z}}^\top \widehat{\mathsf{Z}}  ) \leq C  \Big) \geq 1 - Cn^{-10}.
\end{align*}
\end{lemma}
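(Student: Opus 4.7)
The plan is to reduce the claim to a rigidity estimate on the isotropic sample covariance $\widehat{\bm{Q}}^\top\widehat{\bm{Q}}/n$ and then apply the cited result of \cite{Yinuniversality} as a black box.

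First, I would use the factorization $\widehat{\mathsf{Z}}^\top \widehat{\mathsf{Z}} = \Sigma^{1/2} (\widehat{\bm{Q}}^\top \widehat{\bm{Q}}/n) \Sigma^{1/2}$ together with condition \eqref{cond:O3}, which pins the spectrum of $\Sigma$ inside $[\mathsf{M}^{-1}, \mathsf{M}]$. A two-sided sandwich then yields
\[
\mathsf{M}^{-1} \lambda_{\min}\bigg(\frac{\widehat{\bm{Q}}^\top \widehat{\bm{Q}}}{n}\bigg) \leq \lambda_{\min}(\widehat{\mathsf{Z}}^\top \widehat{\mathsf{Z}}) \leq \lambda_{\max}(\widehat{\mathsf{Z}}^\top \widehat{\mathsf{Z}}) \leq \mathsf{M}\,\lambda_{\max}\bigg(\frac{\widehat{\bm{Q}}^\top \widehat{\bm{Q}}}{n}\bigg),
\]
so it suffices to bound the extreme eigenvalues of $\widehat{\bm{Q}}^\top \widehat{\bm{Q}}/n$ above and below by constants depending only on $\tau$ and $\mathsf{c}_{\mathsf{L}}$, on an event of probability at least $1 - Cn^{-10}$.

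Next, I would invoke \cite[Theorem 3.6]{Yinuniversality} directly on $\widehat{\bm{Q}}^\top \widehat{\bm{Q}}/n$. Condition \eqref{cond:O1} fixes the aspect ratio $2p/n \leq (1+\tau)^{-1} < 1$ bounded away from $1$, so the deterministic Marchenko--Pastur edges $(1 \pm \sqrt{2p/n})^2$ lie in a compact subinterval of $(0, \infty)$ depending only on $\tau$. Condition \eqref{cond:O2} supplies the requisite uniform sub-Gaussian tail behavior of the entries of $\widehat{\bm{Q}}$ (in either the independent-entries formulation or the convex-concentration formulation with constant $\mathsf{c}_{\mathsf{L}}$). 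The cited theorem then yields concentration of the extreme eigenvalues around these edges with polynomially small failure probability, which can be taken to be $Cn^{-10}$ by the usual choice of deviation scale. Combining with the sandwich above gives the desired two-sided bound with constant depending only on $\mathsf{M}$, $\tau$, and $\mathsf{c}_{\mathsf{L}}$.

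The main obstacle, and essentially the only nontrivial verification, is confirming that the hypotheses of \cite[Theorem 3.6]{Yinuniversality} cover the convex-concentration variant of condition \eqref{cond:O2}, not merely the case of independent entries with matched moments; the underlying universality/rigidity machinery should extend by a standard log-Sobolev-type argument, but one must check the citation carefully. Once this applicability is confirmed, the sandwiching step and the passage from compactness of the Marchenko--Pastur support to the uniform constant $C$ are routine.
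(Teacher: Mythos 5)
Your proposal matches the paper's approach: the paper gives no argument beyond asserting the lemma is a direct consequence of \cite[Theorem 3.6]{Yinuniversality}, and you have simply spelled out the reduction — the factorization $\widehat{\mathsf{Z}}^\top\widehat{\mathsf{Z}} = \Sigma^{1/2}(\widehat{\bm{Q}}^\top\widehat{\bm{Q}}/n)\Sigma^{1/2}$, the spectral sandwich using \eqref{cond:O3}, and the bounded-away-from-one aspect ratio from \eqref{cond:O1}. The applicability caveat you raise about the convex-concentration variant of \eqref{cond:O2} is a legitimate point that the paper's terse citation does not address, but your route is correct and is effectively the route the paper intends.
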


\begin{lemma}\label{lem:noise_ols}
Assume that conditions \eqref{cond:O1}--\eqref{cond:O3} hold.  Then there exists some constant $C = C(\mathsf{M},\tau,\mathsf{c}_{\mathsf{L}}) > 0$ such that for any $p \geq C$, 
    \begin{align*}
        \Prob\Big( \bigabs{ e_j^\top (\widehat{\mathsf{Z}}^\top \widehat{\mathsf{Z}})^{-1} \widehat{\mathsf{Z}}^\top \bm{\xi}   } \geq C\sqrt{\log p}  \Big) \leq Cp^{-10}, \quad \forall j \in [2p].
    \end{align*}
\end{lemma}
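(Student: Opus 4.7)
\textbf{Proof plan for Lemma \ref{lem:noise_ols}.} The strategy is to condition on $\widehat{\mathsf{Z}}$, exploit the sub-Gaussianity of $\bm{\xi}$ together with the spectral bound from Lemma \ref{lem:rigidity}, and then take a union bound over $j \in [2p]$. Concretely, fix $j$ and write
\begin{align*}
e_j^\top (\widehat{\mathsf{Z}}^\top \widehat{\mathsf{Z}})^{-1} \widehat{\mathsf{Z}}^\top \bm{\xi} = \iprod{v_j}{\bm{\xi}}, \qquad v_j \equiv \widehat{\mathsf{Z}} (\widehat{\mathsf{Z}}^\top \widehat{\mathsf{Z}})^{-1} e_j \in \R^n.
\end{align*}
A direct computation yields $\pnorm{v_j}{}^2 = e_j^\top (\widehat{\mathsf{Z}}^\top \widehat{\mathsf{Z}})^{-1} e_j$. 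Since $\widehat{\bm{Z}}$ is independent of $\bm{\xi}$, conditional on $\widehat{\mathsf{Z}}$ the random variable $\iprod{v_j}{\bm{\xi}} = \sigma_\xi \iprod{v_j}{\bm{\xi}_0}$ is a linear functional of the sub-Gaussian vector $\bm{\xi}_0$.

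Next, I invoke Lemma \ref{lem:rigidity} to obtain an event $\mathcal{E}$ with $\Prob(\mathcal{E}) \geq 1 - Cn^{-10}$ on which $\pnorm{(\widehat{\mathsf{Z}}^\top \widehat{\mathsf{Z}})^{-1}}{\op} \leq C$, so that $\pnorm{v_j}{}^2 \leq C$ uniformly in $j \in [2p]$. On this event I apply the standard sub-Gaussian Hoeffding-type tail bound (using the uniform sub-Gaussian norms of the entries of $\bm{\xi}_0$ together with condition \eqref{cond:O3} bounding $\sigma_\xi$) to obtain
\begin{align*}
\Prob\big( \bigabs{\iprod{v_j}{\bm{\xi}}} \geq t \,\big|\, \widehat{\mathsf{Z}} \big) \cdot \bm{1}\{\mathcal{E}\} \leq 2\exp\big( -t^2 /(C\sigma_\xi^2 \pnorm{v_j}{}^2) \big) \cdot \bm{1}\{\mathcal{E}\} \leq 2\exp(-t^2/C')
\end{align*}
for some constant $C' = C'(\mathsf{M},\tau,\mathsf{c}_{\mathsf{L}})$. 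Choosing $t = C'' \sqrt{\log p}$ with $C''$ sufficiently large bounds this by $p^{-12}$, say. Integrating out $\widehat{\mathsf{Z}}$ and taking a union bound over $j \in [2p]$, the complementary probability is at most $2p \cdot p^{-12} + Cn^{-10} \lesssim p^{-10}$ (using $n \gtrsim p$ from \eqref{cond:O1}).

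No step here is especially subtle. The only minor point worth flagging is that the sub-Gaussian tail bound is applied conditionally on $\widehat{\mathsf{Z}}$, which requires the independence of $\widehat{\bm{Z}}$ and $\bm{\xi}_0$ guaranteed by \eqref{cond:O2}; note that this step does \emph{not} use the convex concentration property, only the coordinatewise sub-Gaussianity of $\bm{\xi}_0$. The spectral control on $(\widehat{\mathsf{Z}}^\top\widehat{\mathsf{Z}})^{-1}$, which is the key structural input, is already packaged in Lemma \ref{lem:rigidity}.
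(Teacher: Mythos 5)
Your proof is correct and follows essentially the same route as the paper's: condition on $\widehat{\mathsf{Z}}$, apply a sub-Gaussian Hoeffding bound to the linear form with variance proxy $e_j^\top(\widehat{\mathsf{Z}}^\top\widehat{\mathsf{Z}})^{-1}e_j$, and control that quantity via the spectral bound of Lemma \ref{lem:rigidity} together with the law of total probability. The only cosmetic difference is that you take a union bound over $j\in[2p]$, whereas the paper's claim (and proof) is stated pointwise in $j$, which makes the union bound unnecessary though harmless.
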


%\begin{proof} 
\noindent\textit{Proof of Lemma \ref{lem:noise_ols}}. 
Observe that by the general Hoeffding's inequality, it holds that for some $c = c(\mathsf{M}) > 0$,
    \begin{align*}
        \Prob^{\widehat{\mathsf{Z}}} \Big( \bigabs{ e_j^\top (\widehat{\mathsf{Z}}^\top \widehat{\mathsf{Z}})^{-1} \widehat{\mathsf{Z}}^\top \bm{\xi}   } \geq t  \Big) &\leq 2\exp \Bigg( -\frac{ct^2}{e_j^\top(\widehat{\mathsf{Z}}^\top \widehat{\mathsf{Z}})^{-1}e_j } \Bigg).
    \end{align*}
    An application of Lemma \ref{lem:rigidity} and the law of total probability leads to 
    \begin{align*}
         \Prob\Big( \bigabs{e_j^\top (\widehat{\mathsf{Z}}^\top \widehat{\mathsf{Z}})^{-1} \widehat{\mathsf{Z}}^\top \bm{\xi}   } \geq t \Big) &\leq \E_{\widehat{\mathsf{Z}}}\Big[ \Prob^{\widehat{\mathsf{Z}}} \Big( \bigabs{ e_j^\top (\widehat{\mathsf{Z}}^\top \widehat{\mathsf{Z}})^{-1} \widehat{\mathsf{Z}}^\top \bm{\xi}   } \geq t  \Big) \bm{1}\{ \lambda_{\min}(\widehat{\mathsf{Z}}^\top \widehat{\mathsf{Z}}) \geq C^{-1}  \} \Big]  \\
         &\quad + \E_{\widehat{\mathsf{Z}}}\Big[ \Prob^{\widehat{\mathsf{Z}}} \Big( \bigabs{ e_j^\top (\widehat{\mathsf{Z}}^\top \widehat{\mathsf{Z}})^{-1} \widehat{\mathsf{Z}}^\top \bm{\xi}   } \geq t  \Big) \bm{1}\{ \lambda_{\min}(\widehat{\mathsf{Z}}^\top \widehat{\mathsf{Z}}) < C^{-1}  \} \Big]\\
         &\leq 2\exp \bigg( -\frac{ct^2}{C} \bigg) + Cn^{-10},
    \end{align*}
    where $C$ is the constant in Lemma \ref{lem:rigidity}. The claim follows by setting $t = C'\sqrt{\log p}$ for some large enough $C' = C'(\mathsf{M},\tau,\mathsf{c}_{\mathsf{L}}) > 0$ and adjusting constants, which completes the proof of Lemma \ref{lem:noise_ols}.
%\end{proof}

%\begin{proof}[Proof of Proposition \ref{prop:conc_knockoff_ols}]
    Recall from (\ref{def:ols_alternative}) that for $j \in [2p]$,
    \begin{align*}
    	\widehat{\beta}_j^{\mathsf{LS}} = \sqrt{n} \beta_{\ast,j}^{\mathsf{au}} + e_j^\top  (\widehat{\mathsf{Z}}^\top \widehat{\mathsf{Z}} )^{-1} \widehat{\mathsf{Z}}^\top \bm{\xi}.
    \end{align*}
    Using the facts that $\beta_{\ast,j}^{\mathsf{au}} =  \beta_{\ast, j}$ for $j \in [p]$ and $\beta_{\ast,j}^{\mathsf{au}} = 0$ for $j \in [p+1:2p]$, we can rewrite the knockoff statistics $\widehat{\bm{W}}_j$ as 
    \begin{align*}
    	\widehat{\bm{W}}_j =  \abs{\beta_{\ast, j} + e_j^\top  (\widehat{\mathsf{Z}}^\top \widehat{\mathsf{Z}} )^{-1} \widehat{\mathsf{Z}}^\top \bm{\xi} } - \abs{ e_{j+p}^\top  (\widehat{\mathsf{Z}}^\top \widehat{\mathsf{Z}} )^{-1} \widehat{\mathsf{Z}}^\top \bm{\xi}}.
    \end{align*}
    Therefore, we can obtain that 
    \begin{align*}
    	\bigabs{\widehat{\bm{W}}_j - \abs{\beta_{\ast, j}}  } \leq \abs{e_j^\top  (\widehat{\mathsf{Z}}^\top \widehat{\mathsf{Z}} )^{-1} \widehat{\mathsf{Z}}^\top \bm{\xi} } + \abs{e_{j+p}^\top  (\widehat{\mathsf{Z}}^\top \widehat{\mathsf{Z}} )^{-1} \widehat{\mathsf{Z}}^\top \bm{\xi} }.
    \end{align*}
    The conclusion follows immediately from Lemma \ref{lem:noise_ols}, which completes the proof of Proposition \ref{prop:conc_knockoff_ols}.
%\end{proof}

\subsection{Proof of Theorem \ref{thm:FDR_ols}}\label{sec:proof_sec_thm:FDR_ols}
We present the intermediate results for the distributional approximation, the approximation for indicator functions, and the localization of $T_q$ here, while the proofs of these results are deferred to Section \ref{sec:proof_sec_sec_thm:FDR_ols}.

\subsubsection*{Distributional approximation} 
Recall that for any $t >0$, $\mathcal{P}(t) = \abs{\mathcal{H}_0}^{-1}\sum_{j \in \mathcal{H}_0} \mathcal{P}_j(t)$,  and for any $x \in [0,1/2]$, its inverse function  $\mathcal{P}^{-1}(x) =\sup \{t \geq 0 : \mathcal{P}(t) \geq x \}$.

\begin{theorem}\label{thm:moder_dis_ols}
Assume that conditions \eqref{cond:O1}--\eqref{cond:O3} hold. For any $ j \in \mathcal{H}_0 $, it holds that for $\mathsf{W} \in \{ \widehat{\bm{W}}, -\widehat{\bm{W}} \}$, there exists some constants $C = C(\mathsf{M},\tau,\mathsf{c}_\mathsf{L})>0 $ such that for $n \geq C$ and $t \in (0,n^{1/7}/C)$,
    \begin{align*}
     \biggabs{ \frac{\Prob \big( \mathsf{W}_j \geq t  \big)}{\mathcal{P}_j(t ) } -1 } \leq C \bigg(\frac{(1+t^3)\log^3 n}{\sqrt{n} }+ \frac{p^{-10}}{\mathcal{P}_j(t )} \bigg).
\end{align*}
\end{theorem}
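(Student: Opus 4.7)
\medskip

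\noindent\textbf{Proof proposal for Theorem \ref{thm:moder_dis_ols}.}

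The plan is to prove the moderate deviation in two stages: first a conditional Gaussian approximation for the pair $(\widehat{\beta}_j^{\mathsf{LS}}, \widehat{\beta}_{j+p}^{\mathsf{LS}})$ given $\widehat{\mathsf{Z}}$, and then a concentration argument that replaces the random conditional covariance by the deterministic matrix $\sigma_\xi^2 \Sigma_n^{(j)}$. Since $j \in \mathcal{H}_0$ we have $\beta_{\ast,j}=0$, so by \eqref{def:ols_alternative} the knockoff statistic equals
\begin{align*}
    \widehat{\bm W}_j \;=\; |L_1| - |L_2|, \qquad (L_1,L_2)^\top \;\equiv\; \bigl(e_j,\,e_{j+p}\bigr)^\top (\widehat{\mathsf{Z}}^\top\widehat{\mathsf{Z}})^{-1}\widehat{\mathsf{Z}}^\top \bm{\xi}.
\end{align*}
Conditional on $\widehat{\mathsf{Z}}$, the vector $(L_1,L_2)$ is a linear functional of the i.i.d.\ sub-Gaussian noise $\bm{\xi}$ with conditional covariance $\sigma_\xi^2 \widehat{A}_j$, where $\widehat{A}_j \equiv [(\widehat{\mathsf{Z}}^\top\widehat{\mathsf{Z}})^{-1}]_{\{j,j+p\},\{j,j+p\}}$. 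By symmetry of $t\mapsto -t$ it suffices to treat $\mathsf{W}=\widehat{\bm W}$.

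First I would carry out the conditional moderate deviation. Apply a Cram\'er-type moderate deviation for linear combinations of independent sub-Gaussian variables (in the spirit of the result used for Theorem \ref{thm:moder_dis_mcks}) to the bivariate pair $(L_1,L_2)/\sigma_\xi$. This gives, on the high-probability event where $\widehat A_j$ is non-degenerate and well-conditioned, a relative error of order $(1+t^3)\log^3 n/\sqrt n$ between $\Prob^{\widehat{\mathsf{Z}}}(|L_1|-|L_2|\geq t)$ and $\Prob_{(\mathsf G_1,\mathsf G_2)^\top\sim \mathcal N(0,\sigma_\xi^2 \widehat A_j)}(|\mathsf G_1|-|\mathsf G_2|\geq t)$ uniformly for $t\in(0,n^{1/7}/C)$. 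The conditioning on $\widehat{\mathsf Z}$ is justified by independence of $\bm\xi_0$ and $\widehat{\mathsf Z}$ imposed in \eqref{cond:O2}.

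Next I would handle the covariance approximation $\widehat A_j \approx \Sigma_n^{(j)}$. Writing $\widehat{\mathsf{Z}}=\widehat{\mathsf{Q}}\,\Sigma^{1/2}$, the Hanson--Wright inequality (Lemma \ref{lem:Hanson-Wright}) together with the rigidity statement of Lemma \ref{lem:rigidity} controls $\|\widehat{\mathsf Z}^\top\widehat{\mathsf Z}-\widehat{\tau}\,\Sigma\|_{\op}$, where the effective scaling $\widehat\tau$ carries the $(1-2p/n)$-type correction characteristic of the proportional regime; inverting via the resolvent identity and reading off the $(j,j+p)$ block yields, on an event of probability at least $1-Cp^{-10}$,
\begin{align*}
    \bigl\| \widehat A_j - \Sigma_n^{(j)} \bigr\|_{\op} \;\lesssim\; \log^3 n /\sqrt{n}.
\end{align*}
A sensitivity bound for two-dimensional Gaussian tail probabilities in the spirit of standard perturbation estimates then converts this entrywise covariance error into a multiplicative $(1+O((1+t^2)\log^3 n/\sqrt n))$ factor in $\mathcal P_j(t)$. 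Combining the two approximations gives the claimed relative error $(1+t^3)\log^3 n/\sqrt n$ on the good event, while on the bad event we use the trivial bound $\Prob(\cdots)\leq Cp^{-10}$, which produces the additive $p^{-10}/\mathcal P_j(t)$ term.

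The main obstacle will be the accurate identification of the limiting covariance, specifically the $(1-2p/n)^{-1}$ inflation factor. Naively one would approximate $(\widehat{\mathsf Z}^\top\widehat{\mathsf Z})^{-1}$ by $\Sigma^{-1}$, but in the proportional regime \eqref{cond:O1} this underestimates the OLS variance; a quantitative deterministic equivalent for the $2\times 2$ block of the resolvent, with error $\widetilde{\mathcal O}(n^{-1/2})$, is needed. A second delicate point is that the conditional moderate deviation must be in ratio (not difference) form uniformly down to very small $\mathcal P_j(t)$, which requires tracking both the third cumulant of the bivariate sum and the Gaussian tail sensitivity simultaneously; this is what forces the $t\leq n^{1/7}/C$ restriction and the $(1+t^3)$ prefactor.
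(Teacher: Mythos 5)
Your two-stage decomposition — conditional Gaussian moderate deviation given $\widehat{\mathsf{Z}}$, then covariance approximation $\widehat A_j \approx \Sigma_n^{(j)}$ — is exactly the structure the paper uses. But the mechanism you propose for the covariance approximation step does not work, and this is the crux of the whole theorem.

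You write that Hanson--Wright and the rigidity of Lemma \ref{lem:rigidity} control $\|\widehat{\mathsf{Z}}^\top\widehat{\mathsf{Z}} - \widehat\tau\,\Sigma\|_{\op}$, and then ``inverting via the resolvent identity and reading off the $(j,j+p)$ block'' gives $\|\widehat A_j - \Sigma_n^{(j)}\|_{\op}\lesssim \log^3 n/\sqrt n$. This is false in the proportional regime of condition \eqref{cond:O1}. When $2p/n\to\kappa\in(0,1)$ the eigenvalues of $\widehat{\mathsf{Z}}^\top\widehat{\mathsf{Z}}$ spread according to a Marchenko--Pastur-type law of width $\Theta(1)$, so $\|\widehat{\mathsf{Z}}^\top\widehat{\mathsf{Z}} - \widehat\tau\Sigma\|_{\op}=\Theta(1)$ for any deterministic scalar $\widehat\tau$; nothing you can extract from Hanson--Wright or from rigidity gives an $n^{-1/2}$-type operator-norm bound there. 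Consequently, naively inverting and reading off a block only gives $\|\widehat A_j - \Sigma_n^{(j)}\|_{\op}=O(1)$, which is useless. You in fact identify ``a quantitative deterministic equivalent for the $2\times 2$ block of the resolvent'' as the obstacle, but the route you sketch does not produce one.

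What is actually true, and what the paper proves, is entry-level concentration: individual bilinear forms $\iprod{\mathfrak u}{(\widehat{\mathsf{Z}}^\top\widehat{\mathsf{Z}})^{-1}\mathfrak v}$ concentrate around their expectation at rate $\log^2 n/\sqrt n$ (Lemma \ref{lem:quad_conc}, via a martingale/leave-one-row-out argument built on Sherman--Morrison, not via operator-norm control of the full matrix). The expectation is then identified via a universality step that replaces $\widehat{\mathsf{Z}}$ by a Gaussian ensemble with matched covariance (Lemma \ref{lem:expectation_bound}, a Lindeberg-type swapping), followed by an exact Stein's-lemma computation that produces the $(1-2p/n)^{-1}$ factor (Lemma \ref{lem:expectaion_asymp}). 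That chain is where the deterministic equivalent actually comes from. A second smaller gap: to apply the conditional moderate deviation of Theorem \ref{thm:moderate_deviation} you need the summands $\sqrt n\, e_j^\top(\widehat{\mathsf{Z}}^\top\widehat{\mathsf{Z}})^{-1}\widehat{\mathsf{Z}}^\top e_i\,\bm\xi_i$ to be uniformly sub-exponential conditional on $\widehat{\mathsf{Z}}$, which requires the coefficient bound $\max_{i}|e_j^\top(\widehat{\mathsf{Z}}^\top\widehat{\mathsf{Z}})^{-1}\widehat{\mathsf{Z}}^\top e_i|\lesssim\sqrt{\log n/n}$ (Lemma \ref{lem:apriorest}); ``well-conditioned $\widehat A_j$'' alone is not enough. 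The final covariance perturbation of the Gaussian tail probability (converting $\|\widehat A_j-\Sigma_n^{(j)}\|_{\max}\lesssim\log^2 n/\sqrt n$ into a multiplicative error in $\mathcal P_j(t)$) is done in the paper by splitting the Gaussian integral into a bounded region and an exponentially negligible tail; your one-line ``sensitivity bound'' is correct in spirit and this part is routine once the covariance error is actually in hand.
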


The asymptotic approximate symmetry for $\widehat{\bm{W}}_j$ follows directly from Theorem \ref{thm:moder_dis_ols} above.

\begin{corollary}\label{cor:ratio_G_ols} 
Assume that conditions \eqref{cond:O1}--\eqref{cond:O3} hold. Let $\alpha_n \leq 1/2$ satisfy $\mathcal{P}^{-1}( \alpha_n ) = \mathfrak{o}(n^{1/7})$ and $p^{10}\alpha_n \to \infty$ as $n \to \infty$. Then we have
		\begin{align*}
			\sup_{t \in (0, \mathcal{P}^{-1}( \alpha_n )  )}\biggabs{\frac{\sum_{j \in \mathcal{H}_0} \Prob \big( \widehat{\bm{W}}_j \geq t  \big)}{\sum_{j \in \mathcal{H}_0} \Prob \big( \widehat{\bm{W}}_j \leq -t   \big)} - 1}= \mathfrak{o}(1).
		\end{align*}
\end{corollary}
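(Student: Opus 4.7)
The plan is to deduce Corollary \ref{cor:ratio_G_ols} as a direct consequence of the moderate deviation bound in Theorem \ref{thm:moder_dis_ols}. The crucial observation is that the \emph{same} Gaussian surrogate probability $\mathcal{P}_j(t)$ approximates both $\Prob(\widehat{\bm{W}}_j \geq t)$ and $\Prob(\widehat{\bm{W}}_j \leq -t)$, so once we control the ratio of each to $\mathcal{P}_j(t)$ uniformly in $t$, the ratio of the two sums will be close to $1$.

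First, I would apply Theorem \ref{thm:moder_dis_ols} with $\mathsf{W} = \widehat{\bm{W}}$ and with $\mathsf{W} = -\widehat{\bm{W}}$ to write, for each $j \in \mathcal{H}_0$ and each $t \in (0, \mathcal{P}^{-1}(\alpha_n))$,
\begin{align*}
\Prob(\widehat{\bm{W}}_j \geq t) &= \mathcal{P}_j(t)\bigl(1 + \varepsilon_j^+(t)\bigr), \\
\Prob(\widehat{\bm{W}}_j \leq -t) &= \mathcal{P}_j(t)\bigl(1 + \varepsilon_j^-(t)\bigr),
\end{align*}
where $|\varepsilon_j^{\pm}(t)| \leq C\bigl(\frac{(1+t^3)\log^3 n}{\sqrt{n}} + \frac{p^{-10}}{\mathcal{P}_j(t)}\bigr)$. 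Summing over $j \in \mathcal{H}_0$ gives
\begin{align*}
\biggabs{\sum_{j \in \mathcal{H}_0}\mathcal{P}_j(t)\varepsilon_j^{\pm}(t)} \leq C\biggl(\frac{(1+t^3)\log^3 n}{\sqrt{n}}\sum_{j \in \mathcal{H}_0}\mathcal{P}_j(t) + |\mathcal{H}_0|\, p^{-10}\biggr).
\end{align*}

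Next, I would divide both sides by $\sum_{j \in \mathcal{H}_0}\mathcal{P}_j(t) = |\mathcal{H}_0|\mathcal{P}(t)$ to obtain, with $\delta^{\pm}(t) \equiv \frac{\sum_{j \in \mathcal{H}_0}\mathcal{P}_j(t)\varepsilon_j^{\pm}(t)}{\sum_{j \in \mathcal{H}_0}\mathcal{P}_j(t)}$, the uniform bound
\begin{align*}
\sup_{t \in (0, \mathcal{P}^{-1}(\alpha_n))} |\delta^{\pm}(t)| \leq C\biggl(\frac{(1+\mathcal{P}^{-1}(\alpha_n)^3)\log^3 n}{\sqrt{n}} + \frac{p^{-10}}{\alpha_n}\biggr),
\end{align*}
where I used the monotonicity $\mathcal{P}(t) \geq \alpha_n$ for $t \leq \mathcal{P}^{-1}(\alpha_n)$. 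The first term vanishes because $\mathcal{P}^{-1}(\alpha_n) = \mathfrak{o}(n^{1/7})$ implies $\mathcal{P}^{-1}(\alpha_n)^3 \log^3 n /\sqrt{n} = \mathfrak{o}(1)$, and the second term vanishes because $p^{10}\alpha_n \to \infty$.

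Finally, writing the target ratio as $\frac{1 + \delta^+(t)}{1 + \delta^-(t)}$ and using the elementary bound $\bigabs{\frac{1+a}{1+b} - 1} \leq 2(|a|+|b|)$ when $|b| \leq 1/2$, the supremum over $t \in (0,\mathcal{P}^{-1}(\alpha_n))$ is $\mathfrak{o}(1)$. I do not anticipate any significant obstacle beyond correctly bookkeeping the two error sources; the entire argument is a clean reduction to Theorem \ref{thm:moder_dis_ols}, which has already absorbed the substantial technical work (Gaussian approximation of $(\widehat{\beta}_j^{\mathsf{LS}},\widehat{\beta}_{j+p}^{\mathsf{LS}})$ and covariance approximation via $[(\widehat{\bm{Z}}^\top\widehat{\bm{Z}}/n)^{-1}]_{\{j,j+p\},\{j,j+p\}} \overset{\mathbf{P}}{\approx} \Sigma_n^{(j)}$).
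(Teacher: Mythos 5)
Your argument is correct and takes exactly the same route the paper uses (see the proof of the analogous Corollary \ref{cor:ratio_G}): apply the moderate deviation result (here Theorem \ref{thm:moder_dis_ols}) to show that both $\abs{\mathcal{H}_0}^{-1}\sum_{j\in\mathcal{H}_0}\Prob(\widehat{\bm{W}}_j\geq t)$ and $\abs{\mathcal{H}_0}^{-1}\sum_{j\in\mathcal{H}_0}\Prob(\widehat{\bm{W}}_j\leq -t)$ are uniformly $(1+\mathfrak{o}(1))\mathcal{P}(t)$ over $t\in(0,\mathcal{P}^{-1}(\alpha_n))$, then divide. The only difference is that you spell out the "simple algebra" the paper leaves implicit, and your bookkeeping of the two error sources and the final ratio bound is correct.
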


\subsubsection*{Approximation for indicator functions} 
\begin{lemma}\label{lem:indicator_app_ols}
Assume that conditions \eqref{cond:O1}--\eqref{cond:O3} hold. Let $\alpha_n \leq 1/2$ satisfy $\mathcal{P}^{-1}( \alpha_n ) = \mathfrak{o}(n^{1/7})$ and $p\alpha_n/m_n \to \infty$ as $n \to \infty$. Then we have that for $\# \in \{ +,- \}$, 
    \begin{align*}
        \sup_{t \in (0, \mathcal{P}^{-1}(\alpha_n ) )}\biggabs{\frac{ \sum_{j \in \mathcal{H}_0}  \bm{1}\{ \# \widehat{\bm{W}}_j  \geq t\}  }{\sum _{j \in \mathcal{H}_0}  \Prob (\# \widehat{\bm{W}}_j   \geq t)  } - 1 } = \mathfrak{o}_\mathbf{P}(1).
    \end{align*}
\end{lemma}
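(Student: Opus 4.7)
The plan is to control pointwise deviations of $N_\#(t) \equiv \sum_{j \in \mathcal{H}_0} \mathbf{1}\{\#\widehat{\bm{W}}_j \geq t\}$ from its mean $\mu_\#(t) \equiv \E N_\#(t) = \sum_{j \in \mathcal{H}_0} \Prob(\#\widehat{\bm{W}}_j \geq t)$ via Chebyshev's inequality, and then promote pointwise control to uniform control in $t$ by a discretization argument that exploits the monotonicity of $N_\#$ and $\mu_\#$ in $t$.

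The crux is a sharp bound on $\var(N_\#(t))$. After isolating the diagonal contribution (which is at most $\mu_\#(t)$), I would split the cross-covariance sum using the neighborhood structure $N(j)$ in condition \eqref{cond:O4}. For pairs $(j,\ell)$ with $\ell \in N(j)$, there are at most $m_n|\mathcal{H}_0|$ such pairs, and each covariance is crudely bounded by $\min\{\Prob(\#\widehat{\bm{W}}_j \geq t),\Prob(\#\widehat{\bm{W}}_\ell \geq t)\}$, contributing at most $\mathcal{O}(m_n \mu_\#(t))$. For pairs with $\ell \notin N(j)$, the sparsity of $\Sigma^{-1}$ forces the $\{j,j+p\}$ and $\{\ell,\ell+p\}$ blocks of the population precision matrix to decouple. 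Combined with the OLS representation in \eqref{def:ols_alternative} and the Hanson--Wright concentration of Lemma \ref{lem:Hanson-Wright} applied to the relevant $4\times 4$ subblock of $(\widehat{\mathsf{Z}}^\top\widehat{\mathsf{Z}})^{-1}$, a bivariate extension of the Gaussian-approximation argument underlying Theorem \ref{thm:moder_dis_ols} should yield
\begin{align*}
\Prob(\#\widehat{\bm{W}}_j \geq t,\ \#\widehat{\bm{W}}_\ell \geq t) = \mathcal{P}_j(t)\mathcal{P}_\ell(t)\bigl(1 + \mathfrak{o}(1)\bigr)
\end{align*}
uniformly over $t \in (0,\mathcal{P}^{-1}(\alpha_n))$, so the far-pair covariances are asymptotically negligible relative to $\mu_\#(t)^2$.

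Combining these gives $\var(N_\#(t))/\mu_\#(t)^2 \lesssim (1+m_n)/\mu_\#(t) + \mathfrak{o}(1)$. Since condition \eqref{cond:O3} provides Gaussian-type lower bounds on each $\mathcal{P}_j$ that are comparable to $\mathcal{P}$, we have $\mu_\#(t) \gtrsim |\mathcal{H}_0|\mathcal{P}(t) \gtrsim p\alpha_n$ for $t \leq \mathcal{P}^{-1}(\alpha_n)$. The hypothesis $p\alpha_n/m_n \to \infty$ then forces the right-hand side to be $\mathfrak{o}(1)$, yielding pointwise concentration via Chebyshev. To lift to a uniform-in-$t$ statement, I would choose a grid $0 = t_0 < t_1 < \cdots < t_K = \mathcal{P}^{-1}(\alpha_n)$ of polynomial size, arranged so that $\mathcal{P}(t_{k+1})/\mathcal{P}(t_k)$ is close to $1$; a union bound over grid points combined with the monotonicity of $t \mapsto N_\#(t)$ and $t \mapsto \mu_\#(t)$ then controls the oscillation between grid points and delivers the claim.

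The main obstacle will be the bivariate moderate-deviation estimate when $\ell \notin N(j)$: one must upgrade the univariate Gaussian comparison behind Theorem \ref{thm:moder_dis_ols} to a joint statement at probability levels as small as $\mathcal{P}_j(t)\mathcal{P}_\ell(t)$, while replacing the random matrix $(\widehat{\mathsf{Z}}^\top\widehat{\mathsf{Z}})^{-1}$ by its population counterpart on precisely the $\{j,j+p,\ell,\ell+p\}$ subblock and controlling the error in the \emph{ratio} sense. The sparsity of $\Sigma^{-1}$ on this subblock is what drives the approximate independence, and making this quantitative enough to beat the $(1+t^3)/\sqrt{n}$ Edgeworth-type error up to $t = \mathfrak{o}(n^{1/7})$ is the delicate part.
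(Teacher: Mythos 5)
Your proposal mirrors the paper's own argument: the paper also proves a variance bound (Lemma \ref{lem:var_bound_ols}) by splitting the cross-covariance sum into near pairs $\ell \in N(j)\cup\{j\}$ (bounded crudely by $(\abs{N(j)}+1)\Prob(\widehat{\bm{W}}_j\geq t)$) and far pairs $\ell\notin N(j)$ (shown to be asymptotically uncorrelated via a $4$-dimensional extension of the moderate-deviation Theorem \ref{thm:moder_dis_ols} applied to $\mathcal{N}(0,\Sigma_n^{(j\ell)})$ with $\Sigma_n^{(j\ell)}$ the relevant $4\times4$ precision sub-block), then applies Chebyshev pointwise and lifts to uniformity by discretizing in $z=\mathcal{P}(t)$-space with monotonicity. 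One small imprecision: the paper's grid is of polylogarithmic size (roughly $(\log p)^{1/\gamma}$ points spaced as $z_i = \alpha_n + h_n e^{i^\gamma}/p$ with the crucial tuning $h_n = (p\alpha_n)^{1-\eta}m_n^{\eta}$ so that $\sum_i 1/z_i \lesssim p/h_n$ and $(m_n+1)/h_n \to 0$), not polynomial size; a genuinely polynomial grid with a naive union bound would not automatically be dominated by the per-point Chebyshev bound.
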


Under conditions \eqref{cond:O4} and \eqref{cond:O5} on the signal strength, as we will show later, $T_q$ can be localized in the interval $(0, \mathcal{P}^{-1}(\alpha_n ))$ with $\alpha_n \asymp  a_n /p$. In this case, condition $p\alpha_n/m_n \to \infty$ simplifies to $m_n / a_n \to 0$, which is ensured by condition \eqref{cond:O4}. 

\subsubsection*{Localization of $T_q$}

\begin{lemma}\label{lem:local_T_ols}
Assume that conditions \eqref{cond:O1}--\eqref{cond:O5} hold. Then for any $q \in (0,1)$, we have that $ \mathcal{P}^{-1}\big(\frac{q  a_n }{2p} \big) = \mathcal{O}(\log^{1/2} p )$ and 
    \begin{align*}
         \Prob \bigg( {T}_q > \mathcal{P}^{-1} \bigg( \frac{q  a_n }{2p}\bigg)  \bigg)  = \mathfrak{o}(1).
    \end{align*}
\end{lemma}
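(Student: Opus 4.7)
The approach will mirror the proof of Lemma \ref{lem:local_T} in the marginal-correlation setting. The plan is to set $t^\ast \equiv \mathcal{P}^{-1}(q a_n/(2p))$ and show that with probability tending to $1$ the defining inequality
\[
\frac{\#\{j:\widehat{\bm{W}}_j \leq -t^\ast\}}{\#\{j:\widehat{\bm{W}}_j \geq t^\ast\}\vee 1} \leq q
\]
is satisfied, so that by the minimality in the definition of $T_q$ we obtain $T_q \leq t^\ast$. For the order bound $\mathcal{P}^{-1}(qa_n/(2p)) = \bigo(\sqrt{\log p})$, one uses that under \eqref{cond:O3} the matrix $\sigma_\xi^2 \Sigma_n^{(j)}$ has eigenvalues uniformly bounded above and below by absolute constants; hence each $\mathcal{P}_j(t)$ is dominated by a Gaussian tail of the form $2\exp(-ct^2)$, and inverting at the level $qa_n/(2p) \gtrsim a_n/p$ yields the stated order.

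For the denominator, condition \eqref{cond:O4} supplies at least $a_n$ indices $j \in \mathcal{H}_1$ with $\sqrt{n}|\beta_{\ast,j}| \gg \sqrt{\log p}$. Proposition \ref{prop:conc_knockoff_ols} ensures $|\widehat{\bm{W}}_j - \sqrt{n}|\beta_{\ast,j}|| \leq C\sqrt{\log p}$ with probability at least $1-Cp^{-10}$; since $t^\ast = \bigo(\sqrt{\log p})$ is dominated by the signal strength, a union bound over those $a_n$ indices produces $\#\{j:\widehat{\bm{W}}_j \geq t^\ast\} \geq a_n$ with probability $1 - \mathfrak{o}(1)$.

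For the numerator, the count will be split into null and signal parts. On the null side, Theorem \ref{thm:moder_dis_ols} gives $\sum_{j\in\mathcal{H}_0}\Prob(\widehat{\bm{W}}_j \leq -t^\ast) \leq |\mathcal{H}_0|\mathcal{P}(t^\ast)(1+\mathfrak{o}(1)) \leq (qa_n/2)(1+\mathfrak{o}(1))$, and Lemma \ref{lem:indicator_app_ols} applied with $\alpha_n = qa_n/(2p)$ — whose hypothesis $p\alpha_n/m_n \to \infty$ reduces to $qa_n/m_n \to \infty$, granted by \eqref{cond:O4} — replaces this expectation by the indicator sum up to a factor $1 + \mathfrak{o}_{\mathbf{P}}(1)$. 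On the signal side, \eqref{cond:O5} together with Markov's inequality yields $\sum_{j\in\mathcal{H}_1}\mathbf{1}\{\widehat{\bm{W}}_j \leq -t^\ast\} = \mathfrak{o}_{\mathbf{P}}(qa_n)$. Adding the two contributions, the displayed ratio is at most $q/2 + \mathfrak{o}(1) \leq q$ with probability tending to $1$, which forces $T_q \leq t^\ast$ and completes the proof.

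The main obstacle will be the bookkeeping: three distinct error sources — the moderate-deviation correction from Theorem \ref{thm:moder_dis_ols}, the indicator-to-expectation approximation from Lemma \ref{lem:indicator_app_ols}, and the Markov estimate on the signals' left tail — must each appear as a $1+\mathfrak{o}(1)$ factor multiplying the clean quantity $qa_n/(2p)$ built into $t^\ast$. The factor $1/2$ in the definition of $t^\ast$ supplies precisely the margin needed to absorb these perturbations. A minor sanity check, that the range requirement $\mathcal{P}^{-1}(qa_n/(2p)) = \mathfrak{o}(n^{1/7})$ demanded by Theorem \ref{thm:moder_dis_ols} and Lemma \ref{lem:indicator_app_ols} is automatic, follows from $t^\ast = \bigo(\sqrt{\log p})$ combined with the bound $p \leq n/(2+2\tau)$ coming from \eqref{cond:O1}.
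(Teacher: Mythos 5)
Your proposal is correct and follows essentially the same route as the paper, which proves this lemma by directing the reader to replay the argument of Lemma~\ref{lem:local_T} with Corollary~\ref{cor:ratio_G_ols}, Lemma~\ref{lem:indicator_app_ols}, Proposition~\ref{prop:conc_knockoff_ols}, and $\alpha_n = q a_n/(2p)$ substituted in. The one minor stylistic difference is that you obtain the order bound $\mathcal{P}^{-1}(q a_n/(2p)) = \mathcal{O}(\sqrt{\log p})$ directly from the uniform Gaussian tail decay of $\mathcal{P}_j$ (using the eigenvalue bounds supplied by \eqref{cond:O1} and \eqref{cond:O3}), whereas the paper's argument, mirroring Lemma~\ref{lem:local_T}, deduces it by showing $\mathcal{P}(2C\sqrt{\log p}) \leq q a_n/(2p)$ via the concentration radius from Proposition~\ref{prop:conc_knockoff_ols}; both yield the same bound and carry the same content.
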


Finally, we are ready to prove Theorem \ref{thm:FDR_ols}. 
%\begin{proof}[Proof of Theorem \ref{thm:FDR_ols}] 
 The asymptotic FDR control follows from Theorem \ref{thm:general_framwork}, upon noting that the asymptotic approximate symmetry for $\widehat{\bm{W}}_j$ (cf. Corollary \ref{cor:ratio_G_ols}), the approximation of indicator functions (cf. Lemma \ref{lem:indicator_app_ols}), and the localization of $T_q$ (cf. Lemma \ref{lem:local_T_ols}) are satisfied. This completes the proof of Theorem \ref{thm:FDR_ols}.
%\end{proof}

\section{Proofs for Section \ref{sec:debiased Lasso} (debiased Lasso), Sections \ref{new.sec3.4}, and Example \ref{exmp:discrete_covariate}} \label{sec:proof_dl}

We present the intermediate results for the distributional approximation, the approximation for indicator functions, and the localization of $T_q$ here, while the proofs of these results are deferred to Section \ref{sec:proof_sec_sec_thm:FDR_dl}.

\subsection{Proof of Theorem \ref{thm:FDR_dl}}\label{sec:proof_thm:FDR_dl}

Recall that for knockoff statistics based on the differences in regression coefficients estimated using the debiased Lasso estimator, for any $j \in [p]$ and $t > 0$, 
\begin{align*}
    \mathcal{P}_j(t ) &\equiv \Prob_{(\mathsf{G}_1,\mathsf{G}_2)^\top \sim \mathcal{N} (0, \sigma_\xi^2\Sigma_n^{(j)}) }\big(\abs{\mathsf{G}_1} - \abs{\mathsf{G}_2} \geq t \big)
\end{align*}
with  $\Sigma^{(j)} \equiv [ \Sigma^{-1}]_{\{j, j+p\} , \{j, j+p\}} $. It follows from (\ref{eq:invar_swap}) that
\begin{align*}
	\mathcal{P}_j(t ) =\Prob_{(\mathsf{G}_1,\mathsf{G}_2)^\top \sim \mathcal{N} (0, \sigma_\xi^2\Sigma_n^{(j)}) }\big(\abs{\mathsf{G}_1} - \abs{\mathsf{G}_2} \leq -t \big).
\end{align*} 
Recall also that $a_n = \#\{j \in \mathcal{H}_1: \abs{\beta_{\ast,j}} \gg \sqrt{\log p}  \} $.

\subsubsection*{Distributional approximation} 
The same as for the case of the OLS estimator, the key to proving the asymptotic FDR control is the following moderate deviation result.

\begin{theorem}\label{thm:moder_dis_dl}
Assume that conditions \eqref{cond:O3} and \eqref{cond:L1} hold. For any $ j \in \mathcal{H}_0 $, it holds that for $\mathsf{W} \in \{ \widehat{\bm{W}}, -\widehat{\bm{W}} \}$, there exists some constant $C = C(\mathsf{M},\mathsf{C}_1,\mathsf{C}_2,\mathsf{C}_3)>0 $ such that for $n \wedge p \geq C$ and $t \in (0, n^{1/6}/(Cb_ns\log^4 (np) )^{1/3}  )$,
    \begin{align}\label{ineq:moder_dis_dl}
     \bigabs{ \Prob \big( \mathsf{W}_j \geq t  \big) -  \mathcal{P}_j(t ) }\leq C \bigg(\bigg(\frac{(1 + t^3)b_ns\log^3 (np) }{\sqrt{n}} \bigg)\cdot \mathcal{P}_j(t )   + p^{-8} \bigg).
\end{align}
    Consequently, for any $q \in (0,1)$, if 
   \begin{align*}
         \bigg(1 \vee \mathcal{P}^{-1}\bigg(\frac{qa_n}{2p}\bigg)\bigg)^3 \cdot \frac{b_ns\log^4 (np) }{\sqrt{n}} = \mathfrak{o}(1),
     \end{align*}
    we have
		\begin{align}\label{ineq:app_sym_dl}
			\sup_{t \in \big(0, \mathcal{P}^{-1}\big(\frac{qa_n}{2p}\big) \big)}\biggabs{\frac{\sum_{j \in \mathcal{H}_0} \Prob \big( \widehat{\bm{W}}_j \geq t  \big)}{\sum_{j \in \mathcal{H}_0} \Prob \big( \widehat{\bm{W}}_j \leq -t   \big)} - 1}= \mathfrak{o}(1).
		\end{align}
	% The same conclusion holds under conditions \eqref{cond:O2} and \eqref{cond:O3}.	
\end{theorem}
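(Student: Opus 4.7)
The plan is to first establish the pointwise estimate \eqref{ineq:moder_dis_dl} for fixed $j \in \mathcal{H}_0$, and then deduce the uniform symmetry statement \eqref{ineq:app_sym_dl} as a direct corollary, in parallel with the arguments used for Corollary \ref{cor:ratio_G_ols} in the OLS setting.

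For the pointwise bound, I would start from the standard linearization of the debiased Lasso. Using $\bm Y = \widehat{\bm Z}\beta_\ast^{\mathsf{au}} + \bm\xi$ in the definition of $\widehat\beta^{\mathsf{dL}}$, one obtains, for each $k \in \{j, j+p\}$,
\begin{align*}
\sqrt{n}\bigl(\widehat\beta^{\mathsf{dL}}_k - \beta_{\ast,k}^{\mathsf{au}}\bigr)
= \frac{\sqrt{n}\,\iprod{\widehat{\bm z}_k}{\bm\xi}}{\iprod{\widehat{\bm z}_k}{\widehat{\bm Z}_{\cdot,k}}}
+ \frac{\sqrt{n}\,\iprod{\widehat{\bm z}_k}{\widehat{\bm Z}_{\cdot,-k}(\beta_\ast^{\mathsf{au}}-\widehat\beta^{(0)})_{-k}}}{\iprod{\widehat{\bm z}_k}{\widehat{\bm Z}_{\cdot,k}}}.
\end{align*}
The first term is the linear ``noise part'' and the second is the bias. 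Under \eqref{cond:L1}, the restricted eigenvalue property together with the standard Lasso oracle inequality yields $\pnorm{\widehat\beta^{(0)}-\beta_\ast^{\mathsf{au}}}{1} \lesssim s\sqrt{\log(np)/n}$, while the KKT optimality for the nodewise Lasso defining $\widehat{\bm z}_k$ gives $\pnorm{\widehat{\bm z}_k^\top \widehat{\bm Z}_{\cdot,-k}/n}{\infty} \lesssim \sqrt{\log(np)/n}$, and $\iprod{\widehat{\bm z}_k}{\widehat{\bm Z}_{\cdot,k}}/n$ stays bounded away from $0$ under \eqref{cond:O3}. The sparsity index $b_n$ of $\Sigma^{-1}$ enters when one refines the concentration of the cross-products so that the bias is controlled by $b_n s\log(np)/\sqrt{n}$ up to probability $\mathfrak{o}(p^{-8})$.

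For the noise part, I would fix $\widehat{\bm Z}$ and apply a bivariate Cramér-type moderate deviation theorem to the linear combination $(\iprod{\widehat{\bm z}_j}{\bm\xi}, \iprod{\widehat{\bm z}_{j+p}}{\bm\xi})$ of the i.i.d.\ sub-Gaussian components of $\bm\xi$ (as in Lemma \ref{thm:moder_dis_mcks} / Theorem \ref{thm:moder_dis_ols}, using a bivariate version via the Cramér--Wold device applied to $(\cos\theta)\mathsf{G}_1+(\sin\theta)\mathsf{G}_2$). This produces an approximating bivariate Gaussian whose covariance is the empirical matrix $\sigma_\xi^2 \widehat\Omega_j$ with entries $\sigma_\xi^2 \iprod{\widehat{\bm z}_a}{\widehat{\bm z}_b}/(\iprod{\widehat{\bm z}_a}{\widehat{\bm Z}_{\cdot,a}}\iprod{\widehat{\bm z}_b}{\widehat{\bm Z}_{\cdot,b}}/n)$, valid with multiplicative error $(1+t^3)\log^3(np)/\sqrt{n}$ on $t \in (0, n^{1/6}/(Cb_ns\log^4(np))^{1/3})$. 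A nodewise-Lasso analysis then shows that $\widehat\Omega_j$ concentrates around $\Sigma^{(j)}$ (equivalently $\Sigma_n^{(j)}$) at rate $b_n s\log(np)/\sqrt{n}$, and a standard perturbation argument for bivariate Gaussian tail probabilities transfers this to a multiplicative error on $\mathcal{P}_j(t)$. Combining the bias estimate, the conditional moderate deviation bound, and the covariance approximation, and absorbing the low-probability events into the additive $p^{-8}$, yields \eqref{ineq:moder_dis_dl} for both signs of $\mathsf{W}_j$.

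For \eqref{ineq:app_sym_dl}, the swap invariance \eqref{eq:invar_swap} applied to $\Sigma^{(j)}$ gives
\begin{align*}
\mathcal{P}_j(t) = \Prob_{(\mathsf G_1,\mathsf G_2)^\top \sim \mathcal N(0,\sigma_\xi^2\Sigma_n^{(j)})}(|\mathsf G_1|-|\mathsf G_2|\leq -t),
\end{align*}
so applying \eqref{ineq:moder_dis_dl} to $+t$ and $-t$, summing over $j\in\mathcal H_0$, and using the monotonicity of $\mathcal{P}$, the ratio $\sum_{j\in\mathcal H_0}\Prob(\widehat{\bm W}_j\geq t)/\sum_{j\in\mathcal H_0}\Prob(\widehat{\bm W}_j\leq -t)$ differs from $1$ by $\mathcal O((1+t^3)b_n s\log^3(np)/\sqrt{n} + p^{-8}/\mathcal{P}(t))$; on $t\in(0,\mathcal{P}^{-1}(qa_n/(2p)))$ the additive term is absorbed (since $\mathcal{P}^{-1}(qa_n/(2p))\leq\mathcal O(\sqrt{\log p})$ by an argument analogous to Lemma \ref{lem:local_T_ols}, and $a_n/p$ dominates $p^{-8}$), and the multiplicative term vanishes by hypothesis. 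The main obstacle I anticipate is the careful tracking of the $b_n$ factor through the combined bias and covariance approximation, so that the error budget matches the stated rate $b_n s\log^4(np)/\sqrt{n}$; the Gaussian moderate deviation part is more or less a bivariate analogue of the proof of Theorem \ref{thm:moder_dis_ols}.
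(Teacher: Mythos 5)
Your outline tracks the paper's proof closely: linearize $\widehat\beta^{\mathsf{dL}}_k$ into a noise term plus a Lasso-bias term $\mathsf{Err}_k$, control the bias at rate $b_n^{1/2}s\log p/\sqrt{n}$ with high probability, apply a conditional bivariate moderate deviation bound to the noise part, show the empirical covariance concentrates around $\sigma_\xi^2\Sigma^{(j)}$ at rate $\sqrt{b_n\log p/n}$, and then deduce \eqref{ineq:app_sym_dl} from the pointwise bound exactly as you describe. Two points need fixing, however. First, the parenthetical suggestion to obtain the bivariate moderate deviation via Cramér--Wold applied to $(\cos\theta)\mathsf{G}_1+(\sin\theta)\mathsf{G}_2$ does not work: $(x,y)\mapsto|x|-|y|$ is not a linear functional, so halfspace projections do not yield ratio-accurate tail bounds on the sets $\mathcal{S}_{t,\pm}$. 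The paper instead uses Theorem \ref{thm:moderate_deviation} directly, whose proof is a Wasserstein-$q$ coupling combined with a Gaussian-integral bound on the slack annulus $\{t-\epsilon\leq|\mathsf{G}_1|-|\mathsf{G}_2|\leq t\}$; to apply it with $\mathsf{X}_i = (\sqrt{n}\alpha_i^{(j)}\bm\xi_i, \sqrt{n}\alpha_i^{(\bar j)}\bm\xi_i)^\top$ you also need the a priori bound $\max_i|\alpha_i^{(j)}|\lesssim b_n^{1/2}\log(np)/\sqrt n$ from Lemma \ref{lem:aprior_est_dl} to verify the sub-exponential-norm hypothesis. Second, the bias does not simply "absorb into the additive $p^{-8}$": the mechanism is a sandwich, bounding $\Prob(\widehat{\bm W}_j\geq t)$ between noise-only tail probabilities at shifted thresholds $t\pm Cb_n^{1/2}s\log p/\sqrt n$ on the event $\mathcal{E}^{\mathsf{dL}}$, and then comparing $\mathcal{P}_j(t\pm\delta)$ to $\mathcal{P}_j(t)$ via the $f(\epsilon)$-derivative estimate from inside the proof of Theorem \ref{thm:moderate_deviation}; this Gaussian ratio step is what converts the bias shift into the stated multiplicative $(1+t^3)b_ns\log^3(np)/\sqrt n$ factor on $\mathcal{P}_j(t)$.
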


\subsubsection*{Approximation for indicator functions}
Recall that for any $j \in [p]$, 
\begin{align*}
    N(j) = \bigg\{\ell \in \mathcal{H}_0 \setminus\{j\}: \sum_{\mathfrak{j} \in \{j, j + p\} } \big(\abs{ [\Sigma^{-1}]_{\mathfrak{j}, \ell }} + \abs{ [\Sigma^{-1}]_{\mathfrak{j}, \ell+p }}\big) \neq 0  \bigg\}.
\end{align*}
It is easy to see that $\max_{j \in \mathcal{H}_0} \abs{N(j) } \leq b_n$. 

\begin{lemma}\label{lem:indicator_app_dl}
Assume that conditions \eqref{cond:O3} and \eqref{cond:L1} hold.  Then for any $q \in (0,1)$, if
     \begin{align*}
         \bigg(1 \vee \mathcal{P}^{-1}\bigg(\frac{qa_n}{2p}\bigg)\bigg)^3 \cdot \frac{b_ns\log^4 (np) }{\sqrt{n}} = \mathfrak{o}(1),
     \end{align*}
     we have that for $\# \in \{ +,- \}$, 
    \begin{align*}
        \sup_{t \in \big(0, \mathcal{P}^{-1}\big(\frac{qa_n}{2p}\big) \big)}\biggabs{\frac{ \sum_{j \in \mathcal{H}_0}  \bm{1}\{ \# \widehat{\bm{W}}_j  \geq t\}  }{\sum _{j \in \mathcal{H}_0}  \Prob (\# \widehat{\bm{W}}_j   \geq t)  } - 1 } = \mathfrak{o}_\mathbf{P}(1).
    \end{align*}
    	% The same conclusion holds under conditions \eqref{cond:O2} and \eqref{cond:O3}.	
\end{lemma}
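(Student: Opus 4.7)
The plan is to adapt the second-moment / Chebyshev strategy used for Lemma \ref{lem:indicator_app_ols} in the OLS case to the debiased Lasso setting. For a fixed $t$ in the specified range, it suffices to control
\[
\frac{\var\bigl(\sum_{j \in \mathcal{H}_0} \bm{1}\{\#\widehat{\bm{W}}_j \geq t\}\bigr)}{\bigl(\sum_{j \in \mathcal{H}_0} \Prob(\#\widehat{\bm{W}}_j \geq t)\bigr)^2},
\]
and then upgrade pointwise control to uniform control via a discretization of $t$ on a polynomially fine grid, using monotonicity of $\sum_j \bm{1}\{\#\widehat{\bm{W}}_j \geq t\}$ in $t$ to interpolate between grid points. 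By Theorem \ref{thm:moder_dis_dl} together with the assumed scaling $(1\vee \mathcal{P}^{-1}(qa_n/(2p)))^3 \cdot b_n s\log^4(np)/\sqrt n = o(1)$, the denominator is $(\abs{\mathcal{H}_0}\mathcal{P}(t))^2 (1+o(1))$, and since $\mathcal{P}(t) \geq q a_n/(2p)$ this is bounded below by a constant times $(\abs{\mathcal{H}_0} a_n/p)^2$.

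For the numerator I will decompose variance into diagonal and off-diagonal parts. The diagonal is at most $\sum_{j}\Prob(\#\widehat{\bm{W}}_j \geq t) \asymp \abs{\mathcal{H}_0}\mathcal{P}(t)$, so after dividing by the squared mean it contributes $O(1/(\abs{\mathcal{H}_0}\mathcal{P}(t))) = O(p/(\abs{\mathcal{H}_0} a_n)) = o(1)$ using $a_n \to \infty$ from condition \eqref{cond:O4} (which is invoked when the lemma is applied inside Theorem \ref{thm:FDR_dl}). The off-diagonal pairs are split according to the neighborhood $N(j)$ from condition \eqref{cond:O4}: pairs with $\ell \in N(j)$ are at most $m_n \abs{\mathcal{H}_0} \leq 2 b_n \abs{\mathcal{H}_0}$ in number, and each covariance is crudely bounded by $\min(\Prob(\#\widehat{\bm{W}}_j \geq t),\Prob(\#\widehat{\bm{W}}_\ell \geq t))$, contributing $O(m_n/(\abs{\mathcal{H}_0}\mathcal{P}(t))) = o(1)$ under \eqref{cond:O4}.

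The main obstacle is the ``far'' pairs $\ell \notin N(j)\cup\{j\}$. For these I will need a bivariate refinement of Theorem \ref{thm:moder_dis_dl}: a joint approximation showing that for such $(j,\ell)$,
\[
\Prob\bigl(\#\widehat{\bm{W}}_j \geq t,\; \#\widehat{\bm{W}}_\ell \geq t\bigr) \;=\; \mathcal{P}_j(t)\,\mathcal{P}_\ell(t)\cdot(1+o(1)) \;+\; O(p^{-8}).
\]
The key point is that the leading stochastic term of $\sqrt n(\widehat{\beta}^{\mathsf{dL}}_j - \beta_{\ast,j}^{\mathsf{au}})$ is $n^{-1/2}\Theta_{j,\cdot}\widehat{\bm{Z}}^\top\bm{\xi}$ with $\Theta \approx \Sigma^{-1}$, and the cross-covariance between the Gaussian limits of the coordinate pair $\{j,j+p\}$ and $\{\ell,\ell+p\}$ involves only entries $[\Sigma^{-1}]_{\mathfrak j,\mathfrak l}$ with $\mathfrak j \in \{j,j+p\},\mathfrak l\in\{\ell,\ell+p\}$, which vanish identically when $\ell\notin N(j)$. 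Thus the bivariate Gaussian limit is block-diagonal and its two blocks are independent. To make this rigorous I will run the same Lindeberg replacement / Berry--Esseen argument underpinning Theorem \ref{thm:moder_dis_dl} simultaneously on the four-dimensional vector $(\widehat{\beta}^{\mathsf{dL}}_j, \widehat{\beta}^{\mathsf{dL}}_{j+p}, \widehat{\beta}^{\mathsf{dL}}_\ell, \widehat{\beta}^{\mathsf{dL}}_{\ell+p})$, controlling the bias $\sqrt n(\Theta \widehat{\bm{Z}}^\top \widehat{\bm{Z}}/n - I)(\widehat{\beta}^{(0)} - \beta_\ast^{\mathsf{au}})$ uniformly in both coordinates (this is where $s$, $b_n$ and $\log(np)$ enter the error).

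Summing this bivariate estimate over far pairs produces $\sum_{j\neq\ell,\ell\notin N(j)} \cov = o((\abs{\mathcal{H}_0}\mathcal{P}(t))^2)$ under the assumed condition, plus a negligible $O(p^2\cdot p^{-8}\cdot (p/a_n)^2)$ term. Combining the diagonal, near, and far contributions gives $\var/\text{mean}^2 = o(1)$, so Chebyshev yields the desired $o_{\mathbf P}(1)$ bound at each $t$; the grid-and-monotonicity argument then promotes this to the stated uniform convergence. The hardest step by far is establishing the bivariate moderate-deviation Gaussian approximation with a product-form leading term and sharp enough error, since it requires tracking the dependence between two separate debiasing corrections rather than relying on the essentially one-dimensional analysis already in place behind Theorem \ref{thm:moder_dis_dl}.
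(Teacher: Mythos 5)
Your proposal is essentially the same as the paper's argument: the paper explicitly instructs to ``combine the proof of Lemma~\ref{lem:var_bound_ols} and the error bound in \eqref{ineq:moder_dis_dl},'' i.e., the same Chebyshev-plus-grid-discretization argument with the variance split into diagonal, near ($\ell \in N(j)$), and far pairs, and with the far-pair covariance killed by a four-dimensional version of the moderate-deviation theorem whose limiting Gaussian covariance $\Sigma_n^{(j\ell)}$ is block-diagonal precisely because $\ell \notin N(j)$. Two small calibration points you should align with the paper: the grid must be polylogarithmically coarse (the paper takes $l_n = \mathcal{O}((\log p)^{1/\gamma})$ points with $h_n = (p\alpha_n)^{1-\eta} m_n^\eta$), since a genuinely polynomial grid would defeat the union bound when the per-point error is only $\mathfrak{o}(1)$; and the $m_n/a_n \to 0$ and $a_n \to \infty$ requirements you correctly invoke are imported from condition~\eqref{cond:O4} when the lemma is used inside Theorem~\ref{thm:FDR_dl}, exactly as you anticipate.
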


The proof of Lemma \ref{lem:indicator_app_dl} above follows similar arguments as those for Lemma \ref{lem:indicator_app_ols}, with a book-keeping calculation on the distributional approximation error. We will provide some details in Section \ref{sec:proof_lem:indicator_app_dl}.

\subsubsection*{Localization of $T_q$}
We first record the following concentration result for $\widehat{\bm{W}}_j$'s in \cite[Lemma 12]{fan2023ark}.

\begin{lemma}
Assume that conditions \eqref{cond:O3} and \eqref{cond:L1} hold. Then if $s \sqrt{b_n \log p / n} = \mathfrak{o}(1)$, there exists some constant $C = C(\mathsf{M},\mathsf{C}_1, \mathsf{C}_2, \mathsf{C}_3) > 0$ such that   
     \begin{align*}
       \sum_{j \in [p]} \Prob \Big(  \bigabs{ \widehat{\bm{W}}_j -  \abs{\beta_{\ast,j}}  } \geq C\sqrt{\log p} \Big) = \mathfrak{o}(1). 
    \end{align*}
    	% The same conclusion holds under conditions \eqref{cond:O2} and \eqref{cond:O3}.	
\end{lemma}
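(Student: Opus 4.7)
The plan is to establish per-$j$ concentration at level $\mathcal O(p^{-11})$ and conclude via a union bound, since $p\cdot \mathcal O(p^{-11})=\mathfrak o(1)$. The workhorse is the standard debiased-Lasso decomposition, applied to the knockoff-augmented design $\widehat{\bm Z}$ and the augmented coefficient vector $\beta^{\mathsf{au}}_\ast = (\beta_\ast^\top,\m0_p^\top)^\top$ introduced in (A.2).

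First, I would use the KKT conditions for $\widehat\beta^{(0)}$ and for each nodewise fit $\widehat{\bm\gamma}_j$ to derive the identity
\[
\sqrt n\bigl(\widehat\beta^{\mathsf{dL}}_j - \beta^{\mathsf{au}}_{\ast,j}\bigr) \;=\; \underbrace{\frac{n^{-1/2}\langle \widehat{\bm z}_j,\bm\xi\rangle}{n^{-1}\langle \widehat{\bm z}_j, \widehat{\bm Z}_{\cdot,j}\rangle}}_{\text{noise}_j} \;+\; \underbrace{\frac{\langle \widehat{\bm z}_j,\, \widehat{\bm Z}_{\cdot,-j}(\beta^{\mathsf{au}}_{\ast,-j}-\widehat\beta^{(0)}_{-j})\rangle/\sqrt n}{n^{-1}\langle \widehat{\bm z}_j, \widehat{\bm Z}_{\cdot,j}\rangle}}_{\text{rem}_j}.
\]
On an event $\mathcal G$ of probability at least $1-\mathcal O(p^{-10})$, Condition \eqref{cond:L1} yields the standard Lasso oracle inequality $\pnorm{\widehat\beta^{(0)}-\beta^{\mathsf{au}}_\ast}{1}\lesssim s\sqrt{\log p/n}$, the analogous oracle bounds for each of the $2p$ nodewise fits (with effective sparsity of order $b_n$), the KKT-implied bound $\pnorm{n^{-1}\widehat{\bm Z}_{\cdot,-j}^\top \widehat{\bm z}_j}{\infty} \le \lambda_j \asymp \sqrt{\log p/n}$, and a positive lower bound on $n^{-1}\langle \widehat{\bm z}_j,\widehat{\bm Z}_{\cdot,j}\rangle$ (using $\pnorm{\Sigma^{-1}}{\op}\le\mathsf M$ to control the nodewise noise level).

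Next I would handle the two pieces separately. Conditional on $\widehat{\bm Z}$, the numerator of $\text{noise}_j$ is sub-Gaussian with variance proxy $\sigma_\xi^2\pnorm{\widehat{\bm z}_j}{}^2 \lesssim n$, so a Hoeffding bound plus a union bound over $j\in[2p]$ gives $\max_j |\text{noise}_j|\lesssim \sqrt{\log p}$ with probability at least $1-\mathcal O(p^{-10})$. For the remainder, H\"older's inequality combined with the above bounds yields
\[
|\text{rem}_j| \;\lesssim\; \sqrt n \cdot \sqrt{\log p/n}\cdot s\sqrt{\log p/n} \;=\; \frac{s\log p}{\sqrt n},
\]
which is $\mathfrak o(\sqrt{\log p})$ precisely under $s\sqrt{b_n\log p/n}=\mathfrak o(1)$; the $b_n$ factor enters when the $b_n$-sparsity of $\widehat{\bm\gamma}_j$ is exploited to tighten one of the two $\sqrt{\log p/n}$ factors in the intermediate estimate. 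Combining the two pieces gives $\sqrt n\,|\widehat\beta^{\mathsf{dL}}_j - \beta^{\mathsf{au}}_{\ast,j}| \le C\sqrt{\log p}$ uniformly in $j\in[2p]$ on $\mathcal G$. Applying this at indices $j$ and $j+p$ (where $\beta^{\mathsf{au}}_{\ast,j+p}=0$) and invoking the reverse triangle inequality on $\widehat{\bm W}_j = \sqrt n|\widehat\beta^{\mathsf{dL}}_j|-\sqrt n|\widehat\beta^{\mathsf{dL}}_{j+p}|$ produces the advertised uniform deviation bound, after which a union bound over $j\in[p]$ completes the proof.

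The main obstacle is upgrading the restricted-eigenvalue assumption of \eqref{cond:L1}, stated for $\widehat{\bm Z}^\top\widehat{\bm Z}/n$, to the $2p$ leave-one-out Gram matrices $\widehat{\bm Z}_{\cdot,-j}^\top\widehat{\bm Z}_{\cdot,-j}/n$ simultaneously in $j$, and then combining this with the $b_n$-column sparsity of $\Sigma^{-1}$ to extract a sharp $\ell_1$-rate for each $\widehat{\bm\gamma}_j$; this is precisely where the factor $b_n$ in the rate condition originates. Since the lemma is recorded verbatim from \cite[Lemma 12]{fan2023ark}, I would invoke that paper's nodewise-Lasso machinery at exactly this step rather than reproving it from scratch.
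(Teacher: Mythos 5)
The paper offers no proof of this lemma; it is recorded verbatim as a citation of \cite[Lemma 12]{fan2023ark}, a point you correctly flag at the end of your proposal. So there is no ``paper's own proof'' to compare against, and the relevant question is whether your sketch is a correct reconstruction of the underlying argument.

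Your sketch follows the standard debiased-Lasso blueprint (KKT decomposition into a noise term and a bias remainder; Lasso oracle rate $\pnorm{\widehat\beta^{(0)}-\beta^{\mathsf{au}}_\ast}{1}\lesssim s\sqrt{\log p/n}$; KKT bound $\pnorm{n^{-1}\widehat{\bm Z}_{\cdot,-j}^\top\widehat{\bm z}_j}{\infty}\leq\lambda_j$; H\"older to bound the remainder by $s\log p/\sqrt n$; sub-Gaussian concentration of the noise term at scale $\sqrt{\log p}$), and this is indeed the route taken in \cite{fan2023ark}. Two small cautions. First, your explanation of where the $b_n$ factor enters is vague: the remainder bound $s\log p/\sqrt n$ you obtain is $\mathfrak o(\sqrt{\log p})$ already under the weaker hypothesis $s\sqrt{\log p/n}=\mathfrak o(1)$; the extra $\sqrt{b_n}$ in the stated rate condition is needed to control the nodewise score vectors $\widehat{\bm z}_j$ uniformly over all $2p$ coordinates (their $\ell_1$-error scales with $\sqrt{b_n\log p/n}$, which is what deteriorates the bound on $\langle\widehat{\bm z}_j,\widehat{\bm Z}_{\cdot,-j}\cdot\rangle$), not to ``tighten'' a factor. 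Second, the target in the lemma is $\bigabs{\widehat{\bm W}_j-\abs{\beta_{\ast,j}}}$; since $\widehat{\bm W}_j=\sqrt n\abs{\widehat\beta^{\mathsf{dL}}_j}-\sqrt n\abs{\widehat\beta^{\mathsf{dL}}_{j+p}}$ and the estimand at coordinate $j$ is $\beta_{\ast,j}$, the deviation of interest is really $\bigabs{\widehat{\bm W}_j-\sqrt n\abs{\beta_{\ast,j}}}$ (as your own chain of estimates delivers), matching the analogous statement in Proposition~\ref{prop:conc_knockoff_ols}; the lemma's display appears to suppress the $\sqrt n$ for brevity, consistent with the shorthand used in Section~\ref{sec:proof_thm:FDR_dl}. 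Neither point invalidates your argument; the overall structure is sound and matches the cited source.
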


With the above concentration on $\widehat{\bm{W}}_j$'s, an argument %identical to 
as in the proof of Lemma \ref{lem:local_T_ols} establishes the following localization of $T_q$.

\begin{lemma}\label{lem:local_T_dl}
Assume that conditions \eqref{cond:O3}--\eqref{cond:O5} and \eqref{cond:L1} hold. Then for any $q \in (0,1)$, if $s \sqrt{b_n \log p / n} = \mathfrak{o}(1)$, we have $ \mathcal{P}^{-1}\big(\frac{q  a_n }{2p} \big) = \mathcal{O}(\log^{1/2} p )$. Furthermore, under the additional assumption that $b_ns\log^7 (np)/\sqrt{n} = \mathfrak{o}(1) $,
     we have 
    \begin{align*}
         \Prob \bigg( {T}_q > \mathcal{P}^{-1} \bigg( \frac{q  a_n }{2p}\bigg)  \bigg)  = \mathfrak{o}(1).
    \end{align*}
    % The same conclusion holds under conditions \eqref{cond:O2}-\eqref{cond:O5}.
\end{lemma}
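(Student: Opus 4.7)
The plan is to mirror the proof of Lemma \ref{lem:local_T_ols} (the OLS analog), combining the moderate deviation result of Theorem \ref{thm:moder_dis_dl}, the indicator approximation of Lemma \ref{lem:indicator_app_dl}, and the concentration of $\widehat{\bm{W}}_j$ around $\abs{\beta_{\ast,j}}$ stated just before the lemma. Two separate claims need to be established: first, the order bound $\mathcal{P}^{-1}(qa_n/(2p)) = \mathcal{O}(\sqrt{\log p})$, and second, that with probability tending to $1$, the data-driven threshold $T_q$ does not exceed $\alpha_n := \mathcal{P}^{-1}(qa_n/(2p))$.

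\textbf{Bounding $\mathcal{P}^{-1}(qa_n/(2p))$.} Under condition \eqref{cond:O3}, the $2\times 2$ submatrix $\Sigma_n^{(j)}$ has eigenvalues in a compact positive interval uniformly in $j$. Hence for each $j \in \mathcal{H}_0$ the density of $\abs{\mathsf{G}_1}-\abs{\mathsf{G}_2}$ under $(\mathsf{G}_1,\mathsf{G}_2) \sim \mathcal{N}(0, \sigma_\xi^2 \Sigma_n^{(j)})$ satisfies $c_1 \exp(-c_2 t^2) \leq \mathcal{P}_j(t) \leq c_3 \exp(-c_4 t^2)$ for $t \geq 1$ with constants depending only on $\mathsf{M}$. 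Averaging over $j \in \mathcal{H}_0$ preserves this two-sided Gaussian shape for $\mathcal{P}(t)$, and solving $\mathcal{P}(t^\ast) = qa_n/(2p)$ yields $t^\ast \asymp \sqrt{\log(p/(qa_n))} = \mathcal{O}(\sqrt{\log p})$, which is the first assertion.

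\textbf{Localizing $T_q$.} I would show that at $t = \alpha_n$ the feasibility criterion defining $T_q$ already holds with probability $1-\mathfrak{o}(1)$. Split the numerator $\#\{j : \widehat{\bm{W}}_j \leq -\alpha_n\}$ according to $\mathcal{H}_0$ and $\mathcal{H}_1$. For the null part, Lemma \ref{lem:indicator_app_dl} gives
\[
\sum_{j \in \mathcal{H}_0}\mathbf{1}\{\widehat{\bm{W}}_j \leq -\alpha_n\} = (1+\mathfrak{o}_\mathbf{P}(1)) \sum_{j \in \mathcal{H}_0}\Prob(\widehat{\bm{W}}_j \leq -\alpha_n) \leq (1+\mathfrak{o}_\mathbf{P}(1))\, |\mathcal{H}_0| \cdot \mathcal{P}(\alpha_n) \leq \tfrac{qa_n}{2}(1+\mathfrak{o}_\mathbf{P}(1)),
\]
while condition \eqref{cond:O5} combined with Markov's inequality gives $\#\{j \in \mathcal{H}_1 : \widehat{\bm{W}}_j \leq -\alpha_n\} = \mathfrak{o}_\mathbf{P}(qa_n)$. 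For the denominator, condition \eqref{cond:O4} supplies at least $a_n$ non-null indices with $\abs{\beta_{\ast,j}} \gg \sqrt{\log p} \gtrsim \alpha_n$, and the preceding concentration lemma (via a union bound that costs only $\mathfrak{o}(1)$) forces $\widehat{\bm{W}}_j \geq \alpha_n$ for all such $j$ simultaneously. Thus $\#\{j : \widehat{\bm{W}}_j \geq \alpha_n\} \geq a_n(1-\mathfrak{o}_\mathbf{P}(1))$, and the ratio at $t = \alpha_n$ is at most $q/2 + \mathfrak{o}_\mathbf{P}(1) \leq q$, forcing $T_q \leq \alpha_n$.

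\textbf{Main obstacle.} The delicate point is verifying that Theorem \ref{thm:moder_dis_dl} and Lemma \ref{lem:indicator_app_dl} are actually valid throughout $(0,\alpha_n]$. Both results restrict $t$ to $(0, n^{1/6}/(Cb_n s \log^4(np))^{1/3})$, so one must check that $\alpha_n = \mathcal{O}(\sqrt{\log p})$ lies inside this window and, more importantly, that the multiplicative error $(1+t^3) b_n s \log^3(np)/\sqrt{n}$ from the moderate deviation bound \eqref{ineq:moder_dis_dl} is $\mathfrak{o}(1)$ when $t$ ranges over $(0,\alpha_n]$. Substituting $t = \alpha_n \asymp \sqrt{\log p}$ gives an error of order $b_n s \log^{9/2}(np)/\sqrt{n}$, which is controlled by the hypothesis $b_n s \log^7(np)/\sqrt{n} = \mathfrak{o}(1)$; the same hypothesis also ensures $s\sqrt{b_n \log p/n} = \mathfrak{o}(1)$, activating the preceding concentration lemma. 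Once this compatibility is checked, the argument reduces to the same book-keeping executed in the OLS case.
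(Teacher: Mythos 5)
Your proposal is correct and follows essentially the same route as the paper, which simply defers to the proof of Lemma \ref{lem:local_T} with ``careful bookkeeping'' of the error terms from Theorem \ref{thm:moder_dis_dl} and Lemma \ref{lem:indicator_app_dl}. You carry out precisely that bookkeeping: you verify that $\alpha_n = \mathcal{O}(\sqrt{\log p})$ lies inside the validity window $(0, n^{1/6}/(Cb_n s \log^4(np))^{1/3})$ of the moderate deviation result, and that the multiplicative error $(1+t^3)b_n s\log^3(np)/\sqrt{n}$ is $\mathfrak{o}(1)$ uniformly on that interval, both being controlled by the hypothesis $b_n s\log^7(np)/\sqrt{n} = \mathfrak{o}(1)$. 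The split of $\#\{j: \widehat{\bm{W}}_j \leq -\alpha_n\}$ into the null part (handled by the indicator approximation giving at most $(qa_n/2)(1+\mathfrak{o}_\mathbf{P}(1))$) and the non-null part (handled by condition \eqref{cond:O5} plus Markov), together with the $a_n$ guaranteed strong signals for the denominator via the concentration lemma, is exactly the mechanism of Lemma \ref{lem:local_T}. One small point of genuine difference: for the first assertion $\mathcal{P}^{-1}(qa_n/(2p)) = \mathcal{O}(\sqrt{\log p})$, you argue directly from the two-sided Gaussian tail behavior of each $\mathcal{P}_j$ (valid under \eqref{cond:O3} since $\Sigma^{(j)}$ has uniformly bounded spectrum), whereas the paper's template argument in Lemma \ref{lem:local_T} goes through the concentration of $\widehat{\bm{W}}_j$ and the moderate-deviation transfer. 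Your route is actually cleaner here and does not even require the hypothesis $s\sqrt{b_n\log p/n} = \mathfrak{o}(1)$ for this part; both routes yield the same conclusion. The conclusion $T_q \leq \alpha_n$ from feasibility at $t = \alpha_n$ is stated a bit loosely (since $\alpha_n$ need not belong to $\{|\widehat{\bm{W}}_j|\}$ one needs the ``$+1$'' version of the ratio bound, as in the display at the end of the Lemma \ref{lem:local_T} proof), but you have a factor-of-two slack in the ratio so the argument goes through unchanged.
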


Finally, we are able to prove Theorem \ref{thm:FDR_dl}. 
%\begin{proof}[Proof of Theorem \ref{thm:FDR_dl}] 
 The asymptotic FDR control follows from Theorem \ref{thm:general_framwork}, upon noting that the asymptotic approximate symmetry for $\widehat{\bm{W}}_j$ (cf. Theorem \ref{thm:moder_dis_dl}), the approximation of indicator functions (cf. Lemma \ref{lem:indicator_app_dl}), and the localization of $T_q$ (cf. Lemma \ref{lem:local_T_dl}) are satisfied. This completes the proof of Theorem \ref{thm:FDR_dl}.
%\end{proof}

\subsection{Proof of Proposition \ref{prop:couple}}
%\begin{proof}[Proof of Proposition \ref{prop:couple}]
 For any $j \in [p]$, we can deduce that
    \begin{align*}
        \pnorm{ \widehat{\bm{X}}^{\mathsf{in}}_{\cdot,j} - \widehat{\bm{X}}_{\cdot,j}}{} &\leq  \pnorm{\bm{r}}{\infty} \cdot \pnorm{ \bm{X}( \widehat{\bm{\Omega}} - \Sigma_X^{-1} )e_j}{}  + \pnorm{\bm{\mathsf{Z}} ({\bm{S}}_Z^{1/2} -  \widehat{\bm{S}}_Z^{1/2} )e_j}{}.
        % &\leq  \pnorm{\bm{r}}{\infty} \cdot \pnorm{ \bm{X}( \widehat{\bm{\Omega}} - \Sigma_X^{-1} )e_j}{} + \pnorm{\bm{\mathsf{Z}}  }{\op} \cdot \pnorm{ {\bm{S}}_Z^{1/2} -  \widehat{\bm{S}}_Z^{1/2} }{\op}.
    \end{align*}
    By the standard concentration inequality for chi-square random variables, it holds that for any $i \in [n]$,
    \begin{align*}
        \Prob_{\bm{\mathsf{Z}}}\big(  n^{-1}\pnorm{\bm{\mathsf{Z}} ({\bm{S}}_Z^{1/2} -  \widehat{\bm{S}}_Z^{1/2} )e_j}{}^2 \geq  3 \pnorm{({\bm{S}}_Z^{1/2} -  \widehat{\bm{S}}_Z^{1/2} )e_j  }{ }^2/2  \big) \leq  2\exp \bigg( -\frac{n}{32} \bigg).
    \end{align*}
    An application of Lemma \ref{lem:matrix_ineq1} yields that 
    \begin{align*}
       \pnorm{({\bm{S}}_Z^{1/2} -  \widehat{\bm{S}}_Z^{1/2} )e_j  }{ } &\leq \pnorm{ {\bm{S}}_Z^{1/2} -  \widehat{\bm{S}}_Z^{1/2} }{\op} = \pnorm{ {\bm{S}}_Z^{1/2}(\widehat{\bm{S}}_Z - {\bm{S}}_Z   )\widehat{\bm{S}}_Z^{-1} +  ({\bm{S}}_Z^{3/2} -  \widehat{\bm{S}}_Z^{3/2})\widehat{\bm{S}}_Z^{-1}  }{\op} \\
       &\leq \pnorm{ {\bm{S}}_Z^{1/2}(\widehat{\bm{S}}_Z - {\bm{S}}_Z )\widehat{\bm{S}}_Z^{-1}  }{\op} + \pnorm{({\bm{S}}_Z^{3/2} -  \widehat{\bm{S}}_Z^{3/2})\widehat{\bm{S}}_Z^{-1} }{\op} \\
       % &\leq \mathsf{c}_1^{-1/2}\mathsf{c}_2^{-1}\pnorm{ \widehat{\bm{S}}_Z - {\bm{S}}_Z }{\op} + 3\mathsf{c}_2^{-1}  \pnorm{{\bm{S}}_Z^{3/2} -  \widehat{\bm{S}}_Z^{3/2}}{\op}\\
       &\leq (\mathsf{c}_1^{-1/2}\mathsf{c}_2^{-1} +3\mathsf{c}_2^{-1} (\mathsf{c}_1^{-1/2} \vee \mathsf{c}_2^{-1/2} ))\pnorm{ \widehat{\bm{S}}_Z - {\bm{S}}_Z }{\op}  \\
       &\leq 4( \mathsf{c}_1^{-3/2} \vee \mathsf{c}_2^{-3/2})\pnorm{\bm{r}}{\infty}^2 \cdot \pnorm{ \widehat{\bm{\Omega}} - \Sigma_X^{-1}}{\op}. 
    \end{align*}
    Therefore, combining the above estimates completes the proof of Proposition \ref{prop:couple}.

\subsection{Proof of the claim in Example \ref{exmp:discrete_covariate}}
    Note that for any $\gamma \in \Gamma( \widehat{\mu}, \tilde{\mu})$ and any $\mathsf{c}_1 > 0$, 
    \begin{align}\label{ineq:lowerbound_1}
        &\Prob_{ \{(\bm{\widehat{X}}_{i,\cdot},\bm{\tilde{X}}_{i,\cdot} )\}_{i \in [n]} \overset{\text{i.i.d.}}{\sim} \gamma } \Big( \max_{j \in [p]} n^{-1/2} \pnorm{ \bm{\widehat{X}}_{\cdot,j} - \bm{\tilde{X}}_{\cdot,j} }{2}  \geq  \mathsf{c}_1 \Big) \nonumber \\
        &\geq \Prob_{ \{(\bm{\widehat{X}}_{i,\cdot},\bm{\tilde{X}}_{i,\cdot} )\}_{i \in [n]} \overset{\text{i.i.d.}}{\sim} \gamma } \Big( n^{-1} \pnorm{ \bm{\widehat{X}}_{\cdot,1} - \bm{\tilde{X}}_{\cdot,1} }{2}^2    \geq  \mathsf{c}_1^2 \Big).
    \end{align}
    Using the Paley--Zygmund inequality, we can show that 
    \begin{align}\label{ineq:lowerbound_4}
        &\Prob_{ \{(\bm{\widehat{X}}_{i,\cdot},\bm{\tilde{X}}_{i,\cdot} )\}_{i \in [n]} \overset{\text{i.i.d.}}{\sim} \gamma }  \Big( n^{-1} \pnorm{ \bm{\widehat{X}}_{\cdot,1} - \bm{\tilde{X}}_{\cdot,1} }{2}^2    \geq  \frac{1}{2n}\E_{ \{(\bm{\widehat{X}}_{i,\cdot},\bm{\tilde{X}}_{i,\cdot} )\}_{i \in [n]} \overset{\text{i.i.d.}}{\sim} \gamma }   \pnorm{ \bm{\widehat{X}}_{\cdot,1} - \bm{\tilde{X}}_{\cdot,1} }{2}^2 \Big)\nonumber \\
        &\geq \frac{ n^{-2}\E_{ \{(\bm{\widehat{X}}_{i,\cdot},\bm{\tilde{X}}_{i,\cdot} )\}_{i \in [n]} \overset{\text{i.i.d.}}{\sim} \gamma }^2  \pnorm{ \bm{\widehat{X}}_{\cdot,1} - \bm{\tilde{X}}_{\cdot,1} }{2}^2 }{4 n^{-2} \E_{ \{(\bm{\widehat{X}}_{i,\cdot},\bm{\tilde{X}}_{i,\cdot} )\}_{i \in [n]} \overset{\text{i.i.d.}}{\sim} \gamma }  \pnorm{ \bm{\widehat{X}}_{\cdot,1} - \bm{\tilde{X}}_{\cdot,1} }{2}^4 }.
    \end{align}
    We will provide a lower bound for $n^{-1} \E_{ \{(\bm{\widehat{X}}_{i,\cdot},\bm{\tilde{X}}_{i,\cdot} )\}_{i \in [n]} \overset{\text{i.i.d.}}{\sim} \gamma } \pnorm{ \bm{\widehat{X}}_{\cdot,1} - \bm{\tilde{X}}_{\cdot,1} }{2}^2$. Let $\mu_{\mathsf{g}}$ and $\mu_{\mathsf{r}}$ be the standard Gaussian distribution and Rademacher distribution, respectively. It holds that 
    \begin{align}\label{ineq:lowerbound_2}
      &n^{-1} \E_{ \{(\bm{\widehat{X}}_{i,\cdot},\bm{\tilde{X}}_{i,\cdot} )\}_{i \in [n]} \overset{\text{i.i.d.}}{\sim} \gamma }  \pnorm{ \bm{\widehat{X}}_{\cdot,1} - \bm{\tilde{X}}_{\cdot,1} }{2}^2 \nonumber\\
      &= \frac{1}{n} \sum_{i \in [n]} \E_{ (\bm{\widehat{X}}_{i,\cdot},\bm{\tilde{X}}_{i,\cdot} ){\sim} \gamma } (\bm{\widehat{X}}_{i,1} - \bm{\tilde{X}}_{i,1} )^2 \geq  \mathcal{W}_2^2( \mu_{\mathsf{g}}, \mu_{\mathsf{r}} ).
    \end{align}
    
    The Wasserstein distance between $\mu_{\mathsf{g}}$ and $\mu_{\mathsf{r}}$ can be computed explicitly. Let $F^{-1}_\mathsf{g}(\cdot)$ and $F^{-1}_\mathsf{r}(\cdot)$ be the  inverse cumulative distribution functions of $\mu_{\mathsf{g}}$ and $\mu_{\mathsf{r}}$, respectively. It holds that 
    \begin{align}\label{ineq:lowerbound_3}
        \mathcal{W}_2^2( \mu_{\mathsf{g}}, \mu_{\mathsf{r}} ) &= \int_0^1 \abs{F^{-1}_\mathsf{g}(\mathsf{q} ) - F^{-1}_\mathsf{r}(\mathsf{q} )  }  \mathrm{d} \mathsf{q}\nonumber \\
        &= \int_0^{1/2} \abs{F^{-1}_\mathsf{g}(\mathsf{q} ) + 1  }  \mathrm{d} \mathsf{q}+\int_{1/2}^{1} \abs{F^{-1}_\mathsf{g}(\mathsf{q} ) - 1  }  \mathrm{d} \mathsf{q} \nonumber\\
        &= \int_{-\infty}^0 \abs{\mathsf{x} + 1} \mu_{\mathsf{g}}( \mathrm{d}\mathsf{x} ) + \int_{0}^\infty \abs{\mathsf{x} - 1} \mu_{\mathsf{g}}( \mathrm{d}\mathsf{x} )  \geq \mathsf{c}_2
    \end{align}
    for some universal $\mathsf{c}_2 > 0$. Moreover, it is easy to show that for some universal $\mathsf{c}_3 > 0$, 
    \begin{align}\label{ineq:lowerbound_5}
         n^{-2} \E_{ \{(\bm{\widehat{X}}_{i,\cdot},\bm{\tilde{X}}_{i,\cdot} )\}_{i \in [n]} \overset{\text{i.i.d.}}{\sim} \gamma } \pnorm{ \bm{\widehat{X}}_{\cdot,1} - \bm{\tilde{X}}_{\cdot,1} }{2}^4 \leq \mathsf{c}_3.
    \end{align}
    
    Therefore, by setting $\mathsf{c}_1$ in \eqref{ineq:lowerbound_1} as  $\sqrt{\mathsf{c}_2/2}$, we can obtain that 
    \begin{align*}
         &\Prob_{ \{(\bm{\widehat{X}}_{i,\cdot},\bm{\tilde{X}}_{i,\cdot} )\}_{i \in [n]} \overset{\text{i.i.d.}}{\sim} \gamma } \Big( \max_{j \in [p]} n^{-1/2} \pnorm{ \bm{\widehat{X}}_{\cdot,j} - \bm{\tilde{X}}_{\cdot,j} }{2}  \geq  \sqrt{\mathsf{c}_2/2} \Big)\\
         &\overset{\eqref{ineq:lowerbound_1}}{\geq} \Prob_{ \{(\bm{\widehat{X}}_{i,\cdot},\bm{\tilde{X}}_{i,\cdot} )\}_{i \in [n]} \overset{\text{i.i.d.}}{\sim} \gamma }\Big( n^{-1} \pnorm{ \bm{\widehat{X}}_{\cdot,1} - \bm{\tilde{X}}_{\cdot,1} }{2}^2    \geq  \mathsf{c}_2/2 \Big) \\
        &\overset{\eqref{ineq:lowerbound_2} \& \eqref{ineq:lowerbound_3}} {\geq}  \Prob_{ \{(\bm{\widehat{X}}_{i,\cdot},\bm{\tilde{X}}_{i,\cdot} )\}_{i \in [n]} \overset{\text{i.i.d.}}{\sim} \gamma } \Big( n^{-1} \pnorm{ \bm{\widehat{X}}_{\cdot,1} - \bm{\tilde{X}}_{\cdot,1} }{2}^2    \geq  \frac{1}{2n}\E_{ \{(\bm{\widehat{X}}_{i,\cdot},\bm{\tilde{X}}_{i,\cdot} )\}_{i \in [n]} \overset{\text{i.i.d.}}{\sim} \gamma }  \pnorm{ \bm{\widehat{X}}_{\cdot,1} - \bm{\tilde{X}}_{\cdot,1} }{2}^2 \Big) \\
        &\overset{\eqref{ineq:lowerbound_4}-\eqref{ineq:lowerbound_5} }{\geq} \frac{\mathsf{c}_2^2}{4\mathsf{c}_3}.
    \end{align*}
    The claim follows by adjusting the constants, which completes the proof of the claim in Example \ref{exmp:discrete_covariate}. %\qed

\section{Marginal correlation knockoff statistics with correlated features}\label{sec:mcks_correlated}

In this section, we present our results on the asymptotic FDR control for marginal correlation knockoff statistics with correlated features. The proofs for this section closely align with those for other knockoff statistics discussed in this paper; additional details are provided in Section \ref{sec:proof_sec_mcks_correlated}.

For correlated features, recall that the knockoff statistic based on the marginal correlation difference is defined as 
\begin{align*}
        \widehat{\bm{W}}_j \equiv  \frac{\abs{ \bm{X}_{\cdot,j}^\top  \bm{Y} } }{\pnorm{\bm{Y}}{}} - \frac{\abs{ \widehat{\bm{X}}_{\cdot,j}^\top  \bm{Y} } }{\pnorm{\bm{Y}}{}}  , \quad j \in [p],
\end{align*}
where the approximate knockoffs data matrix is generated via
\begin{align*}
	\widehat{\bm{X}} = \bm{X}(I_p - \Sigma_X^{-1}\mathrm{diag}(\bm{r})) + \bm{\mathsf{Z}}(2\mathrm{diag}(\bm{r}) - \mathrm{diag}(\bm{r})\Sigma_X^{-1}\mathrm{diag}(\bm{r}))^{1/2}
\end{align*}
with $\bm{\mathsf{Z}} \in \R^{n \times p}$ consisting of i.i.d. $\mathcal{N}(0,1)$ entries that are independent of $\bm{X}$ and $\bm{\xi}$. 
Recall also that 
\begin{align*}
	\Sigma = \cov \big[ (X^\top, \widehat{X}^\top )^\top \big] = \begin{pmatrix}
		\Sigma_X & \Sigma_X - \mathrm{diag}(\bm{r}) \\
		\Sigma_X - \mathrm{diag}(\bm{r}) & \Sigma_X
	\end{pmatrix}.
\end{align*}

We impose the following moment conditions:
\begin{enumerate}[label=(D\arabic*), ref=D\arabic*]
    \item \label{cond:D1} The components of $\bm{Y}$ are i.i.d. and uniformly sub-exponential, and matrix $\widehat{\bm{Z}} = [\bm{X}, \widehat{\bm{X}} ] = \widehat{\bm{Q}} \Sigma^{1/2}$, where the rows of $\widehat{\bm{Q}}$ are i.i.d. with mean-zero, unit-variance, and uniformly sub-exponential entries. 
    \item \label{cond:D2}  $\pnorm{\Sigma}{\op} \vee \pnorm{\Sigma^{-1}}{\op} \leq \mathsf{M}$ for some constant $\mathsf{M} > 0$.
\end{enumerate}

% \begin{remark}
%     Compared to the assumption for independent features \eqref{cond:C1}, the assumption for correlated features \eqref{cond:D1} imposes a stronger moment condition on $\bm{Y}$, requiring the components to be almost surely bounded instead of uniformly sub-exponential. This additional restriction is necessary to apply the moderate deviation result in Theorem \ref{thm:moderate_deviation_general}, where the bounds involve implicit constants that depend on the sub-exponential norm of the summing random variables. We believe this is largely a technical issue. However, for simplicity, we do not pursue a refinement of this condition in the current work.
% \end{remark}

The proposition below, analogous to Proposition \ref{prop:conc_knockoff}, establishes a concentration result for the approximate knockoff statistics and quantifies the signal strength.

\begin{proposition}\label{prop:conc_knockoff_mc_c}
Assume that conditions \eqref{cond:D1} and \eqref{cond:D2} hold. Then if $\log p = \mathfrak{o}(\sqrt{n})$, there exists some constant $C = C(\mathsf{M}) > 0$ such that for any $j \in [p]$,
	    \begin{align*}
	    	\Prob \Big(  \bigabs{ \widehat{\bm{W}}_j - \widehat{\bm{w}}_j } \geq C\delta_n \Big) \leq   p^{-10} + \exp ( -n  \E^2 Y^2/C ),
	    \end{align*}
        	    where $\widehat{\bm{w}}_j \equiv  \sqrt{n}(\abs{\E X_j Y} - \abs{\E \widehat{X}_j Y} )/\E^{1/2}Y^2$ and $\delta_n \equiv  \frac{\log p}{\E^{1/2}Y^2} + \frac{\log^2 p}{\sqrt{ n\E Y^2}} $.
\end{proposition}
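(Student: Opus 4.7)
The plan is to adapt the three-step argument used in the proof of Proposition \ref{prop:conc_knockoff}, since the statement and proof structure are essentially identical up to the fact that $\E\widehat{X}_jY$ need no longer vanish for null features in the correlated setting. Specifically, I would establish three high-probability events: first, a lower bound $\pnorm{\bm{Y}}{}^2 \geq n\E Y^2/2$ via Lemma \ref{thm:conc_nonnegative}; second, a two-sided bound $|\pnorm{\bm{Y}}{} - \sqrt{n}\E^{1/2}Y^2| \leq C\log p$ via sub-exponential norm concentration; and third, the inner-product concentration bounds $|n^{-1}\bm{X}_{\cdot,j}^\top\bm{Y} - \E X_jY|$ and $|n^{-1}\widehat{\bm{X}}_{\cdot,j}^\top\bm{Y} - \E\widehat{X}_jY|$ each controlled by $C\log p/\sqrt{n}$, via the concentration inequality for sums of $1/2$-sub-exponential variables already invoked in the proof of Proposition \ref{prop:conc_knockoff}.

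The key preliminary observation needed for the third step is that the entries of $\widehat{\bm{X}}$ are uniformly sub-exponential under the new assumptions. Since $\widehat{\bm{Z}} = [\bm{X},\widehat{\bm{X}}] = \widehat{\bm{Q}}\Sigma^{1/2}$ with $\pnorm{\Sigma^{1/2}}{\op} \leq \sqrt{\mathsf{M}}$ by \eqref{cond:D2}, each entry of $\widehat{\bm{Z}}$ is a bounded-coefficient linear combination of the sub-exponential entries in the corresponding row of $\widehat{\bm{Q}}$, and is therefore uniformly sub-exponential with an Orlicz $1$-norm depending only on $\mathsf{M}$. Consequently, the products $\bm{X}_{i,j}\bm{Y}_i$ and $\widehat{\bm{X}}_{i,j}\bm{Y}_i$ have uniformly bounded Orlicz $\psi_{1/2}$-norms, so the concentration inequality applies with a constant $C$ depending only on $\mathsf{M}$.

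On the intersection of these three events, the same chain of estimates used in the proof of Proposition \ref{prop:conc_knockoff} yields
\begin{align*}
\biggabs{\frac{\abs{\bm{X}_{\cdot,j}^\top\bm{Y}}}{\pnorm{\bm{Y}}{}} - \frac{\sqrt{n}\abs{\E X_jY}}{\E^{1/2}Y^2}} \leq C\delta_n,
\end{align*}
together with the analogous bound for the knockoff term where $\E X_jY$ is replaced by $\E\widehat{X}_jY$. Using the decomposition
\begin{align*}
\widehat{\bm{W}}_j - \widehat{\bm{w}}_j = \bigg(\frac{\abs{\bm{X}_{\cdot,j}^\top\bm{Y}}}{\pnorm{\bm{Y}}{}} - \frac{\sqrt{n}\abs{\E X_jY}}{\E^{1/2}Y^2}\bigg) - \bigg(\frac{\abs{\widehat{\bm{X}}_{\cdot,j}^\top\bm{Y}}}{\pnorm{\bm{Y}}{}} - \frac{\sqrt{n}\abs{\E\widehat{X}_jY}}{\E^{1/2}Y^2}\bigg)
\end{align*}
and the triangle inequality then concludes the bound, with the failure probability $p^{-10} + \exp(-n\E^2Y^2/C)$ produced by a union bound over the three events.

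The only substantive difference from the independent case is that the second centering term $\sqrt{n}\abs{\E\widehat{X}_jY}/\E^{1/2}Y^2$ must be retained rather than set to $0$, which is a mild bookkeeping adjustment rather than a real obstacle. The main conceptual point that requires attention is that uniform sub-exponentiality of $\widehat{\bm{X}}$ now follows from controlling the joint covariance $\Sigma$ through \eqref{cond:D2}, rather than from the diagonal structure used in Proposition \ref{prop:conc_knockoff}; once this is in hand, the rest of the argument proceeds verbatim.
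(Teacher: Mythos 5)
Your proposal matches the paper's intended proof exactly: the paper's own proof is a single line stating that the argument ``follows the same structure as that of Proposition~\ref{prop:conc_knockoff}, with minor modifications,'' and what you have written is precisely that structure (the three concentration events, the triangle-inequality decomposition, and retaining the second centering term $\sqrt{n}\,\abs{\E\widehat X_j Y}/\E^{1/2}Y^2$). The bookkeeping is sound.

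One step deserves sharper justification. You assert that because $\pnorm{\Sigma^{1/2}}{\op}\leq\sqrt{\mathsf{M}}$ each entry $\widehat{\bm{Z}}_{ij}=\sum_k (\Sigma^{1/2})_{kj}\widehat{\bm{Q}}_{ik}$ is a ``bounded-coefficient linear combination'' of sub-exponential variables and hence sub-exponential with $\psi_1$-norm $C(\mathsf{M})$. Bounded coefficients alone do not give this: the naive triangle inequality for Orlicz norms yields $\pnorm{\widehat{\bm{Z}}_{ij}}{\psi_1}\lesssim \pnorm{(\Sigma^{1/2})_{\cdot,j}}{1}$, and the $\ell_1$-norm of a column of $\Sigma^{1/2}$ is not controlled by $\mathsf{M}$ alone (it can scale like $\sqrt{p}$). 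To get the claimed $C(\mathsf{M})$ bound you must invoke Bernstein's inequality for linear combinations of \emph{independent} sub-exponential variables, which gives a tail of the form $\exp\{-c\min(t^2/(K^2\pnorm{c}{2}^2),\,t/(K\pnorm{c}{\infty}))\}$ with $c=(\Sigma^{1/2})_{\cdot,j}$; then $\pnorm{c}{2}\vee\pnorm{c}{\infty}\leq\sqrt{\mathsf{M}}$ by \eqref{cond:D2} yields a uniform sub-exponential bound. This requires independence of the entries \emph{within} each row of $\widehat{\bm{Q}}$, which condition \eqref{cond:D1} does not state explicitly (unlike \eqref{cond:C1}, which does). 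The paper's terse proof glosses over the same point, so this is a shared ambiguity rather than a flaw unique to your write-up, but if you fill in the details you should state the independence assumption and route the argument through Bernstein rather than ``bounded coefficients.''
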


With slight abuse of notation, we define for any $t \geq 0$ and $j \in [p]$, 
\begin{align*}
	\mathcal{P}_j (t ) &\equiv \Prob_{(\mathsf{G}_1, \mathsf{G}_2)^\top \sim \mathcal{N}(0,  \Sigma_{\{j,j+p \}, \{j,j+p \} }  ) }(\abs{\mathsf{G}_1} - \abs{\mathsf{G}_2}  \geq t) \\
    &=\Prob_{(\mathsf{G}_1, \mathsf{G}_2)^\top \sim \mathcal{N}(0,  \Sigma_{\{j,j+p \}, \{j,j+p \} }  ) }(\abs{\mathsf{G}_1} - \abs{\mathsf{G}_2}  \leq -t).
\end{align*}
Recall that for any $t >0$, $\mathcal{P}(t) = \abs{\mathcal{H}_0}^{-1}\sum_{j \in \mathcal{H}_0} \mathcal{P}_j(t)$,  and for any $x \in [0,1/2]$, its inverse function  $\mathcal{P}^{-1}(x) =\sup \{t \geq 0 : \mathcal{P}(t) \geq x \}$.  To ensure robust performance, we impose the following conditions on the class of approximate knockoff statistics. 

\begin{enumerate}[label=(D\arabic*), ref=D\arabic*, start = 3]
	\item \label{cond:D3} For any constant $C >0$ independent of $n$, $a_n \equiv \#\{j \in \mathcal{H}_1: \widehat{\bm{w}}_j \geq C{\delta}_n  \} \to \infty $. Additionally,  $\mathfrak{m}_n/a_n \to 0$. Here, $\mathfrak{m}_n \equiv  \max_{j \in \mathcal{H}_0} \abs{M(j)}$ with 
$
    M(j) \equiv \big\{\ell \in \mathcal{H}_0 \setminus\{j\}: \sum_{\mathfrak{j} \in \{j, j + p\} } \big(\abs{ \Sigma_{\mathfrak{j}, \ell }} + \abs{ \Sigma_{\mathfrak{j}, \ell+p }}\big) \neq 0  \big\}.
$
	\item \label{cond:D4} For any $q \in (0,1)$, $(qa_n)^{-1} \sum_{j \in \mathcal{H}_1}\Prob ( \widehat{\bm{W}}_j <- \mathcal{P}^{-1} ( \frac{q  a_n }{2p}) ) \to 0$. 
\end{enumerate}

We are now ready to state our result on the asymptotic FDR control for marginal correlation knockoff statistics with correlated features.

\begin{theorem}\label{thm:FDR_mc_c}
Assume that conditions \eqref{cond:D1}--\eqref{cond:D4} hold. Then for any $q \in (0,1)$, if  $\log p = \mathfrak{o}(n^c)$ for some small enough constant $c > 0 $, conditions (1)--(3) in Theorem \ref{thm:general_framwork} are satisfied for $\alpha_n = \mathcal{P}^{-1} \big( \frac{q a_n}{2p} \big)$. Consequently, we have 
\begin{align*} 
\limsup_{n,p \to \infty} \mathrm{FDR} \leq q. 
\end{align*}
\end{theorem}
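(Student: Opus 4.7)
\textbf{Proof plan for Theorem \ref{thm:FDR_mc_c}.} The plan is to verify the three conditions of Theorem \ref{thm:general_framwork} with $\alpha_n = \mathcal{P}^{-1}(qa_n/(2p))$, paralleling the independent-case arguments in Section \ref{sec:proof_FDR_mcks_light} but carefully propagating the correlation structure encoded in $\Sigma$. The overall skeleton is: (i) prove a moderate deviation for $\widehat{\bm{W}}_j$ with the bivariate Gaussian reference $\mathcal{N}(0, \Sigma_{\{j,j+p\},\{j,j+p\}})$; (ii) upgrade this to an approximate symmetry statement by exploiting the swap-invariance of $\Sigma$; (iii) turn tail probabilities into indicator sums via a second-moment bound that uses the neighborhood $M(j)$; and (iv) localize $T_q$ using Proposition \ref{prop:conc_knockoff_mc_c} together with condition \eqref{cond:D3}.

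\emph{Step 1 (distributional approximation).} Conditional on $\bm{Y}$, the pair $(\bm{X}_{\cdot,j}^\top \bm{Y}, \widehat{\bm{X}}_{\cdot,j}^\top \bm{Y})/\|\bm{Y}\|$ is a normalized sum of i.i.d.\ two-dimensional vectors with covariance $\Sigma_{\{j,j+p\},\{j,j+p\}}$. I would apply a conditional Cramér-type moderate deviation (as in Theorem \ref{thm:moder_dis_mcks}), followed by the continuous mapping $(u,v) \mapsto |u|-|v|$, to obtain that for $j \in \mathcal{H}_0$ and $t$ in a window of the form $(0, n^{1/7}/C)$,
\[
\biggabs{\frac{\Prob(\pm\widehat{\bm{W}}_j \geq t)}{\mathcal{P}_j(t)} - 1} \leq C\bigg(\frac{(1+t^3)\log^4(np)}{\sqrt{n}} + \frac{p^{-10}}{\mathcal{P}_j(t)}\bigg).
\]
The key observation is that by \eqref{eq:cov_exchange}, swapping the $j$th and $(j+p)$th coordinates preserves the covariance, so $\Sigma_{j,j} = \Sigma_{j+p,j+p}$ and the reference bivariate Gaussian inherits the $|G_1|-|G_2| \overset{d}{=} |G_2|-|G_1|$ symmetry. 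Thus the same quantity $\mathcal{P}_j(t)$ approximates both the upper and lower tails of $\widehat{\bm{W}}_j$, which immediately yields condition (1) of Theorem \ref{thm:general_framwork} provided $\mathcal{P}^{-1}(\alpha_n) = \mathfrak{o}(n^{1/7})$ and $p^{10}\alpha_n \to \infty$, exactly as in Corollary \ref{cor:ratio_G}.

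\emph{Step 2 (indicator approximation).} For condition (2), I would follow the template of Lemma \ref{lem:indicator_app}: reduce to controlling $\var(\sum_{j \in \mathcal{H}_0} \bm{1}\{\pm\widehat{\bm{W}}_j \geq t\})$ and split pairs $(j,\ell)$ according to whether $\ell \in M(j)$. For $\ell \notin M(j)$, the relevant $2\times 2$ blocks of $\Sigma$ decouple, so a Gaussian analog (combined with the moderate deviation from Step 1) shows that $\cov(\bm{1}\{\pm\widehat{\bm{W}}_j \geq t\}, \bm{1}\{\pm\widehat{\bm{W}}_\ell \geq t\})$ is negligible relative to $\mathcal{P}_j(t)\mathcal{P}_\ell(t)$; for $\ell \in M(j)$ we bound the covariance trivially by $\mathcal{P}_j(t) \wedge \mathcal{P}_\ell(t)$. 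Summing yields $\var \lesssim |\mathcal{H}_0|\mathcal{P}(t) + \mathfrak{m}_n |\mathcal{H}_0|\mathcal{P}(t) + \text{lower order}$, so Chebyshev gives the ratio bound $\mathfrak{o}_\mathbf{P}(1)$ uniformly in $t \in (0, \mathcal{P}^{-1}(\alpha_n))$ provided $p\alpha_n/\mathfrak{m}_n \to \infty$. Taking $\alpha_n \asymp a_n/p$ reduces this to $\mathfrak{m}_n/a_n \to 0$, which is condition \eqref{cond:D3}.

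\emph{Step 3 (localization of $T_q$ and main obstacle).} For condition (3), Proposition \ref{prop:conc_knockoff_mc_c} gives $\widehat{\bm{W}}_j \geq C'\delta_n$ for at least $a_n$ signals with high probability, so the denominator in the knockoff+ ratio at threshold $t = \mathcal{P}^{-1}(qa_n/(2p))$ is at least $a_n$. To show the numerator is small enough, condition \eqref{cond:D4} bounds $\Prob(\widehat{\bm{W}}_j < -t)$ on $\mathcal{H}_1$, while on $\mathcal{H}_0$ the approximate symmetry from Step 1 combined with the indicator approximation from Step 2 gives $\sum_{j\in\mathcal{H}_0}\bm{1}\{\widehat{\bm{W}}_j \leq -t\} \lesssim |\mathcal{H}_0|\mathcal{P}(t) \lesssim qa_n/2$ with high probability. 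The main obstacle, and where correlation makes things genuinely harder than in Section \ref{sec:proof_FDR_mcks_light}, is handling the off-diagonal covariance in the conditional moderate deviation: the reference Gaussian is no longer $\sigma_j^2 I_2$, so one must verify that the conditional $2\times 2$ sample covariance of $(\bm{X}_{\cdot,j}, \widehat{\bm{X}}_{\cdot,j})^\top \bm{Y}/\|\bm{Y}\|$ concentrates at $\Sigma_{\{j,j+p\},\{j,j+p\}}$ uniformly in $j$ at a rate fast enough not to spoil the moderate-deviation window — this is a quadratic-form concentration (Hanson--Wright in spirit) and will drive the requirement $\log p = \mathfrak{o}(n^c)$. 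Once this is in hand, all three conditions of Theorem \ref{thm:general_framwork} are verified and the FDR bound follows directly.
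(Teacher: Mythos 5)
Your overall skeleton matches the paper's proof of Theorem~\ref{thm:FDR_mc_c}: apply Theorem~\ref{thm:moderate_deviation} conditionally on $\bm{Y}$ with $\mathsf{X}_i = (\sqrt{n}\bm{X}_{i,j}\bm{Y}_i/\pnorm{\bm{Y}}{}, \sqrt{n}\widehat{\bm{X}}_{i,j}\bm{Y}_i/\pnorm{\bm{Y}}{})^\top$ to get the moderate deviation with reference $\mathcal{N}(0,\Sigma_{\{j,j+p\},\{j,j+p\}})$; use the swap-invariance of $\Sigma$ to identify $\mathcal{P}_j(t)$ as the common approximation of both tails; run the variance bound split by the neighborhood $M(j)$ in the style of Lemma~\ref{lem:var_bound_ols} to get the indicator approximation; and localize $T_q$ via Proposition~\ref{prop:conc_knockoff_mc_c} with conditions \eqref{cond:D3}--\eqref{cond:D4}. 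Steps 1 and 2 of your plan are essentially identical to the paper's Section~\ref{sec:proof_sec_mcks_correlated}.

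However, your ``main obstacle'' diagnosis in Step 3 is mistaken, and it even contradicts the correct observation you made at the start of Step 1. Conditional on $\bm{Y}$, the covariance of $(\bm{X}_{\cdot,j}^\top\bm{Y},\,\widehat{\bm{X}}_{\cdot,j}^\top\bm{Y})/\pnorm{\bm{Y}}{}$ is \emph{exactly} $\Sigma_{\{j,j+p\},\{j,j+p\}}$ (since each row contributes $\bm{Y}_i^2\,\Sigma_{\{j,j+p\},\{j,j+p\}}/\pnorm{\bm{Y}}{}^2$ and these sum to the full block), so there is no sample covariance to concentrate and no Hanson--Wright step for the marginal correlation statistic. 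You appear to have imported the difficulty from the OLS/debiased-Lasso cases, where the conditional covariance involves $(\widehat{\bm{Z}}^\top\widehat{\bm{Z}}/n)^{-1}$ and genuinely requires the concentration and shrinkage machinery of Lemmas~\ref{lem:quad_conc} and \ref{lem:expectaion_asymp}. In the present theorem, the constraint $\log p = \mathfrak{o}(n^c)$ arises instead because the moderate deviation error $(1+t^3)\log^3(np)/\sqrt{n}$ must vanish uniformly for $t$ up to $\mathcal{P}^{-1}(qa_n/(2p)) = \mathcal{O}(\log p)$, which translates to roughly $\log^{6}p/\sqrt{n}\to 0$. This is a misdiagnosis rather than a gap --- if you skip the nonexistent covariance-concentration step, the rest of your plan goes through exactly as the paper's proof does --- but you should recognize that the correlated marginal-correlation case is actually \emph{not} harder at the Gaussian-approximation step than the independent case; the real new ingredients are the non-diagonal reference Gaussian (already handled by Theorem~\ref{thm:moderate_deviation}) and the neighborhood $M(j)$ in the variance bound (condition~\eqref{cond:D3}).
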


\section{Additional proofs and technical details} \label{new.sec.addproandtechdet}

\subsection{Proofs for Section \ref{sec:proof_FDR_mcks_light}}\label{sec:proof_sec_FDR_mcks_light}

\subsubsection{Proof of Theorem \ref{thm:moder_dis_mcks}}
We first adapt the proof of the moderate deviation result from \cite[Theorem 4.2]{fang2023p} to address a special class of sets in two dimensions (i.e., $d = 2$ therein). Specifically, let us consider sets 
\begin{align}\label{def:knock_sets}
    \mathcal{S}_{t,\pm} \equiv \{(x,y): \pm(\abs{x} - \abs{y}) \geq t \}
\end{align}
for any $t > 0$.

\begin{theorem}\label{thm:moderate_deviation}
    Let $W = n^{-1/2}\sum_{i \in [n]} \mathsf{X}_i \in \R^2$, where $\{\mathsf{X}_1,\ldots,\mathsf{X}_n\}$ are independent, $\E \mathsf{X}_i = 0$ for all $i \in [n]$, and $\var W = \Sigma_W$. Assume that  $\pnorm{\mathsf{X}_i}{\psi_1} \leq b$ for all $i \in [n]$ and $\pnorm{\Sigma_W^{-1}}{\op} \vee \pnorm{\Sigma_W}{\op} \leq K$ for some constant $K$. Then there exist some constants $C_1,C_2,C_3 > 0$ which only depend on $K$ such that if
    % let $\mathsf{G} \sim \mathcal{N}(0,I_2)$. Then there exist some universal constants $C_1, C_2, C_3 > 0$ such that if 
    $
        C_1 b^2 (1 + t^3)\log n \leq \sqrt{n},
   $
    we have that for $\# \in \{+,- \}$ and $t \in (0, n^{1/7} /C_2)$,
    \begin{align*}
       \biggabs{ \frac{\Prob \big( W \in\mathcal{S}_{t,\#}  \big)  }{\Prob \big( \mathsf{G} \in  \mathcal{S}_{t,\#} \big)} - 1} \leq  C_3b^2 \cdot \frac{ (1 + t^3)\log n}{\sqrt{n}},
    \end{align*}
    where $\mathsf{G} \sim \mathcal{N}(0,\Sigma_W)$.
\end{theorem}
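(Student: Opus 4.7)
The plan is to reduce the moderate deviation statement for the non-convex tail sets $\mathcal{S}_{t,\pm}$ to the general convex-set moderate deviation result of \cite[Theorem 4.2]{fang2023p} by a geometric decomposition. Specifically, I would partition $\mathcal{S}_{t,+}$ according to the signs of the two coordinates into four closed polyhedral cones
\begin{align*}
A_1 &= \{x \geq 0,\, y \geq 0,\, x - y \geq t\}, & A_2 &= \{x \geq 0,\, y \leq 0,\, x + y \geq t\}, \\
A_3 &= \{x \leq 0,\, y \geq 0,\, -x - y \geq t\}, & A_4 &= \{x \leq 0,\, y \leq 0,\, y - x \geq t\},
\end{align*}
with the analogous four-cone decomposition for $\mathcal{S}_{t,-}$. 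Each $A_k$ is an intersection of three linear half-spaces and therefore a convex subset of $\mathbb{R}^2$ of the type handled in the cited theorem.

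Next I would apply \cite[Theorem 4.2]{fang2023p} in dimension $d = 2$ separately to each piece $A_k$. Under the hypotheses $\pnorm{\mathsf{X}_i}{\psi_1} \leq b$ and $\pnorm{\Sigma_W^{-1}}{\op} \vee \pnorm{\Sigma_W}{\op} \leq K$, this yields a uniform ratio bound
\begin{align*}
\biggabs{\frac{\Prob(W \in A_k)}{\Prob(\mathsf{G} \in A_k)} - 1} \leq C b^2 \cdot \frac{(1 + t^3)\log n}{\sqrt{n}}, \qquad k \in \{1,2,3,4\},
\end{align*}
on the range $t \in (0, n^{1/7}/C_2)$, with the auxiliary constraint $C_1 b^2 (1+t^3)\log n \leq \sqrt{n}$ imported directly from the source theorem. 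The argument then closes by the elementary observation that this type of ratio bound is preserved under disjoint unions: writing $\Prob(W \in A_k) = \Prob(\mathsf{G} \in A_k)(1 + r_k)$ with $\max_k |r_k| \leq \epsilon$, one has
\begin{align*}
\frac{\Prob(W \in \mathcal{S}_{t,+})}{\Prob(\mathsf{G} \in \mathcal{S}_{t,+})} - 1 = \frac{\sum_k r_k \Prob(\mathsf{G} \in A_k)}{\sum_k \Prob(\mathsf{G} \in A_k)},
\end{align*}
which is a convex combination of the $r_k$'s and hence bounded in absolute value by $\epsilon$. The same reasoning handles $\mathcal{S}_{t,-}$, yielding the claim.

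The main obstacle I anticipate is the book-keeping when invoking \cite[Theorem 4.2]{fang2023p}: one must check that the geometric constants (controlling the smoothness/surface-area factors that typically enter Edgeworth-type expansions for convex sets) remain uniform across the four polyhedral cones, and that the polynomial dependence on $t$ translates correctly when $d$ is fixed at $2$. A secondary concern is that the piecewise ratios are well-defined, i.e., that $\Prob(\mathsf{G} \in A_k) > 0$ for each $k$; this is immediate because each $A_k$ has nonempty interior and $\mathsf{G}$ is non-degenerate thanks to $\pnorm{\Sigma_W^{-1}}{\op} \leq K$. Once these verifications are carried out, the decomposition-plus-union step above is entirely elementary and delivers the stated bound.
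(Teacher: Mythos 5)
Your four-wedge decomposition and the recombination step at the end are both sound, but the entire argument is carried by the assertion that \cite[Theorem~4.2]{fang2023p} is a ``general convex-set moderate deviation result'' that can be invoked as a black box on each cone $A_k$. That is precisely what the paper's authors decline to do: they write that they \emph{adapt the proof} of Theorem~4.2 to ``a special class of sets,'' which is an explicit signal that the theorem, as stated, does not cover $\mathcal{S}_{t,\pm}$ --- and there is no reason to expect it covers arbitrary polyhedral cones either. Relative-error Gaussian approximation cannot be uniform over all convex sets (thin slabs break it outright), so any such theorem must be tied to a specific geometry, and the ``geometric constants'' you flag as a bookkeeping concern are in fact the whole content of the proof, not a side check.

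What the paper actually does is this: it uses \cite[Theorem~4.1]{fang2023p}, the $p$-Wasserstein bound, to couple $W$ and $\mathsf{G}$ so that $\E\pnorm{\Sigma_W^{-1/2}(W-\mathsf{G})}{}^{\mathsf{q}} \le \mathsf{B}_{n,\mathsf{q}}^{\mathsf{q}}$; sandwiches $\Prob(W \in \mathcal{S}_{t,+})$ between $\Prob(\mathsf{G} \in \mathcal{S}_{t,+})$, the coupling-failure term, and the Gaussian \emph{shell} probability $\Prob(t-\epsilon < \abs{\mathsf{G}_1}-\abs{\mathsf{G}_2} \le t)$; and then establishes by hand the anti-concentration estimate
\[
\Prob\bigl(t-\epsilon < \abs{\mathsf{G}_1}-\abs{\mathsf{G}_2} \le t\bigr) \;\le\; C\,\epsilon\, t\, e^{C\epsilon t}\cdot \Prob\bigl(\abs{\mathsf{G}_1}-\abs{\mathsf{G}_2} \ge t\bigr),
\]
by decomposing the shell into the four sign quadrants and controlling the ratio of Gaussian integrals via a derivative argument. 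Optimizing $\epsilon \asymp \mathsf{B}_{n,\mathsf{q}}$ and $\mathsf{q} \asymp t^2 + \log n$ then produces the $(1+t^3)\log n/\sqrt{n}$ rate. Your quadrant decomposition reappears inside that shell computation --- you have the right geometric instinct --- but the $\epsilon t$ shell-to-tail ratio, which is what makes the final rate come out, still has to be \emph{proved} for each wedge (or for $\mathcal{S}_{t,\pm}$ directly), not imported. Until you verify that the cited theorem genuinely covers polyhedral cones with a usable constant, or you reproduce the shell bound yourself, the argument has a load-bearing gap.
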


\noindent\textit{Proof of Theorem \ref{thm:moderate_deviation}}. 
%\begin{proof}
We only show the proof for the case when $\# = +$, and the proof for $\# = -$ is similar so we omit repetitive details. The constant $C$ below may depend on $K$ which may change from line to line hereafter. For $t \leq 4$, the conclusion simply follows from the multi-dimensional Berry--Esseen Theorem with the fact that $\Prob \big( \mathsf{G} \in  \mathcal{S}_{t,\#} \big) > C$. As such, we will focus on the case when $t > 4$.

First note by \cite[Theorem 4.1]{fang2023p} that for any $\mathsf{q} \geq 2$,
\begin{align*}
    \mathcal{W}_\mathsf{q}(\Sigma_W^{-1/2}W, \Sigma_W^{-1/2}\mathsf{G}) \leq C\bigg( \frac{\mathsf{q}}{\sqrt{n}} + \frac{\mathsf{q}^{5/2}}{n} \bigg) b^2 \equiv \mathsf{B}_{n,\mathsf{q}},
\end{align*}
where $\mathcal{W}_\mathsf{q}(\cdot,\cdot)$ denotes the $\mathsf{q}$-Wasserstein distance. Hence, we can couple $W$ and $\mathsf{G}$ such that $\E\pnorm{ \Sigma_W^{-1/2} W- \Sigma_W^{-1/2} \mathsf{G}}{}^\mathsf{q} \leq (\mathsf{B}_{n,q})^\mathsf{q}$. Then for any $t > 0$ and $\epsilon \in (0,t/2)$, it holds that 
\begin{align*}
    &\Prob (W \in \mathcal{S}_{t} ) =  \Prob (\abs{W_1} - \abs{W_2} \geq t) \\
    &\leq \Prob (\abs{\mathsf{G}_1} - \abs{\mathsf{G}_2} \geq t - \epsilon) + \Prob \big(\abs{\abs{W_1} - \abs{W_2} - \abs{\mathsf{G}_1} + \abs{\mathsf{G}_2}   } > \epsilon  \big) \\
    &= \Prob (\abs{\mathsf{G}_1} - \abs{\mathsf{G}_2} \geq t  ) + \Prob (t-\epsilon\leq \abs{\mathsf{G}_1} - \abs{\mathsf{G}_2} \leq t ) \\
    &\quad + \Prob \big(\abs{\abs{W_1} - \abs{W_2} - \abs{\mathsf{G}_1} + \abs{\mathsf{G}_2}   } > \epsilon  \big). 
\end{align*}
We will bound the last two terms above separately. 

\medskip

\emph{Term $ \Prob \big(\abs{\abs{W_1} - \abs{W_2} - \abs{\mathsf{G}_1} + \abs{\mathsf{G}_2}   } > \epsilon  \big) $}: Since
\begin{align*}
    \abs{\abs{W_1} - \abs{W_2} - \abs{\mathsf{G}_1} + \abs{\mathsf{G}_2}   } \leq \abs{\abs{W_1}-  \abs{\mathsf{G}_1} } + \abs{ \abs{W_2} -\abs{\mathsf{G}_2} } \leq K^{1/2}\cdot  \pnorm{ \Sigma_W^{-1/2} W-  \Sigma_W^{-1/2} \mathsf{G}}{},
\end{align*}
it holds that for any $\mathsf{q} \geq 2$,
\begin{align*}
    \Prob \big(\abs{\abs{W_1} - \abs{W_2} - \abs{\mathsf{G}_1} + \abs{\mathsf{G}_2}   } > \epsilon  \big) \leq \Prob \big( \pnorm{ \Sigma_W^{-1/2} W-  \Sigma_W^{-1/2} \mathsf{G}}{} > \epsilon/K^{1/2}  \big) \\
    \leq (\epsilon/K^{1/2})^{-\mathsf{q}} \E\pnorm{ \Sigma_W^{-1/2}W- \Sigma_W^{-1/2}\mathsf{G}}{}^\mathsf{q}.
\end{align*}
Hence, we have 
\begin{align*}
    \frac{\Prob \big(\abs{\abs{W_1} - \abs{W_2} - \abs{\mathsf{G}_1} + \abs{\mathsf{G}_2}   } > \epsilon  \big) }{ \Prob (\abs{\mathsf{G}_1} - \abs{\mathsf{G}_2} \geq t  )} \leq \frac{(\epsilon/K^{1/2})^{-\mathsf{q}}(\mathsf{B}_{n,\mathsf{q}})^\mathsf{q}}{\Prob (\abs{\mathsf{G}_1} - \abs{\mathsf{G}_2} \geq t  ) }.
\end{align*}
Setting $ \epsilon = K^{1/2}e \mathsf{B}_{n,\mathsf{q}} $ and $\mathsf{q} = \abs{\log \Prob (\abs{\mathsf{G}_1} - \abs{\mathsf{G}_2} \geq t  ) } + \abs{\log \Delta}$ for some $\Delta > 0$, the bound above becomes
\begin{align}\label{ineq:moder_est_2}
     \frac{\Prob \big(\abs{\abs{W_1} - \abs{W_2} - \abs{\mathsf{G}_1} + \abs{\mathsf{G}_2}   } > \epsilon  \big) }{ \Prob (\abs{\mathsf{G}_1} - \abs{\mathsf{G}_2} \geq t  )} \leq \Delta.
\end{align}
This completes the bound for $ \Prob \big(\abs{\abs{W_1} - \abs{W_2} - \abs{\mathsf{G}_1} + \abs{\mathsf{G}_2}   } > \epsilon  \big) $.

\medskip

%\noindent
\emph{Term $\Prob \big( t - \epsilon <  \abs{\mathsf{G}_1} - \abs{\mathsf{G}_2} \leq t \big) $}: We first decompose $\Prob \big( t - \epsilon <  \abs{\mathsf{G}_1} - \abs{\mathsf{G}_2} \leq t \big) $ as 
\begin{align*}
    \Prob \big( t - \epsilon <  \abs{\mathsf{G}_1} - \abs{\mathsf{G}_2} \leq t \big)= \sum_{\star, \ast \in \{+,- \} }\Prob \big( t - \epsilon <  \abs{\mathsf{G}_1} - \abs{\mathsf{G}_2} \leq t,  \star\mathsf{G}_1 \geq 0, \ast \mathsf{G}_2 \geq 0 \big).
\end{align*}
We will only show how to bound the case when $\star = +, \ast = +$, the other cases can be handled similarly. Let us define $\Lambda \equiv \Sigma_W^{-1}$ and 
\begin{align*}
    f(\epsilon) \equiv \frac{\Prob \big( t - \epsilon <  \abs{\mathsf{G}_1} - \abs{\mathsf{G}_2} \leq t, \mathsf{G}_1 \geq 0, \mathsf{G}_2 \geq 0  \big)}{\Prob (\abs{\mathsf{G}_1} - \abs{\mathsf{G}_2} \geq t , \mathsf{G}_1 \geq 0, \mathsf{G}_2 \geq 0  )  } = \frac{\int_{0}^\infty \int_{\bm{x}_1+t-\epsilon}^{\bm{x}_1+t} \exp \bigg(-\frac{\bm{x}^\top \Lambda\bm{x}  }{2} \bigg) \mathrm{d} \bm{x}_2\mathrm{d} \bm{x}_1}{\int_{0}^\infty \int_{\bm{x}_1+t}^{\infty} \exp \bigg(-\frac{\bm{x}^\top \Lambda\bm{x}  }{2} \bigg) \mathrm{d} \bm{x}_2\mathrm{d} \bm{x}_1}.
\end{align*}
    Obviously, $f(0) = 0$. Taking the derivative yields that 
    \begin{align*}
        f'(\epsilon) &= \frac{\int_{0}^\infty  \exp \Big( - \big( (\bm{x}_1 + t -\epsilon)^2 \Lambda_{11} + 2(\bm{x}_1 + t -\epsilon)\bm{x}_1 \Lambda_{12} + \bm{x}_1^2\Lambda_{22} \big)/2 \Big) \mathrm{d} \bm{x}_1}{\int_{0}^\infty \int_{\bm{x}_1+t}^{\infty} \exp \bigg(-\frac{\bm{x}^\top \Lambda\bm{x}  }{2} \bigg) \mathrm{d} \bm{x}_2\mathrm{d} \bm{x}_1}\\
        &\leq \frac{t\int_{0}^\infty  \exp \Big( - \big( (\bm{x}_1 + t -\epsilon)^2 \Lambda_{11} + 2(\bm{x}_1 + t -\epsilon)\bm{x}_1 \Lambda_{12} + \bm{x}_1^2\Lambda_{22} \big) /2\Big) \mathrm{d} \bm{x}_1}{\int_{0}^\infty  \exp \Big( - \big( (\bm{x}_1 + t + t^{-1})^2 \Lambda_{11} + 2(\bm{x}_1 + t +t^{-1})\bm{x}_1 \Lambda_{12} + \bm{x}_1^2\Lambda_{22} \big) /2\Big)\mathrm{d} \bm{x}_1}.
    \end{align*}
    
Using the facts that 
    \begin{enumerate}
        \item[ $\bullet$ ] $\exp \big( - \big( (\bm{x}_1 + t -\epsilon)^2 \Lambda_{11} + 2(\bm{x}_1 + t -\epsilon)\bm{x}_1 \Lambda_{12} + \bm{x}_1^2\Lambda_{22} \big) /2\big)\leq  \exp \big( -\lambda_{\min} (\Lambda ) ((\bm{x}_1 + t -\epsilon)^2 +\bm{x}_1^2) /2  \big)$ and 
        \item[ $\bullet$ ] $\exp \big( - \big( (\bm{x}_1 + t +t^{-1})^2 \Lambda_{11} + 2(\bm{x}_1 + t +t^{-1})\bm{x}_1 \Lambda_{12} + \bm{x}_1^2\Lambda_{22} \big)/2 \big) \geq \exp \big( -\lambda_{\max} (\Lambda ) ((\bm{x}_1 + t +t^{-1})^2 +\bm{x}_1^2)  /2 \big)$,
    \end{enumerate}
    we can show that for $\mathsf{z}_1$ to be chosen later,
    \begin{align*}
        f'(\epsilon) &\leq \frac{t\int_{0}^{\mathsf{z}_1t}  \exp \Big( - \big( (\bm{x}_1 + t -\epsilon)^2 \Lambda_{11} + 2(\bm{x}_1 + t -\epsilon)\bm{x}_1 \Lambda_{12} + \bm{x}_1^2\Lambda_{22} \big)/2 \Big) \mathrm{d} \bm{x}_1}{\int_{0}^{\mathsf{z}_1t} \exp \Big( - \big( (\bm{x}_1 + t +t^{-1})^2 \Lambda_{11} + 2(\bm{x}_1 + t +t^{-1})\bm{x}_1 \Lambda_{12} + \bm{x}_1^2\Lambda_{22} \big) /2\Big)\mathrm{d} \bm{x}_1} \\
        &+\frac{t\int_{\mathsf{z}_1t}^{\infty}   \exp \big( -\lambda_{\min} (\Lambda ) ((\bm{x}_1 + t -\epsilon)^2 +\bm{x}_1^2) /2  \big) \mathrm{d} \bm{x}_1}{ \int_{0}^{t} \exp \big( -\lambda_{\max} (\Lambda ) ((\bm{x}_1 + t +t^{-1})^2 +\bm{x}_1^2)  /2 \big)\mathrm{d} \bm{x}_1} \equiv f_1 + f_2.
    \end{align*}
    For $f_2$, choosing  $\mathsf{z}_1$ such that $\mathsf{z}_1 \geq \sqrt{10 \lambda_{\max} (\Lambda ) / \lambda_{\min} (\Lambda )}$, with some calculations we can obtain that 
    \begin{align*}
        f_2 &\leq \frac{ t\exp \big( -\lambda_{\min} (\Lambda ) (\mathsf{z}_1 t)^2/2\big) \int_{\mathsf{z}_1t}^{\infty}   \exp \big( -\lambda_{\min} (\Lambda )(\bm{x}_1 + t -\epsilon)^2 /2  \big) \mathrm{d} \bm{x}_1}{ \exp \big( -\lambda_{\max} (\Lambda ) ((2 t +t^{-1})^2 +t^2)  /2 \big)}\\
        &\leq C\frac{ t\exp \big( -\lambda_{\min} (\Lambda ) (\mathsf{z}_1 t)^2/2\big) }{ \exp \big( -\lambda_{\max} (\Lambda ) ((2 t +t^{-1})^2 +t^2)  /2 \big)} \leq Ct.
    \end{align*}
    
     For $f_1$, let us consider two cases: $\Lambda_{12} > 0$ and $\Lambda_{12} \leq 0$. If $\Lambda_{12} > 0$, it holds that 
     \begin{align*}
        f_1 \leq  \frac{t\exp \big(\epsilon (1+\mathsf{z}_1)t\Lambda_{11} + \epsilon\mathsf{z}_1 t\Lambda_{12} \big)}{ \exp \big(-(2+\mathsf{z}_1)\Lambda_{11} - \mathsf{z}_1 \Lambda_{12}\big)} \leq t \exp \big((\epsilon t + 1)[(2+\mathsf{z}_1)\Lambda_{11}+\mathsf{z}_1\Lambda_{12}] \big).
    \end{align*}
    If $\Lambda_{12} \leq 0$, we have that 
    \begin{align*}
        f_1 \leq  \frac{t\exp \big(\epsilon (1+\mathsf{z}_1)t\Lambda_{11} \big) }{ \exp \big(-(2+\mathsf{z}_1)\Lambda_{11} \big)} \leq t \exp \big((\epsilon t+1)(2+\mathsf{z}_1)\Lambda_{11} \big).
    \end{align*}
        Hence, an application of the fact that $\pnorm{\Lambda}{\max} \leq \lambda_{\max}(\Lambda)$ leads to 
    \begin{align*}
        f_1 \leq t \exp \big(4\lambda_{\max}(\Lambda)(\epsilon t + 1) (1 + \mathsf{z}_1)  \big).
    \end{align*}
    Combining the estimates for $f_1$ and $f_2$ above, we can obtain by the first-order Taylor expansion that 
    \begin{align*}
     &\Prob \big( t - \epsilon <  \abs{\mathsf{G}_1} - \abs{\mathsf{G}_2} \leq t, \mathsf{G}_1 \geq 0, \mathsf{G}_2 \geq 0  \big) = \Prob (\abs{\mathsf{G}_1} - \abs{\mathsf{G}_2} \geq t,\mathsf{G}_1 \geq 0, \mathsf{G}_2 \geq 0  ) \cdot f(\epsilon) \nonumber\\
     &\leq C\epsilon t  \big(\exp \big(C(\epsilon t + 1) \big) \cdot \Prob (\abs{\mathsf{G}_1} - \abs{\mathsf{G}_2} \geq t, \mathsf{G}_1 \geq 0, \mathsf{G}_2 \geq 0  ).
\end{align*}
Similar bounds hold for other cases when $\star, \ast \in \{+,-\}$. Combining, we have that
\begin{align}\label{ineq:moder_est_1}
     &\Prob \big( t - \epsilon <  \abs{\mathsf{G}_1} - \abs{\mathsf{G}_2} \leq t  \big) = \Prob (\abs{\mathsf{G}_1} - \abs{\mathsf{G}_2} \geq t  ) \cdot f(\epsilon) \nonumber\\
     &\leq C\epsilon t  \big(\exp \big(C(\epsilon t + 1) \big) \cdot \Prob (\abs{\mathsf{G}_1} - \abs{\mathsf{G}_2} \geq t ).
\end{align}
This completes the bound for $\Prob \big( t - \epsilon <  \abs{\mathsf{G}_1} - \abs{\mathsf{G}_2} \leq t \big)$.

Moreover, we have
\begin{align*}
    &\Prob (\abs{\mathsf{G}_1} - \abs{\mathsf{G}_2} \geq t  ) = \frac{1}{2\pi \sqrt{\det (\Sigma_W)}} \int_{0}^\infty \int_{\bm{x}_1+t}^{\infty} \exp \bigg(-\frac{\bm{x}^\top \Lambda\bm{x}  }{2} \bigg) \mathrm{d} \bm{x}_2\mathrm{d} \bm{x}_1 \\
    &\geq \frac{1}{2\pi \sqrt{\det (\Sigma_W)}} \int_{0}^\infty  \exp \bigg(-  \frac{\lambda_{\max}(\Lambda)\bm{x}_1^2  }{2}  \bigg) \int_{\bm{x}_1+t}^{\infty} \exp \bigg(-  \frac{\lambda_{\max}(\Lambda)\bm{x}_2^2  }{2} \bigg) \mathrm{d} \bm{x}_2\mathrm{d} \bm{x}_1 \\
    &\geq \frac{1}{2\pi \lambda_{\max}(\Lambda) \sqrt{\det (\Sigma_W)}} \int_{0}^\infty (\bm{x}_1 + t)^{-1} \exp \bigg(-  \frac{\lambda_{\max}(\Lambda)(\bm{x}_1^2 +(\bm{x}_1+t)^2) }{2}  \bigg)   \mathrm{d} \bm{x}_1 \\
    & \geq \frac{1}{2\pi \lambda_{\max}(\Lambda) \sqrt{\det (\Sigma_W)}} \int_{0}^t (\bm{x}_1 + t)^{-1} \exp \bigg(-  \frac{\lambda_{\max}(\Lambda)(\bm{x}_1^2 +(\bm{x}_1+t)^2) }{2}  \bigg)   \mathrm{d} \bm{x}_1\\
    &\geq \frac{1}{4\pi \lambda_{\max}(\Lambda) \sqrt{\det (\Sigma_W)}}\exp \bigg(-  \frac{5\lambda_{\max}(\Lambda)t^2 }{2}  \bigg).
\end{align*}

Now with $\Delta = n^{-1}$, we have 
\begin{align*}
  &\mathsf{q} =   \abs{\log \Prob (\abs{\mathsf{G}_1} - \abs{\mathsf{G}_2} \geq t  ) }+ \log n \\
  &\leq \biggabs{\log \frac{1}{4\pi \lambda_{\max}(\Lambda) \sqrt{\det (\Sigma_W)}}\exp \bigg(-  \frac{5\lambda_{\max}(\Lambda)t^2 }{2}  \bigg)} + \log n\leq C(\log n +  t^2).
\end{align*}
Then as long as 
\begin{align*}
    \epsilon t &= C e\bigg( \frac{\mathsf{q}}{\sqrt{n}} + \frac{\mathsf{q}^{5/2}}{n} \bigg) b^2 t \leq C b^2 t \bigg( \frac{t^2 +\log n }{\sqrt{n}} + \frac{t^5 +\log^{5/2} n}{n} \bigg) \\
    &\leq Cb^2 \cdot \frac{ (1 + t^3)\log n}{\sqrt{n}} \leq 1,
\end{align*}
% \begin{align*}
%     \epsilon t = 2C e\bigg( \frac{q}{\sqrt{n}} + \frac{q^{5/2}}{n} \bigg) b^2 t \leq C b^2 t \bigg( \frac{t^2 +\log 3n }{\sqrt{n}} + \frac{t^5 +\log^{5/2} 3n}{n} \bigg) \leq 1,
% \end{align*}
we can combine (\ref{ineq:moder_est_2}) and (\ref{ineq:moder_est_1}) above to derive that
\begin{align*}
    &\frac{ \Prob (\abs{W_1} - \abs{W_2} \geq t )}{ \Prob (\abs{\mathsf{G}_1} - \abs{\mathsf{G}_2} \geq t  )} \\
    & \leq 1 + \frac{\Prob (t-\epsilon\leq \abs{\mathsf{G}_1} - \abs{\mathsf{G}_2} \leq t ) + \Prob \big(\abs{\abs{W_1} - \abs{W_2} - \abs{\mathsf{G}_1} + \abs{\mathsf{G}_2}   } > \epsilon  \big)}{ \Prob (\abs{\mathsf{G}_1} - \abs{\mathsf{G}_2} \geq t  )} \\
        &\leq 1 +  C \epsilon t e^{C\epsilon t} + \Delta \leq 1 + Cb^2 \cdot \frac{ (1 + t^3)\log n}{\sqrt{n}}.
\end{align*}
A similar argument establishes the converse, which completes the proof of Theorem \ref{thm:moderate_deviation}. 
%\end{proof}

%\begin{proof}[Proof of Theorem \ref{thm:moder_dis_mcks}]
	     We focus on the case when $\mathsf{W} = \widehat{\bm{W}}$, since the other cases can be treated similarly. 
     The constant $C$ below can depend on $\mathsf{M}$, whose value may change from line to line. First note that by the standard sub-exponential concentration, we have 
     \begin{align}\label{ineq:def_of_E_Y}
         \Prob \bigg(\mathcal{E}_Y \equiv \bigg\{ \frac{\max_{i \in [n]} \abs{\bm{Y}_i}}{\pnorm{\bm{Y}}{} } \leq C \frac{\log n p } {\sqrt{n}}  \bigg\} \bigg) \geq 1 - p^{-10}.
     \end{align}   
     Since $\bm{X}_{\cdot,j}$, $\widehat{\bm{X}}_{\cdot,j}$, and $\bm{Y}$ are independent for any $j \in \mathcal{H}_0$, we can apply Theorem \ref{thm:moderate_deviation} by letting $\mathsf{X}_i$ therein be 
 $( \sqrt{n}{\bm{X}}_{i,j}\bm{Y}_i/(\sigma_j \pnorm{\bm{Y}}{}),  \sqrt{n}\widehat{\bm{X}}_{i,j}\bm{Y}_i/(\sigma_j\pnorm{\bm{Y}}{}))^\top$ for any $i \in [n]$. This yields that on event $\mathcal{E}_Y$, for any $j \in \mathcal{H}_0$ and $t \in (0, n^{1/7}/C)$,
      \begin{align}\label{ineq:est_conditionY}
         \biggabs{ \frac{\Prob^{\bm{Y}} \big( \widehat{\bm{W}}_j \geq t \big)}{\mathcal{P}_j(t) } - 1} \leq C \frac{(1+ t^3)\log^3 (np) }{\sqrt{n}}.
     \end{align}
     Using the law of total probability, we can deduce that 
    \begin{align*}
       &\biggabs{ \frac{\Prob \big( \widehat{\bm{W}}_j \geq t \big)}{\mathcal{P}_j(t) } - 1} = \biggabs{ \E_{\bm{Y}}\bigg[ \frac{\Prob^{\bm{Y}} \big( \widehat{\bm{W}}_j \geq t \big)}{\mathcal{P}_j(t) } \bigg]- 1} \\
        &\leq   \biggabs{ \E_{\bm{Y}}\bigg[ \frac{\Prob^{\bm{Y}} \big( \widehat{\bm{W}}_j \geq t \big) \bm{1}\{ \mathcal{E}_Y \} }{\mathcal{P}_j(t) } \bigg]- 1}   + \biggabs{ \E_{\bm{Y}}\bigg[ \frac{\Prob^{\bm{Y}} \big( \widehat{\bm{W}}_j \geq t \big)\bm{1}\{ \mathcal{E}_Y^c \}}{\mathcal{P}_j(t) } \bigg]} \\
        &\leq  \E_{\bm{Y}}\bigg[\biggabs{ \frac{\Prob^{\bm{Y}} \big( \widehat{\bm{W}}_j \geq t \big)  }{\mathcal{P}_j(t) } - 1} \cdot \bm{1}\{ \mathcal{E}_Y \}  \bigg]  + \biggabs{ \E_{\bm{Y}}\bigg[ \frac{\Prob^{\bm{Y}} \big( \widehat{\bm{W}}_j \geq t \big)\bm{1}\{ \mathcal{E}_Y^c \}}{\mathcal{P}_j(t) } \bigg]}+ \E_{\bm{Y}}\bm{1}\{ \mathcal{E}_Y^c \}  \\
        &\leq C \frac{(1+ t^3)\log^3 (np) }{\sqrt{n}} + \frac{p^{-10}}{ \mathcal{P}_j(t)} + p^{-10}.
    \end{align*}     
     The conclusion follows, which completes the proof of Theorem \ref{thm:moder_dis_mcks}. 
%\end{proof}

\subsubsection{Proof of Corollary \ref{cor:ratio_G}}
%\begin{proof}[Proof of Corollary \ref{cor:ratio_G}]
	For any $t \in (0, \mathcal{P}^{-1}(\alpha_n))$, applying Theorem \ref{thm:moder_dis_mcks} yields that 
	\begin{align*}
		\biggabs{\frac{\abs{\mathcal{H}_0 }^{-1} \sum_{j \in \mathcal{H}_0} \Prob (\widehat{\bm{W}}_j \geq t )}{\mathcal{P}(t)  } - 1} \vee \biggabs{\frac{\abs{\mathcal{H}_0 }^{-1} \sum_{j \in \mathcal{H}_0} \Prob (\widehat{\bm{W}}_j \leq -t )}{\mathcal{P}(t)  } - 1}  = \mathfrak{o}(1).
	\end{align*}
	The conclusion follows by some simple algebra, which completes the proof of Corollary \ref{cor:ratio_G}. 
%\end{proof}

\subsubsection{Proof of Lemma \ref{lem:indicator_app}}

\begin{lemma}\label{lem:var_bound}
Assume that conditions \eqref{cond:C1} and \eqref{cond:C2} hold. Then for $\mathsf{W} \in \{ \widehat{\bm{W}}, -\widehat{\bm{W}} \}$, there exists some  constant $C= C(\mathsf{M}) > 0$ such that for $n \geq C$ and  $t \in (0,n^{1/7}/C )$,
    \begin{align*}
    \var \Big( \sum_{ j \in \mathcal{H}_0}\bm{1} \{ \mathsf{W}_j \geq t \} \Big)\leq C \bigg\{\frac{(1+t^3)\log^3(np)}{\sqrt{n}} \bigg(\sum_{j \in \mathcal{H}_0} \Prob  \Big( \mathsf{W}_j \geq t   \Big)\bigg)^2 + p^{-10}\bigg\}.
    \end{align*}
\end{lemma}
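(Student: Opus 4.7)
The plan is to exploit conditional independence given the response vector $\bm{Y}$. Under condition \eqref{cond:C1} the columns of $\bm{X}$ are independent, and $\widehat{\bm{X}}$ is generated with independent columns and is independent of $(\bm{X},\bm{Y})$, so conditionally on $\bm{Y}$ the null variables $\{\widehat{\bm{W}}_j\}_{j\in\mathcal{H}_0}$ (hence also their negatives) are mutually independent. The law of total variance then gives
\begin{align*}
\var\Big(\sum_{j \in \mathcal{H}_0} \bm{1}\{\mathsf{W}_j \geq t\}\Big) = \E\Big[\var^{\bm{Y}}\Big(\sum_{j \in \mathcal{H}_0} \bm{1}\{\mathsf{W}_j \geq t\}\Big)\Big] + \var\Big(\sum_{j \in \mathcal{H}_0} \Prob^{\bm{Y}}(\mathsf{W}_j \geq t)\Big),
\end{align*}
and the conditional independence bounds the first piece by $\sum_{j \in \mathcal{H}_0} \Prob^{\bm{Y}}(\mathsf{W}_j \geq t)(1-\Prob^{\bm{Y}}(\mathsf{W}_j \geq t))$, whose expectation is at most $\sum_{j \in \mathcal{H}_0} \Prob(\mathsf{W}_j \geq t)$.

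For the second, more delicate piece I will rely on a conditional form of the moderate deviation estimate already used inside the proof of Theorem \ref{thm:moder_dis_mcks}. On the event $\mathcal{E}_Y$ defined in \eqref{ineq:def_of_E_Y}, the sub-exponential norms entering the application of Theorem \ref{thm:moderate_deviation} under $\Prob^{\bm{Y}}$ are of order $\log(np)$, which yields the multiplicative approximation $\Prob^{\bm{Y}}(\mathsf{W}_j \geq t) = \mathcal{P}_j(t)(1 + R_j(\bm{Y}))$ with $|R_j(\bm{Y})| \leq C \Delta_{n,t}$, where $\Delta_{n,t} \equiv (1+t^3)\log^3(np)/\sqrt{n}$. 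Using $\var(Z) \leq \E[(Z - c)^2]$ with the deterministic choice $c = \sum_j \mathcal{P}_j(t)$, then splitting on $\mathcal{E}_Y$ and $\mathcal{E}_Y^c$ and using the trivial bound $\sum_j \Prob^{\bm{Y}}(\mathsf{W}_j \geq t) \leq p$ off $\mathcal{E}_Y$ together with $\Prob(\mathcal{E}_Y^c) \leq p^{-10}$,
\begin{align*}
\var\Big(\sum_{j \in \mathcal{H}_0} \Prob^{\bm{Y}}(\mathsf{W}_j \geq t)\Big) \leq C \Delta_{n,t}^2 \Big(\sum_{j \in \mathcal{H}_0} \mathcal{P}_j(t)\Big)^2 + p^{-8}.
\end{align*}

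Finally, invoking Theorem \ref{thm:moder_dis_mcks} itself to transfer between $\mathcal{P}_j(t)$ and $\Prob(\mathsf{W}_j \geq t)$ (with relative error $O(\Delta_{n,t})$ plus additive error $O(p^{-10})$ per index) and combining the two pieces produces a raw bound of the form $\Delta_{n,t}^2 (\sum_j \Prob)^2 + \sum_j \Prob + p^{-8}$. The main obstacle, which forces the precise form of the lemma, is absorbing the linear term $\sum_j \Prob(\mathsf{W}_j \geq t)$ coming from $\E[\var^{\bm{Y}}]$ into the quadratic target $\Delta_{n,t}(\sum_j \Prob)^2 + p^{-10}$. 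This is handled by a two-regime argument on the size of $\sum_j \Prob(\mathsf{W}_j \geq t)$: when this sum is at least $\Delta_{n,t}^{-1}$ the linear term is dominated by $\Delta_{n,t}(\sum_j \Prob)^2$, while in the opposite regime one exploits the additive slack $p^{-10}/\mathcal{P}_j(t)$ in Theorem \ref{thm:moder_dis_mcks} to show that each index contributes $\Prob(\mathsf{W}_j \geq t) \lesssim \mathcal{P}_j(t) + p^{-10}$, so that the summed linear contribution is itself absorbed into $p^{-10}$ after adjusting the absolute constant. The inequality $\Delta_{n,t}^2 \leq \Delta_{n,t}$ on the allowed range $t \in (0, n^{1/7}/C)$ then completes the reduction, and the argument for $\mathsf{W} = -\widehat{\bm{W}}$ is identical by the symmetric form of Theorem \ref{thm:moder_dis_mcks}.
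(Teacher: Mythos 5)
Your decomposition (law of total variance conditioning on $\bm{Y}$), the bound $V_1 = \E[\var^{\bm{Y}}(\cdot)] \leq \sum_{j\in\mathcal{H}_0}\Prob(\mathsf{W}_j\geq t)$ via conditional independence, and your bound on $V_2=\var(\E^{\bm{Y}}(\cdot))$ via the conditional moderate-deviation estimate on $\mathcal{E}_Y$ all track the paper's argument. Writing $\Delta_{n,t}$ for $(1+t^3)\log^3(np)/\sqrt{n}$, your bound $V_2 \leq C\Delta_{n,t}^2(\sum_j\mathcal{P}_j(t))^2 + Cp^{-8}$ is in fact at least as sharp as what the lemma requires. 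The gap is in the final absorption step. Your two-regime split fails in the small-sum regime: $\sum_j\Prob(\mathsf{W}_j\geq t) < \Delta_{n,t}^{-1}$ is very far from implying $\sum_j\Prob(\mathsf{W}_j\geq t)\lesssim p^{-10}$ (note $\Delta_{n,t}^{-1}\leq\sqrt{n}\gg p^{-10}$), and the per-index estimate $\Prob(\mathsf{W}_j\geq t)\lesssim\mathcal{P}_j(t)+p^{-10}$ only resums to something of the same order as $\sum_j\Prob(\mathsf{W}_j\geq t)$ itself, by Theorem \ref{thm:moder_dis_mcks} — the absorption is circular. Concretely, at a value of $t$ for which $\sum_j\Prob(\mathsf{W}_j\geq t)\approx 1$, the $V_1$ contribution is $\approx 1$, while $\Delta_{n,t}(\sum_j\Prob(\mathsf{W}_j\geq t))^2\approx\Delta_{n,t}\ll 1$ and $p^{-10}$ is negligible, so the displayed inequality cannot hold.

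The root cause is that the lemma as displayed drops a linear term that the paper's own proof also produces and does not absorb: that proof ends at $\var \leq \sum_j\Prob(\mathsf{W}_j\geq t) + C\Delta_{n,t}(\sum_j\Prob(\mathsf{W}_j\geq t))^2 + Cp^{-10}$, and the OLS analogue, Lemma \ref{lem:var_bound_ols}, explicitly retains the corresponding linear term $\sum_j(|N(j)|+1)\Prob(\widehat{\bm{W}}_j\geq t)$. The correct resolution is to carry the linear term in the bound, not to absorb it; it is harmless downstream, since in the proof of Lemma \ref{lem:indicator_app} it contributes an extra $\epsilon^{-2}\sum_{i=0}^{l_n}(|\mathcal{H}_0|\widehat{G}_+(t_i))^{-1} = \mathcal{O}(h_n^{-1})\to 0$, exactly as in the correlated case.
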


%\begin{proof}
\noindent\textit{Proof of Lemma \ref{lem:var_bound}}. 
We show only the bound for $\mathsf{W} = \widehat{\bm{W}}$, while the other cases can be dealt with similarly. Observe that by the law of total variance, we have 
    \begin{align*}
         \var \Big( \sum_{ j \in \mathcal{H}_0}\bm{1} \{ \widehat{\bm{W}}_j \geq t \} \Big) &= \E \Big[ \var \Big(  \sum_{ j \in \mathcal{H}_0}\bm{1} \{ \widehat{\bm{W}}_j \geq t \}  \,\big| \,\bm{Y}  \Big)  \Big] \\
         &+ \var \Big[ \E \Big(\sum_{ j \in \mathcal{H}_0}\bm{1} \{ \widehat{\bm{W}}_j \geq t \} \, \big| \,  \bm{Y}    \Big)  \Big] \equiv V_1 + V_2.
    \end{align*}
    We will bound terms $V_1$ and $V_2$ above separately. For term $V_1$, it holds that 
    \begin{align*}
       V_1 &= \E_{\bm{Y}} \bigg[ \E^{\bm{Y}} \Big(  \sum_{ j \in \mathcal{H}_0}\bm{1} \{ \widehat{\bm{W}}_j \geq t \}    \Big)^2 - \bigg(\E^{\bm{Y}}  \Big(  \sum_{ j \in \mathcal{H}_0}\bm{1} \{ \widehat{\bm{W}}_j \geq t \}   \Big)\bigg)^2 \bigg] \\
       &= \sum_{ j,\ell \in \mathcal{H}_0} \E_{\bm{Y} } \bigg[   \E^{\bm{Y} } \Big( \bm{1} \{ \widehat{\bm{W}}_j \geq t \}\bm{1} \{ \widehat{\bm{W}}_\ell \geq t \}  \Big) \\
       &\quad \quad \quad \quad - \E^{\bm{Y} } \Big(  \bm{1} \{ \widehat{\bm{W}}_j \geq t \}  \Big)\E^{\bm{Y} } \Big(  \bm{1} \{ \widehat{\bm{W}}_\ell \geq t \}  \Big) \bigg]\leq \sum_{\ell \in \mathcal{H}_0}\Prob (\widehat{\bm{W}}_\ell \geq t ).
    \end{align*}
    For term $V_2$, we have that 
    \begin{align*}
        V_2 &= \E_{\bm{Y}} \bigg[  \bigg(\E^{\bm{Y}}  \Big(  \sum_{ j \in \mathcal{H}_0}\bm{1} \{ \widehat{\bm{W}}_j \geq t \}   \Big)\bigg)^2 - \E^2 \Big( \sum_{ j \in \mathcal{H}_0}  \bm{1} \{ \widehat{\bm{W}}_j \geq t \}\Big)  \bigg] \\
        &= \sum_{ j,\ell \in \mathcal{H}_0}\E_{\bm{Y}} \bigg[  \E^{\bm{Y}}  \Big( \bm{1} \{ \widehat{\bm{W}}_j \geq t \}   \Big)\E^{\bm{Y}}  \Big( \bm{1} \{ \widehat{\bm{W}}_\ell \geq t \} \Big)- \E \Big( \bm{1} \{ \widehat{\bm{W}}_j \geq t \}\Big)\E \Big( \bm{1} \{ \widehat{\bm{W}}_\ell \geq t \}\Big)  \bigg].
    \end{align*}
    If $(X_j, \widehat{X}_j)$ follows a normal distribution, the distribution of $\widehat{\bm{W}}_j$ becomes independent of $\bm{Y}$, which entails that $V_2 = 0$ in this case. 
    
    For the general case, we can apply (\ref{ineq:est_conditionY}) in the proof of Theorem \ref{thm:moder_dis_mcks} to derive that
    \begin{align*}
        V_2 &\leq \sum_{ j,\ell \in \mathcal{H}_0}\E_{\bm{Y}} \bigg[  \mathcal{P}_j(t)  \Prob^{\bm{Y}}  \Big(\widehat{\bm{W}}_\ell \geq t \Big)\bm{1}\{ \mathcal{E}_Y\}  - \Prob \Big( \widehat{\bm{W}}_j \geq t \Big)\Prob \Big(\widehat{\bm{W}}_\ell \geq t \Big)  \bigg] \\
        &\quad + C\frac{(1+t^3)\log^3(np)}{\sqrt{n}} \sum_{j,\ell \in \mathcal{H}_0}\E_{\bm{Y}} \bigg[  \mathcal{P}_j(t)\Prob^{\bm{Y}} \Big(\widehat{\bm{W}}_\ell \geq t   \Big)\bigg] + Cp^{-10} \\
        &\leq \sum_{\ell \in \mathcal{H}_0} \Prob (\widehat{\bm{W}}_\ell \geq t )  \cdot \sum_{j \in \mathcal{H}_0}\bigabs{ \mathcal{P}_j(t)-  \Prob (\widehat{\bm{W}}_j \geq t )  }\\
        &\quad + C\frac{(1+t^3)\log^3(np)}{\sqrt{n}} \sum_{j,\ell \in \mathcal{H}_0}\E_{\bm{Y}} \bigg[  \mathcal{P}_j(t)\Prob^{\bm{Y}} \Big(\widehat{\bm{W}}_\ell \geq t   \Big)\bigg] + Cp^{-10}\\
        % &\leq   C\frac{\log^5 n}{\sqrt{n}}\sum_{j,\ell \in \mathcal{H}_0} \Prob  \Big(\widehat{\bm{W}}_j \geq t   \Big) \Prob \Big(\widehat{\bm{W}}_\ell \geq t   \Big)+ n^{-50}\\
        &\leq  C\frac{(1+t^3)\log^3(np)}{\sqrt{n}}\bigg(\sum_{j \in \mathcal{H}_0} \Prob  \Big(\widehat{\bm{W}}_j \geq t   \Big)\bigg)^2 + Cp^{-10}.
    \end{align*} 
    Here, recall $\mathcal{E}_Y$ defined in (\ref{ineq:def_of_E_Y}).
    Thus, combining the estimates for $V_1$ and $V_2$ above completes the proof of Lemma \ref{lem:var_bound}. 
%\end{proof}

%\begin{proof}[Proof of Lemma \ref{lem:indicator_app}]
\noindent\textit{Proof of Lemma \ref{lem:indicator_app}}. 
The proof largely adapts from that of \cite[Lemma 3]{fan2023ark}; we spell out some details below. We focus on the case when $\# = +$, as the case $\# = -$ can be analyzed in a similar fashion. Let $\widehat{G}_+(t) \equiv \abs{\mathcal{H}_0}^{-1}\sum_{j \in \mathcal{H}_0} \Prob(\widehat{\bm{W}}_j \geq t )$ for simplicity. We define a sequence $  0 < z_0 < z_1 < \cdots < z_{l_n} = \mathcal{P}(0)  $ and 
\begin{align*}
    t_i =\mathcal{P}^{-1} (z_i),
\end{align*}
where
\begin{align*}
   l_n = \bigg( \log \Big(\frac{\mathcal{P}(0)p}{h_n} -  \frac{\alpha_n}{h_n} \Big) \bigg)^{1/\gamma},\quad  z_0 = \alpha_n , \quad z_i = z_0 + \frac{h_n e^{i ^\gamma}}{p}\quad \text{for} \quad i \in [l_n] 
\end{align*}
with $0 < \gamma < 1$ and $h_n = (p\alpha_n)^\eta$ for some $\eta  \in (0, 1)$. 
For $t \in (0, \mathcal{P}^{-1}(\alpha_n) )$, we can find some $i \in [\ell_n - 1]$ such that $t \in [t_{i+1}, t_i]$ and 
\begin{align*}
    &\biggabs{\frac{ \sum_{j \in \mathcal{H}_0}  \bm{1}\{\widehat{\bm{W}}_j \geq t\}  }{\abs{\mathcal{H}_0} \cdot \widehat{G}_+(t) } - 1 }
    \leq  \Bigg\{\biggabs{\frac{ \sum_{j \in \mathcal{H}_0}  \bm{1}\{\widehat{\bm{W}}_j \geq t_{i+1} \} }{\abs{\mathcal{H}_0} \cdot \widehat{G}_+(t_i) } - 1 } \vee \biggabs{\frac{ \sum_{j \in \mathcal{H}_0}  \bm{1}\{\widehat{\bm{W}}_j  \geq t_{i}\} }{\abs{\mathcal{H}_0} \cdot \widehat{G}_+(t_{i+1}) } - 1 }\Bigg\}.
\end{align*}

Since $\mathcal{P}(t)$ is continuous and strictly decreasing, it holds that 
\begin{align}\label{eq:P_continuity}
    \sup_{i \in [l_n-1]} \abs{\mathcal{P} (t_i) /\mathcal{P} (t_{i+1}) -1 } =  \sup_{i \in [l_n-1]} \abs{z_i /z_{i+1} -1 } \to 0.
\end{align}
In view of Theorem \ref{thm:moder_dis_mcks}, we have that 
\begin{align}\label{eq:dis_approx}
    \biggabs{\frac{\widehat{G}_+(t_i)}{\mathcal{P} (t_i) } - 1} \vee \biggabs{\frac{\widehat{G}_+(t_{i+1})}{\mathcal{P} (t_{i+1}) } - 1} = \mathfrak{o}(1).
\end{align}
Combining (\ref{eq:P_continuity}) and (\ref{eq:dis_approx}) above, some straightforward algebra yields that 
\begin{align*}
    &\Bigg\{\biggabs{\frac{ \sum_{j \in \mathcal{H}_0}  \bm{1}\{\widehat{\bm{W}}_j \geq t_{i+1} \} }{\abs{\mathcal{H}_0} \cdot \widehat{G}_+(t_i) } - 1 } \vee \biggabs{\frac{ \sum_{j \in \mathcal{H}_0}  \bm{1}\{\widehat{\bm{W}}_j  \geq t_{i}\} }{\abs{\mathcal{H}_0} \cdot \widehat{G}_+(t_{i+1}) } - 1 }\Bigg\} \\
    &\leq (1 + \mathfrak{o}(1)) \cdot \Bigg\{\biggabs{\frac{ \sum_{j \in \mathcal{H}_0}  \bm{1}\{\widehat{\bm{W}}_j \geq t_{i+1} \} }{\abs{\mathcal{H}_0} \cdot \widehat{G}_+(t_{i+1}) } - 1 } \vee \biggabs{\frac{ \sum_{j \in \mathcal{H}_0}  \bm{1}\{\widehat{\bm{W}}_j  \geq t_{i}\} }{\abs{\mathcal{H}_0} \cdot \widehat{G}_+(t_{i}) } - 1 }\Bigg\} + \mathfrak{o}(1).
\end{align*}
Thus to prove the desired result, it is sufficient to show that
\begin{align*}
    D_n \equiv \sup_{0 \leq i \leq l_n} \biggabs{ \frac { \sum_{j \in \mathcal{H}_0 } \bm{1} (\widehat{\bm{W}}_j \geq t_i) } { \abs{\mathcal{H}_0} \cdot  \widehat{G}_+(t_i) } - 1}  = \mathfrak{o}_\mathbf{P}(1).
\end{align*}

For any $\epsilon > 0$, it follows from Lemma \ref{lem:var_bound} that 
\begin{align*}
    \Prob  ( D_n \geq \epsilon ) &\leq \sum_{i = 0}^{l_n} \Prob \bigg(\biggabs{ \frac { \sum_{j \in \mathcal{H}_0 } \big(\bm{1} (\widehat{\bm{W}}_j \geq t_i) - \Prob ( \widehat{\bm{W}}_j \geq t_i) \big) } { \abs{\mathcal{H}_0} \cdot  \widehat{G}_+(t_i) } } \geq \epsilon \bigg) \\
    &\leq \sum_{i = 0}^{l_n} \frac{\var \big(\sum_{j \in \mathcal{H}_0 } \big(\bm{1} (\widehat{\bm{W}}_j \geq t_i) \big)   }{ \abs{\mathcal{H}_0}^2 \cdot  \widehat{G}_+^2(t_i) \cdot \epsilon^2}\\
    &\leq C\frac{l_n (1 + t_{0}^3) \log^3 (np)}{\epsilon^2 \sqrt{n}} + \frac{p^{-10}}{\abs{\mathcal{H}_0}^2 \cdot \epsilon^2}\cdot \sum_{i = 0}^{l_n}\frac{1}{ \widehat{G}^2_+(t_i) }. 
\end{align*}
Moreover, by Theorem \ref{thm:moder_dis_mcks}, it holds that
\begin{align*}
	\sum_{i = 0}^{l_n}\frac{1}{\widehat{G}^2_+(t_i) } &= \sum_{i = 0}^{l_n}\frac{1}{ \mathcal{P}^2(t_i)  }  + \sum_{i = 0}^{l_n}\frac{\mathcal{P}^2(t_i) - \widehat{G}^2_+(t_i) }{\widehat{G}^2_+(t_i)\mathcal{P}^2(t_i)} \\
	&=\sum_{i = 0}^{l_n}\frac{1}{z_i^2 } + \sum_{i = 0}^{l_n}\frac{\mathcal{P}^2(t_i)/\widehat{G}^2_+(t_i) - 1 }{\mathcal{P}^2(t_i)} \\
	&= \sum_{i = 0}^{l_n}\frac{1}{z_i^2 } + \sum_{i = 0}^{l_n}\frac{\mathcal{P}^2(t_i)/\widehat{G}^2_+(t_i) - 1 }{z_i^2} \overset{(\ref{eq:dis_approx})}{=}  \sum_{i = 0}^{l_n}\frac{1 + \mathfrak{o}(1)}{z_i^2 } = \mathcal{O}( \alpha_n^{-2} l_n  ).
\end{align*}
Therefore, combining the estimates in the above two expressions concludes the proof of Lemma \ref{lem:indicator_app}. 
%\end{proof}

\subsubsection{Proof of Lemma \ref{lem:local_T}}

Let $C$ be the positive constant given in Proposition \ref{prop:conc_knockoff}. Denote by $\mathcal{A}_n \equiv \{j \in \mathcal{H}_1: \widehat{\bm{w}}_j \geq 3C{\delta}_n  \} $. It follows from Proposition \ref{prop:conc_knockoff} that 
\begin{align*}
    &\Prob \Big(  \widehat{\bm{W}}_j \leq 2C\delta_n \, \text{for some } j \in \mathcal{A}_n \Big) \leq  \sum_{j \in \mathcal{A}_n}\Prob \Big(  \widehat{\bm{W}}_j - \widehat{\bm{w}}_j \leq 2C\delta_n - \widehat{\bm{w}}_j  \Big)\\
    &\leq  \sum_{j \in \mathcal{A}_n}\Prob \Big(  \abs{\widehat{\bm{W}}_j - \widehat{\bm{w}}_j} \geq C\delta_n \Big) \leq \sum_{j \in [p]} \Prob\Big(\abs{\widehat{\bm{W}}_j - \widehat{\bm{w}}_j } \geq C\delta_n \Big) = \mathfrak{o}(1).
\end{align*}
This implies that event $\mathcal{E}_1 \equiv  \{ \widehat{\bm{W}}_j > 2C\delta_n  \, \text{for all } j \in \mathcal{A}_n \} $  holds with asymptotic probability $1$. Hence, on event $\mathcal{E}_1$, it holds that 
\begin{align*}
    \sum_{j \in [p]} \bm{1}\{\widehat{\bm{W}}_j > 2C\delta_n \} \geq a_n.
\end{align*}
Furthermore, as $\widehat{\bm{W}}_j \geq 0 >  -C\delta_n$ for all $j \in [p]$ (by definition), we can derive that
\begin{align*}
   \sum_{j \in [p]} \Prob \Big( \widehat{\bm{W}}_j < -2C\delta_n \Big) &=\sum_{j \in [p]} \Prob \Big( \widehat{\bm{W}}_j - \widehat{\bm{w}}_j < -2C\delta_n - \widehat{\bm{w}}_j\Big) \\
   &\leq \sum_{j \in [p]} \Prob \Big( \abs{\widehat{\bm{W}}_j - \widehat{\bm{w}}_j } > C\delta_n \Big)   = \mathfrak{o}(1).
\end{align*}

With an application of Theorem \ref{thm:moder_dis_mcks}, we can show that for large enough $n$,
\begin{align*}
    \mathcal{P}(2C\delta_n) 
    &= \frac{1}{\mathcal{H}_0} \sum_{j \in \mathcal{H}_0 }\mathcal{P}_j(2C\delta_n)   = \frac{1 + \mathfrak{o}(1)}{\abs{\mathcal{H}_0}}\sum_{j \in \mathcal{H}_0} \Prob \big( \widehat{\bm{W}}_j \leq -2C\delta_n  \big) \leq \frac{q a_n }{2p},
\end{align*}
which entails that $\mathcal{P}^{-1} \big( \frac{q  a_n }{2p}\big) \leq 2C\delta_n = \mathcal{O}(\log p)$. Thus, on event $\mathcal{E}_1$, it holds that 
\begin{align}\label{ineq:localT_est_1}
     \sum_{j \in [p]} \bm{1} \bigg\{\widehat{\bm{W}}_j > \mathcal{P}^{-1} \bigg( \frac{q  a_n }{2p}\bigg)\bigg\} \geq a_n.
\end{align}
On the other hand, as $\mathcal{P}^{-1} \big( \frac{q  a_n }{2p}\big) \leq 2C\delta_n = \mathfrak{o}(n^{1/7})$ for large enough $n$, we can resort to Theorem \ref{thm:moder_dis_mcks}  and Lemma \ref{lem:indicator_app} to obtain that 
\begin{align*}
     \frac{q  a_n }{2p} &= \mathcal{P}\bigg( \mathcal{P}^{-1} \bigg( \frac{q  a_n }{2p}\bigg)\bigg)=\frac{1 + \mathfrak{o}(1)}{\abs{\mathcal{H}_0}} \sum_{j \in \mathcal{H}_0}\Prob \bigg( \widehat{\bm{W}}_j \leq - \mathcal{P}^{-1} \bigg( \frac{q  a_n }{2p}\bigg)\bigg) \\
    &= \big(1 + \mathfrak{o}_\mathbf{P}(1)  \big)\cdot \frac{1}{\abs{\mathcal{H}_0}}\sum_{j \in \mathcal{H}_0}\bm{1} \bigg\{\widehat{\bm{W}}_j <-  \mathcal{P}^{-1} \bigg( \frac{q  a_n }{2p}\bigg)\bigg\}. 
\end{align*}
This implies that for some constant $c$ satisfying $0 < c < 1$, with asymptotic probability $1$ we have 
\begin{align*}
    1 + \sum_{j \in \mathcal{H}_0}\bm{1} \bigg\{\widehat{\bm{W}}_j <-  \mathcal{P}^{-1} \bigg( \frac{q  a_n }{2p}\bigg) \bigg\} \leq cq a_n.
\end{align*}

Moreover, under condition \eqref{cond:C4}, it follows that 
\begin{align*}
    &\Prob \bigg( \sum_{j \in \mathcal{H}_1} \bm{1} \bigg\{\widehat{\bm{W}}_j <- \mathcal{P}^{-1} \bigg( \frac{q  a_n }{2p}\bigg) \bigg\} \leq(1 - c)q a_n \bigg)  \\
    &\leq \frac{1}{(1 - c)q a_n} \sum_{j \in \mathcal{H}_1}\Prob \bigg( \widehat{\bm{W}}_j <- \mathcal{P}^{-1} \bigg( \frac{q  a_n }{2p}\bigg) \bigg) \to 0.
    % &\leq  \\
    % &\leq \frac{\abs{\mathcal{H}_1}}{(1 - c)q a_n} \cdot \frac{q  a_n }{2p} = \frac{\abs{\mathcal{H}_1} }{2(1-c)p } \to 0.
\end{align*}
A combination of the estimates in the above two expressions yields that with asymptotic probability $1$, 
\begin{align}\label{ineq:localT_est_2}
    1 +  \sum_{j \in [p]}  \bm{1} \bigg\{ \widehat{\bm{W}}_j < -   \mathcal{P}^{-1} \bigg( \frac{q  a_n }{2p}\bigg)\bigg\} \leq qa_n.
\end{align}
Hence, combining (\ref{ineq:localT_est_1}) and (\ref{ineq:localT_est_2}), we have that with asymptotic probability $1$,
\begin{align*}
    1 +  \sum_{j \in [p]}  \bm{1} \bigg\{ \widehat{\bm{W}}_j < -   \mathcal{P}^{-1} \bigg( \frac{q  a_n }{2p}\bigg)\bigg\} \overset{(\ref{ineq:localT_est_2})}{\leq} q a_n \overset{(\ref{ineq:localT_est_1})}{\leq} q \sum_{j \in [p]}  \bm{1} \bigg\{ \widehat{\bm{W}}_j \geq  \mathcal{P}^{-1} \bigg( \frac{q  a_n }{2p}\bigg) \bigg\}.
\end{align*}
 The conclusion follows from the definition of $T_q$, which completes the proof of Lemma \ref{lem:local_T}. 
%\end{proof}

\subsection{Proofs for Section \ref{sec:proof_FDR_mcks_heavy}}\label{sec:proof_sec_FDR_mcks_heavy}

\subsubsection{Proof of Theorem \ref{thm:dis_mcks_be}}
%\begin{proof}[Proof of Theorem \ref{thm:dis_mcks_be}]
	We will focus on the case when $\mathsf{W} = \widehat{\bm{W}}$, since the other cases can be treated similarly. Notice that for any $j \in \mathcal{H}_0$, $\bm{X}_{\cdot,j}$, $\widehat{\bm{X}}_{\cdot,j}$, and $\bm{Y}$ are independent. For $i \in [n]$, denote by $\zeta^{(j)}_i \equiv ( \bm{X}_{i,j}\bm{Y}_i/ \sigma_j\pnorm{\bm{Y}}{},  \widehat{\bm{X}}_{i,j}\bm{Y}_i/\sigma_j\pnorm{\bm{Y}}{})^\top$. Using the  multidimensional version of the Berry--Esseen theorem, we have that for any $j \in \mathcal{H}_0$,
    \begin{align}\label{ineq:Y_1}
      \sup_{U \in \mathcal{C}}  \biggabs{ \Prob^{\bm{Y}} \bigg( \sum_{i \in [n]} {\zeta}^{(j)}_i \in U \bigg)-  \Prob \bigg( \mathsf{G}^{(j)}  \in U \bigg)} \leq C \sum_{i \in [n]} \E^{\bm{Y}} \pnorm{ {\zeta}^{(j)}_i }{}^3. 
    \end{align}
    Here, $\mathsf{G}^{(j)}  \sim \mathcal{N}(0, \sigma_j^2 I_2)$ and $\mathcal{C}$ is the collection of all (measurable) convex sets in $\R^2$. Further, we can deduce that 
    \begin{align}\label{ineq:Y_2}
        \sum_{i \in [n]} \E^{\bm{Y}} \pnorm{ {\zeta}^{(j)}_i }{}^3 &= \sum_{i \in [n]} \E^{\bm{Y}}  \Bigg(\frac{\bm{X}_{i,j}^2 \bm{Y}_i^2}{\pnorm{\bm{Y}}{}^2} +\frac{\widehat{\bm{X}}_{i,j}^2 \bm{Y}_i^2}{\pnorm{\bm{Y}}{}^2} \Bigg)^{3/2}\nonumber \\
        &\leq 4  \sum_{i \in [n]} \Bigg[ \E^{\bm{Y}}  \Bigg(\frac{\bm{X}_{i,j}^3 \bm{Y}_i^3}{\pnorm{\bm{Y}}{}^3}\Bigg) +  \E^{\bm{Y}}  \Bigg(\frac{\widehat{\bm{X}}_{i,j}^3 \bm{Y}_i^3}{\pnorm{\bm{Y}}{}^3}\Bigg)\Bigg]\leq   \frac{8\max_{i \in [n]} \abs{\bm{Y}_i}}{\pnorm{\bm{Y}}{} }\cdot \E X_j^3 .
    \end{align}
    
The conclusion follows from the sub-exponential tail of $\bm{Y}_i$'s and the following bound using the law of total probability 
\begin{align*}
     &\sup_{U \in \mathcal{C}}  \biggabs{\Prob \bigg( \sum_{i \in [n]} {\zeta}^{(j)}_i \in U \bigg)-  \Prob \bigg( \mathsf{G}^{(j)}  \in U \bigg)} \\
     &= \sup_{U \in \mathcal{C}}  \biggabs{ \E_{\bm{Y}} \bigg[ \Prob^{\bm{Y}} \bigg( \sum_{i \in [n]} {\zeta}^{(j)}_i \in U \bigg) -\Prob \bigg( \mathsf{G}^{(j)}  \in U \bigg)\bigg] } \lesssim  \E_{\bm{Y}} \bigg[ \frac{\max_{i \in [n]} \abs{\bm{Y}_i}}{\pnorm{\bm{Y}}{} } \bigg] . 
\end{align*}
The proof is concluded by observing that sets $\mathcal{S}_{t,\pm}$ defined in \eqref{def:knock_sets} are unions of disjoint convex sets, which completes the proof of Theorem \ref{thm:dis_mcks_be}. 
%\end{proof}

\subsubsection{Proof of Proposition \ref{prop:conc_knockoff_heavy}}
%\begin{proof}[Proof of Proposition \ref{prop:conc_knockoff_heavy} ]
The positive constant $C$ may depend on $\mathsf{M}$ and $\mathfrak{q}$ which may change from line to line hereafter. Note that from Lemma \ref{thm:conc_nonnegative}, we have 
\begin{align*}
	\Prob\Big( \mathcal{E}_1 \equiv \Big\{\pnorm{\bm{Y}}{}^2     \geq n\E Y^2/2  \Big\} \Big)  \geq 1 - \exp( -n \E^2 Y^2/C  ).
\end{align*}
It follows from the concentration of the norm for the sub-exponential random vector \cite[Proposition 2.2]{sambale2023some} that 
	\begin{align*}
		\Prob\Big( \mathcal{E}_2 \equiv \Big\{\bigabs{\pnorm{\bm{Y}}{} - \sqrt{n}\E^{1/2}Y^2  }  \leq C\log p \Big\} \Big)  \geq 1 - p^{-10}.
	\end{align*}
    
	We next consider the concentration for $ \bm{X}_{\cdot,j}^\top \bm{Y}$.  Let us rewrite
    \begin{align*}
     \bm{X}_{\cdot,j}^\top \bm{Y}- \E  \bm{X}_{\cdot,j}^\top \bm{Y} = \sum_{i \in [n]} \big( \bm{X}_{i,j} \bm{Y}_i - \E  \bm{X}_{i,j} \bm{Y}_i\big) \equiv  \sum_{i \in [n]} \Delta_i^{(j)},
    \end{align*}
	and thus, $\Delta_1^{(j)}, \ldots, \Delta_n^{(j)}$ are i.i.d. mean $0$ random variables. Using Rosenthal's inequality, cf. Lemma \ref{thm:rosenthal}, we can show that for any $\mathfrak{q} \geq 2$,
    \begin{align*}
        \E \biggabs{\sum\nolimits_{i \in [n]} \Delta_i^{(j)}}^\mathfrak{q} \leq C \sum\nolimits_{i \in [n]} \E \abs{\Delta_i^{(j)}}^\mathfrak{q} + \bigg( \sum\nolimits_{i \in [n]} \E \abs{\Delta_i^{(j)}}^2 \bigg)^{\mathfrak{q}/2}\leq C (n + n^{\mathfrak{q}/2}) \leq C n^{\mathfrak{q}/2}.
    \end{align*}
    Applying the Markov inequality yields that for any $t \geq 0$,
    \begin{align*}
        \Prob \bigg(\bigabs{ \abs{\bm{X}_{\cdot,j}^\top \bm{Y}} - \abs{\E \bm{X}_{\cdot,j}^\top \bm{Y}}  } \geq \sqrt{n}t \bigg) \leq \Prob \bigg( \bigabs{\sum\nolimits_{i \in [n]}\Delta_i^{(j)}   } \geq \sqrt{n}t \bigg) \leq \frac{\E \bigabs{\sum\nolimits_{i \in [n]} \Delta_i^{(j)}}^\mathfrak{q}}{(\sqrt{n}t)^\mathfrak{q}} \leq \frac{C}{t^\mathfrak{q}}.
    \end{align*}
    Similarly, for any $t \geq 0$, it holds that 
    \begin{align*}
    	 \Prob \bigg(\bigabs{ \abs{\widehat{\bm{X}}_{\cdot,j}^\top \bm{Y}} - \abs{\E \widehat{\bm{X}}_{\cdot,j}^\top \bm{Y}}  } \geq \sqrt{n}t \bigg) \leq \frac{C}{t^\mathfrak{q}}.
    \end{align*}
    Choosing $t = p^{2/\mathfrak{q}} \log p$, we can obtain that 
    \begin{align*}
    	\Prob \Bigg( \mathcal{E}_3\equiv  \bigg\{ \bigabs{n^{-1}\bm{X}_{\cdot,j}^\top \bm{Y} - \E X_jY     } \vee  \bigabs{n^{-1}\widehat{\bm{X}}_{\cdot,j}^\top \bm{Y} - \E \widehat{X}_jY     }   \leq C\frac{ p^{2/\mathfrak{q}}\log p }{\sqrt{n} } \bigg\} \Bigg) \geq 1 - \mathfrak{o}(p^{-2}).
    \end{align*}
	
    Therefore, on event $ \mathcal{E}_1 \cap \mathcal{E}_2 \cap \mathcal{E}_3 $, we can deduce that 
	\begin{align*}
		&\biggabs{\frac{\abs{ \bm{X}_{\cdot,j}^\top \bm{Y} }  }{\pnorm{\bm{Y}}{}} - \frac{\sqrt{n} \cdot \abs{\E X_j Y}  }{ \E^{1/2} Y^2 }} \\
		&=\biggabs{ \frac{n^{-1/2}\abs{ \bm{X}_{\cdot,j}^\top \bm{Y} } -   \sqrt{n} \cdot \abs{\E X_j Y}  }{ \E^{1/2} Y^2 }} + \abs{ \bm{X}_{\cdot,j}^\top \bm{Y} } \cdot  \frac{ \bigabs{  \pnorm{\bm{Y}}{} - \sqrt{n} \E^{1/2}Y^2   } }{\sqrt{n} \pnorm{\bm{Y}}{} \cdot \E^{1/2} Y^2 } \\
		&\leq  \frac{Cp^{2/\mathfrak{q}}\log p}{ \E^{1/2} Y^2} + (n\cdot \abs{\E X_jY} + C\sqrt{n}p^{2/\mathfrak{q}}\log p) \cdot \frac{C\log p }{n \E Y^2 } \\
		&\leq Cp^{2/\mathfrak{q}} \log p \cdot \bigg( \frac{1}{\E^{1/2}Y^2} + \frac{\abs{\E X_jY} + \log p / \sqrt{n}}{\E Y^2} \bigg) \\
		&\leq Cp^{2/\mathfrak{q}}\log p \cdot \bigg( \frac{1}{\E^{1/2}Y^2} + \frac{\E^{1/2} X_j^2 \E^{1/2} Y^2 + \log p / \sqrt{n}}{\E Y^2} \bigg) \\
		&\leq Cp^{2/\mathfrak{q}}\log p \cdot \bigg( \frac{1}{\E^{1/2}Y^2} +  \frac{\log p}{\sqrt{n\E Y^2}}\bigg). 
	\end{align*}
	Similarly, it holds that on event $ \mathcal{E}_1 \cap \mathcal{E}_2 \cap \mathcal{E}_3 $,
	\begin{align*}
		\frac{\abs{ \widehat{\bm{X}}_{\cdot,j}^\top \bm{Y} }  }{\pnorm{\bm{Y}}{}}  = \biggabs{\frac{\abs{ \widehat{\bm{X}}_{\cdot,j}^\top \bm{Y} }  }{\pnorm{\bm{Y}}{}} - \frac{\sqrt{n} \cdot \abs{\E \widehat{X}_j Y}  }{ \E^{1/2} Y^2 }}\leq  Cp^{2/\mathfrak{q}}\log p \cdot \bigg( \frac{1}{\E^{1/2}Y^2} +  \frac{\log p}{\sqrt{n\E Y^2}}\bigg).
	\end{align*}
	The conclusion follows from an application of the triangle inequality, which completes the proof of Proposition \ref{prop:conc_knockoff_heavy}. 
%\end{proof}

\subsubsection{Proof of Lemma \ref{lem:T_qandIndicator_heavy}}\label{sec:proof_T_qandIndicator_haevy}
%\begin{proof}[Proof sketch for Lemma \ref{lem:T_qandIndicator_heavy}]
	The proof follows similar arguments as those for Lemmas \ref{lem:indicator_app} and \ref{lem:local_T}, so we only sketch the differences here. We will first prove the following variance bound: for $\mathsf{W} \in \{\widehat{\bm{W}},-\widehat{\bm{W}} \}$ and any $t \geq 0$, there exists some universal constant $C > 0$ such that 
	\begin{align}\label{ineq:variance_b_heavy}
	   \frac{ \var \Big( \sum_{ j \in \mathcal{H}_0}\bm{1} \{ \mathsf{W}_j \geq t \} \Big)}{\sum_{j \in \mathcal{H}_0} \Prob \big(\mathsf{W}_j \geq t \big) }\leq1 + C \abs{\mathcal{H}_0} \cdot \frac{\log n}{\sqrt{n} }.	
	\end{align}
	The proof of the above variance bound is similar to that of Lemma \ref{lem:var_bound}, where we will invoke Theorem \ref{thm:dis_mcks_be} to bound $V_2$ therein as 
	 \begin{align*}
        V_2 &\overset{(\ref{ineq:Y_1}) \&(\ref{ineq:Y_2})}{\leq} \sum_{ j,\ell \in \mathcal{H}_0}\E_{\bm{Y}} \bigg[ \mathcal{P}(t)  \Prob^{\bm{Y}}  \Big(\widehat{\bm{W}}_\ell \geq t   \Big)  - \Prob \Big( \widehat{\bm{W}}_j \geq t \Big)\Prob \Big(\widehat{\bm{W}}_\ell \geq t \Big)  \bigg] \\
        &\quad +C\abs{\mathcal{H}_0}\cdot \sum_{\ell \in \mathcal{H}_0}\E_{\bm{Y}} \bigg[  \frac{\max_{i \in [n]} \abs{\bm{Y}_i}}{\pnorm{\bm{Y}}{} } \cdot  \Prob^{\bm{Y} } \Big(\widehat{\bm{W}}_\ell \geq t   \Big)\bigg] \\
        &\leq \sum_{\ell \in \mathcal{H}_0} \Prob (\widehat{\bm{W}}_\ell \geq t )  \cdot \sum_{j \in \mathcal{H}_0}\bigabs{ \mathcal{P}(t) -  \Prob (\widehat{\bm{W}}_j \geq t )  }\\
        &\quad +C\abs{\mathcal{H}_0}\cdot \sum_{\ell \in \mathcal{H}_0}\E_{ \bm{Y} } \bigg[  \frac{\max_{i \in [n]} \abs{\bm{Y}_i}}{\pnorm{\bm{Y}}{} } \cdot \Prob^{ \bm{Y} } \Big(\widehat{\bm{W}}_\ell \geq t    \Big)\bigg]\\
        &\leq C\abs{\mathcal{H}_0} \cdot \sqrt{\frac{\log n}{n} } \cdot \sum_{\ell \in \mathcal{H}_0} \Prob (\widehat{\bm{W}}_\ell \geq t ). 
    \end{align*} 
    
    With the variance bound above, we can then follow the proof of Lemma \ref{lem:indicator_app} to prove the indicator approximation in (\ref{ineq:indicator_app_heavy}). The main difference is that we will bound $\Prob(D_n \geq \epsilon )$ therein as follows. Using the variance bound in (\ref{ineq:variance_b_heavy}), we have that 
    \begin{align*}
    	 \Prob  ( D_n \geq \epsilon ) &\leq \sum_{i = 0}^{l_n} \Prob \bigg(\biggabs{ \frac { \sum_{j \in \mathcal{H}_0 } \big(\bm{1} (\widehat{\bm{W}}_j \geq t_i) - \Prob ( \widehat{\bm{W}}_j \geq t_i) \big) } { \abs{\mathcal{H}_0} \cdot  \widehat{G}_+(t_i) } } \geq \epsilon \bigg) \\
    &\leq \sum_{i = 0}^{l_n} \frac{\var \big(\sum_{j \in \mathcal{H}_0 } \big(\bm{1} (\widehat{\bm{W}}_j \geq t_i) \big)   }{ \abs{\mathcal{H}_0}^2 \cdot  \widehat{G}_+^2(t_i) \cdot \epsilon^2}\\
    &\leq \epsilon^{-2}\cdot \bigg(1 + C \abs{\mathcal{H}_0}\frac{\log n}{ \sqrt{n} }\bigg) \cdot  \sum_{i = 0}^{l_n} \frac{1}{ \abs{\mathcal{H}_0}\cdot \widehat{G}_+(t_i) },
    \end{align*}
    and from Corollary \ref{cor:ratio_G_be}, it follows that 
\begin{align*}
    &\sum_{i = 0}^{l_n} \frac{1}{ \abs{\mathcal{H}_0}\cdot \widehat{G}_+(t_i) } = \sum_{i = 0}^{l_n} \frac{1}{ \abs{\mathcal{H}_0}\cdot \mathcal{P}(t_i)  } + \sum_{i = 0}^{l_n} \frac{\mathcal{P}(t_i) - \widehat{G}_+(t_i)}{ \abs{\mathcal{H}_0}\cdot\mathcal{P}(t_i) \cdot \widehat{G}_+(t_i) } \\
    &= \frac{1}{\abs{\mathcal{H}_0}} \sum_{i = 0}^{l_n} \frac{1 +\mathfrak{o}(1) }{  z_i}   = \frac{p}{\abs{\mathcal{H}_0}}\sum_{i = 0}^{l_n} \frac{1}{ qa_n/2 + h_n e^{i ^\gamma} }\cdot (1+\mathfrak{o}(1) ) \leq Ch_n^{-1} \to 0.
\end{align*}
Finally, with (\ref{ineq:indicator_app_heavy}) established, the localization of $T_q$ can be derived using the proof of Lemma \ref{lem:local_T}, substituting $\delta_n$ in that proof with $\delta_{n;\mathfrak{q}}$, and utilizing Proposition \ref{prop:conc_knockoff_heavy}. This completes the proof of Lemma \ref{lem:T_qandIndicator_heavy}. 
%\end{proof}

\subsection{Proofs for Section \ref{sec:proof_sec_thm:FDR_ols}}\label{sec:proof_sec_sec_thm:FDR_ols}

\subsubsection{Some a priori estimates}
\begin{lemma}\label{lem:quad_conc}
Assume that conditions \eqref{cond:O1}--\eqref{cond:O3} hold. Then for any deterministic unit vectors $\mathfrak{u}, \mathfrak{v} \in \R^{2p}$, there exists some constant $C = C(\mathsf{M},\tau,\mathsf{c}_{\mathsf{L}}) > 0$ such that for any $n \geq C$, 
\begin{align*}
    \Prob \bigg( \bigabs{\iprod{\mathfrak{u} }{ \big(\widehat{\mathsf{Z}}^\top \widehat{\mathsf{Z}} + \eta I_{2p} \big)^{-1} \mathfrak{v}}- \E \iprod{\mathfrak{u} }{ \big(\widehat{\mathsf{Z}}^\top \widehat{\mathsf{Z}} + \eta I_{2p} \big)^{-1} \mathfrak{v}} }\geq C \frac{\log^2 n}{\sqrt{n}} \bigg) \leq Cn^{-10}.
\end{align*}
\end{lemma}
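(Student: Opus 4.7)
The plan is to prove this via a martingale decomposition indexed by the rows of $\widehat{\mathsf{Z}}$, with each increment controlled through a Sherman--Morrison leave--one--out identity. Write $M \equiv \widehat{\mathsf{Z}}^\top \widehat{\mathsf{Z}} + \eta I_{2p}$, and let $\widehat{\mathsf{z}}_i \in \R^{2p}$ denote the (scaled) $i$-th row of $\widehat{\mathsf{Z}}$, so that $M = \eta I_{2p} + \sum_{i \in [n]} \widehat{\mathsf{z}}_i \widehat{\mathsf{z}}_i^\top$. Set $M_i \equiv M - \widehat{\mathsf{z}}_i \widehat{\mathsf{z}}_i^\top$ and $\mathcal{F}_i \equiv \sigma(\widehat{\mathsf{z}}_1, \dots, \widehat{\mathsf{z}}_i)$. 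Decompose
\[
\iprod{\mathfrak{u}}{M^{-1}\mathfrak{v}} - \E\iprod{\mathfrak{u}}{M^{-1}\mathfrak{v}} = \sum_{i \in [n]} D_i, \qquad D_i \equiv \E\bigl[\iprod{\mathfrak{u}}{M^{-1}\mathfrak{v}} \,|\, \mathcal{F}_i\bigr] - \E\bigl[\iprod{\mathfrak{u}}{M^{-1}\mathfrak{v}} \,|\, \mathcal{F}_{i-1}\bigr].
\]
Since $M_i$ is independent of $\widehat{\mathsf{z}}_i$, $\E[\iprod{\mathfrak{u}}{M_i^{-1}\mathfrak{v}}\,|\,\mathcal{F}_i] = \E[\iprod{\mathfrak{u}}{M_i^{-1}\mathfrak{v}}\,|\,\mathcal{F}_{i-1}]$, so $D_i$ equals the analogous conditional difference of $\iprod{\mathfrak{u}}{M^{-1}\mathfrak{v}} - \iprod{\mathfrak{u}}{M_i^{-1}\mathfrak{v}}$, which by the Sherman--Morrison formula equals $-(\mathfrak{u}^\top M_i^{-1} \widehat{\mathsf{z}}_i)(\widehat{\mathsf{z}}_i^\top M_i^{-1}\mathfrak{v})/(1 + \widehat{\mathsf{z}}_i^\top M_i^{-1} \widehat{\mathsf{z}}_i)$.

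Next I would establish the three size estimates needed to control each $D_i$. First, on the good event supplied by Lemma \ref{lem:rigidity} (applied both to $\widehat{\mathsf{Z}}$ and, by eigenvalue interlacing, to each leave-one-out matrix), $\pnorm{M^{-1}}{\op} \vee \pnorm{M_i^{-1}}{\op} \leq C$ with probability at least $1 - Cn^{-10}$. Second, since $\widehat{\mathsf{z}}_i$ is independent of $M_i^{-1}$, Lemma \ref{lem:Hanson-Wright} applied conditionally with the rank-one matrices $M_i^{-1}\mathfrak{v}\mathfrak{u}^\top M_i^{-1}$ and the quadratic form $\widehat{\mathsf{z}}_i^\top M_i^{-1} \widehat{\mathsf{z}}_i$ gives, on that event,
\[
\bigabs{\mathfrak{u}^\top M_i^{-1} \widehat{\mathsf{z}}_i} \vee \bigabs{\widehat{\mathsf{z}}_i^\top M_i^{-1} \mathfrak{v}} \leq C\sqrt{\log n/n}, \qquad \bigabs{\widehat{\mathsf{z}}_i^\top M_i^{-1}\widehat{\mathsf{z}}_i - n^{-1}\tr(\Sigma M_i^{-1})} \leq C\sqrt{\log n/n},
\]
each with failure probability at most $Cn^{-12}$; this uses the fact that $\E\pnorm{\widehat{\mathsf{z}}_i}{}^2 = n^{-1}\tr(\Sigma) \asymp p/n \leq (1+\tau)^{-1}/2$, so that $1 + \widehat{\mathsf{z}}_i^\top M_i^{-1}\widehat{\mathsf{z}}_i$ is bounded away from zero. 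Combining these yields $|D_i| \leq C(\log n)/n$ on an event of probability at least $1 - Cn^{-11}$.

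Finally, I would truncate $D_i$ to a bounded version $\tilde D_i$ with $|\tilde D_i| \leq C(\log n)/n$ almost surely, show $\sum_i \E|D_i - \tilde D_i| = o(n^{-10})$ via the probability estimates above, and apply Azuma--Hoeffding to $\sum_i \tilde D_i$ to conclude
\[
\Prob\biggl(\biggabs{\sum_{i \in [n]} \tilde D_i} \geq t\biggr) \leq 2\exp\biggl(-\frac{c n t^2}{\log^2 n}\biggr),
\]
which at $t = C\log^2 n /\sqrt{n}$ gives the advertised $Cn^{-10}$ bound. The main obstacle is the handling of the bad set where either the rigidity fails or a row $\widehat{\mathsf{z}}_i$ produces an atypical quadratic form: naively the conditional Hanson--Wright bound holds only on a complicated event that itself depends on $\mathcal{F}_i$, so some care is needed to decouple the martingale increment bound from the conditioning when propagating the estimates through $\E[\,\cdot\,|\,\mathcal{F}_i] - \E[\,\cdot\,|\,\mathcal{F}_{i-1}]$. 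This is handled via the truncation step described above, together with a crude deterministic bound $|D_i| \leq 2\pnorm{M^{-1}}{\op}\pnorm{M_i^{-1}}{\op}\pnorm{\widehat{\mathsf{z}}_i}{}^2 \leq C n^{2}$ on the complement that is absorbed by the $n^{-10}$ probability estimate.
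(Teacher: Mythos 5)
Your proof follows essentially the same route as the paper's: a Doob martingale decomposition indexed by the rows, reduced via leave-one-out + Sherman--Morrison, with each increment bounded in high probability through the convex concentration (for linear forms like $\mathfrak{u}^\top M_i^{-1}\widehat{\mathsf{z}}_i$) and Hanson--Wright (for quadratic forms), and then an Azuma-type inequality on the martingale. Two small remarks. First, your attribution of the linear-form bounds to Lemma~\ref{lem:Hanson-Wright} is harmless (one can view the product as a rank-one quadratic form), but the paper actually obtains these directly from the convex concentration property in \eqref{cond:O2}, which is the more natural tool for a linear functional of $\widehat{\mathsf{z}}_i$. Second, and more substantively, your truncation step ``cap $D_i$ to $\tilde D_i$ and apply Azuma--Hoeffding'' is where the care you flag actually bites: naively capping the $D_i$'s destroys the martingale difference property, and controlling $\sum_i \E|D_i-\tilde D_i|$ in $L^1$ does not by itself recover a tail bound. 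The paper sidesteps this by using Proposition~\ref{prop:generalize_azuma}, which modifies the underlying \emph{function} $f$ on the bad set $B_j = \{j_{\mathsf{X}} = j\}$ (replacing $f$ by a conditional expectation there) in a way that simultaneously preserves $\E\tilde f = \E f$, keeps the martingale structure, and forces the new increments to be bounded a.s.\ by the high-probability thresholds. Your plan is correct in spirit, but if you want to avoid reproving such a modification lemma you should invoke it (or an equivalent truncated-Azuma statement) rather than a raw truncation; alternatively you can work with $\hat D_i := \tilde D_i - \E[\tilde D_i\,|\,\mathcal F_{i-1}]$ and bound the resulting drift, but that is more cumbersome and is not what the paper does.
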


\noindent\textit{Proof of Lemma \ref{lem:quad_conc}}. 
%\begin{proof}
We will focus on the proof for $\widehat{\mathsf{Z}}$ with $\eta = 0$, while the proofs for $\widehat{\mathsf{Z}}$ and the general case of $\eta$ are similar and thus omitted. Let us define for any $j \in [n]$,
\begin{align*}
    \mathsf{D}_j \equiv \E \big[ \iprod{\mathfrak{u} }{ \big(\widehat{\mathsf{Z}}^\top \widehat{\mathsf{Z}} \big)^{-1} \mathfrak{v}}  \,|\, \widehat{\mathsf{Z}}_{1,\cdot},\ldots, \widehat{\mathsf{Z}}_{j,\cdot}  \big] - \E \big[ \iprod{\mathfrak{u} }{ \big(\widehat{\mathsf{Z}}^\top \widehat{\mathsf{Z}} \big)^{-1} \mathfrak{v}}  \,|\, \widehat{\mathsf{Z}}_{1,\cdot},\ldots, \widehat{\mathsf{Z}}_{j-1,\cdot}  \big].
\end{align*}
Denote by $\widehat{\mathsf{Z}}_{-j,\cdot} \in \R^{(n-1) \times 2p }$ the matrix obtained by removing the $j$th row of $\widehat{\mathsf{Z}}$. Then it holds that 
\begin{align*}
     \mathsf{D}_j &= \E \big[ \iprod{\mathfrak{u} }{ \big(\widehat{\mathsf{Z}}^\top \widehat{\mathsf{Z}} \big)^{-1} \mathfrak{v}} -\iprod{\mathfrak{u} }{ \big(\widehat{\mathsf{Z}}_{-j,\cdot}^\top \widehat{\mathsf{Z}}_{-j,\cdot} \big)^{-1} \mathfrak{v}}  \,|\, \widehat{\mathsf{Z}}_{1,\cdot},\ldots, \widehat{\mathsf{Z}}_{j,\cdot}  \big] \\
     &\quad + \E \big[ \iprod{\mathfrak{u} }{ \big(\widehat{\mathsf{Z}}_{-j,\cdot}^\top \widehat{\mathsf{Z}}_{-j,\cdot} \big)^{-1} \mathfrak{v}} -\iprod{\mathfrak{u} }{ \big(\widehat{\mathsf{Z}}^\top \widehat{\mathsf{Z}} \big)^{-1} \mathfrak{v}}  \,|\, \widehat{\mathsf{Z}}_{1,\cdot},\ldots, \widehat{\mathsf{Z}}_{j-1,\cdot}  \big] \equiv \mathsf{D}_{j,1} +  \mathsf{D}_{j,2}.
\end{align*}
An application of the Sherman--Morrison formula gives that  
\begin{align*}
\abs{\mathsf{D}_{j,1}} &\leq \E \big[ \abs{\iprod{\mathfrak{u} }{ \big(\widehat{\mathsf{Z}}^\top \widehat{\mathsf{Z}} \big)^{-1} \mathfrak{v}} -\iprod{\mathfrak{u} }{ \big(\widehat{\mathsf{Z}}_{-j,\cdot}^\top \widehat{\mathsf{Z}}_{-j,\cdot} \big)^{-1} \mathfrak{v}}}  \,|\, \widehat{\mathsf{Z}}_{1,\cdot},\ldots, \widehat{\mathsf{Z}}_{j,\cdot}  \big]\\
    &= \E \Bigg[ \biggabs{ \frac{\iprod{\mathfrak{u} }{\big(\widehat{\mathsf{Z}}_{-j,\cdot}^\top \widehat{\mathsf{Z}}_{-j,\cdot} \big)^{-1} \widehat{\mathsf{Z}}_{j,\cdot}   } \iprod{\mathfrak{v} }{\big(\widehat{\mathsf{Z}}_{-j,\cdot}^\top \widehat{\mathsf{Z}}_{-j,\cdot} \big)^{-1} \widehat{\mathsf{Z}}_{j,\cdot}   } }{1 + \iprod{\widehat{\mathsf{Z}}_{j,\cdot} }{\big(\widehat{\mathsf{Z}}_{-j,\cdot}^\top \widehat{\mathsf{Z}}_{-j,\cdot} \big)^{-1} \widehat{\mathsf{Z}}_{j,\cdot}   } } }
 \, \bigg| \,  \widehat{\mathsf{Z}}_{1,\cdot},\ldots, \widehat{\mathsf{Z}}_{j,\cdot}\Bigg] \\
 &\leq  \E \big[ \abs{  \iprod{\mathfrak{u} }{\big(\widehat{\mathsf{Z}}_{-j,\cdot}^\top \widehat{\mathsf{Z}}_{-j,\cdot} \big)^{-1} \widehat{\mathsf{Z}}_{j,\cdot}   } \iprod{\mathfrak{v} }{\big(\widehat{\mathsf{Z}}_{-j,\cdot}^\top \widehat{\mathsf{Z}}_{-j,\cdot} \big)^{-1} \widehat{\mathsf{Z}}_{j,\cdot}   }    }
 \, | \,  \widehat{\mathsf{Z}}_{1,\cdot},\ldots, \widehat{\mathsf{Z}}_{j,\cdot}\big].
\end{align*}

By the convex concentration in condition \eqref{cond:O2}, for some  $c = c(\mathsf{c}_{\mathsf{L}}) > 0$ we have 
\begin{align*}
    \Prob^{ \widehat{\mathsf{Z}}_{-j,\cdot}}  \Big(  \abs{\iprod{\mathfrak{u} }{\big(\widehat{\mathsf{Z}}_{-j,\cdot}^\top \widehat{\mathsf{Z}}_{-j,\cdot} \big)^{-1} \widehat{\mathsf{Z}}_{j,\cdot}   }} \geq t  \Big) &=     \Prob^{ \widehat{\mathsf{Z}}_{-j,\cdot}}  \Big(  \abs{\iprod{\mathfrak{u} }{\big(\widehat{\mathsf{Z}}_{-j,\cdot}^\top \widehat{\mathsf{Z}}_{-j,\cdot} \big)^{-1} \Sigma^{1/2} \widehat{\mathsf{Q}}_{j,\cdot}   }} \geq t  \Big) \\
    &\leq \exp \bigg(- \frac{cnt^2}{\pnorm{\Sigma^{1/2}\big(\widehat{\mathsf{Z}}_{-j,\cdot}^\top \widehat{\mathsf{Z}}_{-j,\cdot} \big)^{-1} \mathfrak{u}  }{}^2  } \bigg).
\end{align*}
Since $\lambda_{\min}(\widehat{\mathsf{Z}}_{-j,\cdot}^\top \widehat{\mathsf{Z}}_{-j,\cdot}  ) \geq \lambda_{\min}(\widehat{\mathsf{Z}}^\top \widehat{\mathsf{Z}})$, it follows that 
$\pnorm{\Sigma^{1/2}\big(\widehat{\mathsf{Z}}_{-j,\cdot}^\top \widehat{\mathsf{Z}}_{-j,\cdot} \big)^{-1} \mathfrak{u}  }{}^2 \leq \mathsf{M} \cdot \pnorm{\big(\widehat{\mathsf{Z}}^\top \widehat{\mathsf{Z}} \big)^{-1} }{}^2$.
Hence, using Lemma \ref{lem:rigidity} and the law of total probability, we can show that there exists some constant $C_2 = C_2(\mathsf{M},\tau, \mathsf{c}_\mathsf{L}) > 0$ such that for large enough $n$,
\begin{align}\label{ineq:liner_conc1}
    \Prob\bigg(  \abs{\iprod{\mathfrak{u} }{\big(\widehat{\mathsf{Z}}_{-j,\cdot}^\top \widehat{\mathsf{Z}}_{-j,\cdot} \big)^{-1} \widehat{\mathsf{Z}}_{j,\cdot}   }} \geq C_2\sqrt{\frac{\log n}{n}}  \bigg) \leq C_2n^{-10} .
\end{align}
Similarly, it holds that 
\begin{align}\label{ineq:liner_conc2}
    \Prob\bigg(  \abs{\iprod{\mathfrak{v} }{\big(\widehat{\mathsf{Z}}_{-j,\cdot}^\top \widehat{\mathsf{Z}}_{-j,\cdot} \big)^{-1} \widehat{\mathsf{Z}}_{j,\cdot}   }} \geq C_2\sqrt{\frac{\log n}{n}}  \bigg) \leq C_2n^{-10}.
\end{align}

Combining the above probability estimates, we can deduce that 
\begin{align*}
    &\Prob \bigg( \E \big[ \abs{  \iprod{\mathfrak{u} }{\big(\widehat{\mathsf{Z}}_{-j,\cdot}^\top \widehat{\mathsf{Z}}_{-j,\cdot} \big)^{-1} \widehat{\mathsf{Z}}_{j,\cdot}   } \iprod{\mathfrak{v} }{\big(\widehat{\mathsf{Z}}_{-j,\cdot}^\top \widehat{\mathsf{Z}}_{-j,\cdot} \big)^{-1} \widehat{\mathsf{Z}}_{j,\cdot}   }    }
 \, | \,  \widehat{\mathsf{Z}}_{1,\cdot},\ldots, \widehat{\mathsf{Z}}_{j,\cdot}\big] \leq C_2^2\cdot \frac{\log n}{n} \bigg)\\
 &\geq  \Prob \bigg( \abs{  \iprod{\mathfrak{u} }{\big(\widehat{\mathsf{Z}}_{-j,\cdot}^\top \widehat{\mathsf{Z}}_{-j,\cdot} \big)^{-1} \widehat{\mathsf{Z}}_{j,\cdot}   } \iprod{\mathfrak{v} }{\big(\widehat{\mathsf{Z}}_{-j,\cdot}^\top \widehat{\mathsf{Z}}_{-j,\cdot} \big)^{-1} \widehat{\mathsf{Z}}_{j,\cdot}   }    }
 \leq  C_2^2\cdot \frac{\log n}{n} \bigg) \\
 &\geq  \Prob\bigg(  \abs{\iprod{\mathfrak{u} }{\big(\widehat{\mathsf{Z}}_{-j,\cdot}^\top \widehat{\mathsf{Z}}_{-j,\cdot} \big)^{-1} \widehat{\mathsf{Z}}_{j,\cdot}   }} \leq C_2\sqrt{\frac{\log n}{n}}  \, , \, \abs{\iprod{\mathfrak{v} }{\big(\widehat{\mathsf{Z}}_{-j,\cdot}^\top \widehat{\mathsf{Z}}_{-j,\cdot} \big)^{-1} \widehat{\mathsf{Z}}_{j,\cdot}   }} \leq C_2\sqrt{\frac{\log n}{n}}  \bigg)  \\
 &\geq 1- 2C_2n^{-10}. 
\end{align*}
This implies that 
\begin{align*}
    \Prob \bigg( \abs{\mathsf{D}_{j,1}} \geq C_2^2\cdot \frac{\log n}{n}  \bigg) \leq 2C_2n^{-10}.
\end{align*}

Similarly, there exists some constant $C_3 = C_3(\mathsf{M},\tau,\mathsf{c}_{\mathsf{L}}) > 0$ such that for large enough $n$,
\begin{align*}
    \Prob \bigg( \abs{\mathsf{D}_{j,2}} \geq C_3^2\cdot \frac{\log n}{n}  \bigg) \leq 2C_2n^{-10}.
\end{align*}
Then a direct application of Proposition \ref{prop:generalize_azuma} yields that for some constant $C_4 = C_4(\mathsf{M},\tau,\mathsf{c}_{\mathsf{L}}) > 0$ and large enough $n$,
\begin{align*}
        \Prob \Big( \bigabs{\iprod{\mathfrak{u} }{ \big(\widehat{\mathsf{Z}}^\top \widehat{\mathsf{Z}} \big)^{-1} \mathfrak{v}}- \E \iprod{\mathfrak{u} }{ \big(\widehat{\mathsf{Z}}^\top \widehat{\mathsf{Z}} \big)^{-1} \mathfrak{v}} }\geq C_4\frac{\log^2 n}{\sqrt{n}}   \Big) \leq C_4n^{-10}.
\end{align*}
This completes the proof of Lemma \ref{lem:quad_conc}. 
%\end{proof}

\begin{lemma}\label{lem:expectation_bound}
Assume that conditions \eqref{cond:O1}--\eqref{cond:O3} hold. Then for any deterministic unit vectors $\mathfrak{u}, \mathfrak{v} \in \R^{2p}$, there exists some constant $C = C(\mathsf{M},\tau,\mathsf{c}_{\mathsf{L}}) > 0$ such that for any $\eta > 0$ and any $n\geq C$, 
    \begin{align*}
       \bigabs{ \E \iprod{\mathfrak{u} }{ \big({\mathsf{Z}}^\top {\mathsf{Z}} + \eta I_{2p} \big)^{-1} \mathfrak{v}} - \E \iprod{\mathfrak{u} }{ \big(\widehat{\mathsf{Z}}^\top \widehat{\mathsf{Z}}  + \eta I_{2p}\big)^{-1} \mathfrak{v}}} \leq C \bigg(  \frac{\log^{3/2} n}{\sqrt{n}} + \eta^{-1}n^{-10} \bigg).
    \end{align*}
    Here, ${\mathsf{Z}} \equiv ({\mathsf{Z}}_{1,\cdot},\ldots, {\mathsf{Z}}_{n,\cdot})^\top \in \R^{n \times 2p}$ with $\{\sqrt{n} {\mathsf{Z}}_{i,\cdot} \}_{i \in [n]} \overset{i.i.d.}{\sim} \mathcal{N}(0,\Sigma)$. 
\end{lemma}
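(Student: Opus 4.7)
The plan is to apply a Lindeberg-type row-swap interpolation between $\widehat{\mathsf{Z}}$ and $\mathsf{Z}$, exploiting the fact that their rows share the same mean (zero) and covariance ($n^{-1}\Sigma$), so that only third- and higher-moment discrepancies contribute. I would first introduce interpolating matrices $\mathsf{W}^{(k)}$ for $k = 0, 1, \ldots, n$, where $\mathsf{W}^{(k)}$ has rows $1, \ldots, k$ from $\mathsf{Z}$ and rows $k+1, \ldots, n$ from $\widehat{\mathsf{Z}}$, so that
\begin{align*}
\E\iprod{\mathfrak{u}}{(\mathsf{Z}^\top \mathsf{Z} + \eta I_{2p})^{-1}\mathfrak{v}} - \E\iprod{\mathfrak{u}}{(\widehat{\mathsf{Z}}^\top \widehat{\mathsf{Z}} + \eta I_{2p})^{-1}\mathfrak{v}} = \sum_{k=1}^n \E\Delta_k,
\end{align*}
with $\Delta_k$ the difference of the resolvent bilinear form induced by replacing $\widehat{\mathsf{Z}}_{k,\cdot}$ by $\mathsf{Z}_{k,\cdot}$.

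Each increment $\Delta_k$ can be written via the Sherman--Morrison formula as $\Delta_k = F(\mathsf{Z}_{k,\cdot};\mathsf{A}_k) - F(\widehat{\mathsf{Z}}_{k,\cdot};\mathsf{A}_k)$, where $\mathsf{A}_k$ is the resolvent of $\mathsf{W}^{(k)}_{-k}$ (which is the same for $\mathsf{W}^{(k)}$ and $\mathsf{W}^{(k-1)}$, and independent of the swapped row) and $F(w;\mathsf{A}) \equiv -(\mathfrak{u}^\top \mathsf{A} w)(w^\top \mathsf{A} \mathfrak{v})/(1 + w^\top \mathsf{A} w)$. To isolate the moment-matching cancellation, I would expand the denominator around $t_0 \equiv n^{-1}\tr(\mathsf{A}_k \Sigma)$: writing $s \equiv (w^\top \mathsf{A}_k w - t_0)/(1+t_0)$, the identity $(1+s)^{-1} = 1 - s + s^2/(1+s)$ gives
\begin{align*}
F(w;\mathsf{A}_k) = -\frac{(\mathfrak{u}^\top \mathsf{A}_k w)(w^\top \mathsf{A}_k \mathfrak{v})}{1+t_0}\left[1 - s + \frac{s^2}{1+s}\right].
\end{align*}
Conditional on $\mathsf{A}_k$, the leading piece is a quadratic form in $w$ whose expectation matches under the two distributions, so the per-step contribution is driven by the cubic-in-$w$ correction (the $-s$ term) and the quadratic-in-$s$ remainder; both are controlled by the Hanson--Wright inequality of Lemma~\ref{lem:Hanson-Wright} for the centered quadratic form $w^\top \mathsf{A}_k w - t_0$, combined with the sub-Gaussian tails of the linear forms $\mathfrak{u}^\top \mathsf{A}_k w$ and $w^\top \mathsf{A}_k \mathfrak{v}$ (as in the proof of Lemma~\ref{lem:quad_conc}).

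To convert these tail bounds into an expectation bound, I would condition on the rigidity event $\{\pnorm{\mathsf{A}_k}{\op} \leq C\}$, which holds with probability at least $1 - Cn^{-10}$ by Lemma~\ref{lem:rigidity} for $\widehat{\mathsf{Z}}$ and by the standard Marchenko--Pastur estimate for the Gaussian design $\mathsf{Z}$ (valid under \eqref{cond:O1}). On this event, bounding each term of the expansion after truncating $|s|$ at a threshold of order $\sqrt{\log n/n}$ yields $|\E[\Delta_k \mid \mathsf{A}_k]| = O(n^{-3/2}\log^{3/2}n)$; summing over $k \in [n]$ then produces the main $\log^{3/2}n/\sqrt{n}$ contribution. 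Off this event, the a priori bound $\pnorm{(\mathsf{W}^{(k)\top}\mathsf{W}^{(k)} + \eta I_{2p})^{-1}}{\op} \leq \eta^{-1}$ contributes the secondary term $\eta^{-1}n^{-10}$.

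The main obstacle will be carrying out this Taylor-expansion argument uniformly in the unit vectors $\mathfrak{u}, \mathfrak{v}$ while keeping the final logarithmic loss at $\log^{3/2}n$. Because $F$ is rational rather than polynomial, the remainder $s^2/(1+s)$ piece does not admit a direct moment-matching cancellation, and the joint control of the two linear forms and the quadratic form appearing in each increment must be tightly coordinated via Cauchy--Schwarz and the convex concentration property in \eqref{cond:O2}. This coordinated tail bookkeeping is the analogue at the expectation level of the martingale-difference scheme used in Lemma~\ref{lem:quad_conc}, and adapting it to yield the claimed bound is the most delicate step of the argument.
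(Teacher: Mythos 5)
Your proposal is essentially the same argument as the paper's: a Lindeberg row-swap interpolation between $\mathsf{Z}$ and $\widehat{\mathsf{Z}}$, Sherman--Morrison to isolate the rank-one update, replacing the quadratic form in the denominator by its conditional mean $t_0 = n^{-1}\tr(\mathsf{A}_k\Sigma)$ so that the leading bilinear-over-$(1+t_0)$ piece has matching conditional expectation under both row distributions (since only the first two moments are used), and then controlling the remainder by Hanson--Wright concentration of the centered quadratic form together with sub-Gaussian control of the two linear forms, on a rigidity event, with $\eta^{-1}$ as the a priori deterministic bound off that event. The only cosmetic difference is that the paper keeps the denominator-correction error in a single lump $\widehat{\mathsf{Err}}_i$ rather than Taylor-expanding $(1+s)^{-1}$; and your description of the $-s$ term as \emph{cubic} in $w$ is a small slip --- the prefactor $(\mathfrak{u}^\top\mathsf{A}w)(w^\top\mathsf{A}\mathfrak{v})$ is already quadratic and $s$ carries another quadratic form, so it is quartic --- but since you handle that term by concentration rather than moment matching, this does not affect the argument.
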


%\begin{proof}
\noindent\textit{Proof of Lemma \ref{lem:expectation_bound}}. 
    Let $  \mathsf{Z}^{(i)} \equiv ({\mathsf{Z}}_{1,\cdot},  \ldots, {\mathsf{Z}}_{i,\cdot}, \widehat{\mathsf{Z}}_{i+1,\cdot},\ldots, \widehat{\mathsf{Z}}_{n,\cdot} )^\top$, $\mathcal{G}^{(i)} \equiv  ((\mathsf{Z}^{(i)})^\top  \mathsf{Z}^{(i)} + \eta I_{2p})^{-1}$, and  $\mathcal{G}_{-j}^{(i)} \equiv  ((\mathsf{Z}_{-j,\cdot}^{(i)})^\top  \mathsf{Z}_{-j,\cdot}^{(i)}  + \eta I_{2p})^{-1}$ for simplicity. Let us first rewrite our target into the following telescoping sum
    \begin{align*}
         &\bigabs{ \E \iprod{\mathfrak{u} }{ \big({\mathsf{Z}}^\top {\mathsf{Z}} + \eta I_{2p} \big)^{-1} \mathfrak{v}} - \E \iprod{\mathfrak{u} }{ \big(\widehat{\mathsf{Z}}^\top \widehat{\mathsf{Z}} + \eta I_{2p} \big)^{-1} \mathfrak{v}}}=\biggabs{  \sum_{i \in [n]}  \E \iprod{\mathfrak{u} }{\mathcal{G}^{(i)} \mathfrak{v}} - \E \iprod{\mathfrak{u} }{\mathcal{G}^{(i-1)} \mathfrak{v}}}.
    \end{align*}
    With the aid of the Sherman--Morrison formula, we can deduce that 
    \begin{align*}
        &\iprod{\mathfrak{u} }{ \mathcal{G}^{(i)} \mathfrak{v}} - \iprod{\mathfrak{u} }{\mathcal{G}^{(i-1)}\mathfrak{v}} \\
        &= \iprod{\mathfrak{u} }{\mathcal{G}_{-i}^{(i)}  \mathfrak{v}} - \frac{\iprod{\mathfrak{u} }{ \mathcal{G}_{-i}^{(i)}{\mathsf{Z}}_{i,\cdot}}   \iprod{\mathfrak{v} }{ \mathcal{G}_{-i}^{(i)} {\mathsf{Z}}_{i,\cdot}}  }{1 + \iprod{{\mathsf{Z}}_{i,\cdot} }{ \mathcal{G}_{-i}^{(i)} {\mathsf{Z}}_{i,\cdot}} }  - \iprod{\mathfrak{u} }{ \mathcal{G}_{-i}^{(i-1)} \mathfrak{v}} +  \frac{\iprod{\mathfrak{u} }{ \mathcal{G}_{-i}^{(i-1)} \widehat{\mathsf{Z}}_{i,\cdot}}   \iprod{\mathfrak{v} }{ \mathcal{G}_{-i}^{(i-1)} \widehat{\mathsf{Z}}_{i,\cdot}}   }{1 + \iprod{\widehat{\mathsf{Z}}_{i,\cdot} }{ \mathcal{G}_{-i}^{(i-1)}   \widehat{\mathsf{Z}}_{i,\cdot}} }\\
        &= \frac{\iprod{\mathfrak{u} }{\mathcal{G}_{-i}^{(i)}  \widehat{\mathsf{Z}}_{i,\cdot}}   \iprod{\mathfrak{v} }{ \mathcal{G}_{-i}^{(i)}  \widehat{\mathsf{Z}}_{i,\cdot}}   }{1 + \iprod{\widehat{\mathsf{Z}}_{i,\cdot} }{ \mathcal{G}_{-i}^{(i)}    \widehat{\mathsf{Z}}_{i,\cdot}} } -\frac{\iprod{\mathfrak{u} }{ \mathcal{G}_{-i}^{(i)}{\mathsf{Z}}_{i,\cdot}}   \iprod{\mathfrak{v} }{ \mathcal{G}_{-i}^{(i)}{\mathsf{Z}}_{i,\cdot}}  }{1 + \iprod{{\mathsf{Z}}_{i,\cdot} }{ \mathcal{G}_{-i}^{(i)}  {\mathsf{Z}}_{i,\cdot}} } \\
        &=  \frac{\iprod{\mathfrak{u} }{\mathcal{G}_{-i}^{(i)} \widehat{\mathsf{Z}}_{i,\cdot}}   \iprod{\mathfrak{v} }{ \mathcal{G}_{-i}^{(i)}\widehat{\mathsf{Z}}_{i,\cdot}}   }{1 +\E_{\widehat{\mathsf{Z}}_{i,\cdot}}  \iprod{\widehat{\mathsf{Z}}_{i,\cdot} }{ \mathcal{G}_{-i}^{(i)}   \widehat{\mathsf{Z}}_{i,\cdot}} } -\frac{\iprod{\mathfrak{u} }{\mathcal{G}_{-i}^{(i)}{\mathsf{Z}}_{i,\cdot}}   \iprod{\mathfrak{v} }{ \mathcal{G}_{-i}^{(i)}{\mathsf{Z}}_{i,\cdot}}  }{1 + \E_{{\mathsf{Z}}_{i,\cdot} } \iprod{{\mathsf{Z}}_{i,\cdot} }{\mathcal{G}_{-i}^{(i)}  {\mathsf{Z}}_{i,\cdot}} } +  \widehat{\mathsf{Err}}_i - {\mathsf{Err}}_i\\
        &\equiv \widehat{I}_{i} - {I}_i +  \widehat{\mathsf{Err}}_i - {\mathsf{Err}}_i,
    \end{align*}
    where
    \begin{align*}
        \widehat{\mathsf{Err}}_i &\equiv \frac{\iprod{\mathfrak{u} }{\mathcal{G}_{-i}^{(i)} \widehat{\mathsf{Z}}_{i,\cdot}}   \iprod{\mathfrak{v} }{\mathcal{G}_{-i}^{(i)}\widehat{\mathsf{Z}}_{i,\cdot}}   }{1 + \iprod{\widehat{\mathsf{Z}}_{i,\cdot} }{ \mathcal{G}_{-i}^{(i)}   \widehat{\mathsf{Z}}_{i,\cdot}} } - \frac{\iprod{\mathfrak{u} }{\mathcal{G}_{-i}^{(i)} \widehat{\mathsf{Z}}_{i,\cdot}}   \iprod{\mathfrak{v} }{ \mathcal{G}_{-i}^{(i)} \widehat{\mathsf{Z}}_{i,\cdot}}   }{1 +\E_{\widehat{\mathsf{Z}}_{i,\cdot}}  \iprod{\widehat{\mathsf{Z}}_{i,\cdot} }{\mathcal{G}_{-i}^{(i)} \widehat{\mathsf{Z}}_{i,\cdot}} },  \\
        {\mathsf{Err}}_i&\equiv  \frac{\iprod{\mathfrak{u} }{\mathcal{G}_{-i}^{(i)}{\mathsf{Z}}_{i,\cdot}}   \iprod{\mathfrak{v} }{ \mathcal{G}_{-i}^{(i)}{\mathsf{Z}}_{i,\cdot}}  }{1 + \iprod{{\mathsf{Z}}_{i,\cdot} }{ \mathcal{G}_{-i}^{(i)}  {\mathsf{Z}}_{i,\cdot}} }- \frac{\iprod{\mathfrak{u} }{ \mathcal{G}_{-i}^{(i)} {\mathsf{Z}}_{i,\cdot}}   \iprod{\mathfrak{v} }{ \mathcal{G}_{-i}^{(i)}{\mathsf{Z}}_{i,\cdot}}  }{1 + \E_{{\mathsf{Z}}_{i,\cdot} } \iprod{{\mathsf{Z}}_{i,\cdot} }{ \mathcal{G}_{-i}^{(i)}   {\mathsf{Z}}_{i,\cdot}} } .
    \end{align*}
    
    Using the facts that
    \begin{itemize}
        \item $ \E_{{\mathsf{Z}}_{i,\cdot} } \iprod{{\mathsf{Z}}_{i,\cdot} }{\mathcal{G}_{-i}^{(i)}  {\mathsf{Z}}_{i,\cdot}} = \E_{\widehat{\mathsf{Z}}_{i,\cdot} } \iprod{\widehat{\mathsf{Z}}_{i,\cdot} }{ \mathcal{G}_{-i}^{(i)}   \widehat{\mathsf{Z}}_{i,\cdot}}$ and
        \item $\E_{{\mathsf{Z}}_{i,\cdot} } \iprod{\mathfrak{u} }{\mathcal{G}_{-i}^{(i)}{\mathsf{Z}}_{i,\cdot}}   \iprod{\mathfrak{v} }{ \mathcal{G}_{-i}^{(i)}{\mathsf{Z}}_{i,\cdot}} =\E_{\widehat{\mathsf{Z}}_{i,\cdot} } \iprod{\mathfrak{u} }{ \mathcal{G}_{-i}^{(i)} \widehat{\mathsf{Z}}_{i,\cdot}}   \iprod{\mathfrak{v} }{ \mathcal{G}_{-i}^{(i)} \widehat{\mathsf{Z}}_{i,\cdot}} $,
    \end{itemize}
    we can show that 
    \begin{align*}
        \E (\widehat{I}_{i} - {I}_i) = \E \Big( \E_{\widehat{\mathsf{Z}}_{i,\cdot} } \widehat{I}_i - \E_{{\mathsf{Z}}_{i,\cdot} } {I}_i  \Big) = 0.
    \end{align*}
    Thus, it remains to bound $\widehat{\mathsf{Err}}_i$ and ${\mathsf{Err}}_i$. 
    
    Observe that
    \begin{align*}
        \bigabs{\widehat{\mathsf{Err}}_i} &=\abs{ \iprod{\mathfrak{u} }{\mathcal{G}_{-i}^{(i)}\widehat{\mathsf{Z}}_{i,\cdot}}   \iprod{\mathfrak{v} }{ \mathcal{G}_{-i}^{(i)}\widehat{\mathsf{Z}}_{i,\cdot}}}\cdot \bigg(  \frac{\abs{  \iprod{{\mathsf{Z}}_{i,\cdot} }{ \mathcal{G}_{-i}^{(i)}   {\mathsf{Z}}_{i,\cdot}} -\E_{{\mathsf{Z}}_{i,\cdot} } \iprod{{\mathsf{Z}}_{i,\cdot} }{ \mathcal{G}_{-i}^{(i)}   {\mathsf{Z}}_{i,\cdot}} }  }{ \abs{ 1 + \iprod{{\mathsf{Z}}_{i,\cdot} }{ \mathcal{G}_{-i}^{(i)}   {\mathsf{Z}}_{i,\cdot}}  } \cdot \abs{1 + \E_{{\mathsf{Z}}_{i,\cdot}}\iprod{{\mathsf{Z}}_{i,\cdot} }{ \mathcal{G}_{-i}^{(i)}   {\mathsf{Z}}_{i,\cdot}} } } \bigg) \\
        &\leq \abs{ \iprod{\mathfrak{u} }{\mathcal{G}_{-i}^{(i)}\widehat{\mathsf{Z}}_{i,\cdot}} }\cdot \abs{ \iprod{\mathfrak{v} }{ \mathcal{G}_{-i}^{(i)}\widehat{\mathsf{Z}}_{i,\cdot}}}\cdot \abs{  \iprod{{\mathsf{Z}}_{i,\cdot} }{ \mathcal{G}_{-i}^{(i)}   {\mathsf{Z}}_{i,\cdot}} -\E_{{\mathsf{Z}}_{i,\cdot} } \iprod{{\mathsf{Z}}_{i,\cdot} }{ \mathcal{G}_{-i}^{(i)}   {\mathsf{Z}}_{i,\cdot}} }.
    \end{align*}
    Similar to (\ref{ineq:liner_conc1}) and (\ref{ineq:liner_conc2}), there exists some constant $C_1 = C_1(\mathsf{M},\tau,\mathsf{c}_{\mathsf{L}}) > 0$ such that
    \begin{align}\label{ineq:prob_est_12}
        \Prob \bigg(\abs{ \iprod{\mathfrak{u} }{\mathcal{G}_{-i}^{(i)}\widehat{\mathsf{Z}}_{i,\cdot}} } \geq C_1 \sqrt{\frac{\log n}{n} }  \bigg) \vee    \Prob \bigg(\abs{ \iprod{\mathfrak{v} }{\mathcal{G}_{-i}^{(i)}\widehat{\mathsf{Z}}_{i,\cdot}} } \geq C_1 \sqrt{\frac{\log n}{n} }  \bigg)\leq C_1n^{-10}.
    \end{align}
    Moreover, it follows from the Hanson--Wright inequality, cf. Lemma \ref{lem:Hanson-Wright}, that there exists some constant $C_2 = C_2(\mathsf{M},\tau,\mathsf{c}_{\mathsf{L}}) > 0$ such that for large enough $n$,
    \begin{align*}
        \Prob_{{\mathsf{Z}}_{i,\cdot}} \bigg(\abs{  \iprod{{\mathsf{Z}}_{i,\cdot} }{ \mathcal{G}_{-i}^{(i)}   {\mathsf{Z}}_{i,\cdot}}& -\E_{{\mathsf{Z}}_{i,\cdot} } \iprod{{\mathsf{Z}}_{i,\cdot} }{ \mathcal{G}_{-i}^{(i)}   {\mathsf{Z}}_{i,\cdot}} }\\
        &\geq C_2 \frac{\log^{1/2} n}{n} \cdot \bigg(\sum_{k,\ell \in [2p]}[\Sigma^{1/2}\mathcal{G}_{-i}^{(i)} \Sigma^{1/2}]_{k\ell}^2   \bigg)^{1/2}  \bigg) \leq C_2n^{-10}.
    \end{align*}
    
    In light of Lemma \ref{lem:rigidity}, there exists some constant $C_3 = C_3(\mathsf{M},\tau, \mathsf{c}_\mathsf{L})> 0$ such that $\Prob(\pnorm{ \Sigma^{1/2}\mathcal{G}_{-i}^{(i)} \Sigma^{1/2} }{\op} \leq C_3 ) \geq 1 - C_3 p^{-10}$. Hence, using the law of total probability, it holds for some (large) constant $C_4 = C_4(\mathsf{M},\tau, \mathsf{c}_\mathsf{L}) > 0$ that for large enough $n$,
    \begin{align}\label{ineq:prob_est_3}
        \Prob \bigg(\abs{  \iprod{{\mathsf{Z}}_{i,\cdot} }{ \mathcal{G}_{-i}^{(i)}   {\mathsf{Z}}_{i,\cdot}}& -\E_{{\mathsf{Z}}_{i,\cdot} } \iprod{{\mathsf{Z}}_{i,\cdot} }{ \mathcal{G}_{-i}^{(i)}   {\mathsf{Z}}_{i,\cdot}} } \geq C_4 \sqrt{ \frac{ \log n}{n}} \bigg) \leq C_4n^{-10}.
    \end{align}
    By (\ref{ineq:prob_est_12}) and (\ref{ineq:prob_est_3}), there exists some (large) constant $C_5 = C_5(\mathsf{M},\tau, \mathsf{c}_\mathsf{L}) > 0$ such that for large enough $n$,
    \begin{align*}
        \Prob \bigg(  \bigabs{\widehat{\mathsf{Err}}_i} \geq C_5 \bigg( \frac{\log n}{n} \bigg)^{3/2} \bigg) \leq C_5n^{-10}
    \end{align*}
   and similarly,
    \begin{align*}
        \Prob \bigg(  \bigabs{{\mathsf{Err}}_i} \geq C_5 \bigg( \frac{\log n}{n} \bigg)^{3/2} \bigg) \leq C_5n^{-10}.
    \end{align*}
    
    Therefore, using the deterministic bound of $ \abs{{\mathsf{Err}}_i} \vee \abs{\widehat{\mathsf{Err}}_i} \leq \eta^{-1}$, we see that there exists some (large) constant $C_6 = C_6(\mathsf{M},\tau, \mathsf{c}_\mathsf{L}) > 0$ such that for large enough $p$,
    \begin{align*}
        \E\abs{{\mathsf{Err}}_i} \vee \E\abs{\widehat{\mathsf{Err}}_i} \leq C_6 \bigg( \bigg( \frac{\log n}{n} \bigg)^{3/2} + \eta^{-1}n^{-10} \bigg).
    \end{align*}
    The conclusion then follows by the triangle inequality, which completes the proof of Lemma \ref{lem:expectation_bound}. 
%\end{proof}

\begin{lemma}\label{lem:expectaion_asymp}
Assume that conditions \eqref{cond:O1}--\eqref{cond:O3} hold. Then for any deterministic unit vectors $\mathfrak{u}, \mathfrak{v} \in \R^{2p}$, there exists some constant $C = C(\mathsf{M},\tau, \mathsf{c}_\mathsf{L}) > 0$ such that for any $\eta > 0$ and any $n \geq C$ , 
    \begin{align*}
        \biggabs{\E \iprod{\mathfrak{u} }{ \big(\widehat{\mathsf{Z}}^\top \widehat{\mathsf{Z}} + \eta I_{2p} \big)^{-1}\mathfrak{v}} - \frac{\mathfrak{u}^\top \Sigma^{-1} \mathfrak{v}}{1 - 2p/n}} \leq  C\bigg(  \frac{\log^{3/2} n}{\sqrt{n}} + \eta^{-1}n^{-10}  + \eta  \bigg).
    \end{align*}
\end{lemma}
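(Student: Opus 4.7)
}

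The plan is to first reduce to the Gaussian case, then invoke a closed-form inverse-Wishart identity, and finally absorb the regularization $\eta$ through a resolvent-identity argument. Concretely, by the triangle inequality, it suffices to bound
\begin{align*}
    \bigabs{\E \iprod{\mathfrak{u}}{(\widehat{\mathsf{Z}}^\top \widehat{\mathsf{Z}} + \eta I_{2p})^{-1}\mathfrak{v}} - \E \iprod{\mathfrak{u}}{({\mathsf{Z}}^\top {\mathsf{Z}} + \eta I_{2p})^{-1}\mathfrak{v}}}
\end{align*}
and the corresponding Gaussian expression. The first quantity is already controlled by Lemma \ref{lem:expectation_bound}, which supplies an error of order $\log^{3/2}(n)/\sqrt{n} + \eta^{-1}n^{-10}$. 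Thus the remaining task is to show that for the Gaussian proxy $\mathsf{Z}$, with $\sqrt{n}\mathsf{Z}_{i,\cdot}\overset{\text{i.i.d.}}{\sim}\mathcal{N}(0,\Sigma)$,
\begin{align*}
    \bigabs{\E \iprod{\mathfrak{u}}{({\mathsf{Z}}^\top {\mathsf{Z}} + \eta I_{2p})^{-1}\mathfrak{v}} - (1-2p/n)^{-1}\mathfrak{u}^\top \Sigma^{-1}\mathfrak{v}} \lesssim \eta + n^{-1} + \eta^{-1} n^{-10}.
\end{align*}

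The second key ingredient is the inverse-Wishart expectation. Since $n\mathsf{Z}^\top \mathsf{Z}$ is exactly Wishart $W_{2p}(n,\Sigma)$, and condition \eqref{cond:O1} guarantees $n > 2p + 1$, the classical formula gives
\begin{align*}
    \E (\mathsf{Z}^\top \mathsf{Z})^{-1} = \frac{n}{n-2p-1}\Sigma^{-1}.
\end{align*}
A direct comparison yields $|n/(n-2p-1) - (1-2p/n)^{-1}| = \mathcal{O}(n^{-1})$ under \eqref{cond:O1}, so this produces the target main term up to an $\mathcal{O}(n^{-1})$ discrepancy, which is absorbed into the stated error rate.

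The third step handles the regularization. Write $R_0 \equiv (\mathsf{Z}^\top \mathsf{Z})^{-1}$ and $R_\eta \equiv (\mathsf{Z}^\top \mathsf{Z}+\eta I_{2p})^{-1}$, and use the resolvent identity $R_0 - R_\eta = \eta R_0 R_\eta$. Introduce the rigidity event $\mathcal{A} \equiv \{C^{-1}\leq \lambda_{\min}(\mathsf{Z}^\top \mathsf{Z})\leq \lambda_{\max}(\mathsf{Z}^\top \mathsf{Z})\leq C\}$, which holds with probability at least $1-Cn^{-10}$ by (the Gaussian version of) Lemma \ref{lem:rigidity}. On $\mathcal{A}$ we have $\pnorm{R_0}{\op}\vee\pnorm{R_\eta}{\op}\leq C$, so $|\iprod{\mathfrak{u}}{(R_0 - R_\eta)\mathfrak{v}}|\bm{1}\mathcal{A} \leq C\eta$. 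On $\mathcal{A}^c$ we use the deterministic bound $\pnorm{R_\eta}{\op}\leq \eta^{-1}$ together with the finite inverse-Wishart moments $\E[\pnorm{R_0}{\op}^2]\leq C$ (a standard consequence of Edelman-type tail estimates on the smallest eigenvalue of a Gaussian Wishart matrix) and Cauchy--Schwarz to control $\E[|\iprod{\mathfrak{u}}{R_0\mathfrak{v}}|\bm{1}\mathcal{A}^c]\leq \sqrt{\E[\pnorm{R_0}{\op}^2]}\sqrt{\Prob(\mathcal{A}^c)}\lesssim n^{-5}$ and $\E[|\iprod{\mathfrak{u}}{R_\eta\mathfrak{v}}|\bm{1}\mathcal{A}^c]\leq \eta^{-1}\Prob(\mathcal{A}^c)\lesssim \eta^{-1}n^{-10}$. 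Combining these estimates gives the desired bound.

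The main obstacle will be the bookkeeping on $\mathcal{A}^c$: we must pass from the integrable inverse-Wishart representation to the regularized object without losing control when $\eta$ is tiny. The cleanest way is precisely the $\mathcal{A}$/$\mathcal{A}^c$ split above, where the $\mathcal{A}^c$ contribution produces exactly the $\eta^{-1}n^{-10}$ term appearing in the statement, and the $\mathcal{A}$ contribution yields the $\mathcal{O}(\eta)$ term. All remaining errors ($n^{-1}$ from the $n/(n-2p-1)$ approximation, and $\log^{3/2}(n)/\sqrt{n}$ from Lemma \ref{lem:expectation_bound}) combine into the stated bound.
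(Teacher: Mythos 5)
Your proof is correct, and it takes a genuinely different route from the paper for the Gaussian computation. Both approaches share the first step: use Lemma~\ref{lem:expectation_bound} to replace $\widehat{\mathsf{Z}}$ by a Gaussian proxy $\mathsf{Z}$ at the cost of $\mathcal{O}(\log^{3/2}(n)/\sqrt{n} + \eta^{-1}n^{-10})$. From there the paper applies Stein's lemma (Gaussian integration by parts) to $\E\iprod{\tilde{\mathfrak{u}}}{\mathsf{Q}^\top\mathsf{Q}(\mathsf{Q}^\top\mathsf{Q}+\eta\Sigma^{-1})^{-1}\tilde{\mathfrak{v}}}$, grinding out a near-self-consistent equation whose leading term carries the factor $1-2p/n$, and then bounds the $\eta$-dependent remainders $I_1, I_2, I_3$. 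You instead (i) strip away the $\eta$-regularization first via the resolvent identity $R_0 - R_\eta = \eta R_0 R_\eta$ together with a rigidity-event split (giving $\mathcal{O}(\eta + \eta^{-1}n^{-10} + n^{-5})$), and then (ii) invoke the exact inverse-Wishart identity $\E(\mathsf{Z}^\top\mathsf{Z})^{-1} = \frac{n}{n-2p-1}\Sigma^{-1}$, noting that $\frac{n}{n-2p-1} - (1-2p/n)^{-1} = \frac{n}{(n-2p)(n-2p-1)} = \mathcal{O}(n^{-1})$ under \eqref{cond:O1}, which is absorbed into $\log^{3/2}(n)/\sqrt{n}$. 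Your argument is shorter and avoids re-deriving the Marchenko--Pastur-type constant from scratch; the tradeoff is that it leans on the closed-form inverse-Wishart first moment (requiring $n>2p+1$, supplied by \eqref{cond:O1}) and on integrability of the inverse of the smallest Wishart eigenvalue (to justify $\E\|R_0\|_{\op}^2 \leq C$; this holds since $n - 2p \gtrsim n$, but is worth citing explicitly, e.g.\ via standard Gaussian smallest-singular-value density estimates). The paper's Stein-lemma derivation is more self-contained and would extend more readily if one wished to bypass the Gaussian reduction entirely; for the present purpose both routes land on the same bound.
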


%\begin{proof}
\noindent\textit{Proof of Lemma \ref{lem:expectaion_asymp}}. 
    From Lemma \ref{lem:expectation_bound}, it suffices to calculate $ \E \iprod{\mathfrak{u} }{ \big({\mathsf{Z}}^\top {\mathsf{Z}} + \eta I_{2p} \big)^{-1} \mathfrak{v}}$ for $\mathsf{Z} = \mathsf{Q}\Sigma^{1/2}$ and $\{\mathsf{Q}_{ij}\}_{i \in [n],j\in [2p]} \overset{i.i.d.}{\sim} \mathcal{N}(0,n^{-1})$. Note that with $\tilde{\mathfrak{u}} \equiv \Sigma^{-1/2}\mathfrak{u}$ and $\tilde{\mathfrak{v}} \equiv \Sigma^{-1/2}\mathfrak{v}$,
    \begin{align*}
   &\tilde{\mathfrak{u}}^\top \tilde{\mathfrak{v}} - \eta \E \iprod{\tilde{\mathfrak{u}} }{ \Sigma^{-1}\big({\mathsf{Q}}^\top {\mathsf{Q}} + \eta\Sigma^{-1} \big)^{-1} \tilde{\mathfrak{v}}}=\E \iprod{\tilde{\mathfrak{u}} }{ {\mathsf{Q}}^\top {\mathsf{Q}} \big({\mathsf{Q}}^\top {\mathsf{Q}} + \eta \Sigma^{-1}  \big)^{-1} \tilde{\mathfrak{v}}} \\
      &=\sum_{i \in [n], j,k\in [2p] } \tilde{\mathfrak{u}}_j \tilde{\mathfrak{v}}_k \cdot  \E  \mathsf{Q}_{ij} [\mathsf{Q}\big({\mathsf{Q}}^\top {\mathsf{Q}} + \eta \Sigma^{-1}  \big)^{-1}   ]_{ik}. 
    \end{align*}
    By Stein's lemma, it holds that 
    \begin{align*}
        &\E  \mathsf{Q}_{ij} [\mathsf{Q}\big({\mathsf{Q}}^\top {\mathsf{Q}} + \eta \Sigma^{-1} \big)^{-1}   ]_{ik} = n^{-1} \E \partial_{ \mathsf{Q}_{ij}} \Big\{ [\mathsf{Q}\big({\mathsf{Q}}^\top {\mathsf{Q}} + \eta \Sigma^{-1}  \big)^{-1}   ]_{ik} \Big\} \\
        &=- n^{-1} \E \Big\{ [\mathsf{Q}\big({\mathsf{Q}}^\top {\mathsf{Q}} + \eta \Sigma^{-1}  \big)^{-1} ]_{ij}
[{\mathsf{Q}}  \big({\mathsf{Q}}^\top {\mathsf{Q}} + \eta \Sigma^{-1}  \big)^{-1}   ]_{ik} \Big\}\\
&\quad- n^{-1} \E \Big\{ [\mathsf{Q}\big({\mathsf{Q}}^\top {\mathsf{Q}} + \eta \Sigma^{-1}  \big)^{-1}{\mathsf{Q}}^\top ]_{ii}
[\big({\mathsf{Q}}^\top {\mathsf{Q}} + \eta \Sigma^{-1} \big)^{-1}   ]_{jk} \Big\}\\
        &\quad+n^{-1} \E \Big\{ [\big({\mathsf{Q}}^\top {\mathsf{Q}} + \eta \Sigma^{-1} \big)^{-1}   ]_{jk} \Big\}. 
    \end{align*}
    
    Combining the equations in the above two expressions yields that 
    \begin{align*}
        \tilde{\mathfrak{u}}^\top \tilde{\mathfrak{v}} &= \E \iprod{\tilde{\mathfrak{u}}  }{\big({\mathsf{Q}}^\top {\mathsf{Q}} + \eta \Sigma^{-1} \big)^{-1} \tilde{\mathfrak{v}} } \\
        &- n^{-1}\E \tr {\mathsf{Q}}  \big({\mathsf{Q}}^\top {\mathsf{Q}} + \eta \Sigma^{-1}  \big)^{-1} {\mathsf{Q}}^\top \cdot  \iprod{\tilde{\mathfrak{u}}  }{\big({\mathsf{Q}}^\top {\mathsf{Q}} + \eta \Sigma^{-1} \big)^{-1} \tilde{\mathfrak{v}} } \\
        &  - n^{-1} \E  \iprod{\tilde{\mathfrak{u}}  }{\big({\mathsf{Q}}^\top {\mathsf{Q}} + \eta \Sigma^{-1} \big)^{-1}{\mathsf{Q}}^\top {\mathsf{Q}}\big({\mathsf{Q}}^\top {\mathsf{Q}} + \eta \Sigma^{-1} \big)^{-1}  \tilde{\mathfrak{v}} } \\
        &+\eta \E \iprod{\tilde{\mathfrak{u}} }{ \Sigma^{-1}\big({\mathsf{Q}}^\top {\mathsf{Q}} + \eta\Sigma^{-1} \big)^{-1} \tilde{\mathfrak{v}}} \equiv \E \iprod{\tilde{\mathfrak{u}}  }{\big({\mathsf{Q}}^\top {\mathsf{Q}} + \eta \Sigma^{-1} \big)^{-1} \tilde{\mathfrak{v}} }+ I_1 + I_2 + I_3.
    \end{align*}
    For term $I_1$, we have that 
    \begin{align*}
        I_1  &= - n^{-1}\E \tr {\mathsf{Q}}^\top {\mathsf{Q}}  \big({\mathsf{Q}}^\top {\mathsf{Q}} + \eta \Sigma^{-1}  \big)^{-1}  \cdot  \iprod{\tilde{\mathfrak{u}}  }{\big({\mathsf{Q}}^\top {\mathsf{Q}} + \eta \Sigma^{-1} \big)^{-1} \tilde{\mathfrak{v}} } \\
        &= -\frac{2p}{n}\E \iprod{\tilde{\mathfrak{u}}  }{\big({\mathsf{Q}}^\top {\mathsf{Q}} + \eta \Sigma^{-1} \big)^{-1} \tilde{\mathfrak{v}} }  + \frac{\eta}{n} \E \tr \Sigma^{-1}\big({\mathsf{Q}}^\top {\mathsf{Q}} + \eta \Sigma^{-1}  \big)^{-1}\cdot \iprod{\tilde{\mathfrak{u}}  }{\big({\mathsf{Q}}^\top {\mathsf{Q}} + \eta \Sigma^{-1} \big)^{-1} \tilde{\mathfrak{v}} } \\
        &= -\frac{2p}{n}\E \iprod{\tilde{\mathfrak{u}}  }{\big({\mathsf{Q}}^\top {\mathsf{Q}} + \eta \Sigma^{-1} \big)^{-1} \tilde{\mathfrak{v}} } + \mathcal{O}\big( \eta \E \pnorm{\big({\mathsf{Q}}^\top {\mathsf{Q}} + \eta\Sigma^{-1} \big)^{-1} }{\op}^2 \big).
    \end{align*}
    For term $I_2$, it follows that 
    \begin{align*}
        I_2 &= -n^{-1} \E \iprod{\tilde{\mathfrak{u}}  }{\big({\mathsf{Q}}^\top {\mathsf{Q}} + \eta \Sigma^{-1} \big)^{-1} \tilde{\mathfrak{v}} } + \frac{\eta}{n} \E \iprod{\tilde{\mathfrak{u}}  }{\big({\mathsf{Q}}^\top {\mathsf{Q}} + \eta \Sigma^{-1} \big)^{-1} \Sigma^{-1} \big({\mathsf{Q}}^\top {\mathsf{Q}} + \eta \Sigma^{-1} \big)^{-1}  \tilde{\mathfrak{v}} } \\
        &= \mathcal{O}\big( n^{-1} \E \pnorm{\big({\mathsf{Q}}^\top {\mathsf{Q}} + \eta\Sigma^{-1} \big)^{-1} }{\op} + \eta n^{-1}\E \pnorm{\big({\mathsf{Q}}^\top {\mathsf{Q}} + \eta\Sigma^{-1} \big)^{-1} }{\op}^2  \big).
    \end{align*}
    For term $I_3$, it is easy to show that
    \begin{align*}
        I_3 = \mathcal{O}\big( \eta \E \pnorm{\big({\mathsf{Q}}^\top {\mathsf{Q}} + \eta\Sigma^{-1} \big)^{-1} }{\op} \big).
    \end{align*}
    
    Furthermore, we can estimate by \cite[Theorem 1.1]{rudelson2009smallest} that for $\ell \in [2]$ and sufficiently small constant $c > 0$,
    \begin{align*}
        \E \pnorm{\big({\mathsf{Q}}^\top {\mathsf{Q}} + \eta\Sigma^{-1} \big)^{-1} }{\op}^\ell &= \E \pnorm{\big({\mathsf{Q}}^\top {\mathsf{Q}} + \eta\Sigma^{-1} \big)^{-1} }{\op}^\ell \cdot \bm{1}\{ \lambda_{\min}({\mathsf{Q}}^\top {\mathsf{Q}}) \geq c) \} \\
        &\quad + \E \pnorm{\big({\mathsf{Q}}^\top {\mathsf{Q}} + \eta\Sigma^{-1} \big)^{-1} }{\op}^\ell \cdot \bm{1}\{ \lambda_{\min}({\mathsf{Q}}^\top {\mathsf{Q}}) < c) \} \\
        &\leq c^{-\ell} + \eta^{-\ell}\Prob \Big( \lambda_{\min}({\mathsf{Q}}^\top {\mathsf{Q}}) < c \Big) = \mathcal{O}(1 + \eta^{-\ell}n^{-10} ).
    \end{align*}
    Combining the above results leads to 
    \begin{align*}
       \tilde{\mathfrak{u}}^\top \tilde{\mathfrak{v}} &= \bigg(1 - \frac{2p}{n}\bigg) \cdot \E \iprod{\tilde{\mathfrak{u}} }{ \big({\mathsf{Q}}^\top {\mathsf{Q}} + \eta\Sigma^{-1} \big)^{-1} \tilde{\mathfrak{v}}} + \mathcal{O}\big(\eta + \eta^{-1}n^{-10} + n^{-1} \big).
    \end{align*} 
    The conclusion follows from rearranging the terms and applying Lemma \ref{lem:expectation_bound}, which completes the proof of Lemma \ref{lem:expectaion_asymp}. 
%\end{proof}

\subsubsection{Proof of Theorem \ref{thm:moder_dis_ols}}
We need the following aprior estimates to prove Theorem \ref{thm:moder_dis_ols}.  Recall that for $j \in [2p]$,
    \begin{align*}
    	\widehat{\beta}_j^{\mathsf{LS}} = \sqrt{n} \beta_{\ast,j}^{\mathsf{au}} + e_j^\top  (\widehat{\mathsf{Z}}^\top \widehat{\mathsf{Z}} )^{-1} \widehat{\mathsf{Z}}^\top \bm{\xi}.
    \end{align*}
\begin{lemma}\label{lem:apriorest}
    Assume that conditions \eqref{cond:O1}--\eqref{cond:O3} hold. Then there exists some constant $C = C(\mathsf{M},\tau,\mathsf{c}_{\mathsf{L}}) > 0$ such that for $n \geq C$,
    \begin{align*}
        \Prob \Big( \max_{i \in [n], j \in [2p]} \abs{e_j^\top  (\widehat{\mathsf{Z}}^\top \widehat{\mathsf{Z}} )^{-1} \widehat{\mathsf{Z}}^\top e_i} \geq C \sqrt{\frac{\log n}{n}}  \Big) \leq Cn^{-10}.
    \end{align*}
\end{lemma}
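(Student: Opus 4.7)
\noindent\textit{Proof proposal for Lemma \ref{lem:apriorest}.} The plan is to combine a leave-one-out decomposition, already exploited in the proof of Lemma \ref{lem:quad_conc}, with the sub-Gaussian concentration built into condition \eqref{cond:O2}, and then to union bound over the $O(n^2)$ pairs $(i,j)$. Fix $i\in[n]$ and $j\in[2p]$, and let $\widehat{\mathsf{Z}}_{-i,\cdot}$ denote the $(n-1)\times 2p$ matrix obtained by deleting the $i$-th row of $\widehat{\mathsf{Z}}$. Using $\widehat{\mathsf{Z}}^\top e_i=\widehat{\mathsf{Z}}_{i,\cdot}$ and the Sherman--Morrison formula,
\begin{align*}
e_j^\top(\widehat{\mathsf{Z}}^\top\widehat{\mathsf{Z}})^{-1}\widehat{\mathsf{Z}}^\top e_i = \frac{e_j^\top(\widehat{\mathsf{Z}}_{-i,\cdot}^\top\widehat{\mathsf{Z}}_{-i,\cdot})^{-1}\widehat{\mathsf{Z}}_{i,\cdot}}{1+\widehat{\mathsf{Z}}_{i,\cdot}^\top(\widehat{\mathsf{Z}}_{-i,\cdot}^\top\widehat{\mathsf{Z}}_{-i,\cdot})^{-1}\widehat{\mathsf{Z}}_{i,\cdot}}.
\end{align*}
The denominator is automatically at least $1$, so it suffices to control the numerator. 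Writing $\widehat{\mathsf{Z}}_{i,\cdot}=n^{-1/2}\Sigma^{1/2}\widehat{\bm{Q}}_{i,\cdot}$ and introducing $\mathfrak{w}_j\equiv\Sigma^{1/2}(\widehat{\mathsf{Z}}_{-i,\cdot}^\top\widehat{\mathsf{Z}}_{-i,\cdot})^{-1}e_j$, the numerator becomes $n^{-1/2}\iprod{\mathfrak{w}_j}{\widehat{\bm{Q}}_{i,\cdot}}$, where $\mathfrak{w}_j$ is $\widehat{\bm{Q}}_{-i,\cdot}$-measurable and thus independent of $\widehat{\bm{Q}}_{i,\cdot}$.

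Conditioning on $\widehat{\mathsf{Z}}_{-i,\cdot}$, the scaled linear functional $\widehat{\bm{Q}}_{i,\cdot}\mapsto\iprod{\mathfrak{w}_j}{\widehat{\bm{Q}}_{i,\cdot}}/\pnorm{\mathfrak{w}_j}{}$ is convex and $1$-Lipschitz. Hence, under either case (i) of \eqref{cond:O2} (via the general Hoeffding inequality for independent sub-Gaussian entries) or case (ii) (via the convex concentration property with constant $\mathsf{c}_{\mathsf L}$), there exists some $c=c(\mathsf{c}_{\mathsf L})>0$ for which
\begin{align*}
\Prob^{\widehat{\mathsf{Z}}_{-i,\cdot}}\bigl(\bigabs{\iprod{\mathfrak{w}_j}{\widehat{\bm{Q}}_{i,\cdot}}}\ge s\bigr)\le 2\exp\bigl(-cs^2/\pnorm{\mathfrak{w}_j}{}^2\bigr).
\end{align*}
Applying Lemma \ref{lem:rigidity} to the $(n-1)\times 2p$ matrix $\widehat{\mathsf{Z}}_{-i,\cdot}$ (whose entries satisfy the same assumptions with $n$ replaced by $n-1$), I obtain a constant $C_0=C_0(\mathsf M,\tau,\mathsf{c}_{\mathsf L})$ such that $\lambda_{\min}(\widehat{\mathsf{Z}}_{-i,\cdot}^\top\widehat{\mathsf{Z}}_{-i,\cdot})\ge C_0^{-1}$ with probability at least $1-C_0 n^{-20}$; on that event $\pnorm{\mathfrak{w}_j}{}^2\le \mathsf M C_0^2$. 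Choosing $s$ to be a sufficiently large multiple of $\sqrt{\log n}$ and combining via the law of total probability yields
\begin{align*}
\Prob\Bigl(\bigabs{e_j^\top(\widehat{\mathsf{Z}}^\top\widehat{\mathsf{Z}})^{-1}\widehat{\mathsf{Z}}^\top e_i}\ge C_1\sqrt{\log n/n}\Bigr)\le C_1 n^{-20}
\end{align*}
for some $C_1=C_1(\mathsf M,\tau,\mathsf{c}_{\mathsf L})$. A union bound over the $2pn\le n^2$ pairs $(i,j)$ (using \eqref{cond:O1}, which forces $2p\le n$) then yields the claim after absorbing constants.

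The only nontrivial point I foresee is ensuring the spectral lower bound on $\widehat{\mathsf{Z}}_{-i,\cdot}^\top\widehat{\mathsf{Z}}_{-i,\cdot}$ holds simultaneously in $i\in[n]$. This is handled by first strengthening the probability estimate in Lemma \ref{lem:rigidity} from $Cn^{-10}$ to $Cn^{-20}$ (as the excerpt explicitly remarks, the polynomial order of the probability estimate may be replaced with any $D>0$ at the cost of an enlarged constant) and then taking a union bound over the $n$ choices of $i$. Apart from this book-keeping, the argument is a direct specialization of the techniques already developed in Lemma \ref{lem:quad_conc}.
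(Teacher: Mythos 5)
Your argument matches the paper's proof essentially verbatim: Sherman--Morrison to reduce to $|e_j^\top(\widehat{\mathsf{Z}}_{-i,\cdot}^\top\widehat{\mathsf{Z}}_{-i,\cdot})^{-1}\widehat{\mathsf{Z}}_{i,\cdot}|$, then the leave-one-out sub-Gaussian concentration of \eqref{ineq:liner_conc1} combined with Lemma~\ref{lem:rigidity}, then a union bound over $(i,j)$ with an enlarged exponent in the probability estimate. The only cosmetic difference is that you simplify $a-\tfrac{ab}{1+b}$ to $\tfrac{a}{1+b}$ and drop the denominator directly, and your worry about making the spectral bound uniform in $i$ is actually unnecessary since $\lambda_{\min}(\widehat{\mathsf{Z}}_{-i,\cdot}^\top\widehat{\mathsf{Z}}_{-i,\cdot})\ge\lambda_{\min}(\widehat{\mathsf{Z}}^\top\widehat{\mathsf{Z}})$ (as the paper uses), so a single high-probability event on $\widehat{\mathsf{Z}}$ suffices.
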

\begin{proof}
Using the Sherman-Morrison formula, for any $i \in [n]$ and $j \in [2p]$, we have that 
\begin{align*}
    \abs{e_j^\top  (\widehat{\mathsf{Z}}^\top \widehat{\mathsf{Z}} )^{-1} \widehat{\mathsf{Z}}^\top e_i} &\leq \abs{e_j^\top  (\widehat{\mathsf{Z}}_{-i,\cdot}^\top \widehat{\mathsf{Z}}_{-i,\cdot} )^{-1} \widehat{\mathsf{Z}}_{i,\cdot}}  + \biggabs{ \frac{  e_j^\top  (\widehat{\mathsf{Z}}_{-i,\cdot}^\top \widehat{\mathsf{Z}}_{-i,\cdot} )^{-1}\widehat{\mathsf{Z}}_{i,\cdot}\widehat{\mathsf{Z}}_{i,\cdot}^\top (\widehat{\mathsf{Z}}_{-i,\cdot}^\top \widehat{\mathsf{Z}}_{-i,\cdot} )^{-1}\widehat{\mathsf{Z}}_{i,\cdot}  }{1 + \widehat{\mathsf{Z}}_{i,\cdot}^\top(\widehat{\mathsf{Z}}_{-i,\cdot}^\top \widehat{\mathsf{Z}}_{-i,\cdot} )^{-1}\widehat{\mathsf{Z}}_{i,\cdot} }} \\
    &\leq 2 \abs{e_j^\top  (\widehat{\mathsf{Z}}_{-i,\cdot}^\top \widehat{\mathsf{Z}}_{-i,\cdot} )^{-1} \widehat{\mathsf{Z}}_{i,\cdot}}.
\end{align*}
The conclusion then follows by \eqref{ineq:liner_conc1} and an union bound argument, which completes the proof of Lemma \ref{lem:apriorest}.
\end{proof}

    We will focus on the case when $\mathsf{W} = \widehat{W}$. The constant $C$ below may depend on $\mathsf{M},\tau,\mathsf{c}_{\mathsf{L}}$, which may change from line to line.
    Let us define $\bar{j} \equiv j + p$ for any $j \in [p]$. Denote by $\widehat{\mathsf{G}}^{(j)} \sim \mathcal{N}(0,\sigma_\xi^2\cdot \widehat{\Sigma}_{n}^{(j)})$ with 
    \begin{align*}
        \widehat{\Sigma}_{n}^{(j)} \equiv 
        \begin{bmatrix}
        [(\widehat{\mathsf{Z}}^\top \widehat{\mathsf{Z}})^{-1} ]_{jj} & [(\widehat{\mathsf{Z}}^\top \widehat{\mathsf{Z}})^{-1} ]_{j\bar{j}}\\
        [(\widehat{\mathsf{Z}}^\top \widehat{\mathsf{Z}})^{-1}  ]_{j\bar{j}} & [(\widehat{\mathsf{Z}}^\top \widehat{\mathsf{Z}})^{-1} ]_{\bar{j}\bar{j}}
        \end{bmatrix}.
    \end{align*}
Applying Lemmas \ref{lem:quad_conc}--\ref{lem:expectaion_asymp} with $\eta$ therein chosen to be sufficiently small, i.e., $\eta = n^{-5}$, we can show that for large enough $n$,
\begin{align*}
    \Prob \Bigg( \mathcal{E}_j \equiv \Bigg\{ \biggabs{\iprod{\mathfrak{u} }{(\widehat{\mathsf{Z}}^\top \widehat{\mathsf{Z}})^{-1}   \mathfrak{v}} - \frac{\mathfrak{u} \Sigma^{-1} \mathfrak{v} }{1-2p/n } \leq C\frac{\log^2 n}{\sqrt{n}}  }, \;  \mathfrak{u}, \mathfrak{v} \in \{e_j, e_{\bar{j}} \} \Bigg\} \Bigg) \geq 1 - Cn^{-10}
\end{align*}
and on event $\mathcal{E}_j$, $\pnorm{ \widehat{\Sigma}_{n}^{(j)} -  {\Sigma}_{n}^{(j)} }{\max} \leq C \log^2 n/\sqrt{n}$. 
Since $\widehat{\Sigma}_{n}^{(j)}$ is a principal submatrix of $(\widehat{\mathsf{Z}}^\top \widehat{\mathsf{Z}})^{-1}$, an application of the Cauchy interlacing theorem gives that 
\begin{align*}
    &\lambda_{\max} ( \widehat{\Sigma}_{n}^{(j)}) \leq \lambda_{\max} ((\widehat{\mathsf{Z}}^\top \widehat{\mathsf{Z}})^{-1} ) = \frac{1}{\lambda_{\min} (\widehat{\mathsf{Z}}^\top \widehat{\mathsf{Z}}) }, \\
    &\lambda_{\min} ( \widehat{\Sigma}_{n}^{(j)}) \geq \lambda_{\min} ((\widehat{\mathsf{Z}}^\top \widehat{\mathsf{Z}})^{-1} ) = \frac{1}{\lambda_{\max} (\widehat{\mathsf{Z}}^\top \widehat{\mathsf{Z}}) }.
\end{align*}
Thus, it follows from Lemma \ref{lem:rigidity} that 
% by \cite[Theorem 1.1]{rudelson2009smallest} and \cite[Theorem 4.4.5]{vershynin2018high}, we have
\begin{align}\label{ineq:empirical_cov_spec}
    \Prob\bigg(\mathscr{E}_j \equiv \bigg\{ C^{-1} \leq  \lambda_{\min}( \widehat{\Sigma}_{n}^{(j)}) \leq  \lambda_{\max}( \widehat{\Sigma}_{n}^{(j)})  \leq C \bigg\} \bigg) \geq 1-Cn^{-10}.
\end{align}
Moreover, by Lemma \ref{lem:apriorest}, we have that
\begin{align*}
    \Prob\bigg(\mathfrak{E}_j \equiv \bigg\{  \max_{i \in [n], \mathfrak{j}\in \{j,\bar{j} \}} \abs{e_\mathfrak{j}^\top  (\widehat{\mathsf{Z}}^\top \widehat{\mathsf{Z}} )^{-1} \widehat{\mathsf{Z}}^\top e_i} \geq C \sqrt{\frac{\log n}{n}} \bigg\}  \bigg) \geq 1 -  Cn^{-10}.
\end{align*}

In the sequel, we will work on event $\mathcal{E}_j \cap \mathscr{E}_j \cap \mathfrak{E}_j$. Recall that for $j \in \mathcal{H}_0$, $ \beta_{\ast,j} = \beta_{\ast,\bar{j}} = 0$. By invoking Theorem \ref{thm:moderate_deviation} and setting $\mathsf{X}_i$ therein be $(\sqrt{n} e_j^\top (\widehat{\mathsf{Z}}^\top \widehat{\mathsf{Z}} )^{-1} \widehat{\mathsf{Z}}^\top e_i  \bm{\xi}_i,  \sqrt{n}e_{\bar{j}}^\top (\widehat{\mathsf{Z}}^\top \widehat{\mathsf{Z}} )^{-1} \widehat{\mathsf{Z}}^\top e_i \bm{\xi}_i)^\top$ for any $i \in [n]$, we have that 
\begin{align}\label{ineq:moder_1}
    &\Prob^{\widehat{\mathsf{Z}}} \big( \abs{\widehat{\beta}^{\mathsf{LS}}_j} -  \abs{\widehat{\beta}^{\mathsf{LS}}_{\bar{j}}} \geq t \big) =  \bigg( 1+ \mathcal{O}\bigg(\frac{(1 + t^3)\log^2 n }{\sqrt{n}} \bigg)\bigg) \cdot  \Prob^{\widehat{\mathsf{Z}}} \big( \abs{\widehat{\mathsf{G}}^{(j)}_1} -  \abs{\widehat{\mathsf{G}}^{(j)}_2} \geq t\big).
\end{align}

We next proceed with the covariance approximation. From the definition of the Gaussian density function, it holds that 
\begin{align*}
    &\Prob^{\widehat{\mathsf{Z}}} \big( \abs{\widehat{\mathsf{G}}^{(j)}_1} -  \abs{\widehat{\mathsf{G}}^{(j)}_2} \geq t \big) - \mathcal{P}_j(t) = \mathcal{H}({\Sigma}_{n}^{(j)}, {\Sigma}_{n}^{(j)}) - \mathcal{H}(\widehat{\Sigma}_{n}^{(j)}, \widehat{\Sigma}_{n}^{(j)}),
\end{align*}
where for $\Sigma_1, \Sigma_2 \in \R^{2 \times 2}$, $\mathcal{H}(\Sigma_1, \Sigma_2)$ is defined as 
\begin{align*}
    \mathcal{H}(\Sigma_1, \Sigma_2) \equiv \frac{1}{2\pi \sqrt{ \det (\Sigma_1) }}\int_{-\infty}^\infty \int_{-(\abs{y}+t )}^{\abs{y}+t} \rho((x, y)^\top;  \Sigma_2^{-1})   \mathrm{d}x \mathrm{d} y
\end{align*}
with 
\begin{align*}
  \rho((x, y)^\top; \Sigma_2^{-1} ) \equiv  \exp \bigg( -\frac{(x,y)\,\Sigma_2^{-1} \, (x,y)^\top}{2\sigma_\xi^2} \bigg). 
\end{align*}
It follows from the triangle inequality that 
\begin{align*}
   &\abs{ \mathcal{H}(\widehat{\Sigma}_{n}^{(j)}, \widehat{\Sigma}_{n}^{(j)}) - \mathcal{H}({\Sigma}_{n}^{(j)}, {\Sigma}_{n}^{(j)}) }\\
   &\leq  \abs{ \mathcal{H}(\widehat{\Sigma}_{n}^{(j)}, \widehat{\Sigma}_{n}^{(j)}) - \mathcal{H}(\widehat{\Sigma}_{n}^{(j)}, {\Sigma}_{n}^{(j)}) } + \abs{ \mathcal{H}(\widehat{\Sigma}_{n}^{(j)}, {\Sigma}_{n}^{(j)}) - \mathcal{H}({\Sigma}_{n}^{(j)}, {\Sigma}_{n}^{(j)}) }.
\end{align*}
We will bound these two terms above separately.

\medskip
%\noindent
\emph{Term $\abs{ \mathcal{H}(\widehat{\Sigma}_{n}^{(j)}, \widehat{\Sigma}_{n}^{(j)}) - \mathcal{H}(\widehat{\Sigma}_{n}^{(j)}, {\Sigma}_{n}^{(j)}) }$}:
Let $\bm{x} \equiv (x,y)^{\top}$ for simplicity. Using the fact that
\begin{align*}
    &\rho(\bm{x}; {\Sigma}_{n}^{(j);-1}) -   \rho(\bm{x}; \widehat{\Sigma}_{n}^{(j);-1}) \\
    &= \rho(\bm{x}; {\Sigma}_{n}^{(j);-1}) \cdot (1 - \rho(\bm{x}; \widehat{\Sigma}_{n}^{(j);-1}- {\Sigma}_{n}^{(j);-1} ) ) \cdot  \bm{1}\{\bm{x}^\top \, ({\Sigma}_{n}^{(j);-1} -  \widehat{\Sigma}_{n}^{(j);-1})  \, \bm{x}   < 0 \} \\
    &\quad + \rho(\bm{x}; \widehat{\Sigma}_{n}^{(j);-1}) \cdot (1 - \rho(\bm{x}; {\Sigma}_{n}^{(j);-1}- \widehat{\Sigma}_{n}^{(j);-1} ) ) \cdot \bm{1}\{\bm{x}^\top \, ({\Sigma}_{n}^{(j);-1} -  \widehat{\Sigma}_{n}^{(j);-1})  \, \bm{x}   \geq 0 \} \\
    & \leq \frac{\pnorm{\bm{x} }{}^2}{2\sigma_\xi^2 } \cdot  \pnorm{  \widehat{\Sigma}_{n}^{(j);-1} - {\Sigma}_{n}^{(j);-1}}{\op} \cdot  \Big(\rho(\bm{x}; {\Sigma}_{n}^{(j);-1}) \vee \rho(\bm{x}; \widehat{\Sigma}_{n}^{(j);-1}) \Big),
\end{align*}
we can derive that 
\begin{align*}
     &\abs{ \mathcal{H}(\widehat{\Sigma}_{n}^{(j)}, \widehat{\Sigma}_{n}^{(j)}) - \mathcal{H}(\widehat{\Sigma}_{n}^{(j)}, {\Sigma}_{n}^{(j)}) } \leq \frac{1}{2\pi \sqrt{ \det (\widehat{\Sigma}_{n}^{(j)}) }}\int_{-\infty}^\infty \int_{-(\abs{y}+t )}^{\abs{y}+t} \bigg\{ \frac{(x^2 + y^2 )}{2\sigma_\xi^2} \\
     &\qquad\quad \times \pnorm{  \widehat{\Sigma}_{n}^{(j);-1} - {\Sigma}_{n}^{(j);-1}}{\op} \cdot  \Big(\rho((x,y)^\top ; {\Sigma}_{n}^{(j);-1}) + \rho((x,y)^\top; \widehat{\Sigma}_{n}^{(j);-1}) \Big) \bigg\} \mathrm{d}x \mathrm{d} y\\
     &\leq \frac{1}{2\pi \sqrt{ \det (\widehat{\Sigma}_{n}^{(j)}) }}\int_{-\infty}^\infty \int_{-(\abs{y}+t )}^{\abs{y}+t} \bigg\{ \frac{(x^2 + y^2 )}{2\sigma_\xi^2} \big(\bm{1}\{ (x^2+y^2) \leq C\log n \}  + \bm{1}\{ (x^2+y^2) >  C\log n \}\big ) \\
     &\qquad\quad \times \pnorm{  \widehat{\Sigma}_{n}^{(j);-1} - {\Sigma}_{n}^{(j);-1}}{\op} \cdot  \Big(\rho((x,y)^\top ; {\Sigma}_{n}^{(j);-1}) + \rho((x,y)^\top; \widehat{\Sigma}_{n}^{(j);-1}) \Big) \bigg\} \mathrm{d}x \mathrm{d} y.
\end{align*}
Here, $C > 0$ is a constant chosen to be sufficiently large. 

For the term related to $\bm{1}\{ (x^2+y^2) \leq C\log n \}$, which we denote as $I_1$, can be bounded as 
\begin{align*}
    I_1 \leq C\frac{\log^3 n}{\sqrt{n}} \cdot \Big(  \mathcal{H}(\widehat{\Sigma}_{n}^{(j)}, \widehat{\Sigma}_{n}^{(j)}) + \mathcal{H}(\widehat{\Sigma}_{n}^{(j)}, {\Sigma}_{n}^{(j)})\Big).
\end{align*}
 For the term related to $\bm{1}\{ (x^2+y^2) > C\log n \}$, which we denote as $I_2$, can be bounded using the Gaussian tail bound. By increasing $C$ if necessary, we can obtain that for sufficiently large $n$, $I_2 \leq n^{-10}$. Combining the above estimates yields that 
 \begin{align*}
     \abs{ \mathcal{H}(\widehat{\Sigma}_{n}^{(j)}, \widehat{\Sigma}_{n}^{(j)}) - \mathcal{H}(\widehat{\Sigma}_{n}^{(j)}, {\Sigma}_{n}^{(j)}) } \leq C\frac{\log^3 n}{\sqrt{n}} \cdot \Big(  \mathcal{H}(\widehat{\Sigma}_{n}^{(j)}, \widehat{\Sigma}_{n}^{(j)}) + \mathcal{H}(\widehat{\Sigma}_{n}^{(j)}, {\Sigma}_{n}^{(j)})\Big) + n^{-10}.
 \end{align*}

\medskip
%\noindent
\emph{Term $ \abs{ \mathcal{H}(\widehat{\Sigma}_{n}^{(j)}, {\Sigma}_{n}^{(j)}) - \mathcal{H}({\Sigma}_{n}^{(j)}, {\Sigma}_{n}^{(j)}) }$}: Notice that on event $\mathcal{E}_j \cap \mathscr{E}_j$, $\det ({\Sigma}_{n}^{(j)}) \wedge \det (\widehat{\Sigma}_{n}^{(j)}) \geq C^{-1}$ and $\abs{\det ({\Sigma}_{n}^{(j)}) - \det (\widehat{\Sigma}_{n}^{(j)})} \leq C \log^2 n/\sqrt{n}$. Thus we can deduce that 
\begin{align*}
  &\abs{ \mathcal{H}(\widehat{\Sigma}_{n}^{(j)}, {\Sigma}_{n}^{(j)}) - \mathcal{H}({\Sigma}_{n}^{(j)}, {\Sigma}_{n}^{(j)}) } \\
  &= \Bigg(\frac{1}{2\pi \sqrt{ \det (\widehat{\Sigma}_{n}^{(j)}) }} - \frac{1}{2\pi \sqrt{ \det ({\Sigma}_{n}^{(j)}) }}\Bigg)\cdot \int_{-\infty}^\infty \int_{-(\abs{y}+t )}^{\abs{y}+t} \rho((x,y)^\top ; {\Sigma}_{n}^{(j);-1})\mathrm{d}x \mathrm{d} y \\
  &\leq C\frac{\log^2 n}{\sqrt{n}} \cdot \mathcal{H}({\Sigma}_{n}^{(j)}, {\Sigma}_{n}^{(j)}). 
\end{align*}
Combining the estimates for $\abs{ \mathcal{H}(\widehat{\Sigma}_{n}^{(j)}, \widehat{\Sigma}_{n}^{(j)}) - \mathcal{H}(\widehat{\Sigma}_{n}^{(j)}, {\Sigma}_{n}^{(j)}) }$ and  $ \abs{ \mathcal{H}(\widehat{\Sigma}_{n}^{(j)}, {\Sigma}_{n}^{(j)}) - \mathcal{H}({\Sigma}_{n}^{(j)}, {\Sigma}_{n}^{(j)}) }$ above, we can obtain that 
\begin{align*}
   & \Prob^{\widehat{\mathsf{Z}}} \big( \abs{\widehat{\mathsf{G}}^{(j)}_1} -  \abs{\widehat{\mathsf{G}}^{(j)}_2} \geq t \big)  - \mathcal{P}_j(t) \\
    &\leq C\frac{\log^3 n}{\sqrt{n}} \cdot \Big( \Prob^{\widehat{\mathsf{Z}}} \big( \abs{\widehat{\mathsf{G}}^{(j)}_1} -  \abs{\widehat{\mathsf{G}}^{(j)}_2} \geq t \big) + \mathcal{P}_j(t) \Big) + n^{-10},
\end{align*}
which further entails that
\begin{align*}
	 \Prob^{\widehat{\mathsf{Z}}} \big( \abs{\widehat{\mathsf{G}}^{(j)}_1} -  \abs{\widehat{\mathsf{G}}^{(j)}_2} \geq t \big)  =\mathcal{P}_j(t) \cdot \bigg(1 + \mathcal{O}\bigg( \frac{\log^3 n}{\sqrt{n}} \bigg) \bigg) + n^{-10}.
\end{align*}

Combining the above estimate with (\ref{ineq:moder_1}), we can show that 
\begin{align*}
	\Prob^{\widehat{\mathsf{Z}}} \big( \abs{\widehat{\beta}^{\mathsf{LS}}_j} -  \abs{\widehat{\beta}^{\mathsf{LS}}_{\bar{j}}} \geq t \big) = \mathcal{P}_j(t)\cdot \bigg(1 + \mathcal{O}\bigg(\frac{(1 + t^3)\log^3 n }{\sqrt{n}} \bigg) \bigg) + n^{-10}.
\end{align*}
The conclusion follows by a law of total probability argument, which completes the proof of Theorem \ref{thm:moder_dis_ols}. 
%\end{proof}

\subsubsection{Proof of Lemma \ref{lem:indicator_app_ols}}
\begin{lemma}\label{lem:var_bound_ols}
Assume that conditions \eqref{cond:O1}--\eqref{cond:O3} hold. Then there exists some constant $C = C(\mathsf{M}, \tau, \mathsf{c}_{\mathsf{L}}) > 0$  such that for $n \geq C$ and  $t \in (0,n^{1/7}/C)$,
         \begin{align*}
         & \var \Big( \sum_{ j \in \mathcal{H}_0}\bm{1} \{ \widehat{\bm{W}}_j \geq t \} \Big)\\
         & \leq C\Bigg(\sum_{j\in \mathcal{H}_0}  (\abs{N(j)} + 1)  \Prob \big( \widehat{\bm{W}}_j \geq t \big) +  \frac{(1+t^3)\log^3 n }{\sqrt{n}}  \bigg(\sum_{j\in \mathcal{H}_0} \Prob \big( \widehat{\bm{W}}_j \geq t \big)    \bigg)^2+ n^{-10}\Bigg).
     \end{align*}
\end{lemma}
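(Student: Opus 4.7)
\noindent\textit{Proof proposal for Lemma \ref{lem:var_bound_ols}}. The plan is to mimic the variance decomposition strategy used in the proof of Lemma \ref{lem:var_bound}, but to exploit the sparsity neighborhood $N(j)$ that encodes the dependence structure induced by $\Sigma^{-1}$ in the correlated-feature OLS setting. First, I would write
\begin{align*}
    \var\Big(\sum_{j\in \mathcal{H}_0} \bm{1}\{\widehat{\bm{W}}_j \geq t\}\Big)
    &= \sum_{j\in\mathcal{H}_0}\Prob(\widehat{\bm{W}}_j \geq t)\big(1-\Prob(\widehat{\bm{W}}_j\geq t)\big) \\
    &\quad + \sum_{j\neq\ell}\big[\Prob(\widehat{\bm{W}}_j \geq t,\widehat{\bm{W}}_\ell \geq t) - \Prob(\widehat{\bm{W}}_j \geq t)\Prob(\widehat{\bm{W}}_\ell \geq t)\big],
\end{align*}
and split the off-diagonal sum into the \emph{local} part $\ell \in N(j)$ and the \emph{far} part $\ell \notin N(j)\cup\{j\}$. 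Using the trivial bound $\Prob(A\cap B)\leq \min\{\Prob(A),\Prob(B)\}$, the diagonal together with the local part is at most $\sum_{j\in\mathcal H_0}(|N(j)|+1)\Prob(\widehat{\bm W}_j \geq t)$, which produces the first term on the right-hand side of the claim.

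The bulk of the work is the far part. Fix $j\neq \ell$ with $\ell \notin N(j)$. Conditionally on $\widehat{\mathsf Z}$, the vector $(\widehat\beta_j^{\mathsf{LS}},\widehat\beta_{j+p}^{\mathsf{LS}},\widehat\beta_\ell^{\mathsf{LS}},\widehat\beta_{\ell+p}^{\mathsf{LS}})^\top$ is the normalized sum of $n$ independent random vectors with linear combinations of the coordinates of $\bm\xi_i$ as their entries. I would run the four-dimensional analogue of Theorem \ref{thm:moderate_deviation} applied to the product region $\mathcal S_{t,+}\times \mathcal S_{t,+}\subset \R^4$ (which is a finite union of convex sets, hence measurable for the Berry--Esseen-type bound), yielding that on the high-probability event of Lemmas \ref{lem:quad_conc}--\ref{lem:expectaion_asymp} and Lemma \ref{lem:apriorest},
\begin{align*}
    \Prob^{\widehat{\mathsf Z}}\big(\widehat{\bm W}_j \geq t,\widehat{\bm W}_\ell \geq t\big)
    = \Prob^{\widehat{\mathsf Z}}\big(|\widehat{\mathsf G}_1|-|\widehat{\mathsf G}_2|\geq t,\ |\widehat{\mathsf G}_3|-|\widehat{\mathsf G}_4|\geq t\big)\cdot\Big(1+\mathcal O\Big(\tfrac{(1+t^3)\log^3 n}{\sqrt n}\Big)\Big),
\end{align*}
where $\widehat{\mathsf G}=(\widehat{\mathsf G}_1,\widehat{\mathsf G}_2,\widehat{\mathsf G}_3,\widehat{\mathsf G}_4)^\top\sim \mathcal N(0,\sigma_\xi^2[(\widehat{\mathsf Z}^\top \widehat{\mathsf Z})^{-1}]_{\{j,j+p,\ell,\ell+p\}})$. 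The decisive input is the covariance approximation from Lemmas \ref{lem:quad_conc}--\ref{lem:expectaion_asymp} (with $\eta = n^{-5}$), which shows $(\widehat{\mathsf Z}^\top \widehat{\mathsf Z})^{-1}$ is entrywise within $\mathcal O(\log^2 n/\sqrt n)$ of $(1-2p/n)^{-1}\Sigma^{-1}$. Since $\ell \notin N(j)$, the definition of $N(j)$ forces all four cross entries $[\Sigma^{-1}]_{\mathfrak j,\ell}$ and $[\Sigma^{-1}]_{\mathfrak j,\ell+p}$ with $\mathfrak j\in\{j,j+p\}$ to vanish, so the limiting covariance is exactly block-diagonal. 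A Gaussian density-comparison argument identical in spirit to the one carried out in the proof of Theorem \ref{thm:moder_dis_ols}, run against the block-diagonal reference covariance, then yields
$$\Prob^{\widehat{\mathsf Z}}\big(\widehat{\bm W}_j\geq t,\widehat{\bm W}_\ell\geq t\big) = \mathcal P_j(t)\mathcal P_\ell(t)\cdot\Big(1+\mathcal O\Big(\tfrac{(1+t^3)\log^3 n}{\sqrt n}\Big)\Big) + n^{-10}.$$
Combined with the marginal statement of Theorem \ref{thm:moder_dis_ols}, this gives
$$\big|\Prob(\widehat{\bm W}_j\geq t,\widehat{\bm W}_\ell\geq t) - \Prob(\widehat{\bm W}_j\geq t)\Prob(\widehat{\bm W}_\ell\geq t)\big| \lesssim \tfrac{(1+t^3)\log^3 n}{\sqrt n}\mathcal P_j(t)\mathcal P_\ell(t) + n^{-10}.$$
Summing over far pairs and using $\mathcal P_j(t)\asymp \Prob(\widehat{\bm W}_j\geq t)$ (again from Theorem \ref{thm:moder_dis_ols}) delivers the second and third terms of the claim.

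The main obstacle will be the four-dimensional density-comparison step with block-diagonal reference. Unlike the two-dimensional region $\mathcal S_{t,\pm}$ handled in the proof of Theorem \ref{thm:moder_dis_ols}, the product region $\mathcal S_{t,+}\times \mathcal S_{t,+}\subset \R^4$ is a finite union of convex pieces whose probability under a general $4\times 4$ Gaussian covariance must be Lipschitz-compared to its block-diagonal counterpart. Concretely, one writes the probability as a four-dimensional Gaussian integral, expands $\exp(-\tfrac12\bm x^\top(\widehat\Lambda - \Lambda_{\mathrm{bd}})\bm x)$ to first order on the truncation $\{\|\bm x\|^2\leq C\log n\}$, bounds the contribution from $\{\|\bm x\|^2> C\log n\}$ by $n^{-10}$ via Gaussian tail decay, and controls the determinant ratio via $\|\widehat\Lambda - \Lambda_{\mathrm{bd}}\|_{\mathrm{op}} = \mathcal O(\log^2 n/\sqrt n)$. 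Off the high-probability event where the a priori estimates fail, the raw bound $\var(\cdot)\leq |\mathcal H_0|^2$ combined with probability $Cn^{-10}$ absorbs into the $n^{-10}$ residual, completing the proof.
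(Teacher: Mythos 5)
Your proposal matches the paper's own argument: the same variance decomposition into diagonal, local ($\ell\in N(j)\cup\{j\}$), and far ($\ell\notin N(j)$) parts, the trivial bound $\Prob(A\cap B)\leq\Prob(A)$ for the local/diagonal contribution, and for the far part a four-dimensional moderate-deviation argument that approximates the joint by a Gaussian with covariance $(1-2p/n)^{-1}[\Sigma^{-1}]_{\{j,\bar j,\ell,\bar\ell\},\{j,\bar j,\ell,\bar\ell\}}$, which is exactly block-diagonal when $\ell\notin N(j)$, so the joint Gaussian probability factors. The only difference is that you spell out the intermediate CLT/covariance-approximation/density-comparison steps that the paper compresses into ``a slightly generalized version of Theorem \ref{thm:moder_dis_ols}.''
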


%\begin{proof}
\noindent\textit{Proof of Lemma \ref{lem:var_bound_ols}}. 
Observe that 
    \begin{align*}
         &\var \Big( \sum_{ j \in \mathcal{H}_0}\bm{1} \{ \widehat{\bm{W}}_j \geq t \} \Big) = \sum_{j,\ell \in \mathcal{H}_0} \Big(  \E \bm{1} \{ \widehat{\bm{W}}_j \geq t \} \bm{1} \{ \widehat{\bm{W}}_\ell \geq t \} -   \E \bm{1} \{ \widehat{\bm{W}}_j \geq t \} \E\bm{1} \{ \widehat{\bm{W}}_\ell \geq t \}   \Big) \\
         &= \sum_{j,\ell \in \mathcal{H}_0, j \neq \ell} \Big( \Prob \big(\widehat{\bm{W}}_j \geq t, \widehat{\bm{W}}_\ell \geq t  \big) - \Prob \big(\widehat{\bm{W}}_j \geq t \big) \Prob \big(\widehat{\bm{W}}_\ell \geq t  \big)  \Big)\\
         &\quad + \sum_{j \in \mathcal{H}_0} \Big( \Prob \big(\widehat{\bm{W}}_j \geq t \big) - \Prob^2 \big(\widehat{\bm{W}}_j \geq t \big)\Big) \\
         &\leq \sum_{j,\ell \in \mathcal{H}_0, j \neq \ell} \Big( \Prob \big(\widehat{\bm{W}}_j \geq t, \widehat{\bm{W}}_\ell \geq t  \big) - \Prob \big(\widehat{\bm{W}}_j \geq t \big) \Prob \big(\widehat{\bm{W}}_\ell \geq t  \big)  \Big) + \sum_{j \in \mathcal{H}_0}\Prob \big(\widehat{\bm{W}}_j \geq t \big)\\
         & \equiv I_1 + \sum_{j \in \mathcal{H}_0}\Prob \big(\widehat{\bm{W}}_j \geq t \big).
    \end{align*}
    A slightly generalized version of Theorem \ref{thm:moder_dis_ols} yields that for $\ell \notin N(j) \cup \{ j\} $ and any $t \in (0,n^{1/7}/C)$,
    \begin{align*}
       &\Prob \big(\widehat{\bm{W}}_j \geq t, \widehat{\bm{W}}_\ell \geq t  \big) \\
       &= \Prob \big( \abs{\mathsf{G}^{(j\ell)}_1} - \abs{\mathsf{G}^{(j\ell)}_2} \geq t, \abs{\mathsf{G}^{(j\ell)}_3} - \abs{\mathsf{G}^{(j\ell)}_4} \geq t \big) \cdot \bigg(1 + \mathcal{O}\bigg( \frac{(1+t^3)\log^3 n}{\sqrt{n}}\bigg) \bigg) + n^{-10}\\
       &= \Prob \big( \abs{\mathsf{G}^{(j\ell)}_1} - \abs{\mathsf{G}^{(j\ell)}_2} \geq t\big) \Prob\big( \abs{\mathsf{G}^{(j\ell)}_3} - \abs{\mathsf{G}^{(j\ell)}_4} \geq t \big) \cdot \bigg(1 + \mathcal{O}\bigg( \frac{(1+t^3)\log^3 n}{\sqrt{n}}\bigg) \bigg) + n^{-10}\\
       &=\Prob \big(\widehat{\bm{W}}_j \geq t\big) \Prob\big( \widehat{\bm{W}}_\ell \geq t  \big) \cdot \bigg(1 + \mathcal{O}\bigg( \frac{(1+t^3)\log^3 n}{\sqrt{n}}\bigg) \bigg) + n^{-10},
    \end{align*}
    where $\mathsf{G}^{(j\ell)} \sim \mathcal{N}(0, \Sigma_n^{(j\ell)}) \in \R^4 $ with 
    \begin{align*}
        \Sigma_n^{(j\ell)} \equiv \frac{1}{1-2p/n} \big[ \Sigma^{-1}\big]_{\{j, \bar{j}, \ell ,\bar{\ell} \} , \{j, \bar{j}, \ell ,\bar{\ell} \}}.
    \end{align*}
    
    We can then bound $I_1$ as 
    \begin{align*}
    	 I_1 &\leq \bigg(\sum_{j\in \mathcal{H}_0} \sum_{\ell \in N(j) \cup \{j\} } + \sum_{j\in \mathcal{H}_0} \sum_{\substack{ \ell \in \mathcal{H}_0 \cap N(j)^c \\\ell \neq j} }\bigg) \Big(\Prob \big(\widehat{\bm{W}}_j \geq t, \widehat{\bm{W}}_\ell \geq t  \big) - \Prob \big(\widehat{\bm{W}}_j \geq t \big) \Prob \big(\widehat{\bm{W}}_\ell \geq t  \big)\Big)\\
    	 &\leq \sum_{j \in \mathcal{H}_0} \bigg\{ (\abs{N(j) } + 1) \cdot \Prob \big(\widehat{\bm{W}}_j \geq t\big) + \sum_{\substack{ \ell \in \mathcal{H}_0 \cap N(j)^c \\\ell \neq j} } \Big(\Prob \big(\widehat{\bm{W}}_j \geq t, \widehat{\bm{W}}_\ell \geq t  \big) - \Prob \big(\widehat{\bm{W}}_j \geq t \big) \Prob \big(\widehat{\bm{W}}_\ell \geq t  \big)\Big) \bigg\}\\
    	 &\leq \sum_{j\in \mathcal{H}_0}  (\abs{N(j) } + 1) \cdot  \Prob \big( \widehat{\bm{W}}_j \geq t \big) +  \bigg(\sum_{j\in \mathcal{H}_0} \Prob \big( \widehat{\bm{W}}_j \geq t \big)    \bigg)^2\cdot \mathcal{O}\bigg( \frac{(1+t^3)\log^3 n}{\sqrt{n}}\bigg)+ n^{-10}.
    \end{align*}
    This completes the proof of Lemma \ref{lem:var_bound_ols}.
%\end{proof}

%\begin{proof}[Proof of Lemma \ref{lem:indicator_app_ols}]
With the aid of Lemma \ref{lem:var_bound_ols}, the proof of Lemma \ref{lem:indicator_app_ols} follows a similar structure to that of Lemma \ref{lem:indicator_app}. Below we will outline some key differences.

    First, we will choose $h_n = (p\alpha_n)^{1-\eta}m_n^{\eta} $ for some $\eta \in (0,1)$, ensuring that $h_n/(p\alpha_n) = \mathfrak{o}(1)$ and $h_n \to \infty$. Such choice guarantees that both (\ref{eq:P_continuity}) and (\ref{eq:dis_approx}) remain valid. Hence, it suffices to bound $ \Prob  ( D_n \geq \epsilon )$. 

    It follows from Lemma \ref{lem:var_bound_ols} that 
\begin{align*}
    \Prob  ( D_n \geq \epsilon ) &\leq \sum_{i = 0}^{l_n} \Prob \bigg(\biggabs{ \frac { \sum_{j \in \mathcal{H}_0 } \big(\bm{1} (\widehat{\bm{W}}_j \geq t_i) - \Prob ( \widehat{\bm{W}}_j \geq t_i) \big) } { \abs{\mathcal{H}_0} \cdot  \widehat{G}_+(t_i) } } \geq \epsilon \bigg) \\
    &\leq \sum_{i = 0}^{l_n} \frac{\var \big(\sum_{j \in \mathcal{H}_0 } \big(\bm{1} (\widehat{\bm{W}}_j \geq t_i) \big)   }{ \abs{\mathcal{H}_0}^2 \cdot  \widehat{G}_+^2(t_i) \cdot \epsilon^2}\\
    &\leq  \frac{m_n + 1}{\abs{\mathcal{H}_0}\epsilon^2 } \sum_{i = 0}^{l_n}\frac{1}{ \widehat{G}_+(t_i) }  + C\frac{l_n (1 + t_{0}^3) \log^3 (n)}{\epsilon^2 \sqrt{n}} + \frac{n^{-10}}{\abs{\mathcal{H}_0}^2 \cdot \epsilon^2}\cdot \sum_{i = 0}^{l_n}\frac{1}{ \widehat{G}^2_+(t_i) }. 
\end{align*}
The latter two terms can be bounded the same as those in the proof of Lemma \ref{lem:indicator_app}. For the first term, it holds that 
\begin{align*}
    \frac{m_n + 1}{\abs{\mathcal{H}_0}\epsilon^2 } \sum_{i = 0}^{l_n}\frac{1}{ \widehat{G}_+(t_i) } &= \frac{m_n + 1}{\abs{\mathcal{H}_0}\epsilon^2 } \sum_{i = 0}^{l_n}\frac{1}{ \mathcal{P}(t_i) } +\frac{m_n + 1}{\abs{\mathcal{H}_0}\epsilon^2 }   \sum_{i = 0}^{l_n}\frac{\mathcal{P}(t_i) - \widehat{G}_+(t_i)}{ \widehat{G}_+(t_i)\mathcal{P}(t_i) } \\
    &=\frac{m_n + 1}{\abs{\mathcal{H}_0}\epsilon^2 } \sum_{i = 0}^{l_n} \frac{1 + \mathfrak{o}(1)}{z_i } \leq C\frac{m_n + 1}{h_n\epsilon^2} \to 0,
\end{align*}
where the last inequality above follows from \cite[equation (A.107)]{fan2023ark}. This completes the proof of Lemma \ref{lem:indicator_app_ols}. 
%\end{proof}

\subsubsection{Proof of Lemma \ref{lem:local_T_ols}}
%\begin{proof}[Proof of Lemma \ref{lem:local_T_ols}]
    The proof follows the same as that of Lemma \ref{lem:local_T}, upon applying Corollary \ref{cor:ratio_G_ols} and Lemma \ref{lem:indicator_app_ols} by setting $\alpha_n$ therein as $qa_n/2p$ and noting that $p\alpha_n/m_n = qa_n/2m_n \to \infty$ as $n \to \infty$. This completes the proof of Lemma \ref{lem:local_T_ols}. 
%\end{proof}

\subsection{Proofs for Section \ref{sec:proof_thm:FDR_dl}}\label{sec:proof_sec_sec_thm:FDR_dl}

\subsubsection{Proof of Theorem \ref{thm:moder_dis_dl}}
%\begin{proof}[Proof of Theorem \ref{thm:moder_dis_dl} ]
The proof of Theorem \ref{thm:moder_dis_dl} follows the same structure as that of Theorem \ref{thm:moder_dis_ols}. We outline the key differences below.

We need the following aprior estimates to prove Theorem \ref{thm:moder_dis_dl}. Recall that for $j \in [2p]$,
\begin{align*}
    \widehat{\bm{z}}_j = \widehat{\bm{Z}}_{\cdot,j} - \widehat{\bm{Z}}_{\cdot,-j} \widehat{\bm{\gamma}}_j,
\end{align*}
and $\widehat{\bm{\gamma}}_j \equiv \arg\min_{\bm{b} \in \mathbb{R}^{2p}}\{ (2n)^{-1}\pnorm{ \widehat{\bm{Z}}_{\cdot,j} - \widehat{\bm{Z}}_{\cdot,-j} \bm{b} }{}^2  + \lambda_j \pnorm{\bm{b}_j}{1} \}$.
\begin{lemma}\label{lem:aprior_est_dl}
    Assume that conditions \eqref{cond:O3} and \eqref{cond:L1} hold. Then if $b_n \log p /n = \mathfrak{o}(1)$, we have that there exists some constant $C = C(\mathsf{M},\mathsf{C}_1,\mathsf{C}_2,\mathsf{C}_3) > 0$ such that for $n \wedge p \geq C$ and  for any $j \in [2p]$,
    \begin{align*}
       \Prob \bigg(\max_{i \in [n]}  \biggabs{\frac{ \sqrt{n}\widehat{\bm{z}}_{j,i} }{ \langle \widehat{\bm{z}}_j, \widehat{\bm{Z}}_{\cdot,j} \rangle}} \geq  \frac{Cb_n^{1/2}\log (np)}{\sqrt{n}}  \bigg) \leq Cp^{-10}
    \end{align*}
\end{lemma}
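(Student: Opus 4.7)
\textbf{Proof plan for Lemma \ref{lem:aprior_est_dl}.}
The plan is to decompose $\widehat{\bm{z}}_j$ around its population counterpart and bound the numerator and denominator of the target ratio separately. First I would introduce the population nodewise coefficient $\bm{\gamma}^*_j \equiv -[\Sigma^{-1}]_{-j,j}/[\Sigma^{-1}]_{jj}$ and residual $\bm{z}^*_j \equiv \widehat{\bm{Z}}_{\cdot,j} - \widehat{\bm{Z}}_{\cdot,-j}\bm{\gamma}^*_j$. By the definition of $b_n$ and condition \eqref{cond:O3}, $\|\bm{\gamma}^*_j\|_0 \le b_n$ and $\|\bm{\gamma}^*_j\|_2 \le C(\mathsf{M})$, and by construction $\{\bm{z}^*_{j,i}\}$ are i.i.d. mean-zero sub-Gaussian with $\var(\bm{z}^*_{j,i}) = 1/[\Sigma^{-1}]_{jj}\in [\mathsf{M}^{-1},\mathsf{M}]$, uncorrelated with $\widehat{\bm{Z}}_{i,-j}$ in population. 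Two sub-Gaussian maximal inequalities then yield
\begin{align*}
\max_{i\in[n]}|\bm{z}^*_{j,i}| \vee \max_{i\in[n],k\in[2p]}|\widehat{\bm{Z}}_{i,k}| \le C\sqrt{\log(np)}
\end{align*}
with probability at least $1-Cp^{-11}$.

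Next I would establish nodewise Lasso consistency for $\widehat{\bm{\gamma}}_j$. The restricted-eigenvalue bound in \eqref{cond:L1} transfers to $\widehat{\bm{Z}}_{\cdot,-j}^\top\widehat{\bm{Z}}_{\cdot,-j}/n$ by zero-padding sparse vectors to dimension $2p$, and the KKT score $\|\widehat{\bm{Z}}_{\cdot,-j}^\top\bm{z}^*_j/n\|_\infty \le C\sqrt{\log p/n}$ follows from a sub-exponential Bernstein bound on the independent mean-zero summands $\bm{z}^*_{j,i}\widehat{\bm{Z}}_{i,k}$ combined with a union bound over $k$. The standard Lasso analysis then delivers
\begin{align*}
\|\widehat{\bm{\gamma}}_j-\bm{\gamma}^*_j\|_1 \le Cb_n\sqrt{\tfrac{\log p}{n}}, \qquad \|\widehat{\bm{\gamma}}_j-\bm{\gamma}^*_j\|_2 \le C\sqrt{\tfrac{b_n\log p}{n}},
\end{align*}
with probability $\ge 1-Cp^{-10}$ for $\lambda_j\asymp\sqrt{\log p/n}$. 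Writing $\widehat{\bm{z}}_{j,i} = \bm{z}^*_{j,i} + \widehat{\bm{Z}}_{i,-j}(\bm{\gamma}^*_j - \widehat{\bm{\gamma}}_j)$ and applying H\"older to the second summand gives
\begin{align*}
\max_{i\in[n]}|\widehat{\bm{z}}_{j,i}| \le C\sqrt{\log(np)} + C\sqrt{\log(np)}\cdot b_n\sqrt{\tfrac{\log p}{n}} \le Cb_n^{1/2}\log(np),
\end{align*}
since $b_n \ge 1$ and $b_n/\sqrt{n}\le \sqrt{b_n}$ under $b_n\log p/n = \mathfrak{o}(1)$.

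For the denominator I would use the Lasso KKT identity $\langle\widehat{\bm{z}}_j,\widehat{\bm{Z}}_{\cdot,j}\rangle/n = \|\widehat{\bm{z}}_j\|^2/n + \lambda_j\|\widehat{\bm{\gamma}}_j\|_1 \ge \|\widehat{\bm{z}}_j\|^2/n$, and argue $\|\widehat{\bm{z}}_j\|^2/n \ge c>0$ with probability $1-Cp^{-10}$: Bernstein's inequality gives $\|\bm{z}^*_j\|^2/n \ge (2\mathsf{M})^{-1}$ whp, and the perturbation $\|\widehat{\bm{Z}}_{\cdot,-j}(\widehat{\bm{\gamma}}_j - \bm{\gamma}^*_j)\|/\sqrt{n}$ is at most $\pnorm{\widehat{\bm{Z}}_{\cdot,-j}/\sqrt{n}}{\op}\cdot\|\widehat{\bm{\gamma}}_j - \bm{\gamma}^*_j\|_2 = \mathfrak{o}(1)$ by combining \eqref{cond:L1} with the $\ell_2$ Lasso rate. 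Combining the numerator and denominator bounds gives $|\sqrt{n}\widehat{\bm{z}}_{j,i}/\langle\widehat{\bm{z}}_j,\widehat{\bm{Z}}_{\cdot,j}\rangle| \le Cb_n^{1/2}\log(np)/\sqrt{n}$; the conclusion follows after a union bound over $i\in[n]$, whose extra $\log n$ factor is already absorbed into $\log(np)$. The main obstacle will be the cross-term KKT score bound: $\bm{z}^*_j$ and $\widehat{\bm{Z}}_{\cdot,-j}$ share rows, so concentration of $\widehat{\bm{Z}}_{\cdot,-j}^\top \bm{z}^*_j/n$ must be obtained row-by-row via Bernstein rather than via a conditional Gaussian argument, and one must verify that the Orlicz-$\psi_1$ norm of $\bm{z}^*_{j,i}\widehat{\bm{Z}}_{i,k}$ is uniformly bounded---this is where $\|\bm{\gamma}^*_j\|_2 \le C$ and the sub-Gaussian hypothesis in \eqref{cond:L1} are essential.
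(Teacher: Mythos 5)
Your plan follows the same skeleton as the paper's proof: replace $\widehat{\bm{\gamma}}_j$ by a deterministic sparse surrogate, bound $\max_i |\widehat{\bm{z}}_{j,i}|$ via triangle plus H\"older, and bound the denominator $\langle \widehat{\bm{z}}_j, \widehat{\bm{Z}}_{\cdot,j}\rangle$ from below. The paper simply imports the nodewise Lasso error rate $\pnorm{\widehat{\bm{\gamma}}_j - \gamma_j}{}\lesssim\sqrt{b_n\log p/n}$ and the denominator bound from \cite{fan2023ark} (equations (A.16), (A.18)--(A.19) there) and multiplies the pieces together; you derive both from scratch, which is legitimate and even cleaner in one spot — your $\ell_\infty\times\ell_1$ H\"older for $\widehat{\bm{Z}}_{i,-j}(\gamma^*_j - \widehat{\bm{\gamma}}_j)$ gives $\max_k|\widehat{\bm{Z}}_{i,k}|\cdot\pnorm{\widehat{\bm{\gamma}}_j - \gamma^*_j}{1}$, which closes under $b_n\log p/n = \mathfrak{o}(1)$, whereas the paper's displayed Cauchy--Schwarz step leaves a factor $\pnorm{\widehat{\bm{Z}}_{i,-j}}{}$ uncontrolled.

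Two genuine gaps remain in your sketch. First, the denominator: you bound $\pnorm{\widehat{\bm{Z}}_{\cdot,-j}(\widehat{\bm{\gamma}}_j - \gamma^*_j)}{}/\sqrt{n}$ by $\pnorm{\widehat{\bm{Z}}_{\cdot,-j}/\sqrt{n}}{\op}\cdot\pnorm{\widehat{\bm{\gamma}}_j - \gamma^*_j}{}$ and claim this is $\mathfrak{o}(1)$ ``by combining \eqref{cond:L1} with the $\ell_2$ Lasso rate,'' but \eqref{cond:L1} is a \emph{lower} bound on restricted eigenvalues and says nothing about $\pnorm{\widehat{\bm{Z}}_{\cdot,-j}/\sqrt{n}}{\op}$, which grows like $\sqrt{p/n}$ when $p\gg n$, so the product $\sqrt{p/n}\cdot\sqrt{b_n\log p/n}$ need not vanish. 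The clean fix is to use the Lasso prediction-error bound directly: the basic inequality plus the RE condition gives $\pnorm{\widehat{\bm{Z}}_{\cdot,-j}(\widehat{\bm{\gamma}}_j - \gamma^*_j)}{}^2/n \lesssim \lambda_j \pnorm{\widehat{\bm{\gamma}}_j-\gamma^*_j}{1} \lesssim b_n\log p / n = \mathfrak{o}(1)$ with no operator norm appearing. Second, the RE transfer: \eqref{cond:L1} grants restricted eigenvalues for vectors with sparsity $\leq \mathsf{C}_1 s$, where $s = \pnorm{\beta_\ast}{0}$, whereas the nodewise Lasso analysis requires restricted eigenvalues on cones generated by $b_n$-sparse supports. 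Zero-padding only works if $b_n\lesssim s$, which is not assumed. You need to either invoke the fact that under \eqref{cond:O3} and the sub-Gaussian row assumption the sample Gram matrix satisfies RE over $b_n$-sparse sets with the needed probability (a standard but separate argument), or make the dependence on an extra assumption explicit. Once these are patched, the argument is sound and recovers the stated bound.
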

\begin{proof}
The constant $C$ below may depend on $\mathsf{M},\mathsf{C}_1,\mathsf{C}_2,\mathsf{C}_3$, which may change from line to line. We first rewrite that 
    \begin{align}\label{ineq:est_1331}
        \biggabs{\frac{  \sqrt{n}\widehat{\bm{z}}_{j,i} }{ \langle \widehat{\bm{z}}_j, \widehat{\bm{Z}}_{\cdot,j} \rangle}} =  \frac{ \abs{\widehat{\bm{z}}_{j,i}} }{ \pnorm{ \widehat{\bm{z}}_j  }{}  }\cdot  \frac{ \sqrt{n}\pnorm{ \widehat{\bm{z}}_j  }{} }{ \abs{ \langle \widehat{\bm{z}}_j, \widehat{\bm{Z}}_{\cdot,j} \rangle } }.
    \end{align}
    It follows from \cite[equation (A.18)-(A.19)]{fan2023ark} that 
    \begin{align}\label{ineq:est_1332}
        \frac{ 1 }{ \pnorm{ \widehat{\bm{z}}_j  }{}  }\cdot  \frac{ \sqrt{n}\pnorm{ \widehat{\bm{z}}_j  }{} }{ \abs{ \langle \widehat{\bm{z}}_j, \widehat{\bm{Z}}_{\cdot,j} \rangle } } \leq  \frac{C}{\sqrt{n}}
    \end{align}
    holds with probability at least $1 - Cp^{-10}$. For any $i \in [n]$, using \cite[equation (A.16)]{fan2023ark},  we can find some deterministic quantity $\gamma_j$ satisfying $\max_{j \in [2p]}\pnorm{\gamma_j}{} \leq C$ and $\max_{j \in [2p]}\pnorm{\gamma_j}{0} \leq b_n$ such that $\max_{j \in [2p]} \pnorm{\widehat{\bm{\gamma}}_j - \gamma_j }{} \leq C\sqrt{b_n \log p /n}$ holds with probability at least $1 - Cp^{-10}$. Moreover, by the sub-Gaussinity of $\widehat{\bm{Z}}_{i,\cdot}$, we have $\Prob( \max_{i \in [n], j \in [2p]}\abs{\widehat{\bm{Z}}_{i,j}} \leq C\sqrt{\log (np)} ) \geq 1 - Cp^{-10}$. Combining the above high-probability estimates, we have with probability at least $1 - Cp^{-10}$ that 
    \begin{align}\label{ineq:est_1333}
       \max_{i\in [n]}\abs{ \widehat{\bm{z}}_{j,i}} &\leq  \max_{i\in [n]} \{ \abs{ \widehat{\bm{Z}}_{i,j}} + \abs{ \widehat{\bm{Z}}_{i,-j} \widehat{\bm{\gamma}}_j} \} \leq \max_{i\in [n]}\{\abs{ \widehat{\bm{Z}}_{i,j}} +\abs{ \widehat{\bm{Z}}_{i,-j} {\bm{\gamma}}_j} +  \abs{ \widehat{\bm{Z}}_{i,-j} (\widehat{\bm{\gamma}}_j - \gamma_j)} \}\nonumber \\
       &\leq \max_{i\in [n]}\{ \abs{ \widehat{\bm{Z}}_{i,j}} +\abs{ \widehat{\bm{Z}}_{i,-j} {\bm{\gamma}}_j} + \pnorm{ \widehat{\bm{Z}}_{i,-j}  }{} \cdot \pnorm{\widehat{\bm{\gamma}}_j - \gamma_j }{} \} \leq Cb_n^{1/2}\log (np).
    \end{align}
   The conclusion follows by combining \eqref{ineq:est_1331}-\eqref{ineq:est_1333}, which completes the proof of Lemma \ref{lem:aprior_est_dl}.
\end{proof}

	We will focus on the proof for $\mathsf{W} = \widehat{\bm{W}}$ below. The constant $C$ below may depend on $\mathsf{M},\mathsf{C}_1,\mathsf{C}_2,\mathsf{C}_3$, which may change from line to line. 
	Recall that $\beta_\ast^{\mathsf{au}} = (\beta_\ast^\top, 0_{p}^\top )^\top \in \R^{2p}$ and the linear regression model for the data can be written in the following form 
	\begin{align*}
		\bm{Y} = \widehat{\bm{Z}} \beta_\ast^{\mathsf{au}} + \bm{\xi}.
	\end{align*}
	Since for any $j \in \mathcal{H}_0$, $\beta_{\ast,j}^{\mathsf{au}}=\beta_{\ast,j+p}^{\mathsf{au}}  = 0$, we can then rewrite 
	\begin{align*}
		\widehat{\beta}^{\mathsf{dL}}_j &= \widehat{\beta}^{(0)}_j + \frac{\langle \widehat{\bm{z}}_j, \widehat{\bm{Z}}\beta_{\ast}^{\mathsf{au}} - \widehat{\bm{Z}} \widehat{\beta}^{(0)} \rangle}{\langle \widehat{\bm{z}}_j, \widehat{\bm{Z}}_{\cdot,j} \rangle} + \frac{\langle \widehat{\bm{z}}_j, \bm{\xi} \rangle}{\langle \widehat{\bm{z}}_j, \widehat{\bm{Z}}_{\cdot,j} \rangle}\\
		&=\widehat{\beta}^{(0)}_j + \beta_{\ast,j}^{\mathsf{au}} - \widehat{\beta}^{(0)}_j + \frac{1}{\langle \widehat{\bm{z}}_j, \widehat{\bm{Z}}_{\cdot,j} \rangle}\sum_{k \in [2p]\setminus \{ j\} }\iprod{ \widehat{\bm{z}}_j}{\widehat{\bm{Z}}_{\cdot,k} }\cdot (\beta_{\ast,k}^{\mathsf{au}} -  \widehat{\beta}^{(0)}_k )+ \frac{\langle \widehat{\bm{z}}_j, \bm{\xi} \rangle}{\langle \widehat{\bm{z}}_j, \widehat{\bm{Z}}_{\cdot,j} \rangle}\\
		&= \frac{\langle \widehat{\bm{z}}_j, \bm{\xi} \rangle}{\langle \widehat{\bm{z}}_j, \widehat{\bm{Z}}_{\cdot,j} \rangle} +  \frac{1}{\langle \widehat{\bm{z}}_j, \widehat{\bm{Z}}_{\cdot,j} \rangle}\sum_{k \in [2p]\setminus \{ j\} }\iprod{ \widehat{\bm{z}}_j}{\widehat{\bm{Z}}_{\cdot,k} }\cdot (\beta_{\ast,k}^{\mathsf{au}} -  \widehat{\beta}^{(0)}_k ) \\
		&\equiv\frac{\langle \widehat{\bm{z}}_j, \bm{\xi} \rangle}{\langle \widehat{\bm{z}}_j, \widehat{\bm{Z}}_{\cdot,j} \rangle} +  \mathsf{Err}_j. 
	\end{align*}
	Similarly, with $\bar{j} \equiv j +p$,  for $j \in \mathcal{H}_0$ we have 
	\begin{align*}
		\widehat{\beta}^{\mathsf{dL}}_{\bar{j}} =  \frac{\langle \widehat{\bm{z}}_{\bar{j}}, \bm{\xi} \rangle}{\langle \widehat{\bm{z}}_{\bar{j}}, \widehat{\bm{Z}}_{\cdot,\bar{j}} \rangle} +  \mathsf{Err}_{\bar{j}}.
	\end{align*}
    
	With the above representations for $\widehat{\beta}^{\mathsf{dL}}_{j}$ and $\widehat{\beta}^{\mathsf{dL}}_{\bar{j}}$, a simple probability inequality yields that  
	\begin{align*}
		\Prob \big( \widehat{\bm{W}}_j \geq t  \big) &= \Prob \bigg(\sqrt{n} \biggabs{\frac{\langle \widehat{\bm{z}}_j, \bm{\xi} \rangle}{\langle \widehat{\bm{z}}_j, \widehat{\bm{Z}}_{\cdot,j} \rangle} +  \mathsf{Err}_j  } -  \sqrt{n} \biggabs{\frac{\langle \widehat{\bm{z}}_{\bar{j}}, \bm{\xi} \rangle}{\langle \widehat{\bm{z}}_{\bar{j}}, \widehat{\bm{Z}}_{\cdot,\bar{j}} \rangle} +  \mathsf{Err}_{\bar{j}} }  \geq t \bigg) \\
		&\leq \Prob \bigg(\sqrt{n} \biggabs{\frac{\langle \widehat{\bm{z}}_j, \bm{\xi} \rangle}{\langle \widehat{\bm{z}}_j, \widehat{\bm{Z}}_{\cdot,j} \rangle} }   -  \sqrt{n} \biggabs{\frac{\langle \widehat{\bm{z}}_{\bar{j}}, \bm{\xi} \rangle}{\langle \widehat{\bm{z}}_{\bar{j}}, \widehat{\bm{Z}}_{\cdot,\bar{j}} \rangle}}   \geq t - \sqrt{n} \mathsf{Err}_j  - \sqrt{n} \mathsf{Err}_{\bar{j}} \bigg)
	\end{align*}
	and conversely,
	\begin{align*}
		\Prob \big( \widehat{\bm{W}}_j \geq t  \big) 
		\geq \Prob \bigg(\sqrt{n} \biggabs{\frac{\langle \widehat{\bm{z}}_j, \bm{\xi} \rangle}{\langle \widehat{\bm{z}}_j, \widehat{\bm{Z}}_{\cdot,j} \rangle} }   -  \sqrt{n} \biggabs{\frac{\langle \widehat{\bm{z}}_{\bar{j}}, \bm{\xi} \rangle}{\langle \widehat{\bm{z}}_{\bar{j}}, \widehat{\bm{Z}}_{\cdot,\bar{j}} \rangle}}   \geq t + \sqrt{n} \mathsf{Err}_j  + \sqrt{n} \mathsf{Err}_{\bar{j}} \bigg).
	\end{align*}
	It follows from \cite[equation (A.187)]{fan2023ark} that
\begin{align*}
    \Prob \Bigg( \mathcal{E}^{\mathsf{dL}} \equiv \bigg\{  \max_{j \in [2p]} \abs{\mathsf{Err}_j}   \leq \frac{ C b_n^{1/2} s \log p  }{n }  \bigg\}  \Bigg) \geq 1 - Cp^{-8}.
\end{align*}
	Thus, we can deduce that for any $t \in \R$,
	\begin{align*}
		&\Prob \big( \widehat{\bm{W}}_j \geq t  \big) \leq \Prob \big( \widehat{\bm{W}}_j \geq t, \mathcal{E}^{\mathsf{dL}}  \big) + \Prob \big( (\mathcal{E}^{\mathsf{dL}})^c \big) \\
		&\leq \Prob \bigg( \sqrt{n} \biggabs{\frac{\langle \widehat{\bm{z}}_j, \bm{\xi} \rangle}{\langle \widehat{\bm{z}}_j, \widehat{\bm{Z}}_{\cdot,j} \rangle} }   -  \sqrt{n} \biggabs{\frac{\langle \widehat{\bm{z}}_{\bar{j}}, \bm{\xi} \rangle}{\langle \widehat{\bm{z}}_{\bar{j}}, \widehat{\bm{Z}}_{\cdot,\bar{j}} \rangle}} \geq t - \frac{ C b_n^{1/2} s \log p  }{\sqrt{n} } \bigg) + Cp^{-8}
	\end{align*}
	and conversely,
	\begin{align*}
		&\Prob \big( \widehat{\bm{W}}_j \geq t  \big)=  1 - \Prob \big( \widehat{\bm{W}}_j < t  \big)\\
		& \geq 1 -  \Prob \big( \widehat{\bm{W}}_j < t , \mathcal{E}^{\mathsf{dL}} \big) - \Prob \big( (\mathcal{E}^{\mathsf{dL}})^c \big) \\
		&\geq \Prob \bigg( \sqrt{n} \biggabs{\frac{\langle \widehat{\bm{z}}_j, \bm{\xi} \rangle}{\langle \widehat{\bm{z}}_j, \widehat{\bm{Z}}_{\cdot,j} \rangle} }   -  \sqrt{n} \biggabs{\frac{\langle \widehat{\bm{z}}_{\bar{j}}, \bm{\xi} \rangle}{\langle \widehat{\bm{z}}_{\bar{j}}, \widehat{\bm{Z}}_{\cdot,\bar{j}} \rangle}}\geq t + \frac{ C b_n^{1/2} s \log p  }{\sqrt{n} } \bigg) - Cp^{-8}.
	\end{align*}
	
	Next we proceed with the Gaussian approximation. For any $i \in [n]$ and $j \in [2p]$, denote by $\alpha_i^{(j)} \equiv \sqrt{n}\widehat{\bm{z}}_{j,i}/\iprod{\widehat{\bm{z}}_j}{\widehat{\bm{Z}}e_j} $ and  $\bm{\alpha}^{(j)} \equiv (\alpha_1^{(j)},\ldots, \alpha_n^{(j)} )$. We further define $\widehat{\Sigma}^{(j)} \equiv (\bm{\alpha}^{(j)}, \bm{\alpha}^{(\bar{j})} )^\top (\bm{\alpha}^{(j)}, \bm{\alpha}^{(\bar{j})} )$. It follows from the discussion right before \cite[equation (A.221)]{fan2023ark} that 
\begin{align*}
    \Prob \bigg( \mathcal{E}_\Sigma \equiv \bigg\{ \pnorm{\widehat{\Sigma}^{(j)} - {\Sigma}^{(j)} }{\max} \leq C\sqrt{\frac{b_n \log p}{n} } \bigg\} \bigg)  \geq 1 - Cp^{-10}.
\end{align*}
Moreover, by Lemma \ref{lem:aprior_est_dl}, we have that 
\begin{align*}
    \Prob \bigg( \mathfrak{E}_\Sigma \equiv \bigg\{  \max_{i \in [n], \mathfrak{j} \in \{ j, \bar{j} \}} \biggabs{\frac{ \widehat{\bm{z}}_{j,i} }{ \langle \widehat{\bm{z}}_j, \widehat{\bm{Z}}_{\cdot,j} \rangle}} \leq \frac{Cb_n^{1/2}\log (np)}{\sqrt{n}} \bigg\}\bigg) \geq 1 - Cp^{-10}.
\end{align*}

Let $\widehat{\mathsf{G}}^{(j)} \sim \mathcal{N}(0, \sigma_\xi^2 \cdot \widehat{\Sigma}^{(j)} )$. Then on event $ \mathcal{E}_\Sigma \cap \mathfrak{E}_\Sigma$, we can resort to a similar argument as in (\ref{ineq:moder_1}) to obtain that for $t \in (0,n^{1/7}/C)$, 
\begin{align*}
	&\Prob^{\widehat{\bm{Z}}} \bigg( \sqrt{n} \biggabs{\frac{\langle \widehat{\bm{z}}_j, \bm{\xi} \rangle}{\langle \widehat{\bm{z}}_j, \widehat{\bm{Z}}_{\cdot,j} \rangle} }   -  \sqrt{n} \biggabs{\frac{\langle \widehat{\bm{z}}_{\bar{j}}, \bm{\xi} \rangle}{\langle \widehat{\bm{z}}_{\bar{j}}, \widehat{\bm{Z}}_{\cdot,\bar{j}} \rangle}}\geq t \bigg) \\
	&= \Prob^{\widehat{\bm{Z}}} \big( \abs{\widehat{\mathsf{G}}^{(j)}_1 } - \abs{\widehat{\mathsf{G}}^{(j)}_2 } \geq t \big)\cdot \bigg(1 + \mathcal{O}\bigg(\frac{(1 + t^3)b_n\log^3 (np) }{\sqrt{n}} \bigg) \bigg) + \mathcal{O}(p^{-3}).
\end{align*}
Further, by a similar covariance approximation argument as in the proof of Theorem \ref{thm:moder_dis_ols}, it holds that 
\begin{align*}
	&\Prob \bigg( \sqrt{n} \biggabs{\frac{\langle \widehat{\bm{z}}_j, \bm{\xi} \rangle}{\langle \widehat{\bm{z}}_j, \widehat{\bm{Z}}_{\cdot,j} \rangle} }   -  \sqrt{n} \biggabs{\frac{\langle \widehat{\bm{z}}_{\bar{j}}, \bm{\xi} \rangle}{\langle \widehat{\bm{z}}_{\bar{j}}, \widehat{\bm{Z}}_{\cdot,\bar{j}} \rangle}}\geq t \bigg) \\
	&=\Prob \big( \abs{{\mathsf{G}}^{(j)}_1 } - \abs{{\mathsf{G}}^{(j)}_2 } \geq t \big)\cdot \bigg(1 + \mathcal{O}\bigg(\frac{(1 + t^3)b_n\log^3 (np) }{\sqrt{n}} \bigg) \bigg) + \mathcal{O}(p^{-3}).
\end{align*}

Therefore, we can deduce that 
\begin{align*}
    & \mathcal{P}_j\bigg( t+\frac{ C b_n^{1/2} s \log p  }{\sqrt{n} } \bigg) \cdot \bigg(1 + \mathcal{O}\bigg(\frac{(1 + t^3)b_n\log^3 (np) }{\sqrt{n}} \bigg) \bigg) - \mathcal{O}(p^{-8}) \\
	&\leq \Prob \big( \widehat{\bm{W}}_j \geq t  \big) \leq \mathcal{P}_j\bigg( t-\frac{ C b_n^{1/2} s \log p  }{\sqrt{n} } \bigg) \cdot \bigg(1 + \mathcal{O}\bigg(\frac{(1 + t^3)b_n\log^3 (np) }{\sqrt{n}} \bigg) \bigg) + \mathcal{O}(p^{-8}).
\end{align*}
Then a Gaussian integral calculation as in the proof of Theorem \ref{thm:moderate_deviation} leads to 
\begin{align*}
	\Prob \big( \widehat{\bm{W}}_j \geq t  \big) = \mathcal{P}_j(t) \cdot \bigg(1 + \mathcal{O}\bigg(\frac{(1 + t^3)b_ns\log^3 (np) }{\sqrt{n}} \bigg) \bigg) + \mathcal{O}(p^{-8}),
\end{align*}
which completes the proof for the moderate deviation result in (\ref{ineq:moder_dis_dl}). The asymptotic symmetry result in (\ref{ineq:app_sym_dl}) then follows by some simple algebra. This completes the proof of Theorem \ref{thm:moder_dis_dl}. 
%\end{proof}

\subsubsection{Proof of Lemma \ref{lem:indicator_app_dl}}\label{sec:proof_lem:indicator_app_dl}
%\begin{proof}[Proof of Lemma \ref{lem:indicator_app_dl}]
Combining the proof of Lemma \ref{lem:var_bound_ols} and the error bound in (\ref{ineq:moder_dis_dl}), we see that there exists some constant $C = C(\mathsf{M},\mathsf{C}_1,\mathsf{C}_2,\mathsf{C}_3)>0 $ such that for $n \wedge p \geq C$ and $t \in (0, n^{1/7}/(Cb_ns\log^4 (np) )^{1/3}  )$,
     \begin{align*}
         & \var \Big( \sum_{ j \in \mathcal{H}_0}\bm{1} \{ \widehat{\bm{W}}_j \geq t \} \Big)\\
         & \leq C\Bigg(\sum_{j\in \mathcal{H}_0}  (\abs{N(j)} + 1)  \Prob \big( \widehat{\bm{W}}_j \geq t \big) +  \frac{(1+t^3)b_n s\log^4 (np) }{\sqrt{n}}  \bigg(\sum_{j\in \mathcal{H}_0} \Prob \big( \widehat{\bm{W}}_j \geq t \big)    \bigg)^2\Bigg)+ \mathfrak{o}(p^{-2}).
     \end{align*}
    Then the conclusion follows from the proof of Lemma \ref{lem:indicator_app_ols}, which completes the proof of Lemma \ref{lem:indicator_app_dl}. 
%\end{proof}

\subsubsection{Proof of Lemma \ref{lem:local_T_dl}}
%\begin{proof}[Proof of Lemma \ref{lem:local_T_dl}]
The proof of this lemma follows a similar approach to that of Lemma \ref{lem:local_T}, involving a careful bookkeeping of errors arising from the distributional approximation and the indicator function approximation. For brevity, the details are omitted. %which completes the proof of Lemma \ref{lem:local_T_dl}. 
%\end{proof}

\subsection{Proofs for Section \ref{sec:mcks_correlated}}\label{sec:proof_sec_mcks_correlated}

\subsubsection{Proof of Proposition \ref{prop:conc_knockoff_mc_c}}
%\begin{proof}[Proof of Proposition \ref{prop:conc_knockoff_mc_c}]
The proof follows the same structure as that of Proposition \ref{prop:conc_knockoff}, with minor modifications; we omit repetitive details. %This completes the proof of Proposition \ref{prop:conc_knockoff_mc_c}. 
%\end{proof}

\subsubsection{Proof of Theorem \ref{thm:FDR_mc_c}}
%\begin{proof}[Proof of Theorem \ref{thm:FDR_mc_c}]

The proof follows the same structure as that of Theorem \ref{thm:FDR_mcks}; we provide some details below. The constant $C$ below may depend on $\mathsf{M}$, which may change from line to line. In the proof below, we assume that $\log p = \mathfrak{o}(n^c)$ for some small enough $c > 0$.

\noindent
\textbf{Distributional approximation}

\noindent
Recall from \eqref{ineq:def_of_E_Y} that 
     \begin{align*}
         \Prob \bigg(\mathcal{E}_Y \equiv \bigg\{ \frac{\max_{i \in [n]} \abs{\bm{Y}_i}}{\pnorm{\bm{Y}}{} } \leq \frac{C \log (np)} {\sqrt{n}}  \bigg\} \bigg) \geq 1 - p^{-10}.
     \end{align*}
Then on event $\mathcal{E}_Y$, for any $j \in \mathcal{H}_0$, we can apply Theorem \ref{thm:moderate_deviation} by letting $\mathsf{X}_i$ therein as  
 $( \sqrt{n}{\bm{X}}_{i,j}\bm{Y}_i/ \pnorm{\bm{Y}}{},  \sqrt{n}\widehat{\bm{X}}_{i,j}\bm{Y}_i/\pnorm{\bm{Y}}{})^\top$. This yields that on event $\mathcal{E}_Y$, for any $j \in \mathcal{H}_0$ and $t \in (0, n^{1/7}/C)$,  
\begin{align*}
   &\Prob^{\bm{Y}} \big( \widehat{\bm{W}}_j \geq t \big)=\bigg(1 + \mathcal{O}\bigg(\frac{(1 + t^3)\log^3 (np) }{\sqrt{n}} \bigg) \bigg) \cdot  \Prob_{(\mathsf{G}_1, \mathsf{G}_2)^\top \sim \mathcal{N}(0,  \Sigma_{\{j,j+p \}, \{j,j+p \} }  ) }(\abs{\mathsf{G}_1} - \abs{\mathsf{G}_2}  \geq t  )
\end{align*}

It follows from the law of total probability that 
\begin{align}\label{ineq:moder_dis_mc_c1}
    \Prob\big( \widehat{\bm{W}}_j \geq t \big) = \bigg(1 + \mathcal{O}\bigg(\frac{(1 + t^3)\log^3 (np) }{\sqrt{n}} \bigg) \bigg)\cdot \mathcal{P}_j(t) + \mathcal{O}(p^{-10}).
\end{align}
A similar argument leads to 
\begin{align}\label{ineq:moder_dis_mc_c2}
    \Prob\big( \widehat{\bm{W}}_j \leq -t \big) = \bigg(1 + \mathcal{O}\bigg(\frac{(1 + t^3)\log^3 (np) }{\sqrt{n}} \bigg) \bigg)\cdot \mathcal{P}_j(t) + \mathcal{O}(p^{-10}).
\end{align}
Thus, by some simple algebraic calculations, we can obtain that 
\begin{align}\label{ineq:appro_symm_mc_c}
    \sup_{t \in \big(0, \mathcal{P}^{-1}\big(\frac{qa_n}{2p}\big) \big)}\biggabs{\frac{\sum_{j \in \mathcal{H}_0} \Prob \big( \widehat{\bm{W}}_j \geq t  \big)}{\sum_{j \in \mathcal{H}_0} \Prob \big( \widehat{\bm{W}}_j \leq -t   \big)} - 1}= \mathfrak{o}(1).
\end{align}

\medskip
\noindent
\textbf{Approximation for indicator functions}

\noindent
In view of the proof of Lemma \ref{lem:var_bound_ols} and the error bounds in (\ref{ineq:moder_dis_mc_c1}) and (\ref{ineq:moder_dis_mc_c2}), there exists some constant $C = C(\mathsf{M})>0 $ such that for $n \wedge p \geq C$ and $t \in (0, n^{1/7}/C  )$,
     \begin{align*}
         & \var \Big( \sum_{ j \in \mathcal{H}_0}\bm{1} \{ \widehat{\bm{W}}_j \geq t \} \Big)\\
         & \leq C\Bigg(\sum_{j\in \mathcal{H}_0}  (\abs{M(j)} +1)  \Prob \big( \widehat{\bm{W}}_j \geq t \big) +  \frac{(1+t^3)\log^3 (np) }{\sqrt{n}}  \bigg(\sum_{j\in \mathcal{H}_0} \Prob \big( \widehat{\bm{W}}_j \geq t \big)    \bigg)^2\Bigg)+ \mathcal{O}(p^{-10}).
     \end{align*}
    Then we can follow the proof of Lemma \ref{lem:indicator_app_ols} to conclude that         for $\# \in \{ +,- \}$, 
    \begin{align}\label{ineq:indicator_app_mc_c}
      \sup_{t \in \big(0, \mathcal{P}^{-1}\big(\frac{qa_n}{2p}\big)\big)} \biggabs{\frac{ \sum_{j \in \mathcal{H}_0}  \bm{1}\{ \# \widehat{\bm{W}}_j  \geq t\}  }{\sum _{j \in \mathcal{H}_0}  \Prob (\# \widehat{\bm{W}}_j   \geq t)  } - 1 } = \mathfrak{o}_\mathbf{P}(1).
    \end{align}

\medskip
\noindent
\textbf{Localization of $T_q$}

\noindent  With (\ref{ineq:moder_dis_mc_c1})--(\ref{ineq:indicator_app_mc_c}) at hand,  we can invoke the proof of Lemma \ref{lem:local_T} to obtain that 
\begin{align}\label{ineq:local_T_mc_c}
         \Prob \bigg( {T}_q > \mathcal{P}^{-1} \bigg( \frac{q  a_n }{2p}\bigg)  \bigg)  = \mathfrak{o}(1).
\end{align}

The asymptotic FDR control then follows by applying  Theorem \ref{thm:general_framwork}  with  (\ref{ineq:appro_symm_mc_c}), (\ref{ineq:indicator_app_mc_c}), and (\ref{ineq:local_T_mc_c}). This completes the proof of Theorem \ref{thm:FDR_mc_c}. 
%end{proof}

\subsection{%Auxiliary results
Additional technical details}

We need the following Rosenthal's inequality. 
%{\color{red} (cite a classical textbook or the original paper for results like below and also label the known results from earlier papers/textbooks as lemmas)}. 

\begin{lemma}[\cite{Rosenthal1970}]\label{thm:rosenthal}
Let \( X_1, \dots, X_d \) be independent mean-zero random variables. For any \( \mathfrak{q} \geq 2 \), there exists some constant \( C_\mathfrak{q} \) depending only on \( \mathfrak{q} \) such that
\begin{align*}
	\mathbb{E}\biggabs{ \sum_{i \in [d]} X_i }^\mathfrak{q} \leq C_\mathfrak{q} \Bigg( \sum_{i \in [d]} \mathbb{E}|X_i|^\mathfrak{q} + \bigg( \sum_{i \in [d]} \mathbb{E} |X_i|^2 \bigg)^{\mathfrak{q}/2} \Bigg).
\end{align*}
\end{lemma}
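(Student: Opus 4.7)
Since the statement is the classical Rosenthal inequality of \cite{Rosenthal1970}, one may simply quote it; for a self-contained derivation the plan is to combine a symmetrization reduction with the Hoffmann--J\o{}rgensen inequality and then bound the two auxiliary moment quantities that appear on its right-hand side. First I would reduce to independent \emph{symmetric} summands: introducing an independent copy $\{X_i'\}$ of $\{X_i\}$ and independent Rademacher signs $\{\epsilon_i\}$, conditional Jensen together with the mean-zero assumption gives $\E|\sum_i X_i|^{\mathfrak{q}} \leq \E|\sum_i (X_i - X_i')|^{\mathfrak{q}} = \E|\sum_i \epsilon_i (X_i - X_i')|^{\mathfrak{q}}$, and both right-hand side moment quantities in Rosenthal's inequality change by at most a universal constant when $X_i$ is replaced by $X_i - X_i'$. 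Hence it suffices to prove the inequality for symmetric independent $X_i$.

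Next I would invoke the Hoffmann--J\o{}rgensen inequality, which for independent symmetric $X_i$ and $S_d = \sum_i X_i$ reads $\E|S_d|^{\mathfrak{q}} \leq K_\mathfrak{q}\big(\E\max_{i \in [d]}|X_i|^{\mathfrak{q}} + (\E|S_d|)^{\mathfrak{q}}\big)$. The first term is dominated by $\E\max_i|X_i|^{\mathfrak{q}} \leq \sum_i \E|X_i|^{\mathfrak{q}}$. For the second, Cauchy--Schwarz combined with independence and mean-zero yields $\E|S_d| \leq (\E|S_d|^2)^{1/2} = (\sum_i \E|X_i|^2)^{1/2}$, so $(\E|S_d|)^{\mathfrak{q}} \leq (\sum_i \E|X_i|^2)^{\mathfrak{q}/2}$. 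Substituting delivers Rosenthal's inequality with constant of order $K_\mathfrak{q}$ up to the factor absorbed in the symmetrization step.

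The main obstacle is establishing the Hoffmann--J\o{}rgensen inequality itself. The standard route is a L\'evy-type stopping-time argument: for $s,t>0$ one proves $\Prob(|S_d|>2s+t) \leq \Prob(|S_d|>s)^2 + \Prob(\max_i|X_i|>t)$ by splitting $S_d$ at the first index whose running partial sum exceeds $s$ and exploiting independence of past and future together with symmetry, then chooses $s$ as a small quantile of $|S_d|$ (small enough that $\Prob(|S_d|>s)^2$ can be integrated against $\mathfrak{q}t^{\mathfrak{q}-1}\,\mathrm{d}t$) and integrates the tail bound. An alternative route that avoids Hoffmann--J\o{}rgensen is to apply Burkholder's martingale inequality $\E|S_d|^{\mathfrak{q}} \lesssim_\mathfrak{q} \E(\sum_i X_i^2)^{\mathfrak{q}/2}$ and then decouple $\sum_i (X_i^2 - \E X_i^2)$ by induction on $\lceil \log_2 \mathfrak{q} \rceil$, with the base case $\mathfrak{q} \in [2,4]$ handled directly by the triangle inequality in $L^{\mathfrak{q}/2}$. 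Either route produces a finite (non-sharp) constant $C_\mathfrak{q}$, which is all that the applications of this lemma earlier in the paper require.
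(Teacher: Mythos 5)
The paper gives no proof of this lemma: it is stated as a known result with a citation to Rosenthal (1970), exactly as you note in your opening sentence. Your self-contained derivation plan is correct and is one of the standard textbook routes: the symmetrization reduction (conditional Jensen against an independent copy, then Rademacher signs) costs only $\mathfrak{q}$-dependent constants on both terms of the right-hand side; the $L^{\mathfrak{q}}$ form of Hoffmann--J\o{}rgensen $\E|S_d|^{\mathfrak{q}}\leq K_{\mathfrak{q}}\big(\E\max_i|X_i|^{\mathfrak{q}}+(\E|S_d|)^{\mathfrak{q}}\big)$ is the right tool for symmetric summands; and the two auxiliary quantities are correctly bounded by $\sum_i\E|X_i|^{\mathfrak{q}}$ and $\big(\sum_i\E|X_i|^2\big)^{\mathfrak{q}/2}$ via union bound and Cauchy--Schwarz with orthogonality. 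The alternative Burkholder-plus-induction route you sketch also works. Since the paper only needs the statement (it is applied once, in the proof of Proposition~\ref{prop:conc_knockoff_heavy}, to get polynomial tail bounds), the bare citation suffices there, but your derivation would be a valid substitute if a self-contained argument were required.
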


The following large deviation inequality for the sum of independent nonnegative random variables is useful.

\begin{lemma}\label{thm:conc_nonnegative}
	Let $\mathsf{X}_1,\ldots,\mathsf{X}_n$ be independent nonnegative random variables satisfying $\max_{i \in [n]} \E \mathsf{X}_i^2 \\< \infty$. Let $\mu_n \equiv \sum_{i \in [n]}\E \mathsf{X}_i$ and $B_n \equiv \sum_{i \in [n]}\E \mathsf{X}_i^2$. Then for $0 <x < \mu_n $, we have 
	\begin{align*}
		\Prob \bigg( \sum_{i \in [n]} \mathsf{X}_i \leq x \bigg) \leq \exp \bigg( -\frac{(\mu_n - x)^2}{2B_n^2} \bigg).
	\end{align*}
\end{lemma}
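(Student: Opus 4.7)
The plan is to prove this via the standard Chernoff-type exponential Markov argument applied to the lower tail of $S_n \equiv \sum_{i \in [n]} \mathsf{X}_i$. For any $\lambda > 0$, I would first write
\begin{align*}
    \Prob\big( S_n \leq x \big) = \Prob\big( e^{-\lambda S_n} \geq e^{-\lambda x}\big) \leq e^{\lambda x} \prod_{i \in [n]} \E\big[ e^{-\lambda \mathsf{X}_i} \big],
\end{align*}
using independence of the $\mathsf{X}_i$'s.

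The key step is to bound the moment generating function of $-\mathsf{X}_i$ using the nonnegativity assumption. Since $\mathsf{X}_i \geq 0$, the elementary inequality $e^{-y} \leq 1 - y + y^2/2$ for all $y \geq 0$ (which follows from Taylor's theorem with a nonnegative remainder term) gives $\E[ e^{-\lambda \mathsf{X}_i}  ] \leq 1 - \lambda \E \mathsf{X}_i + (\lambda^2/2) \E \mathsf{X}_i^2$. Invoking $1 + u \leq e^{u}$ then yields $\E[ e^{-\lambda \mathsf{X}_i} ] \leq \exp\bigl( -\lambda \E \mathsf{X}_i + (\lambda^2/2) \E \mathsf{X}_i^2 \bigr)$, and multiplying over $i \in [n]$ produces
\begin{align*}
    \Prob\big( S_n \leq x \big) \leq \exp\bigg( -\lambda(\mu_n - x) + \frac{\lambda^2 B_n}{2} \bigg).
\end{align*}

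Finally, I would optimize the right-hand side over $\lambda > 0$: the minimizer is $\lambda^\star = (\mu_n - x)/B_n$, which is strictly positive because $x < \mu_n$ by assumption. Substituting $\lambda^\star$ gives the bound $\exp\bigl(-(\mu_n - x)^2/(2 B_n)\bigr)$, yielding the stated inequality. No step in this argument is technically delicate; the only point deserving a brief justification is the quadratic upper bound $e^{-y} \leq 1 - y + y^2/2$ on the nonnegative half-line, which distinguishes this ``one-sided'' bound for nonnegative variables from the Bernstein-type inequalities that would require additional tail control.
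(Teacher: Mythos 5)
Your argument is the same Chernoff/exponential-Markov derivation the paper uses, right down to the key elementary inequality $e^{-y}\leq 1-y+y^2/2$ on $y\geq 0$, so the method is correct and identical in spirit. One thing you glossed over: after optimizing at $\lambda^\star=(\mu_n-x)/B_n$ you correctly obtain $\exp\bigl(-(\mu_n-x)^2/(2B_n)\bigr)$, but then assert this "yields the stated inequality" — the lemma as printed has $B_n^2$ (not $B_n$) in the denominator, so the two do not match. This appears to be a typo in the paper: with $B_n\equiv\sum_i\E\mathsf{X}_i^2$, the product $\prod_i\bigl(1-t\E\mathsf{X}_i+t^2\E\mathsf{X}_i^2/2\bigr)\leq\exp\bigl(-t\mu_n+t^2B_n/2\bigr)$ produces $B_n$, not $B_n^2$ (the paper's own proof line writes $t^2B_n^2/2$ but this is inconsistent with the preceding expression). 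Your derived exponent is the correct one; you should either note the discrepancy explicitly or observe that when $B_n\geq 1$ (as in the paper's applications) your bound implies the weaker one stated.
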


%\begin{proof}
\noindent\textit{Proof of Lemma \ref{thm:conc_nonnegative}}. 
	Note that for any $a \geq 0$, $e^{-a} \leq 1 -a + a^2/2$. Thus it holds that for any $t \geq 0$ and $x \leq \mu$,
	\begin{align*}
		\Prob \bigg( \sum_{i \in [n]} \mathsf{X}_i \leq x \bigg) &\leq e^{tx} \prod_{i \in [n]} \E e^{-t\mathsf{X}_i}\leq e^{tx}\prod_{i \in [n]} \E (1 -t\mathsf{X}_i +t^2\mathsf{X}_i^2/2 ) \\
		&\leq \exp ( -t(\mu_n-x) + t^2B_n^2/2).
	\end{align*}
	The conclusion follows by choosing $t = (\mu_n-x)/B_n^2$, which completes the proof of Lemma \ref{thm:conc_nonnegative}. 
%\end{proof}

The following proposition gives a generalized Azuma's inequality, which is a slight modification of \cite[Proposition 34]{Tannonhermitian2015}.
\begin{proposition}\label{prop:generalize_azuma}
    Let $\mathsf{X} \equiv (\mathsf{X}_1,\ldots,\mathsf{X}_n ) \in \R^{d \times n}$ be a random matrix with independent columns, and $f(\mathsf{X}) \equiv f(\mathsf{X}_1,\ldots, \mathsf{X}_n) \in \R$ a random variable depending on $\mathsf{X}_j, \, j \in [n]$. Then for any $\alpha_j \geq 0, \, j \in [n]$, we have
    \begin{align*}
          \Prob \Bigg( \abs{f(\mathsf{X}) - \E f(\mathsf{X})} \ge t\bigg(\sum_{j \in [n]}\alpha_j^2 \bigg)^{1/2} \Bigg) = \bigo\Big(\exp \big(-\Omega(t^2)\big)\Big) + \sum_{j \in [n]}\Prob \Big(C_j(\mathsf{X}) \ge \alpha_j\Big),
    \end{align*}
    where for $j \in [n]$, $C_j(\mathsf{X})$ is  defined as
    \begin{align*}
        	C_j(\mathsf{X}) \equiv \bigabs{\E \big( f(\mathsf{X}) \,|\, \mathsf{X}_1,\ldots,\mathsf{X}_j\big) - \E \big( f(X) \,|\, \mathsf{X}_1,\ldots,\mathsf{X}_{j-1} \big)}.
    \end{align*}
\end{proposition}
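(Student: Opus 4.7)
The plan is to combine a Doob martingale decomposition with Azuma's inequality, using a stopping-time truncation to circumvent the absence of deterministic bounds on the martingale differences. First I would introduce the Doob martingale $M_j \equiv \E[f(\mathsf{X}) \mid \mathcal{F}_j]$ with respect to the filtration $\mathcal{F}_j \equiv \sigma(\mathsf{X}_1,\ldots,\mathsf{X}_j)$, so that $M_0 = \E f(\mathsf{X})$, $M_n = f(\mathsf{X})$, and the martingale increments $D_j \equiv M_j - M_{j-1}$ satisfy $\abs{D_j} = C_j(\mathsf{X})$ exactly.

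The key device is a stopping time $\sigma$ defined as the largest index $k \in \{0,1,\ldots,n\}$ such that $C_i(\mathsf{X}) \leq \alpha_i$ for all $1 \leq i \leq k$; equivalently, $\sigma \equiv (\min\{j \in [n] : C_j(\mathsf{X}) > \alpha_j\} - 1) \wedge n$, with the convention $\min\emptyset = n+1$. Because $\{\sigma \geq j\} = \{C_i(\mathsf{X}) \leq \alpha_i \text{ for all } i \leq j\}$ is $\mathcal{F}_j$-measurable, $\sigma$ is a stopping time, and the stopped process $\{M_{j\wedge\sigma}\}_{j=0}^n$ is a martingale whose increments satisfy the deterministic bound
\begin{align*}
    \bigabs{M_{j\wedge\sigma} - M_{(j-1)\wedge\sigma}} \leq \alpha_j \bm{1}\{\sigma \geq j\} \leq \alpha_j,
\end{align*}
since on $\{\sigma \geq j\}$ the inequality $C_j(\mathsf{X}) \leq \alpha_j$ forces $\abs{D_j} \leq \alpha_j$, while on its complement the increment vanishes. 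The standard Azuma--Hoeffding inequality together with the optional stopping identity $\E M_\sigma = M_0 = \E f(\mathsf{X})$ then yields
\begin{align*}
    \Prob\Bigg( \bigabs{M_\sigma - \E f(\mathsf{X})} \geq t\bigg(\sum_{j \in [n]}\alpha_j^2\bigg)^{1/2}\Bigg) = \bigo\big(\exp(-\Omega(t^2))\big).
\end{align*}
On the good event $\{\sigma = n\}$ one has $M_\sigma = f(\mathsf{X})$, and $\{\sigma < n\} = \bigcup_{j \in [n]} \{C_j(\mathsf{X}) > \alpha_j\}$, so a triangle-inequality splitting
\begin{align*}
    \Prob\Bigg(\bigabs{f(\mathsf{X}) - \E f(\mathsf{X})} \geq t\bigg(\sum_{j\in[n]}\alpha_j^2\bigg)^{1/2}\Bigg) &\leq \Prob\Bigg(\bigabs{M_\sigma - \E f(\mathsf{X})} \geq t\bigg(\sum_{j\in[n]}\alpha_j^2\bigg)^{1/2}\Bigg) \\
    &\quad + \Prob(\sigma < n),
\end{align*}
combined with the union bound $\Prob(\sigma < n) \leq \sum_{j \in [n]} \Prob(C_j(\mathsf{X}) \geq \alpha_j)$, delivers the claimed inequality.

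The main obstacle lies in formulating the stopping rule correctly: the naive choice ``stop at the first violation'' fails because it produces a stopped martingale whose terminal increment is exactly the unbounded one at the violation time. The remedy is to stop \emph{just before} the first violation, which preserves $\mathcal{F}_j$-measurability of $\{\sigma \geq j\}$ and cleanly restricts every non-trivial increment to the regime where the deterministic bound $C_j(\mathsf{X}) \leq \alpha_j$ is active; once this subtlety is handled, the remainder of the argument reduces to an invocation of Azuma and a union bound.
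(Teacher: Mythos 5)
There is a genuine gap in the truncation step. You assert that $\sigma$ is a stopping time because $\{\sigma \geq j\}$ is $\mathcal{F}_j$-measurable, but that is not the defining condition: a stopping time must satisfy $\{\sigma \leq j\} \in \mathcal{F}_j$, equivalently $\{\sigma \geq j\} \in \mathcal{F}_{j-1}$. Here $\{\sigma \geq j\} = \{C_i(\mathsf{X}) \leq \alpha_i \text{ for all } i \leq j\}$ depends on $C_j(\mathsf{X}) = \abs{D_j}$, which is $\mathcal{F}_j$-measurable but not $\mathcal{F}_{j-1}$-measurable, so $\sigma$ is \emph{not} a stopping time. In consequence $\{M_{j\wedge\sigma}\}$ is not a martingale: writing $\bm{1}\{\sigma \geq j\} = \bm{1}\{\sigma \geq j-1\}\bm{1}\{\abs{D_j} \leq \alpha_j\}$ and using $\E[D_j \mid \mathcal{F}_{j-1}] = 0$, the stopped increment has conditional mean
\begin{align*}
\E\big[D_j\bm{1}\{\sigma \geq j\} \mid \mathcal{F}_{j-1}\big] = -\bm{1}\{\sigma \geq j-1\}\,\E\big[D_j\bm{1}\{\abs{D_j} > \alpha_j\} \mid \mathcal{F}_{j-1}\big],
\end{align*}
which need not vanish because the truncation indicator is correlated with $D_j$ itself. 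Without the martingale property, neither Azuma's inequality nor the identity $\E M_\sigma = M_0$ applies, so both the concentration bound and the centering in your argument fail.

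You correctly identified that stopping \emph{at} the first violation would not work (the terminal increment is the unbounded one), but stopping just before it unavoidably produces an anticipating time, which is precisely the obstruction. The paper sidesteps stopping altogether: it replaces $f$ by $\tilde f$, equal to $f$ on the good event $(\bigcup_j B_j)^c$ and equal to the constant $\E[f(\mathsf{X}) \mid \mathsf{X} \in B_j]$ on each bad piece $B_j$. This preserves $\E\tilde f = \E f$, agrees with $f$ outside an event of probability at most $\sum_j \Prob(C_j \geq \alpha_j)$, and has \emph{all} of its Doob martingale differences bounded by $\alpha_j$, so classical Azuma applies to $\tilde f$ directly. To rescue a stopping-time version you would need to compensate the stopped process for the conditional bias introduced by truncating $D_j$, which is in effect what the conditional averaging in the paper's $\tilde f$ accomplishes.
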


%\begin{proof}
\noindent\textit{Proof of Proposition \ref{prop:generalize_azuma}}. 
     For each $\mathsf{X}$, let $j_{\mathsf{X}}$ be the first index where $C_j(\mathsf{X}) \ge \alpha_j$, and thus, sets $B_j \equiv \{\mathsf{X}: j_{\mathsf{X}} = j \}$ are disjoint. To see the measurability of $B_j$, we can rewrite it as
    \begin{align*}
        B_j =\bigg( \bigcap_{k \in [j-1]} \big\{\mathsf{X}: C_k(\mathsf{X}) < \alpha_k \big\}\bigg) \bigcap \big\{\mathsf{X}: C_j(\mathsf{X}) \ge \alpha_j \big\}.
    \end{align*} 
    Define another function $\tilde{f}(\mathsf{X})$ of $\mathsf{X}$ which agrees with $f(\mathsf{X})$ for $\mathsf{X}$ in $(\cup_{j \in [n]}B_j)^c$, and $\tilde{f}(X) \equiv \E (f(\mathsf{X}) \,|\, \bm{1}\{\mathsf{X}\in B_j\})  $ if $\mathsf{X} \in B_j$. Then we can deduce that 
    \begin{align*}
        &\E \tilde{f}(\mathsf{X}) = \E \tilde{f}(\mathsf{X}) \bm{1}\Big\{\mathsf{X} \in \Big(\bigcup_{j\in [n]}B_j\Big)^c\Big\} + \E \tilde{f}(\mathsf{X})\bm{1}\Big\{\mathsf{X} \in \bigcup_{j\in [n]}B_j\Big\} \\
        &=\E f(\mathsf{X}) \bm{1}\Big\{\mathsf{X} \in \Big(\bigcup_{j\in [n]}B_j\Big)^c\Big\} + \sum_{j \in [n]}\E \Big[ \E \Big( f(\mathsf{X}) \,|\, \bm{1}\{\mathsf{X} \in B_j \} \Big) \cdot \bm{1}\{\mathsf{X} \in B_j \} \Big] \\
        &\overset{(\ast)}{=}\E f(\mathsf{X})\bm{1}\Big\{\mathsf{X} \in \Big(\bigcup_{j\in [n]}B_j\Big)^c\Big\} + \E f(\mathsf{X})\bm{1}\Big\{\mathsf{X} \in \bigcup_{j\in [n]}B_j\Big\} = \E f(\mathsf{X})
    \end{align*}
    and
    \begin{align*}
        \Prob\Big (f(\mathsf{X}) \neq \tilde{f}(\mathsf{X})\Big) \le \sum_{j\in [n]} \Prob\Big (\mathsf{X} \in B_j\Big)\le \sum_{j \in [n]} \Prob\Big (C_j(\mathsf{X}) \ge \alpha_j\Big).
    \end{align*}
    Here, in step $(\ast)$ above, we have used Fubini's theorem to obtain that $\E (\E (f(\mathsf{X}) | \bm{1} \{\mathsf{X}\in B_j\})  \bm{1}\{\mathsf{X} \in B_j\} ) =\E (f(\mathsf{X})  \bm{1}\{\mathsf{X} \in B_j\} )$. 
    
    We define the  following new absolute value of martingale differences defined via $\tilde{f}(\mathsf{X})$,
\begin{align*}
        	\tilde{C}_j(\mathsf{X}) \equiv \bigabs{\E \big( \tilde{f}(\mathsf{X}) \,|\, \mathsf{X}_1,\ldots,\mathsf{X}_j\big) - \E \big( \tilde{f}(X)\,|\,\mathsf{X}_1,\ldots,\mathsf{X}_{j-1} \big)}.
    \end{align*}
    When $\mathsf{X} \in (\cup_{j \in [n]} B_j)^c$, it is easy to see that $\tilde{C}_j(\mathsf{X}) \le \alpha_j$ for all $j \in [n]$. If $\mathsf{X} \in B_j$, we have that for any $k \in [n]$,
    \begin{align*}
        \tilde{C}_k(\mathsf{X})& = \bigabs{\E \big[  \E \big( f(\mathsf{X}) \,|\,\bm{1}\{\mathsf{X}\in B_j\} \big) \cdot\bm{1}\{\mathsf{X}\in B_j\} \,|\, \mathsf{X}_1,\ldots,\mathsf{X}_k \big] \\
        &\quad- \E \big[ \E \big( f(\mathsf{X}) \,|\,|\bm{1}\{\mathsf{X}\in B_j\} \big) \cdot\bm{1} \{\mathsf{X}\in B_j\} \,|\, \mathsf{X}_1,\ldots,\mathsf{X}_{k-1} \big] }\\        
        &= \bigabs{\E \big[ \E [f(\mathsf{X}) \,|\,\bm{1}\{\mathsf{X}\in B_j\} = 1] \,|\, \mathsf{X}_1,\ldots,\mathsf{X}_k \big] \\
        &\quad - \E \big[ \E [f(\mathsf{X}) \,|\,\bm{1}\{\mathsf{X}\in B_j\} = 1] \,|\, \mathsf{X}_1,\ldots,\mathsf{X}_{k-1} \big]  }  = 0.
    \end{align*}
     This implies that for all $j \in [n]$, $
        \tilde{C}_j(\mathsf{X}) \le \alpha_j$ and   
    \begin{align*}
        &\Prob \Bigg( |f(\mathsf{X}) - \E f(\mathsf{X})| \ge t\bigg(\sum_{j \in [n]} \alpha_j^2 \bigg)^{1/2} \Bigg) \\
        &\le \Prob \Bigg( |\tilde{f}(\mathsf{X}) - \E \tilde{f}(\mathsf{X})| \ge t\bigg(\sum_{j \in [n]}\alpha_j^2 \bigg)^{1/2} \Bigg) +\Prob\Big (f(\mathsf{X}) \neq \tilde{f}(X)\Big) \\
        &\le \Prob \Bigg( |\tilde{f}(\mathsf{X}) - \E \tilde{f}(\mathsf{X})| \ge t\bigg(\sum_{j \in [n]}\alpha_j^2 \bigg)^{1/2} \Bigg) + \sum_{j \in [n]}\Prob \Big(C_j(\mathsf{X}) \ge \alpha_j\Big)\\
        &= \bigo\Big(\exp \big(-\Omega(t^2)\big)\Big) + \sum_{j \in [n]}\Prob \Big(C_j(\mathsf{X}) \ge \alpha_j\Big).
    \end{align*}
    Here, in the first step above we have used $\E \tilde{f}(\mathsf{X}) = \E f(\mathsf{X})$, and in the last step above we have applied the classical Azuma's inequality on $\tilde{C}_j(\mathsf{X})$. This completes the proof of Proposition \ref{prop:generalize_azuma}. 
%\end{proof}

We also need the following elementary matrix inequality.
\begin{lemma}\label{lem:matrix_ineq1}
    For any positive definite matrices $A, B \in \R^{d \times d}$, we have
    \begin{align*}
        \pnorm{A^{3/2} - B^{3/2}}{\op} \leq 3 ( \pnorm{A}{\op} \vee \pnorm{B}{\op}  )^{1/2} \pnorm{A - B}{\op}.
    \end{align*}
\end{lemma}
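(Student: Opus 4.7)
The plan is to establish the (slightly stronger) inequality
\[
\pnorm{A^{3/2} - B^{3/2}}{\op} \leq 2\sqrt{M}\,\pnorm{A - B}{\op}, \quad M \equiv \pnorm{A}{\op} \vee \pnorm{B}{\op},
\]
which immediately implies the claimed bound with constant $3$. The approach is the integral representation route: from the scalar identity $x^{3/2} = (2/\pi)\int_0^\infty x^2/(x + t^2)\,dt$ for $x > 0$, functional calculus lifts to
\[
X^{3/2} = \frac{2}{\pi}\int_0^\infty X^2 (X + t^2 I)^{-1}\,dt
\]
for every positive definite $X$. Setting $P_X \equiv X(X + t^2 I)^{-1}$, one has $\pnorm{P_X}{\op} \leq \pnorm{X}{\op}/(\pnorm{X}{\op} + t^2) \leq M/(M + t^2)$ and $I - P_X = t^2(X + t^2 I)^{-1}$ is a contraction.

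The algebraic step I would carry out is to combine the resolvent identity $(A + t^2 I)^{-1} - (B + t^2 I)^{-1} = (A + t^2 I)^{-1}(B - A)(B + t^2 I)^{-1}$ with the commutation $A P_A = P_A A$ to derive the decomposition
\[
A^2(A + t^2 I)^{-1} - B^2(B + t^2 I)^{-1} = P_A(A - B) + (I - P_A)(A - B) P_B.
\]
Applying the triangle inequality together with the bounds above yields the pointwise estimate
\[
\pnorm{A^2(A + t^2 I)^{-1} - B^2(B + t^2 I)^{-1}}{\op} \leq \frac{2M}{M + t^2}\pnorm{A - B}{\op}.
\]
Integrating against $(2/\pi)\,dt$ over $(0,\infty)$ and using $\int_0^\infty dt/(M + t^2) = \pi/(2\sqrt{M})$ delivers the target bound, which is stronger than what is stated.

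The main obstacle is recognizing that naive telescoping identities will not suffice: writing $A^{3/2} - B^{3/2} = A^{1/2}(A - B) + (A^{1/2} - B^{1/2}) B$ reduces the problem to bounding $\pnorm{A^{1/2} - B^{1/2}}{\op}$, which, in the absence of a uniform positive lower bound on the spectra of $A$ and $B$, is only controlled by $\sqrt{\pnorm{A - B}{\op}}$ (via the operator Hölder property of the square root); this gives the wrong scaling and cannot be rescued by term rearrangement alone. The integral representation sidesteps this obstruction because $P_X$ and $I - P_X$ admit spectral bounds depending only on $\pnorm{X}{\op}$, not on $\lambda_{\min}(X)$; once the decomposition above is in hand, the entire $t$-dependence is pushed onto these two well-behaved contractions and the remaining calculation is an elementary scalar integral.
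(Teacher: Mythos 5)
Your proof is correct and takes a genuinely different route from the paper's. The paper normalizes so that $\pnorm{A}{\op}\vee\pnorm{B}{\op}=1$, expands $A^{3/2}=\big(I-(I-A)\big)^{3/2}$ and $B^{3/2}$ via the power series $(1-z)^{3/2}=\sum_j b_j z^j$, invokes the telescoping bound $\pnorm{(I-A)^j-(I-B)^j}{\op}\leq j\pnorm{A-B}{\op}$ valid for the contractions $I-A,I-B$, and sums to get $\sum_j j\abs{b_j}\leq 3$; it then undoes the normalization. Your route instead works directly from the Cauchy-type integral representation $x^{3/2}=\tfrac{2}{\pi}\int_0^\infty x^2(x+t^2)^{-1}\,dt$, lifts to operators, and uses the resolvent identity plus the commutation $AP_A=P_A A$ to obtain the exact decomposition $A^2(A+t^2I)^{-1}-B^2(B+t^2I)^{-1}=P_A(A-B)+(I-P_A)(A-B)P_B$, whose two summands are controlled by $M/(M+t^2)$ and $1\cdot M/(M+t^2)$ respectively; integrating gives $2\sqrt{M}\,\pnorm{A-B}{\op}$. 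I verified both the decomposition identity and the scalar integral $\int_0^\infty (M+t^2)^{-1}\,dt=\pi/(2\sqrt{M})$. The net effect is that your argument needs no normalization step, is closed-form rather than series-based, and delivers the sharper constant $2$ in place of $3$ (the scalar-sharp constant is $3/2$); the paper's power-series approach is arguably more elementary in that it relies only on telescoping and coefficient bookkeeping, but it is tailored to the exponent $3/2$ whereas your integral representation generalizes readily to other fractional powers in $(1,2)$. Your remark about why the naive split $A^{3/2}-B^{3/2}=A^{1/2}(A-B)+(A^{1/2}-B^{1/2})B$ fails is also accurate: absent a spectral floor, $\pnorm{A^{1/2}-B^{1/2}}{\op}$ is only $\bigo(\pnorm{A-B}{\op}^{1/2})$, which has the wrong homogeneity.
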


%\begin{proof}
\noindent\textit{Proof of Lemma \ref{lem:matrix_ineq1}}. 
   This result is provided in \cite[Lemma 7]{fukumizu2007kernel} for the Hilbert--Schmidt norm. The proof for the operator norm is similar; we include it here for completeness. Without loss of generality, we assume that $ \pnorm{A}{\op} \vee \pnorm{B}{\op} = 1$. Define functions $f$ and $g$ on $\{z \in \mathbb{C} : \abs{z} \leq 1 \}$ as $f(z) = (1-z)^{3/2}$ and $g(z) = (1-z)^{1/2}$. We can express the functions in terms of power series as 
   \begin{align*}
       f(z) = \sum_{j = 0}^\infty b_j z^j, \quad g(z) = \sum_{j = 0}^\infty a_j z^j,
   \end{align*}
   where $b_0  = 1$, $b_1 = -3/2$, $b_j  = \prod_{k=1}^{j}(5/2-k)$ for $j \geq 2$, $c_0 = 1$, and $c_j = -\prod_{k = 1}^{j}(3/2 - k)$ for $j \geq 1$. Moreover, it can be verified that for $j \geq 1$, $j\abs{b_j} = 3c_{j-1}/2$. Noticing that for any $J \in \mathbb{N}$,
   \begin{align*}
       \sum_{j = 0}^J \abs{b_j} = 1 + \frac{3}{2} +  \sum_{j = 2}^J \abs{b_j} \leq 1 + \frac{3}{2} + \lim_{x \uparrow 1} \bigg\{ f(x) - 1 + \frac{3}{2} \bigg\} = 3,
   \end{align*}
   we have that $\sum_{j = 0}^\infty \abs{b_j} \leq 3$. 
   
   Similarly, it holds that $\sum_{j = 0}^\infty \abs{c_j} \leq 2$, and thus, $\sum_{j = 0}^\infty j\abs{b_j} =  \sum_{j = 0}^\infty 3\abs{c_j}/2 \leq 3$. Further, using the fact that $\pnorm{(I-A)^j - (I-B)^j}{\op} \leq j\pnorm{A-B}{\op}$ for all $j\in \mathbb{N}$ (proved by induction), we can deduce that 
   \begin{align*}
       \pnorm{A^{3/2} - B^{3/2} }{\op} &= \pnorm{(I - (I-A))^{3/2} -  (I - (I-B))^{3/2} }{\op} \\
       &= \pnorm{ \sum_{j = 0}^\infty b_j ( (I-A)^j - (I-B)^j)  }{\op} \\
       &\leq \sum_{j = 0}^\infty j\abs{ b_j} \pnorm{A - B}{\op} \leq 3  \pnorm{A - B}{\op},
   \end{align*}
   which completes the proof of Lemma \ref{lem:matrix_ineq1}. 
%\end{proof}

\end{document}